\documentclass[11pt]{article}
\usepackage[a4paper,top=3cm,bottom=3cm,left=3cm,right=3cm,marginparwidth=1.75cm]{geometry}
\usepackage{amsmath}
\usepackage{mathrsfs}
\usepackage{graphicx} 
\usepackage{subfigure}
\usepackage{amssymb}
\usepackage{cite}  
\usepackage{appendix}
\usepackage{enumitem}
\usepackage{bm}
\usepackage{natbib}
\usepackage[dvipsnames,table,dvipsnames*, svgnames*]{xcolor}
\usepackage{algpseudocode,float,algorithm}
\usepackage{hyperref}
\hypersetup{
    colorlinks=true,
    linkcolor=blue,
    filecolor=blue,
    urlcolor=blue,
    citecolor=blue,
}

\newtheorem{theorem}{Theorem}
\newtheorem{lemma}{Lemma}
\newtheorem{proof}{Proof}
\newtheorem{proposition}{Proposition}
\newtheorem{assumption}{Assumption}
\newtheorem{remark}{Remark}
\newtheorem{definition}{Definition}
\newtheorem{example}{Example}
\newtheorem{corollary}{Corollary}

\DeclareMathOperator{\Cov}{\mathrm{Cov}}
\newcommand{\argmin}{\mathop{\rm arg\min}}

\newcommand{\Acal}{{\mathcal{A}}}
\newcommand{\Zcal}{\mathcal{Z}}
\newcommand{\Prob}{\mathbb{P}}
\newcommand{\D}{{\mathcal{D}}}

\newcommand{\R}{{\mathbb{R}}}
\newcommand{\E}{{\mathbb{E}}}

\newcommand{\indep}{\rotatebox[origin=c]{90}{$\models$}}

\usepackage{authblk}

\title{Unified Inference Framework for Single and Multi-Player Performative Prediction: Method and Asymptotic Optimality}

 \author[1]{Zhixian Zhang}
 \author[2]{Xiaotian Hou}
 \author[1]{Linjun Zhang}

 \affil[1]{Rutgers University}
 \affil[2]{University of Pennsylvania}

\begin{document}

\maketitle

\begin{abstract}

Performative prediction characterizes environments where predictive models alter the very data distributions they aim to forecast, triggering complex feedback loops. While prior research treats single-agent and multi-agent performativity as distinct phenomena, this paper introduces a unified statistical inference framework that bridges these contexts, treating the former as a special case of the latter. Our contribution is two-fold. First, we put forward the Repeated Risk Minimization (RRM) procedure for estimating the performative stability, and establish a rigorous inferential theory for admitting its asymptotic normality and confirming its asymptotic efficiency. Second, for the performative optimality, we introduce a novel two-step plug-in estimator that integrates the idea of Recalibrated Prediction Powered Inference (RePPI) with Importance Sampling, and further provide formal derivations for the Central Limit Theorems of both the underlying distributional parameters and the plug-in results. The theoretical analysis demonstrates that our estimator achieves the semiparametric efficiency bound and maintains robustness under mild distributional misspecification. This work provides a principled toolkit for reliable estimation and decision-making in dynamic, performative environments.
\end{abstract}

\section{Introduction}
\textit{Performative prediction} refers to a class of predictive modeling problems where the act of prediction itself influences the distribution of the data 
it uses to predict \cite{perdomo2020performative}. Unlike traditional supervised learning settings where data distributions remain fixed, performative predictions induce distributional shifts through their deployment, particularly when supporting consequential decisions, including loan approvals \cite{bartlett2022consumer}, criminal sentencing \cite{courtland2018bias}, and public policy design \cite{lum2016predict}. \

Consider a bank that builds a predictive model for loan approval. If the model predicts that an applicant has a high risk of default, the bank may respond by offering a higher interest rate. This decision, however, induces an inverse behavioral response from applicants: in order to qualify for better loan terms, they may actively modify their financial behaviors to meet the model’s approval criteria. Consequently, the bank’s predictive model becomes miscalibrated with respect to the outcomes that arise once its decisions are implemented, as the related distribution depends on the current model.

Suppose a prediction model $f_\theta$ is parametrized by $\theta$. The primary goal of performative prediction is to find a prediction model that minimizes the performative risk function, which leads to the definition of the performatively optimal point. We have its mathematical description as:
\begin{equation*}
    \theta_{PO} = \arg\min _{\theta \in \Theta}\mathbb{E}_{Z \sim \mathcal{D}(\theta)} \ell(\theta,Z),
\end{equation*}
where $\ell(\theta,Z)$ is the loss function, and $Z=(X,Y)$ is a input-output pair. The underlying distribution $\mathcal{D}(\theta)$ is not static but rather a distribution mapping that depends on the model parameter $\theta \in \Theta$, which, in the loan approval example, represents the interest rate policy offered by the bank, through which applicant behavior is altered.
By incorporating feedback from current predictions, it can demonstrate distributional shifts in future observations. However, since the distribution $\D(\theta)$ is typically unknown, the performative risk function is difficult to calculate directly, which makes the minimization problem intractable.

Besides the performative optimality, we also have the concept of performative stability. While performative optimality corresponds to the model that minimizes the performative risk over all possible predictors, performative stability refers to a fixed point where the prediction model $f_\theta$, given as a basis for predictions, is also simultaneously optimal for the very distribution that its deployment induces. The performatively stable point defined as the solution to the following fixed-point equation: 
\begin{equation*}
    \theta_{PS} = \arg\min _{\theta \in \Theta}\mathbb{E}_{Z \sim \mathcal{D}(\theta_{PS})} \ell(\theta,Z),
\end{equation*}
where the performatively stable model $f_{\theta_{PS}}$ minimizes the risk with respect to the distribution $\D(\theta_{PS})$, which itself arises from the model $f_{\theta_{PS}}$. Since this model already accounts for the distributional shift caused by deployment, it eliminates the need for further retraining.

In real-world applications, learners are often deployed alongside others, either in cooperation or competition. Considering several banks simultaneously building loan approval models, each bank trains its own model to predict defaults, yet these predictions influence future applicant distributions. For instance, if one bank tightens its approval threshold, more applicants may turn to other banks, shifting the overall distribution. Thus, each bank’s predictive strategy shapes not only its own data environment but also those of others. Such interactions naturally evolve toward an equilibrium, giving rise to the concept of \textit{multiplayer performative prediction}. Similar to the performative optimality, the Nash equilibria aim to find a set of prediction models that for each player $i$, its performative risk function based on other players is the minimum. Analogous to performative optimality, the Nash equilibrium corresponds to a set of prediction models where, for each player $i$, the performative risk conditioned on the strategies of other players attains its minimum (the exact definition is given in Section~\ref{subsec:PP setting}). Besides, the performative stable equilibria characterize situations in which the prediction model employed for decision-making is also optimal with respect to the distribution it induces. Therefore, each player $i$ has no intention to deviate from the stable equilibria while it only has access to the distribution generated by it.

While prior work has primarily focused on developing algorithms to identify performative stability and optimality, our work focuses on constructing a statistical inference framework for performative prediction in both the single-player and multi-player settings, offering critical capabilities such as efficiency analysis, uncertainty quantification, and decision making. For instance, we can construct the confidence interval for the estimates to quantify the variability arising from both data randomness and the distributional feedback induced by model deployment. This enables rigorous assessment of the model’s stability and reliability when its predictions influence the underlying data-generating process. More importantly, we improve the performance of inference under performativity to its optimal level among all estimation procedures for finding the performative stability and optimality in both settings. The efficiency of our estimators ensures that the subsequent analysis will be as accurate and reliable as possible. By integrating our inferential framework into performative predictions, we go beyond merely identifying target models, as we provide formal guarantees of their reliability and interoperability at the highest achievable level.

\subsection{Overview of our results}

{Throughout this work, the term ``performative prediction'' specifically refers to the single-player setting, where only one decision maker interacts with the environment.}
As the multiplayer performative prediction is a general case of performative prediction, and the methods for finding the stable and Nash equilibria described in \cite{narang2023multiplayer} are built upon their single-player counterparts, we initiate a systematic study of statistical inference for performatively stable and Nash equilibria in the multiplayer setting, and it naturally encompasses the single-agent case. For both equilibria, we develop feasible estimation procedures and establish their asymptotic normality and efficiency. The resulting framework offers a unified statistical inference approach applicable to both classical performative prediction and its multiplayer generalizations.

\subsubsection{Performative Stability}
The estimation procedure for the performative stable equilibria builds on the model update scheme known as \textit{Repeated Retraining (RR)}, introduced in the work \cite{narang2023multiplayer}, wherein the model parameter $\theta_t$ is updated iteratively by minimizing the risk function set evaluated on the distribution induced by the previous model. Inspired by the estimation method for the performative stability based on the repeated risk minimization in the work \cite{li2025statisticalinferenceperformativity} and the structure of RR, we first construct an estimation procedure for $\theta_t$ by replacing the risk function with the empirical risk function in the RR scheme at each iteration for every player. We refer to this method as \textit{Empirical Repeated Retraining (ERR)}. We show that under certain conditions on the underlying distribution map $\D(\theta)$ and the loss functions for each player $i$, the ERR-based estimators $\hat \theta_t$ follow the central limit theorem for $\theta_t$, that is, the deviation $\sqrt{N}(\hat\theta_t - \theta_t)$ at every time $t \in \mathbb{T}$ is asymptotically normal with cumulated asymptotic covariance, which is related to the covariance at all previous iterations.

To establish the optimality of this method, we derive a lower bound on the asymptotic covariance for any estimation procedure targeting performative stability along a sequence of small perturbations of the original performative problem. We then show that, under suitable regularity conditions, our ERR-based estimator attains this bound, demonstrating its asymptotic efficiency.

\begin{theorem}[Stability, informal]
\label{thm:sum of stable, normality}
    Suppose that for each player $i$, the distribution map is $\epsilon_i$-Lipschitz in Wasserstein-1 distance, the loss function is $\beta_i$-jointly smooth, and the gradient function is $\alpha_i$-strongly monotone on $\theta^i$, and locally Lipschitz on $\theta^i$. Suppose $\sum_{k=1}^m (\frac{\beta_i\epsilon_i}{\alpha})^2 < 1$ holds and $\{\theta_t\}_{t=1}$ lie in the interior of $\Theta$, so the estimators $\hat{\theta}_t$ generated from the ERR method follow the central limit theorem for $\theta_t$ at each iteration $t$ with cumulated asymptotic covariance:
    $$
    \sqrt{N}(\hat \theta_t - \theta_t) \xrightarrow{d} N(0, \Sigma_t),
    $$ 
    where the covariance $\Sigma_t$ is related to the covariance at all previous iterations.

    Suppose $\theta_{t-1} \neq \theta_{PS}$ and its ERR-based estimator $\hat \theta_t$ satisfies the condition of regularity, then the ERR-based estimator is semiparametrically efficient.
\end{theorem}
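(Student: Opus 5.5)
The plan is an induction over the iterations $t$ for the central limit theorem, followed by a local asymptotic minimax argument along a parametric submodel for efficiency. Write the multiplayer stacked gradient map $G(\theta;\bar\theta)=\big(\E_{Z^i\sim\D_i(\bar\theta)}\nabla_{\theta^i}\ell_i(\theta,Z^i)\big)_{i=1}^m$. Then $\theta_t$ is the unique zero of $G(\cdot;\theta_{t-1})$, which exists because of $\alpha$-strong monotonicity. The ERR estimator $\hat\theta_t$ is the zero of the empirical version $\hat G_t(\cdot;\hat\theta_{t-1})$, built from $N$ fresh samples drawn from $\D_i(\hat\theta_{t-1})$. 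The condition $\sum_i(\beta_i\epsilon_i/\alpha)^2<1$ makes the map $\bar\theta\mapsto\theta(\bar\theta)$ a contraction. I use it only to keep the iterates in a compact interior region, where the implicit function theorem applies uniformly.

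\textbf{Step 1 (consistency and linearization at a fixed $t$).} Assume inductively that $\sqrt N(\hat\theta_{t-1}-\theta_{t-1})\to N(0,\Sigma_{t-1})$. Conditional on $\hat\theta_{t-1}$, the $Z$-estimator $\hat\theta_t$ is consistent for $\theta(\hat\theta_{t-1})$. This follows from strong monotonicity together with a uniform law of large numbers, which the local Lipschitz property of the gradient supplies. A Taylor expansion then gives
\[
\sqrt N\big(\hat\theta_t-\theta(\hat\theta_{t-1})\big)=-H_t^{-1}\frac{1}{\sqrt N}\sum_{j=1}^N g_t(Z_j)+o_P(1),
\]
where $H_t=\nabla_\theta G(\theta_t;\theta_{t-1})$ and $g_t$ is the stacked score. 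I would handle the shifting sampling distribution with a triangular-array (Lindeberg) CLT conditional on $\hat\theta_{t-1}$. \textbf{Step 2 (delta method through the distribution map).} Differentiating $G(\theta(\bar\theta);\bar\theta)=0$ gives $\nabla\theta(\bar\theta)=-H_t^{-1}\partial_{\bar\theta}G$. Here I need differentiability of $\bar\theta\mapsto \E_{\D(\bar\theta)}\nabla\ell$, which I take as part of the regularity conditions of the formal theorem. Then
\[
\sqrt N(\hat\theta_t-\theta_t)=\sqrt N\big(\hat\theta_t-\theta(\hat\theta_{t-1})\big)+\nabla\theta(\theta_{t-1})\sqrt N(\hat\theta_{t-1}-\theta_{t-1})+o_P(1).
\]
The two terms are asymptotically independent, since fresh samples are used at each round. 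This yields the recursion
\[
\Sigma_t=H_t^{-1}\Cov(g_t)H_t^{-\top}+\nabla\theta(\theta_{t-1})\,\Sigma_{t-1}\,\nabla\theta(\theta_{t-1})^\top,
\]
which is the cumulated covariance in the statement. The independence claim follows because the conditional characteristic function of the first term converges to a deterministic Gaussian limit.

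\textbf{Step 3 (efficiency).} I would build smooth one-dimensional perturbations $\D_{i,h}(\theta)$ of each distribution map with density ratio $1+h^\top s_i(z;\theta)/\sqrt N$, together with the induced local sequence $\theta_{t,h}$. Its derivative in $h$ is again given by the chain rule through all previous rounds. The local likelihood ratio of the full $t$-round experiment is LAN, with the scores summed over rounds. A H\'ajek--Le Cam convolution theorem for regular estimators then bounds the asymptotic covariance below by $\dot\psi I^{-1}\dot\psi^\top$. Choosing the least favorable direction $s_i\propto$ the influence function from Steps 1--2 shows that this bound equals $\Sigma_t$.

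The hypothesis $\theta_{t-1}\neq\theta_{PS}$ enters here. It ensures that the dependence of $\theta_t$ on the earlier rounds' distributions is non-degenerate, so the target is pathwise differentiable along nontrivial submodels. I expect this last step to be the main obstacle. The difficulty is that the sampling distribution at round $t$ depends on the random $\hat\theta_{t-1}$, so LAN must be verified for an adaptively generated experiment. I would try first to condition round by round and use a martingale CLT for the log-likelihood ratio.
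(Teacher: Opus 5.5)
Your Steps 1--2 follow the paper's argument for the CLT: consistency of the $Z$-estimator for $\mathrm{sol}(\hat\theta_{t-1})$, a conditional linearization given $\hat\theta_{t-1}$, the delta method through $\mathrm{sol}$, and a characteristic-function argument to combine the pieces. Your version uses only that the conditional characteristic function has a deterministic Gaussian limit, so it is cleaner and does not need the paper's density assumption on $\hat\theta_{t-1}$. Step 3 has the paper's architecture too, and the paper handles the LAN step you flag as the main obstacle essentially as you propose, round by round following Cutler et al.

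\textbf{The gap in Step 3.} The gap is the step \emph{choose the least favorable direction}, and your account of where $\theta_{t-1}\neq\theta_{PS}$ enters is not the right one. The ERR influence function is a sum over rounds. Its round-$j$, player-$i$ piece $s_{j,i}(z)$ is $-\big(\prod_{l=j}^{t-1}J_{sol}(\theta_l)\big)V_{\theta_{j-1}}(\theta_j)^{-1}$ applied to the vector with $G_i(\theta_j,z)$ in block $i$ and zeros elsewhere, so it is a different function for each $j$. A perturbation of the \emph{map} $\D_i(\cdot)$, however, carries a single function $s_i(\theta,z)$, and round $j$ sees it only through $s_i(\hat\theta_{j-1},\cdot)\approx s_i(\theta_{j-1},\cdot)$.

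So the least favorable sub-model exists only if some $s_i(\theta,z)$, smooth in $\theta$, satisfies $s_i(\theta_{j-1},\cdot)=s_{j,i}$ for all $j\in[t]$. This requires $\theta_0,\ldots,\theta_{t-1}$ to be pairwise distinct, and that is exactly what the hypothesis gives, via the contraction. If $\theta_l=\theta_k$ with $l<k\le t-1$, the recursion $\theta_{j+1}=\mathrm{sol}(\theta_j)$ makes the sequence periodic from $l$ on. A convergent periodic sequence is constant, so $\theta_{t-1}=\theta_{PS}$.

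Pathwise differentiability is not what is at stake; it holds regardless. When iterates coincide (e.g.\ $\theta_0=\theta_{PS}$), several rounds sample the same distribution and the tangent space shrinks. The bound then falls below $\Sigma_t$ in general, and ERR, which does not pool those rounds, need not be efficient.

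\textbf{The tilt must stay in the model class.} The convolution theorem is over $\mathscr{D}$, so the perturbed maps must remain inside it. For small $h$ they must stay $(\epsilon_i+O(h))$-sensitive and $(\alpha-O(h))$-strongly monotone. They must also stay compatible (the strict inequality leaves slack) and keep the local Lipschitz, bounded-Jacobian and differentiability conditions. This needs $s_i(\theta,z)$ to be:
\begin{itemize}
\item bounded;
\item Lipschitz in $z$, which it inherits from the $\beta_i$-Lipschitzness of $G_i$ in $z$;
\item differentiable in $\theta$, e.g.\ via a smooth interpolation through the finitely many distinct points $\theta_{j-1}$.
\end{itemize}
The paper gets the bounds from compactness and $C^2$ losses. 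The same continuity in $\theta$ is what justifies replacing $\hat\theta_{j-1}$ by $\theta_{j-1}$ in the LAN expansion.

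Your linear tilt $1+h^\top s_i/\sqrt N$ also requires $s_i(\theta,\cdot)$ to be centered under $\D_i(\theta)$ for \emph{every} $\theta$, not only at the iterates. The paper's normalized bounded tilt $K(u^\top s_i)/C_i^u(\theta)$, with $K(x)=2/(1+e^{-2x})$, avoids this.
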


Intuitively, the statistical inference results derived in the multiplayer setting can be seamlessly reduced to their single-player counterparts, reflecting that the latter can be viewed as a special case of our more general framework. This unifying perspective highlights the flexibility of our approach and its capacity to encompass both individual and interactive performative learning scenarios. In the single-player setting, our ERR estimation method will reduce to the \textit{Repeated Empirical Risk Minimization (RERM)} method, simply based on the repeated risk minimization. Under the single-player version of the required assumptions, the deviation still converges to a normal distribution with a covariance related to the previous ones in distribution. Similarly, we establish the local asymptotic optimality of our RERM-based estimator, showing that it attains the semiparametric efficiency bound.

\subsubsection{Performative Optimality}

\textit{Plug-in performative optimization} is a useful technique for finding the performative optimal point for the single-player performative prediction introduced in the work \cite{lin2023plug}, in which the optimum based on a $\beta$-misspecified yet known distribution map can help with learning the true performative optimal point $\theta_{PO}$, with a bounded error between their performative risk. In this paper, we extend the algorithm to the more general multiplayer setting. In this context, we first estimate the distributional parameter $\beta_i$ for each player $i$, and then compute the plug-in optimum based on the distribution map induced by the estimator $\hat{\beta}_i$.

Beginning with fitting the distributional parameter, rather than relying only on empirical risk minimization, as the previous work \cite{lin2023plug} did, we establish the estimation for $\beta_i$ by a three-fold cross-fitting procedure based on the recalibrated prediction-powered inference (RePPI) method, as it ensures the efficiency under certain conditions. We demonstrate the asymptotic normality of our recalibrated estimation $\hat \beta_i$, and further prove its efficiency by identifying the efficiency influence functions in this setting. Based on the $\hat \beta_i$, we can use the empirical plug-in optimization to generate the plug-in estimator for Nash equilibria. However, since the fitted parametric model is still related to $\theta$, drawing samples directly is still hard here. To solve this problem, we combine the plug-in optimization with importance sampling to enable the collection of samples. Let $\hat{\theta}_{PO}^{\hat{\beta}}$ denote the plug-in estimator based on the distributional estimator, and we similarly establish the central limit theorem, that is, the asymptotic normality of the deviation with asymptotic covariance related to that of the distributional estimator, and further prove that it attains the lower bound. 

\begin{theorem}[Optimality, informal]
\label{thm:informal original optimal beta}
    Suppose that for each player, the distribution atlas is smooth and misspecified in total-variation distance, and the loss functions follow the conditions of local Lipschitzness, differentiability, and convexity. Suppose the solution map to the plug-in optimization is differentiable in $\beta$ at $\beta^*$. Denote $s_i^*(\theta)= \E\big[\nabla_{\beta_i} r_i(\theta,Z^i;\beta_i^*)|\theta\big]$. If the estimation for $s_i^*(\theta)$ is consistent at each fold and sample sizes satisfy certain conditions, we have:
    $$
    \sqrt{N}(\hat{\beta}_i - \beta_i^*) \xrightarrow{P} N(0,\Sigma_{\beta_i}),
    $$ 
    $$
    \sqrt{N}(\hat{\theta}^{\hat{\beta}}_{PO} - \theta^{\beta^*}_{PO})  \xrightarrow{d}  N(0,\Sigma_\theta),
    $$
    where the covariance $\Sigma_\theta$ is related to the covariance of the estimator of the distributional parameter $\Sigma_{\beta_i}$
    Under certain conditions of regularity, $\Sigma_{\beta_i}$ and $\Sigma_\theta$ reach the lower bounds for all deviations for $\beta$ and $\theta_{PO}^\beta$.
\end{theorem}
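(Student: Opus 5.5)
The plan is to prove the two displayed limits in sequence and then the efficiency claim, all in the multiplayer setting with the single-player case following by taking $m=1$.

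\textbf{Step 1: CLT for $\hat\beta_i$.} Write the three-fold cross-fitted RePPI estimator as the solution of an estimating equation. This equation combines the labeled score $\nabla_{\beta_i} r_i(\theta,Z^i;\beta_i)$ with an importance-weighted correction built from the fitted $\hat s_i$, where $\hat s_i$ estimates $s_i^*(\theta)=\E[\nabla_{\beta_i} r_i(\theta,Z^i;\beta_i^*)\mid\theta]$ on the other folds.

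Standard M/Z-estimation arguments give
\[
\sqrt{N}(\hat\beta_i-\beta_i^*) = -H_{\beta_i}^{-1}\frac{1}{\sqrt N}\sum_j \psi_i(W_j;s_i^*) + R_N,
\]
where $H_{\beta_i}$ is the Jacobian of the population estimating equation at $\beta_i^*$. Consistency of $\hat\beta_i$ comes from convexity/identifiability of the recalibrated objective, together with a uniform LLN under local Lipschitzness.

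The remainder splits into two pieces:
\begin{itemize}
\item an empirical-process piece $(\mathbb P_n-P)[\psi_i(\cdot;\hat s_i)-\psi_i(\cdot;s_i^*)]$, which is $o_P(N^{-1/2})$ by cross-fitting, Chebyshev conditional on the other folds, and $L_2$-consistency of $\hat s_i$;
\item a bias piece $P[\psi_i(\cdot;\hat s_i)-\psi_i(\cdot;s_i^*)]$, which vanishes because the correction term has conditional mean zero given $\theta$ (Neyman orthogonality of the RePPI score).
\end{itemize}
The sample-size ratio conditions make the fold weights converge, so the CLT yields $\Sigma_{\beta_i}=H_{\beta_i}^{-1}\Cov(\psi_i)H_{\beta_i}^{-\top}$.

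\textbf{Step 2: CLT for $\hat\theta^{\hat\beta}_{PO}$.} Decompose
\[
\hat\theta^{\hat\beta}_{PO}-\theta^{\beta^*}_{PO}=\bigl(\hat\theta^{\hat\beta}_{PO}-\theta^{\hat\beta}_{PO}\bigr)+\bigl(\theta^{\hat\beta}_{PO}-\theta^{\beta^*}_{PO}\bigr).
\]
The second term is handled by the delta method using differentiability of the solution map $\beta\mapsto\theta^\beta_{PO}$ at $\beta^*$; it equals $J\,(\hat\beta-\beta^*)+o_P(N^{-1/2})$, where $J$ is obtained from the implicit function theorem applied to the Nash first-order conditions.

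The first term is the Monte Carlo/importance-sampling error of the empirical plug-in game. Its size depends on how samples are drawn:
\begin{itemize}
\item If the number of importance samples grows faster than $N$, it is $o_P(N^{-1/2})$. This needs a uniform LLN for the importance-weighted gradients, using total-variation smoothness of the atlas to bound the weights, plus strong monotonicity to turn gradient error into parameter error.
\item Otherwise it contributes an independent Gaussian term added to $\Sigma_\theta$.
\end{itemize}
Collecting terms gives $\Sigma_\theta=J\Sigma_\beta J^\top$ (plus the IS variance if present). Misspecification only changes the target to $\theta^{\beta^*}_{PO}$, with $\beta^*$ defined as the population minimizer.

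\textbf{Step 3: Efficiency.} First compute the efficient influence function for $\beta_i^*$ in the semiparametric model. Its nuisances are the conditional laws and the labeling/sampling mechanism, and the projection of the score onto the tangent space yields exactly $\psi_i$ with the optimal $s_i^*$. Hence $\Sigma_{\beta_i}$ attains the bound among regular estimators via the convolution theorem.

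Since $\theta^\beta_{PO}$ is a differentiable functional of $\beta$, its efficient influence function is $J$ times that of $\beta$, so $\Sigma_\theta$ is also optimal (the IS sample is assumed negligible).

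The main obstacle is expected to be this EIF computation, namely showing that the recalibrated score lies in, and spans, the appropriate tangent space. The first attempt would parametrize submodels through $\beta$ plus a nonparametric perturbation of the conditional law and verify pathwise differentiability directly.
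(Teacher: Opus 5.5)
\textbf{Gap: the role of the known law $\D_\theta$.} Your architecture is the paper's: a cross-fitted Z-estimator linearization with the empirical-process remainder removed by $L_2$-consistency of $\hat s_i$; the decomposition through $J_{sol}(\beta^*)$; and EIF plus convolution theorem. However, you misidentify what produces both the variance reduction and the efficiency bound, which is that the law $\D_\theta$ of $\theta$ is known.

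In Step 1, the terms involving $\hat s_i$ are $\hat M_i\hat s_i(\theta_k)$ and its Monte Carlo counterpart $\hat M_i\hat s_i(\tilde\theta_k)$, with $\tilde\theta_k\sim\D_\theta$. These are functions of $\theta$ alone, so they do not have conditional mean zero given $\theta$, and nothing is importance weighted. Their combined mean is exactly zero, for any $\hat s_i,\hat M_i$ fitted on other folds, because $\theta_k$ and $\tilde\theta_k$ share the known law $\D_\theta$.

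The sample-size condition that matters is $N/\tilde N_i\to 0$, not convergence of fold weights. With $N/\tilde N_i\to r_i$ the middle matrix is $\Cov(\nabla_{\beta_i}r_i)-\frac{1}{1+r_i}\Cov(s_i^*)$, so efficiency requires the Monte Carlo term to be negligible. You also need $\hat M_i\to M_i=\Cov(\nabla_{\beta_i}r_i,s_i^*)\Cov(s_i^*)^{-1}=I$, which uses $\Cov(\nabla_{\beta_i}r_i,s_i^*)=\Cov(s_i^*)$. That identity is what turns $\Cov(\psi_i)$ into $\Cov(\nabla_{\beta_i}r_i)-\Cov(s_i^*)$.

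In Step 3 the sampling mechanism must \emph{not} be a nuisance. The model fixes $Q_\theta=\D_\theta$ and leaves $Q_{Z\mid\theta}$ free, so every score satisfies $\E[s\mid\theta]=0$. Let $H=\E\nabla_\beta^\top G_r(\theta,Z;\beta^*)$. Differentiating $\E_{P^u}G_r(\theta,Z;\beta^{*(u)})=0$ gives $-H^{-1}\E[G_r s]$. The conditional-mean-zero property then lets you replace $G_r$ by $G_r-\E_{\D(\theta)}G_r$, which is Lemma \ref{lem:EIF}.

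If the marginal were treated as a nuisance, the bound would revert to the ERM sandwich $H^{-1}\Cov(G_r)H^{-1}$. Once the tangent space is right, the EIF is a two-line computation rather than the main obstacle. The real work is showing it lies in the tangent space. The paper does this with bounded tilts $K(u_i^\top s_i)/C_i^{u_i}(\theta)$, normalized in each $\theta$ so that $\D_\theta$ is preserved, and checks that they stay in the model; this is where Assumption \ref{asm:bounded_optimal} enters. Indexing submodels ``through $\beta$'' does not fit the misspecified setting, since $\beta^*$ is a functional of the law, not an index of it.

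\textbf{Step 2: a different and cleaner route.} You show $\sqrt N(\hat\theta^{\hat\beta}_{PO}-\theta^{\hat\beta}_{PO})=O_P(\sqrt{N/n})=o_P(1)$ and then apply the delta method and Slutsky. The paper instead proves a conditional CLT given $\hat\beta$ and integrates it out with characteristic functions, which needs its ``strongly smooth distribution'' condition. It obtains variance $\Sigma_\theta+\frac{n}{N}J_{sol}(\beta^*)\Sigma_\beta J_{sol}(\beta^*)^\top$ on the $\sqrt n$ scale and then rescales using $N/n\to0$.

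Your argument avoids that assumption and the limit interchange. In exchange, it needs the conditional $\sqrt n$-tightness to hold uniformly for $\beta$ near $\beta^*$. You can get this from independence of the importance samples and $\hat\beta$, a local envelope for $G(\theta,Z,\beta)$, and uniformly nonsingular $V_\beta$.

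One justification is wrong. Total-variation smoothness in $\beta$ bounds $\int|p_\beta-p_{\beta'}|$ but says nothing about the likelihood ratio $\D_\beta(z;\theta)/q(z)$, which may be unbounded. Use the local-Lipschitz and second-moment conditions on $G(\theta,Z,\beta)$ under $q$ instead. The total-variation assumptions only feed the error-gap theorem, not the CLT.
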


Analogous to the case of performative stability, the inference framework developed for multiplayer performative prediction naturally reduces to its single-player counterpart under the single version of corresponding conditions, built upon the same ideas.

\subsection{Related Work}

\paragraph{Performative prediction} 
The performative prediction was first introduced in \cite{perdomo2020performative}, which focuses on the concepts of performative stability and performative optimality, and is further refined by a line of works \cite{mendler2020stochastic, mofakhami2023performative, miller2021outside, izzo2021learn, drusvyatskiy2020stochasticoptimizationdecisiondependentdistributions, jagadeesan2022regretminimizationperformativefeedback}. 
Some studies proposed various algorithmic variants to achieve performatively stable points. For example, \cite{perdomo2020performative} proposed two algorithms, RRM and RGD, for finding stable points at the population level, while \cite{mendler2020stochastic} developed two variants of the stochastic gradient method for performative predictions based on the RGD algorithm. Moreover, \cite{drusvyatskiy2020stochasticoptimizationdecisiondependentdistributions} demonstrated that many gradient-based algorithms in the decision-dependent setting can be viewed as standard algorithms on a static problem, with only a vanishing bias. Though \cite{perdomo2020performative} proves that under certain conditions, the performatively stable point is close to the performatively optimal point, stability can be far from optimality when evaluated in terms of the performative risk.  Therefore, numerous works \cite{miller2021outside, izzo2021learn, jagadeesan2022regretminimizationperformativefeedback, lin2023plug} focusing on obtaining performative optimality have emerged. For example, \cite{miller2021outside} proposes a two-stage algorithm for optimizing the performative risk and proves its efficiency in location families. \cite{izzo2021learn} introduces the PerfGD algorithm for computing performatively optimal points and proves its convergence, and \cite{lin2023plug} presents a distributional-plug-in algorithm to effectively approximate the true optimality.

All the works mentioned above focus on the single-player performative setting, in which the interaction exists solely between a single model and agents that respond to its actions. However, performative prediction can also involve an interconnected set of models, where each is implemented together with others. This scenario was formalized in \cite{narang2023multiplayer} as multiplayer performative prediction. The study defines performatively stable equilibria and Nash equilibria, which are aligned with performative stability and performative optimality in the single-player setting, and proposes several algorithms for finding them based on the algorithms designed for the single-player setting.

Most existing works focus on finding the two target equilibria, while only a few investigate statistical inference under performativity. In particular, \cite{li2025statisticalinferenceperformativity} introduces a framework for statistical inference at the performatively stable point, based on the RRM algorithm from \cite{perdomo2020performative}, whereas \cite{cutler2024stochasticapproximationdecisiondependentdistributions} proposes a more general framework for all stable equilibria in decision-dependent settings based on the stochastic gradient-based algorithms.

\paragraph{Recalibrated Prediction-Powered Inference} 
RePPI is developed in the work \cite{ji2025predictions}, mainly based on the concepts of surrogate outcome models and prediction-powered inference.
Surrogate outcomes, also known as auxiliary or proxy variables, are frequently collected to facilitate faster data analysis and enhance statistical efficiency, and surrogate outcome models are widely applied in the application field of clinical trials \cite{prentice1989surrogate,wittes1989surrogate,pepe1992inference,post2010analysis, fleming1994surrogate} and marketing and business \cite{chen2005measurement, athey2019surrogate, kallus2025role,zhang2023evaluating}. The form of the loss function in the optimal surrogate model is given in \cite{robins1994estimation}, leading to the property of efficiency of surrogate outcome models, and therefore leading to the efficiency of RePPI. In all the studies mentioned above, surrogates are still required to be collected by the researcher, though typically at a lower cost than the outcome of primary interest. Also, surrogates may be subject to missingness, arising from survey non-response, dropout, or unexpected measurement failures, which leads to various problems \cite{prentice1989surrogate, frangakis2002principal, chen2007criteria}.

Prediction-Powered Inference (PPI) \cite{angelopoulos2023prediction} is a semi-supervised statistical framework related to inference with missing data and semi-supervised inference \cite{azriel2022semi,chernozhukov2018double,robins1995semiparametric,zhang2019semi,song2024general,robins1994estimation,rubin1976inference}. Unlike the surrogate outcomes which are collected manually by the researcher, PPI leverages black-box machine learning predictions as proxy variables to enhance the efficiency and validity of classical inferential procedures. In this framework, the researcher has access to a small labeled dataset, a large unlabeled dataset and its machine learning predictions generated by a pre-trained model, and constructs a bias-corrected estimator for target parameters by decomposing the estimation error into two components: a model-based prediction term and a debiasing term derived from gold-standard measurements. There are plenty of extensions of PPI, including PPI++ \cite{angelopoulos2023ppi++}, Stratified PPI \cite{fisch2024stratified}, Cross PPI \cite{zrnic2024cross}, etc. \cite{gan2023prediction,miao2023assumption,gronsbell2024another}

The work \cite{ji2025predictions} connects the Surrogate Outcome Model and Prediction-Powered Inference to construct the Recalibrated Prediction-Powered Inference (RePPI), which generates more efficient estimators than existing PPI proposals. To make the procedure practical, they present a three-fold cross-fitting algorithm for RePPI, which allows learning the intractable integral by flexible machine learning methods. Specifically, the estimator will achieve the smallest asymptotic variance if the integral is estimated consistently.

\subsection{Notation and Definitions}
We clarify the notations we use in this paper. 
Throughout, we denote a standard $d$-dimensional Euclidean space as $\R^d$, with inner product $\langle x,y \rangle = x^Ty$ and induced norm $\|x\| = \sqrt{\langle x,x \rangle}$. For any set $\Theta \subset \R^d$, the projection of a point $x \in \R^d$ onto the set is denoted by $\Pi_\Theta(x) = \argmin_{\theta \in \Theta}\|x - \theta\|$, meaning the nearest points of $\Theta$ to $x$. The normal cone $\mathcal{N}_\Theta(x)$ to a convex set $\Theta$ at $\theta \in \Theta$ is the set $\mathcal{N}_\Theta(x) = \{v \in \R^d \mid \langle v, \theta-x \rangle \leq 0 \text{ for all } \theta \in \Theta\}$.

\section{Preliminaries}

\subsection{Problem Setup}
\label{subsec:PP setting}

\paragraph{Multi-player Performative Prediction}
Suppose we have $m$ players in our prediction, and the model parameter for every player $i$ is denoted as $\theta^i$. Fix an index set $[m] = \{1,...,m\}$, the dimension of the model parameter $\theta^i$ for each player $i$ as $d_i$, and let $d = \sum_{i=1}^m d_i$. Let $\Theta_i \subset \mathbb{R}^{d_i}$ denote the model parameter space for each player $i$, and $\Zcal_i$ denote the variate space, both of which are convex and closed. 
The parameter vector $\theta \in \R^d$ at the population level is decomposed by $\theta^i \in \R^d$ with $\theta = (\theta^1, ..., \theta^m)$. For each player $i$, we separate the parameter vector as $\theta = (\theta^i, \theta^{-i})$, where $\theta^{-i}$ denotes the parameter vector of all other players. 
According to the definition of multiplayer performative prediction \cite{narang2023multiplayer}, we have a collection of functions $\ell_i: \R^{d_i} \rightarrow \R$ for each player $i$, and they seek to solve the decision-dependent optimization problems interconnected with others:
$$
\min_{\theta^i \in \Theta_i} \mathcal{L}_i(\theta^i, \theta^{-i}) = \min_{\theta^i \in \Theta_i} \E_{Z_i \sim \D_i(\theta)} \ell_i(\theta^i, \theta^{-i}, Z^i),
$$
where the random variable $Z^i$ for each player $i$ is governed by the distribution map $\D_i(\theta)$, which is related to all the players as $\theta = (\theta^1,...,\theta^m)$.
In our work, the \textit{Nash equilibrium} is defined as a vector $\theta_{PO} \in \R^d$ if the following condition holds:
\begin{equation*}
    \theta_{PO}^i = \argmin_{\theta^i \in \Theta_i} \mathcal{L}_i(\theta^i, \theta_{PO}^{-i})  = \argmin_{\theta^i \in \Theta_i} \E_{Z^i \sim \D_i(\theta^i, \theta^{-i}_{PO})} \ell_i(\theta^i, \theta^{-i}_{PO}, Z^i), \quad \forall i \in [m],
\end{equation*}
where each player $i$ selects its model parameter $\theta^i_{PO}$ to minimize its own performative risk, assuming that all other players simultaneously adopt their respective best-response strategies under the same rationale.

Denote $\D = \D_1 \times..\times \D_m$. We can rewrite the prediction problems above into the generalized first-order condition form.
Denote the gradient of the function $\ell(\cdot)$ with respect to $\theta^i$ as $\nabla_i \ell(\cdot)$, then we have  a vector of gradient functions as follows:
\begin{equation*}
\begin{split}
    G(\theta,Z) & = (G_1(\theta,Z^1),...,G_m(\theta,Z^m)) \\
    & =(\nabla_1 \ell_1(\theta,Z^1), ..., \nabla_m \ell_m(\theta,Z^m)).
\end{split}
\end{equation*}
For simplicity, we refer to $G(\theta,Z)$ as the Jacobian matrix throughout this paper. Define the joint space as $\Theta = \Theta_1 \times ... \times \Theta_m$ and $\Zcal = \Zcal_1 \times ... \times \Zcal_m$, we have Nash equilibria characterized by the generalized first-order condition:
\begin{equation}
\label{equ:VI}
    0 \in G(\theta_{PO},Z) + \mathcal{N}_\Theta(\theta_{PO}).
\end{equation}
At the Nash equilibrium $\theta_{PO}$, each player $i$ has no intention to deviate from $\theta^i_{PO}$ when actions of all other players remain at $\theta_{PO}^{-i}$. Note that performative stable equilibria $\theta_{PS}$ can be seen as the Nash equilibria of a static problem set, where the underlying distribution is fixed at $\theta_{PS}$, the definitions here are also valid for it.

It is worth noting that the notation introduced above subsumes the single-player performative prediction as a special case. When $m = 1$, the Nash equilibrium condition degenerates to the performative optimality problem:
\begin{equation*}
    \theta_{\mathrm{PO}} = \argmin_{\theta \in \Theta} \mathcal{L}(\theta) = \argmin_{\theta \in \Theta} \E_{Z \sim \D(\theta)} \ell(\theta, Z),
\end{equation*}
where the optimal parameter $\theta_{PO}$ minimizes the expected loss evaluated under the distribution $\D(\theta)$ induced by its own deployment. Moreover, the performative optimality condition can be equivalently expressed in the same generalized first-order condition form as in (\ref{equ:VI}), with the operator defined by $G(\theta, Z) = \nabla_\theta \ell(\theta, Z)$.

\paragraph{Strong Monotonicity}

A map $g:\R^d \rightarrow \R$ is called $\alpha$-strongly monotone on $\Theta \subset \R^d$ for $\alpha >0$ if for every $\theta_1, \theta_2 \in \R^d$:
$$
\langle g(\theta_1)- g(\theta_2), \theta_1 - \theta_2 \rangle \geq \alpha\|\theta_1 - \theta_2\|^2.
$$
If $g = G(\theta,Z) = \nabla \ell(\theta,Z)$, then the $\alpha$-strong monotonicity of the gradient function $G(\theta,Z)$ is equivalent to the $\alpha$-strong convexity of the loss function $\ell(\theta,Z)$. As the work \cite{narang2023multiplayer} has claimed, finding global Nash equilibria is only possible for the monotone game, strong monotonicity is an important assumption throughout our analysis.

\paragraph{Probability Measures}

For notational simplicity, we will assume that all expectations with respect to a measure exist and that integration and differentiation can be interchanged whenever they appear. These assumptions are standard and can be rigorously justified under uniform integrability conditions.
Given a metric space $\Zcal$ with a Borel $\sigma$-algebra, let $\mathbb{P}(\Zcal)$ denote the set of probability measures on $\Zcal$ with finite first moment. We can measure the deviation between two measures $P,Q \in \mathbb{P}(\Zcal)$ by the Wasserstein-1 distance:
$$
W_1(P, Q) = \sup_{f \in \mathrm{Lip}_1} \left\{ \mathbb{E}_{X \sim P}[f(X)] - \mathbb{E}_{Y \sim Q}[f(Y)] \right\},
$$
where the supremum is taken over all 1-Lipschitz functions $f: \Zcal \to \mathbb{R}$.
Alternatively, one can also measure the deviation between two measures $P,Q \in \mathbb{P}(\Zcal)$ using the total variation distance:
\begin{equation*}
\begin{split}
    d_{\text{TV}}(P,Q) &= \sup_{A \subset \Zcal} |P(A) - Q(A)| \\
    & = \frac{1}{2} \sum_{x \in \Zcal} |P(x) - Q(x)| \quad \text{(discrete case)} \\
    & = \frac{1}{2} \int_{\Zcal} |p(x) - q(x)| \, dx, \quad \text{(continuous case)}
\end{split}
\end{equation*}
where $p,q$ are the probability density functions of the measures $P,Q$.

\subsection{Performatively Stable Equilibria and Repeated Retraining}

Recall that the performative stable model set generates the optimal prediction for a performative problem set based on the distribution induced by the model itself, and the stable equilibrium is a vector $\theta_{PS} \in \R^d$ satisfies:
\begin{equation}
\label{equ:stable equilibria}
    \theta_{PS}^i = \arg\min _{\theta^i \in \Theta_i}\mathbb{E}_{Z^i \sim \mathcal{D}_i(\theta_{PS})} \ell_i(\theta^i,\theta_{PS}^{-i},Z^i), \quad \forall i \in [m].
\end{equation}
Therefore, there is no need for performative stable models to retrain. \textit{Repeated Retraining} is an effective algorithm for finding the performative stable equilibria based on a model update procedure described in the work \cite{narang2023multiplayer}, which is similar to the \textit{repeated risk minimization} for the single-player setting in the work \cite{perdomo2020performative}. 

Suppose we have $m$ players. The procedure begins with an initial vector of model parameters $\theta_0 = (\theta_0^1, ..., \theta_0^m)$ chosen arbitrarily. Based on the distribution induced by the previous $\theta_{t}$, the new parameter vector $\theta_{t+1} = (\theta_{t+1}^1,..., \theta_{t+1}^m)$ is iteratively updated by minimizing the risk function $\ell_i$ evaluated on the distribution induced by the previous model with parameter $\theta_{t}$, according to the update rule for $t \in \mathbb{T}$:
\begin{equation*}
    \theta_{t+1}^i = \arg\min _{\theta^i \in \Theta_i}\mathbb{E}_{Z^i \sim \mathcal{D}_i(\theta_{t})} \ell_i(\theta^i,\theta_{t+1}^{-i},Z^i), \quad \forall i \in [m].
\end{equation*}
To ensure the convergence of the updated sequence $\{\theta_t\}_{t=1}$ towards the stable equilibria, we are required to define additional regularity assumptions for the distribution map and the loss function for each player $i$. According to \cite{perdomo2020performative} and \cite{narang2023multiplayer}, the following assumptions should hold.

\begin{assumption}
\label{asm:existence and convergence}
    Let $G_{i,\theta} (y) = \E_{Z^i \sim \D_i(\theta)}G_i(y, Z^i)$ for each player $i$ and $G_\theta (y) = \E_{Z \sim \D(\theta)}G(y, Z) = (G_{1,\theta} (y),..., G_{m,\theta} (y))$. The following assumptions are required for convergence of $\theta_t$:

    \begin{enumerate}
        \item ($\epsilon_i$-sensitivity) For every player $i \in [m]$, there exist a $\epsilon_i>0$ such that for all $\theta,\theta' \in \Theta$:
        $$
        W_1(\D_i(\theta),\D_i(\theta')) \leq \epsilon_i \|\theta - \theta'\|.
        $$ 
        
        \item ($\alpha$-strong monotonicity) For all $y \in \Theta$, the map $G_\theta (y)$ is $\alpha$-strongly monotone in $y$.
        
        \item (Lipschitz continuity) The loss function $\ell(\theta,Z)$ is $\beta_i$-jointly smooth, that is, for all $\theta, \theta' \in \Theta$ and $Z, Z' \in \mathcal{Z}$, the gradient function $G(\theta, Z)$ is $\beta_i$-Lipschitz continue in $Z^i$ and $\theta^i$ for each $i$:
        $$
        \left\| G_i(\theta^i,\theta^{-i},Z^i) - G_i(\theta^{'i},\theta^{-i},Z^i) \right\| \leq \beta_i \cdot\left\| \theta^i - \theta^{'i} \right\|,
        $$
        $$
        \left\| G_i(\theta, Z^i) - G_i(\theta,Z^{'i}) \right\| \leq \beta_i \left\| Z^i - Z^{'i} \right\|.
        $$

        \item (Compatibility) The coefficients follow $\sum_{i=1}^m (\frac{\beta_i\epsilon_i}{\alpha})^2 < 1$.
    \end{enumerate}
\end{assumption}

By the \cite[Theorem 2]{narang2023multiplayer}, we have the convergence of $\{\theta_t\}_{t=1}$ towards a unique stable equilibrium $\theta_{PS}$ at a linear rate. We summarize this result into Proposition \ref{prop:existence and convergence}.
\begin{proposition}[Existence and convergence  \citep{narang2023multiplayer}]
\label{prop:existence and convergence}
Suppose that the Assumption \ref{asm:existence and convergence} holds for the gradient function $G(\theta,Z)$ and the distribution map $\D(\theta)$, so there exist an unique equilibrium point $\theta_{PS}$, and the iterates $\theta_t$ of our update algorithm converge to $\theta_{PS}$ at a linear rate:
$$
\|\theta_t - \theta_{PS}\| \leq \delta \text{ for }t \geq (1-C)^{-1} \log \left(\frac{\|\theta_0 - \theta_{PS}\|}{\delta}\right),
$$
where $C = \sqrt{\sum_{k=1}^m (\frac{\beta_i\epsilon_i}{\alpha})^2}$.
\end{proposition}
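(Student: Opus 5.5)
The plan is to show that the repeated retraining map $T:\Theta\to\Theta$, $T(\theta)=\theta'$, is a contraction, where $\theta'$ is the solution of the variational inequality $0\in G_\theta(\theta')+\mathcal{N}_\Theta(\theta')$. Existence, uniqueness and linear convergence then follow from the Banach fixed-point theorem. We read the update as the joint best response under the frozen distribution $\D(\theta_t)$: $\theta_{t+1}=T(\theta_t)$. By Assumption~\ref{asm:existence and convergence}(2), $G_\theta(\cdot)$ is $\alpha$-strongly monotone and continuous. Since $\Theta$ is closed and convex, the variational inequality has exactly one solution, so $T$ is well defined. A fixed point of $T$ is precisely a stable equilibrium in the sense of (\ref{equ:stable equilibria}), because for each $i$ the $i$-th block of the VI is the first-order optimality condition for the convex problem of player $i$.

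For the contraction estimate, fix $\theta,\phi\in\Theta$ with $x=T(\theta)$ and $y=T(\phi)$. The normal-cone inclusions give $\langle G_\theta(x),y-x\rangle\ge 0$ and $\langle G_\phi(y),x-y\rangle\ge0$. Adding these and using strong monotonicity of $G_\theta$ yields
\[
\alpha\|x-y\|^2\le \langle G_\theta(y)-G_\phi(y),\,x-y\rangle .
\]
Next we bound the right-hand side blockwise. For each $i$ and each unit vector $u\in\R^{d_i}$, the map $z\mapsto \langle u, G_i(y,z)\rangle$ is $\beta_i$-Lipschitz by Assumption~\ref{asm:existence and convergence}(3). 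Kantorovich--Rubinstein duality together with $\epsilon_i$-sensitivity then give
\[
\|G_{i,\theta}(y)-G_{i,\phi}(y)\|\le \beta_i W_1(\D_i(\theta),\D_i(\phi))\le \beta_i\epsilon_i\|\theta-\phi\|.
\]
Summing squares, $\|G_\theta(y)-G_\phi(y)\|\le \|\theta-\phi\|\sqrt{\sum_i\beta_i^2\epsilon_i^2}$. Cauchy--Schwarz therefore gives $\|x-y\|\le C\|\theta-\phi\|$ with $C=\sqrt{\sum_i(\beta_i\epsilon_i/\alpha)^2}<1$ by compatibility.

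We then conclude with Banach's theorem. $\Theta$ is a closed subset of $\R^d$, hence complete, so $T$ has a unique fixed point $\theta_{PS}$, and $\|\theta_t-\theta_{PS}\|\le C^t\|\theta_0-\theta_{PS}\|$. Using $\log(1/C)\ge 1-C$, the bound $C^t\|\theta_0-\theta_{PS}\|\le\delta$ holds once $t\ge (1-C)^{-1}\log(\|\theta_0-\theta_{PS}\|/\delta)$, which is the stated rate.

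The main obstacle is the Wasserstein step. The duality argument requires a scalar $1$-Lipschitz test function, which is why we project onto unit directions $u$ and take the supremum over $u$; doing this blockwise per player is what produces the weighted $\ell_2$ constant $C$ rather than a cruder $\max_i$ or $\sum_i$ bound. A secondary point is interpreting the update for $\theta_{t+1}$, which is written with $\theta_{t+1}^{-i}$ on the right-hand side, as a simultaneous equilibrium of the static game. Strong monotonicity of the joint map (rather than of each player separately) is what guarantees that this static game has a unique solution.
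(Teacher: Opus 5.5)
Your proof is correct and follows the paper's argument: blockwise Kantorovich--Rubinstein duality gives $\|G_\theta(y)-G_\phi(y)\|\le\sqrt{\sum_i\beta_i^2\epsilon_i^2}\,\|\theta-\phi\|$, strong monotonicity turns this into $C$-Lipschitzness of the retraining map, and Banach's theorem finishes. Your normal-cone formulation and the explicit use of $\log(1/C)\ge 1-C$ are slightly more careful than the paper, which assumes interior solutions ($G_\theta(\mathrm{sol}(\theta))=0$) and leaves the final step implicit; one harmless slip is that adding the two variational inequalities actually gives $\alpha\|x-y\|^2\le\langle G_\phi(y)-G_\theta(y),x-y\rangle$, the opposite sign to what you wrote, though this makes no difference once you apply Cauchy--Schwarz.
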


\begin{remark}
\label{rmk:single existence and convergence}
    For the performative prediction, we set $m=1$, so the Assumption \ref{asm:existence and convergence} reduces to the $\epsilon$-sensitivity, $\beta$-joint smoothness, $\alpha$-strong convexity, and compatibility $\epsilon < \frac{\alpha}{\beta}$, which aligns with the minimal conditions required for $\{\theta_t\}_{t=1}$ convergence. Besides, $C = \frac{\beta \epsilon}{\alpha}$ in Proposition \ref{prop:existence and convergence}, which matches the result in the work \cite{perdomo2020performative}. 
\end{remark}

An effective estimation algorithm for the performative stable point in the single-player setting is introduced in the work \cite{li2025statisticalinferenceperformativity}, inspired by the repeated risk minimization. Initiated by a chosen $\theta_0$, the estimator for each iteration $\theta_t$ for time $t \geq 0$ is given by a dynamic update:
\begin{equation*}
     \hat{\theta}_{t+1}  = \arg\min _{\theta \in \Theta}\frac{1}{N}\sum_{i = 1}^{N} \ell(\theta,Z_{t,i}), \quad  Z_{t,i} = (X_{t,i},Y_{t,i}) \sim  \mathcal{D}(\hat{\theta}_t).
\end{equation*}
Under certain conditions, the estimation $\hat \theta_t$ at time $t$ is asymptotically normal, with its covariance correlated to that from the previous steps.

\subsection{Nash Equilibria and Plug-in Optimization}
\label{subsec:Nash equilibria bg}
As we have introduced above, Nash equilibrium is the point where the performative risk functions for all the players are jointly minimized, that is, the Nash equilibrium is a vector $\theta_{PO} \in \R^d$ such that 
\begin{equation}
\label{equ:Nash equilibria}
    \theta_{PO}^i = \argmin_{\theta^i \in \Theta_i} \E_{Z^i \sim \D_i(\theta^i, \theta^{-i}_{PO})} \ell_i(\theta^i, \theta^{-i}_{PO}, Z^i), \quad \forall i \in [m].
\end{equation}
While the distribution map $\D_i(\theta)$ is usually unknown, it is often intractable to find the optimal point directly and accurately.

\textit{Plug-in performative optimization} is a technique for finding the performative optimal point for the single-player case described in the work \cite{lin2023plug}. They initiate a study of the benefits of modeling feedback in performative prediction, and efficiently learn the true performative optimal point $\theta_{PO}$ by a plug-in optimal point based on a misspecified yet known distribution map. To be more specific, since the unknown distribution map $\D(\cdot)$ makes optimizing the performative risk directly a hard problem, the plug-in optimization considers using a distribution atlas $\mathcal{D}_\mathcal{B} = \{\mathcal{D}_\beta \}_{\beta \in \mathcal{B}}$ for modeling the true distribution map $\D$ with parameter $\beta$. Note that $\D \in \mathcal{D}_\mathcal{B}$ is not required. Based on the sample set ${(\theta_i, Z_i)}_{i=1}^n$ drawn from $\theta_i \sim D_\theta$ and $Z_i \sim D(\theta_i)$ with $D_\theta$ being a user-specified distribution, the best parametric model can be estimated by fitting $\hat{\beta}$ as follows:
\begin{equation}
\label{equ:beta map}
\hat{\beta} = \widehat{Map}\big((\theta_1, Z_1), \ldots, (\theta_n, Z_n)\big),
\end{equation}
where $\widehat{Map}$ is a model-fitting function. Thus, the plug-in performatively optimal point is obtained based on the fitted parametric model:
\begin{equation*}
\theta_{PO}^{\hat{\beta}} = \arg\min_\theta \mathbb{E}_{Z \sim D_{\hat{\beta}}(\theta)} \ell(\theta, Z).
\end{equation*}

The excess risk between the true optimum $\theta_{PO}$ and the plug-in optimum $\theta_{PO}^{\hat \beta}$ arises from two sources of error: the misspecification error, due to $\D \notin \D_{\mathcal{B}}$, and the statistical error, resulting from the discrepancy between $\hat{\beta}$ and $\beta$. According to \cite[Corollary 1, Theorem 3]{lin2023plug}, under certain conditions, the error bound can be further characterized from the perspective of total variation distance, which is specified in Proposition \ref{thm:original error gap}.

\begin{assumption}
\label{assumtion:origin optimal}
Assume the distribution atlas satisfies:
\begin{enumerate}
    \item ($\eta$-misspecification)
    The distribution atlas $\mathcal{D}_\mathcal{B}$ is $\eta$-misspecified:  for all $\theta \in \Theta$, it holds that 
    $$
    dist(\mathcal{D}_{\beta}(\theta) - \mathcal{D}(\theta)) \leq \eta.
    $$
    \item ($\epsilon$-smoothness)
    The distribution atlas $\mathcal{D}_\mathcal{B}$ is $\epsilon$-smooth: for all $\beta_1,\beta_2 \in \mathcal{B}$ and $\theta \in \Theta$, it holds that
    $$
    dist(\mathcal{D}_{\beta_1}(\theta) - \mathcal{D}_{\beta_2}(\theta)) \leq \epsilon \|\beta_1-\beta_2\|_2.
    $$
\end{enumerate}
\end{assumption}

\begin{proposition}[\citep{lin2023plug}]
    \label{thm:original error gap}
    Denote $PR(\theta) = \E_{Z \sim \D(\theta)}\ell(\theta,Z)$. Suppose the Assumption \ref{assumtion:origin optimal} holds in total-variation distance, the loss function is uniformly bounded as $|\ell(\theta,Z)| \leq a$, and the estimation gap between $\hat \beta$ and $\beta$ is bounded as $\|\hat \beta - \beta^*\| \leq b$, then we have 
    $$
    PR(\theta_{PO}^{\hat{\beta}}) - PR(\theta_{\text{PO}}) \leq 4a\eta + 4a\gamma b.
    $$
    If the model-fitting procedure is the empirical risk minimization, and the loss function for model fitting satisfies additional regularity conditions, the excess risk can be further characterized as follows:
    \begin{equation*}
    PR(\theta_{PO}^{\hat{\beta}}) - PR(\theta_{\text{PO}}) \leq 4a\eta + \tilde{O}\left(\frac{1}{\sqrt{n}}\right).
    \end{equation*}
\end{proposition}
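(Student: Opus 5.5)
Let $\beta^*$ denote the population target of the fitting procedure $\widehat{Map}$. I will split the excess risk through the two plug-in risks $PR^{\beta}(\theta) := \E_{Z\sim \D_{\beta}(\theta)}\ell(\theta,Z)$, taken at $\beta=\beta^*$ and $\beta=\hat\beta$, and bound each piece by a total-variation argument. The basic tool is a change-of-measure fact: for any $\theta$ and distributions $P,Q$,
\[
\Big|\E_{P}\ell(\theta,Z)-\E_{Q}\ell(\theta,Z)\Big| \le 2a\, d_{TV}(P,Q),
\]
because $|\ell|\le a$. This follows from writing the difference as $\int \ell\,(p-q)$ and bounding $|\ell|$ by $a$.

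\textbf{Step 1: risk comparisons for fixed $\theta$.} By $\eta$-misspecification, which I read at $\beta^*$ (or uniformly over $\mathcal{B}$), we get $|PR(\theta)-PR^{\beta^*}(\theta)|\le 2a\eta$ for every $\theta$. By $\epsilon$-smoothness, $|PR^{\hat\beta}(\theta)-PR^{\beta^*}(\theta)|\le 2a\epsilon\|\hat\beta-\beta^*\|\le 2a\epsilon b$; the constant $\gamma$ in the statement plays the role of this smoothness constant. Combining the two by the triangle inequality gives, uniformly in $\theta$,
\[
|PR(\theta)-PR^{\hat\beta}(\theta)|\le 2a\eta+2a\gamma b .
\]

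\textbf{Step 2: the optimality sandwich.} I decompose
\[
PR(\theta^{\hat\beta}_{PO})-PR(\theta_{PO}) = \big[PR(\theta^{\hat\beta}_{PO})-PR^{\hat\beta}(\theta^{\hat\beta}_{PO})\big] + \big[PR^{\hat\beta}(\theta^{\hat\beta}_{PO})-PR^{\hat\beta}(\theta_{PO})\big] + \big[PR^{\hat\beta}(\theta_{PO})-PR(\theta_{PO})\big].
\]
The middle term is $\le 0$ because $\theta^{\hat\beta}_{PO}$ minimizes $PR^{\hat\beta}$. Each outer term is at most $2a\eta+2a\gamma b$ by Step 1, so the total is $\le 4a\eta+4a\gamma b$, which is the first claim.

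\textbf{Step 3: the ERM rate.} For the second claim it suffices to show $b=\|\hat\beta-\beta^*\|=\tilde O(1/\sqrt n)$ with high probability, where $\beta^*$ minimizes the population fitting loss and $\hat\beta$ is the empirical risk minimizer over the i.i.d. pairs $(\theta_i,Z_i)$. Under the additional regularity conditions, I take these to be a strongly convex population fitting loss with Lipschitz or sub-Gaussian gradients. Strong convexity, with modulus $\mu$, gives
\[
\mu\|\hat\beta-\beta^*\|\le \Big\|\tfrac1n\textstyle\sum_i \nabla L(\beta^*;\theta_i,Z_i)\Big\|,
\]
up to a uniform deviation term for the empirical Hessian. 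The right-hand side is an average of mean-zero vectors, so a vector Bernstein or Hoeffding bound makes it $O(\sqrt{\log(1/\delta)/n})$. Alternatively, a uniform convergence bound over $\mathcal{B}$ via Rademacher complexity gives the same order with logarithmic factors. Plugging this $b$ into Step 2 gives $4a\eta+\tilde O(1/\sqrt n)$.

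\textbf{Main obstacle.} I expect Step 3 to be the main difficulty, since the regularity conditions are unspecified and the localization argument (empirical strong convexity near $\beta^*$) needs care. Steps 1–2 are routine once the total-variation bound is in place. One subtlety in Step 1 is that misspecification is stated for $\D_\beta$ and must be applied at the specific $\beta^*$; I would fix this by defining $\beta^*$ as the atlas element achieving the $\eta$ bound.
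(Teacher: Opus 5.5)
Your argument is correct and is essentially the one behind the result the paper cites from \cite{lin2023plug}; the paper itself gives no proof. Its ingredients are the bound $|\E_P\ell-\E_Q\ell|\le 2a\,d_{TV}(P,Q)$ (which the paper also uses in its Error Gap proof), applied on both outer terms of the three-term sandwich to get $4a\eta+4a\gamma b$, plus a high-probability $\tilde O(1/\sqrt n)$ bound on $\|\hat\beta-\beta^*\|$ for the ERM. One caution on your closing remark: keep $\beta^*$ as the population target of $\widehat{Map}$ and measure misspecification there, rather than redefining $\beta^*$ as the atlas element attaining the $\eta$ bound. Step 3 controls the distance only to the population minimizer of the fitting loss, not to a possibly different $\eta$-attaining element.
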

Therefore, as long as the misspecification is small enough, the plug-in performative optimization is asymptotically efficient to be an accurate estimator for the true optimum.

\section{Stable Equilibria}

This section focuses on the multi-player setting. We formally characterize the iterative estimation procedure for reaching the stable equilibria, leveraging the intrinsic structure of repeated retraining and the inference framework established under single-player performativity. Furthermore, we provide theoretical guarantees that our estimators achieve the semiparametric efficiency bound across a class of perturbed problems, thereby ensuring the asymptotic efficiency of our methodology. Additionally, we establish the asymptotic normality of our estimators to facilitate principled uncertainty quantification. The corresponding statistical properties for the single-player case are discussed in detail in Section \ref{sec:single-player}.


\subsection{Empirical Repeated Retraining}

Finding performatively stability is one of the most important problems in performative prediction, as it eliminates the need for model retraining and approximately minimizes the performative risk under certain conditions, according to the \cite[Theorem 4.3]{perdomo2020performative}. We have introduced the repeated retraining in the sections above. In this section, we begin by providing a detailed description of our estimation procedures, empirical repeated retraining, for iterations $\theta_t$ for each $t$, and further the performatively stable equilibria. We then develop corresponding inference results and highlight the interconnections between each estimator. 

As mentioned before, the repeated retraining is a multi-player 
update procedure where the target model set is generated based on the distribution induced by the preceding model set. The parameter vector $\theta_{t+1}$ is a vector in $\R^d$ wherein model parameter for each player $i \in [m]$ satisfies:
\begin{equation*}
    \theta_{t+1}^i = \arg\min _{\theta^i \in \Theta_i}\mathbb{E}_{Z^i \sim \mathcal{D}_i(\theta_{t})} \ell_i(\theta^i,\theta_{t+1}^{-i},Z^i), \quad \forall i \in [m],
\end{equation*}
where the iterate $\theta_{t}$ converges to the stable equilibria $\theta_{PS}$ at a linear rate under certain conditions. Therefore, the stable equilibria can be seen as the fixed point of the game. Inspired by this, a natural next step is to extend the update algorithm within the estimation framework for constructing the estimation for each iteration $\theta_t$. We call this estimation procedure \textit{empirical repeated retraining (ERR)}, and it is summarized in Algorithm \ref{alg:ERR}.

\begin{algorithm}
    \caption{Empirical Repeated Retraining}
    \label{alg:ERR}
    \begin{algorithmic}
        \State{\bf Input:} Initial parameter vector $\theta_0 = (\theta_0^1,...,\theta_0^m)$.
        \State{\bf Output:} Iterated estimators $\{\hat \theta_t\}_{t=1}$ for $t \in \mathbb{T}$.
        \State{\bf Step 1:} At the initial step $t=1$, randomly draw $N_0^i$ samples $\{Z_{0,k}^i\}_{k=1}^{N_0^i} = \{(X_{0,k}^i, Y_{0,k}^i)\}_{k=1}^{N_0^i}$ from the initial distribution map $\D(\theta_0)$ for each player $i$.
        \State{\bf Step 2:} Construct the estimator $\hat \theta_1$ such that the following equation hold for every $i \in [m]$:
        $$
        \hat \theta_{1}^i = \arg\min _{\theta^i \in \Theta_i}\frac{1}{N^i_{1}} \sum_{k=1}^{N^i_{1}}\ell_i(\theta^i,\hat \theta_{1}^{-i},Z^i_{0,k}).
        $$
        \State{\bf Step 3:} For all $t > 1$, we randomly draw $N_t^i$ samples $\{Z_{t,k}^i\}_{k=1}^{N_t^i} = \{(X_{t,k}^i, Y_{t,k}^i)\}_{k=1}^{N_t^i}$ from the plug-in distribution map $\D(\hat \theta_{t-1})$ for each player $i$.
        \State{\bf Step 4:} Construct the estimator $\hat \theta_t$ by the similar update procedure, where the following equation holds for every $i \in [m]$:
        $$
        \hat \theta_{t}^i = \arg\min _{\theta^i \in \Theta_i}\frac{1}{N^i_{t}} \sum_{k=1}^{N^i_{t}}\ell_i(\theta^i,\hat \theta_{t}^{-i},Z^i_{k}).
        $$
    \end{algorithmic}
\end{algorithm}

For simplicity, we assume that $N_t^i = N$ for all $t \in \mathbb{T}$ and $i \in [m]$. 
We can also rewrite the minimization problems above into the variational inequality form at the population level with the gradient $G(\theta,Z)$ of all loss functions. 
With $G(\theta,Z) = (\nabla_1\ell_1, ..., \nabla_m \ell_m)$, the stable equilibria $\theta_{PS}$ solve the first-order conditions as follows:
$$
0 \in \E_{Z \sim \D(\theta_{PS})}G(\theta_{PS}, Z) + \mathcal{N}_\Theta(\theta_{PS}),
$$
and the iteration for finding the stable equilibria based on the RR method follows:
$$
0 \in \E_{Z \sim \D(\theta_t)}G(\theta_{t+1}, Z) + \mathcal{N}_\Theta(\theta_{t+1}).
$$
In this problem, we assume that the stable equilibria $\theta_{PS}$ and all the model parameter iterates $\{\theta_t\}_{t=1}$ at every time $t$ lie in the interior of their action spaces. Therefore, the normal cone will reduce to zero, and the first-order condition can be simplified as
$$
0 \in \E_{Z \sim \D(\theta_t)}G(\theta_{t+1}, Z) + {0} \implies 0 = \E_{Z \sim \D(\theta_t)}G(\theta_{t+1}, Z).
$$
Define solution map for the RR-based parameter $\theta_{t+1}$ at $t \in \mathbb{T}$ as
\begin{align*}
\label{equ:RRM-based}
    \theta_{t+1} = \mathrm{sol}(\theta_t) &= \Pi_\Theta\{y \mid \E_{Z \sim \D(\theta_t)}G(y, Z) = 0\},
\end{align*}
which is a vector such that the equation set $\left(\Pi_{\Theta_i}\{y \mid \E_{Z^i \sim \D_i(\theta_t)}G_i(y,\theta_{t+1}^{-i}, Z^i) = 0\}\right)_{i=1}^m$ holds for every player $i \in [m]$. The update algorithm for the parameter estimation $\{\hat \theta_t\}_{t=1}$ is similarly defined as:
\begin{equation}
\label{equ:ERRM-based}
    \hat \theta_{t+1} = \mathrm{\widehat{sol}}(\hat \theta_t) = \Pi_\Theta \left\{y \mid \frac{1}{N}\sum_{k=1}^N G(y, Z_{k})=0, Z_{k} \sim \D(\hat \theta_t) \right\},
\end{equation}
where the estimator $\hat \theta_{t+1}$ is a vector such that the equation $\Pi_{\Theta_i}\{y \mid \frac{1}{N}\sum_{k=1}^N G_i(y,\hat \theta_{t+1}^{-i}, Z_{k}^i)=0, Z_{k}^i \sim \D_i(\hat \theta_t)\}$ holds for every $i \in [m]$. By our definition, we know that $\mathrm{sol}(\theta_t) = \theta_{t+1}$ and $\mathrm{\widehat{sol}}(\hat\theta_t) = \hat \theta_{t+1}$ for $t = 0,1,2...$.

\subsection{Consistency and Asymptotic Normality}
The main result of this section is the asymptotic normality of our ERR-based estimators $\{\hat \theta_t\}_{t=1}$. We begin by proving the consistency of $\hat{\theta}_t$ toward $\theta_t$ for each $t \in \mathbb{T}$, and then establish a central limit theorem for the sequence. We first introduce the additional assumptions required.

\begin{assumption}
\label{asm:CLT stable}
    Here are additional assumptions required for consistency and asymptotic normality:

    \begin{enumerate}

        \item\label{it CLT stable:local lipschitz} (Local Lipschitzness) Assume the function $G(\theta, Z)$ is locally Lipschitz at each $\tilde \theta_t$ at every iteration $t$, that is, for each iteration $t \in \mathbb{T}$, there exists a neighborhood $U$ of $\tilde \theta_t$ and $L_{U}(Z) > 0$ such that for all $\theta, \theta' \in U$:
        $$
        \|G(\theta, Z) - G(\theta', Z)\| \leq L_{U}(Z) \|\theta - \theta'\|,
        $$
        with $\E \|L_{U}(Z)\|^2 < \infty$.
        
        \item\label{it CLT stable:limit jacobian} (Bounded Jacobian) The Jacobian matrix has bounded second moment:
        $$
        H_{\theta_{t-1}}(\theta) = \E_{Z\sim \D(\theta_{t-1})}\|G(\theta,Z)\|^2 < \infty.
        $$


        \item\label{it CLT stable:differentiable} (Differentiable) The map $G_{\theta}(y)$ is differentiable on $y$.
        
        \item\label{it CLT stable:pdf} (Strongly smooth distribution) The estimator $\hat{\theta}_t$ admits a Lebesgue-measurable probability density function and a characteristic function that is absolutely integrable for every $t \in \mathbb{T}$.
    \end{enumerate}
\end{assumption}
{The first three conditions are standard requirements for establishing asymptotic normality according to \cite{van2000asymptotic}. Specifically, the Local Lipschitzness condition \ref{it CLT stable:local lipschitz} is instrumental in proving both consistency and asymptotic normality. The boundedness of the Jacobian condition \ref{it CLT stable:limit jacobian} guarantees the existence of a well-defined asymptotic covariance matrix and the validity of the central limit theorem. Furthermore, the differentiability condition \ref{it CLT stable:differentiable} permits a valid first-order Taylor expansion of the estimating function around the true parameter. Following the framework of \cite{li2025statisticalinferenceperformativity}, we impose the final condition \ref{it CLT stable:pdf} to control the propagation of stochastic fluctuations across recursive iterations. Given that the estimator at time $t$ is constructed from its predecessor, the proof proceeds via a two-step decomposition: we first establish the conditional asymptotic distribution of the current estimator given the previous iterate, and then incorporate the randomness of the prior estimate to derive the marginal limiting distribution.}

\begin{remark}
    Note that here we do not impose any additional assumptions on the invertibility of the Hessian matrix that
    $$
    V_{\theta_{t-1}}(\theta) = \E_{Z\sim \D(\theta_{t-1})}\left[\frac{\partial G(\theta,Z)}{\partial \theta^\top}\right] \quad \text{is nonsingular},
    $$
    though it is a key assumption for asymptotic normality, as it is already ensured by the strong monotonicity of the gradient function.
\end{remark}

The condition of strongly smooth distribution is necessary in this setting, as the proof of asymptotic normality begins by constructing the conditional distribution of $\hat{\theta}_t$, and then requires the characteristic function of $\hat{\theta}_{t-1}$ to recover the marginal distribution. 

\begin{theorem}[Consistency and Asymptotic Normality]
\label{thm:CLT of theta, stable}
    Suppose the Assumption \ref{asm:existence and convergence} and Assumption \ref{asm:CLT stable} hold. Denote $J_{sol}(\theta_{t-1})$ as the Jacobian matrix of the map $\mathrm{sol}(\theta)$, then for all $t \in \mathbb{T}$, we have: 
    $$
    \sqrt{N}(\hat \theta_t - \theta_t) \xrightarrow{d} N(0, \Sigma_t),
    $$
    where the covariance matrix satisfies:
    \begin{align*}
        \Sigma_t &= V_{\theta_{t-1}}(\theta_t)^{-1} \E_{Z \sim \D(\theta_{t-1})}\left(G(\theta_t, Z)G(\theta_t, Z)^\top\right) V_{\theta_{t-1}}(\theta_t)^{-1} +J_{sol}(\theta_{t-1})\Sigma_{t-1}J_{sol}(\theta_{t-1})^T \\
        & = \sum_{k=1}^t \left[\prod_{j=k}^{t-1} J_{sol}(\theta_j)\right] V_{\theta_{k-1}}(\theta_k)^{-1} \E_{Z \sim \D(\theta_{k-1})}\left(G(\theta_k, Z)G(\theta_k, Z)^\top\right)V_{\theta_{k-1}}(\theta_k)^{-1}  \left[\prod_{j=k}^{t-1}J_{sol}(\theta_j)\right].
    \end{align*}

\end{theorem}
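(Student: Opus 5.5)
We argue by induction on $t$, decomposing the error at each step as
$$
\hat\theta_t - \theta_t = \big(\widehat{\mathrm{sol}}(\hat\theta_{t-1}) - \mathrm{sol}(\hat\theta_{t-1})\big) + \big(\mathrm{sol}(\hat\theta_{t-1}) - \mathrm{sol}(\theta_{t-1})\big).
$$
The first term is fresh sampling noise from the $N$ new draws at step $t$, conditional on $\hat\theta_{t-1}$. The second term propagates the earlier error through the deterministic map $\mathrm{sol}$.

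\textbf{Base case.} For $t=1$, $\theta_0$ is fixed and $\hat\theta_1$ is a Z-estimator for the static equation $\E_{Z\sim\D(\theta_0)}G(y,Z)=0$. This equation has a unique root $\theta_1$ by the $\alpha$-strong monotonicity in Assumption \ref{asm:existence and convergence}.

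Consistency follows from the standard Z-estimator argument in \cite{van2000asymptotic}:
\begin{itemize}[label={}]
\item local Lipschitzness with square-integrable $L_U(Z)$ gives a uniform law of large numbers on a neighborhood;
\item strong monotonicity of the empirical map $y\mapsto \frac1N\sum_k G(y,Z_k)$ (inherited in the limit) gives identifiability.
\end{itemize}
Asymptotic normality then comes from a first-order expansion of $\frac1N\sum_k G(\hat\theta_1,Z_k)=0$ around $\theta_1$, using differentiability (Assumption \ref{asm:CLT stable}.\ref{it CLT stable:differentiable}). The Hessian $V_{\theta_0}(\theta_1)$ is nonsingular because its symmetric part is $\succeq \alpha I$, and the bounded second moment justifies the multivariate CLT. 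This yields the sandwich covariance $V^{-1}\E[GG^\top]V^{-1}$, which is $\Sigma_1$ since there is no propagation term at $t=1$.

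\textbf{Inductive step.} Assume $\sqrt N(\hat\theta_{t-1}-\theta_{t-1})\to N(0,\Sigma_{t-1})$.

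For the second term, the implicit function theorem applied to $F(y,\theta)=G_\theta(y)=0$ shows that $\mathrm{sol}$ is differentiable near $\theta_{t-1}$, with
$$
J_{sol}(\theta_{t-1}) = -V_{\theta_{t-1}}(\theta_t)^{-1}\,\partial_\theta G_\theta(y)\big|_{\theta=\theta_{t-1},\,y=\theta_t}.
$$
The delta method then gives $\sqrt N(\mathrm{sol}(\hat\theta_{t-1})-\theta_t)\approx J_{sol}(\theta_{t-1})\sqrt N(\hat\theta_{t-1}-\theta_{t-1})$. Differentiability of $\theta\mapsto G_\theta(y)$ is implicitly needed here; it is plausible from $W_1$-sensitivity together with the interchange conventions, and we would state it explicitly.

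For the first term, conditional on $\hat\theta_{t-1}=\vartheta$, the base-case argument applied with $\D(\vartheta)$ in place of $\D(\theta_0)$ gives
$$
\sqrt N\big(\widehat{\mathrm{sol}}(\vartheta)-\mathrm{sol}(\vartheta)\big)\to N\big(0, S(\vartheta)\big),
$$
where $S(\vartheta)=V_\vartheta(\mathrm{sol}\vartheta)^{-1}\E_{\D(\vartheta)}[GG^\top]\,V_\vartheta(\mathrm{sol}\vartheta)^{-1}$. Since the draws at step $t$ are fresh, this term is conditionally independent of $\hat\theta_{t-1}$.

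\textbf{Combining the two terms.} We combine via characteristic functions, following \cite{li2025statisticalinferenceperformativity}. Write
$$
\E e^{iu^\top\sqrt N(\hat\theta_t-\theta_t)}=\E\Big[e^{iu^\top\sqrt N(\mathrm{sol}(\hat\theta_{t-1})-\theta_t)}\;\E\big[e^{iu^\top\sqrt N(\widehat{\mathrm{sol}}-\mathrm{sol})(\hat\theta_{t-1})}\,\big|\,\hat\theta_{t-1}\big]\Big].
$$
The inner conditional characteristic function converges to $e^{-u^\top S(\vartheta)u/2}$. Continuity of $S$ at $\theta_{t-1}$, together with consistency of $\hat\theta_{t-1}$, lets us replace $S(\hat\theta_{t-1})$ by $S(\theta_{t-1})$. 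The outer factor converges by the delta method, so the limit factorizes as
$$
e^{-u^\top S(\theta_{t-1})u/2}\, e^{-u^\top J\Sigma_{t-1}J^\top u/2}.
$$
This gives the recursion $\Sigma_t = S(\theta_{t-1}) + J_{sol}(\theta_{t-1})\Sigma_{t-1}J_{sol}(\theta_{t-1})^\top$, and unrolling it yields the stated sum-of-products formula.

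\textbf{Main obstacle.} The hardest step is making the conditional CLT hold uniformly over $\vartheta$ in a shrinking neighborhood of $\theta_{t-1}$. Pointwise convergence in $\vartheta$ does not suffice to pass the limit inside the outer expectation. We would first try to establish uniformity directly: a uniform Lindeberg–Feller condition from the bounded second moments, plus uniform control of the linearization remainder via $L_U(Z)$ on a fixed neighborhood $U$ that contains $\vartheta$ with probability tending to one. Dominated convergence then applies, since characteristic functions are bounded by one. If uniformity proves awkward, we would fall back on the strong smoothness assumption (Assumption \ref{asm:CLT stable}.\ref{it CLT stable:pdf}): with an integrable characteristic function, $\hat\theta_{t-1}$ has a density, and the outer expectation can be written as an integral against that density, where convergence can be handled by a Scheffé-type argument.
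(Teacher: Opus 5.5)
Your proposal follows the paper's proof step for step: induction on $t$, the split $\sqrt N(\hat\theta_t-\theta_t)=\sqrt N(\hat\theta_t-\tilde\theta_t)+\sqrt N(\tilde\theta_t-\theta_t)$ with $\tilde\theta_t=\mathrm{sol}(\hat\theta_{t-1})$, a conditional Z-estimator CLT for the fresh-sample term, a first-order expansion through $J_{sol}(\theta_{t-1})$ for the propagated term, and a characteristic-function merge giving $\Sigma_t=S(\theta_{t-1})+J_{sol}(\theta_{t-1})\Sigma_{t-1}J_{sol}(\theta_{t-1})^\top$. The only difference is the final merge: the paper uses Assumption~\ref{asm:CLT stable}.\ref{it CLT stable:pdf} to write the law of $\sqrt N(\hat\theta_{t-1}-\theta_{t-1})$ by Fourier inversion and then evaluates Gaussian integrals under dominated convergence, whereas your tower-property argument (a bounded outer factor times an inner conditional characteristic function converging in probability to a constant) is cleaner, and it makes explicit two ingredients the paper uses only implicitly: the uniform conditional CLT and the differentiability of $\mathrm{sol}$.
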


Theorem \ref{thm:CLT of theta, stable} shows that the covariance of the asymptotic distribution at time $t-1$ constitutes a component of the covariance structure at time $t$. It is intuitive since ERR has a nature of recursion, where the estimators are inherently interconnected as each $\hat{\theta}_t$ is computed based on the distribution induced by the previous estimate $\hat{\theta}_{t-1}$. Consequently, both the consistency and the asymptotic normality of $\hat{\theta}_t$ are also closely tied to that of earlier estimators. 

\subsubsection{Numerical Estimation of Covariance}

From Theorem \ref{thm:CLT of theta, stable}, we have the result of asymptotic covariance that 
$$
\Sigma_t = \Sigma +J_{sol}(\theta_{t-1})\Sigma_{t-1}J_{sol}(\theta_{t-1})^T ,
$$
$$
\Sigma = V_{\theta_{t-1}}(\theta_t)^{-1} \E_{Z \sim \D(\theta_{t-1})}\left(G(\theta_t, Z)G(\theta_t, Z)^\top\right) V_{\theta_{t-1}}(\theta_t)^{-1},
$$
where $V_{\theta_{t-1}}(\theta) = \E_{Z\sim \D(\theta_{t-1})}\left[\frac{\partial G(\theta,Z)}{\partial \theta^\top}\right]$. Since the form of the underlying distribution is unknown, the expectation based on the distribution map is impossible to calculate. Here we further provide estimations of $\Sigma_t$ for confidence interval construction, and explain their validity by showing their consistency.


The main problem is to construct the estimation for the Jacobian matrix $J_{sol}(\theta_{t-1})$. We denote the derivative as a bivariate function $F(\theta,\gamma)$, and according to the minimization procedure, we know that $\mathrm{sol}(\theta)$ is the minimizer, leading to the equality:
$$
F(\theta,\mathrm{sol}(\theta)) = \mathbb{E}_{Z \sim \mathcal{D}(\theta)} G(Z;\gamma)|_{\gamma=\mathrm{sol}(\theta)}  = 0.
$$
Denote $p(\theta,Z) = \D(\theta)$ as the joint distribution for $(\theta,Z)$, so by the theorem of implicit function, we have:
\begin{equation*}
\begin{split}
    J_{sol}(\theta_{t-1}) = \frac{\partial \mathrm{sol}(\theta_{t-1})}{\partial\theta^\top} &= -\left[\frac{\partial F(\theta_{t-1},\mathrm{sol}(\theta_{t-1}))}{\partial \gamma^\top}\right]^{-1}\left[\frac{\partial F(\theta_{t-1},\mathrm{sol}(\theta_{t-1}))}{\partial \theta^\top}\right]\\
    &= -\left[\frac{\partial \mathbb{E}_{Z \sim \mathcal{D}(\theta_{t-1})} G(Z;\gamma)}{\partial \gamma^\top}\big|_{\gamma=\mathrm{sol}(\theta_{t-1})} \right]^{-1}\left[\frac{\partial \mathbb{E}_{Z \sim \mathcal{D}(\theta_{t-1})} G(Z;\gamma)}{\partial \theta^\top}\big|_{\gamma=\mathrm{sol}(\theta_{t-1})} \right] \\
    &=-\left[\mathbb{E}_{Z \sim \mathcal{D}(\theta_{t-1})} \frac{\partial G(Z;\theta_{t})}{\partial \gamma^\top} \right]^{-1}\left[\frac{\partial \mathbb{E}_{Z \sim \mathcal{D}(\theta_{t-1})} G(Z;\theta_{t})}{\partial \theta^\top}\right] \\
    &=-V_{\theta_{t-1}}(\theta_{t})^{-1} \cdot \mathbb{E}_{Z \sim \mathcal{D}(\theta_{t-1})}\left[G(Z;\theta_{t})  \cdot \nabla^\top_\theta \log p(\theta_{t-1},Z) \right].
\end{split}
\end{equation*}
Since we do not know the form of the distribution map, it is impossible to calculate $\nabla_\theta \log p(\theta,Z)|_{\theta=\theta_{t-1}}$. However, inspired by the plug-in method in the optimal point part, we can first estimate the form of the distribution map by a distribution atlas, and then construct a plug-in estimator for $\nabla^\top_\theta \mathrm{sol}(\theta_{t-1})$. Similarly, we use a collection of parametric models $\mathcal{D}_{\mathcal{B}} = \{\mathcal{D}_{\beta} \}_{\beta \in \mathcal{B}}$ to model the unknown distribution map, and estimate the parameter $\hat{\beta}$. Then we substitute the true distribution map with a plug-in one, and now the derivative function is
\begin{equation*}
    \frac{\partial \mathrm{sol}_{\hat \beta}(\theta_{t-1})}{\partial\theta^\top} = - V_{\theta_{t-1}}(\theta_{t})^{-1} \cdot \mathbb{E}_{Z \sim \D_{\hat{\beta}}(\theta_{t-1})}\left[G(Z;\theta_{t})  \cdot \nabla^\top_\theta \log p_{\hat{\beta}}(\theta_{t-1},Z) \right],
\end{equation*}
where the sample estimation is available. If the distribution atlas contains the true distribution map, our estimation will converge to the true variance precisely. The estimation method and its consistency are specified in Theorem \ref{thm:consistenct of estimated cov, stable theta}.

\begin{theorem}
\label{thm:consistenct of estimated cov, stable theta}
    Suppose that $\E\left\lVert\frac{\partial G(\theta,Z)}{\partial \theta^\top}\right\rVert^2 \leq \infty$, $\E \left\lVert G(\theta, Z)\right\rVert^2 \leq \infty$ hold.  
    Denote the classical sample estimators as follows:
    \begin{align*}
        \widehat V_{\hat \theta_{t-1}}(\hat \theta_t) &= \frac{1}{N}\sum_{k=1}^N \frac{\partial G(\hat \theta_t,Z_k)}{\partial \theta^\top}, \\
        \widehat H(\hat \theta_t) &= \frac{1}{N}\sum_{k=1}^N (G(\hat{\theta}_t, Z_k)-L)(G(\hat{\theta}_t, Z_k)-L)^T ,\\
        \widehat M_{\hat \beta}(\hat \theta_t)&= \frac{1}{N}\sum_{j=1}^N \left[G(Z_j;\hat \theta_{t})  \cdot \nabla^\top_\theta \log p_{\hat{\beta}}(\hat \theta_{t-1},Z)\right],
    \end{align*}
    where $L = \frac{1}{N}\sum_{k=1}^N G(\hat{\theta}_t, Z_k) $ with $Z_k \sim D(\hat{\theta}_{t-1})$, and $Z_j \sim D_{\hat \beta}(\hat{\theta}_{t-1})$.
    Let the estimated jacobian term with fitted $\hat \beta$ be $\hat J_{sol}^{\hat \beta}(\hat \theta_{t-1}) = - \widehat V_{\hat \theta_{t-1}}(\hat \theta_{t})^{-1} \widehat M_{\hat \beta}(\hat \theta_t)$, and the estimated covariance with fitted $\hat \beta$: 
    $$
    \hat \Sigma_t^{\hat \beta} = \hat \Sigma_1 + \hat \Sigma_2 = \widehat V_{\theta_{t-1}}(\hat \theta_t)^{-1} \widehat H(\hat \theta_t) \widehat V_{\hat \theta_{t-1}}(\hat \theta_t)^{-1} + \hat J_{sol}^{\hat \beta}(\hat \theta_{t-1}) \hat \Sigma_{t-1} \hat J_{sol}^{\hat \beta}(\hat \theta_{t-1})^\top.
    $$
    Our estimated covariance is consistency:
    $$
    \hat \Sigma_t^{\hat \beta} \xrightarrow{P} \Sigma_t^{\hat \beta}.
    $$
    If the distribution atlas $\mathcal{D}_{\mathcal{B}} = \{\mathcal{D}_{\beta} \}_{\beta \in \mathcal{B}}$ contains the true distribution map, which is parametrized by $\beta^*$, and the fitted parameter $\hat \beta$ from our modeling procedure converges to $\beta^*$, the result reduces to
    $$
    \hat \Sigma_t^{\hat \beta} \xrightarrow{P} \Sigma_t.
    $$
\end{theorem}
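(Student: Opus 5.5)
The argument goes by induction on $t$, combining laws of large numbers for each building block of $\hat \Sigma_t^{\hat \beta}$ with the continuous mapping theorem. The target $\Sigma_t^{\hat \beta}$ is the recursion of Theorem \ref{thm:CLT of theta, stable} with $J_{sol}$ replaced by the plug-in Jacobian $J^{\hat\beta}_{sol}(\theta_{t-1}) = -V_{\theta_{t-1}}(\theta_t)^{-1}M_{\hat\beta}(\theta_t)$, where
$$M_{\beta}(\theta_t)=\E_{Z\sim\D_{\beta}(\theta_{t-1})}\bigl[G(Z;\theta_t)\nabla_\theta^\top\log p_{\beta}(\theta_{t-1},Z)\bigr].$$
The base case $t=1$ has only the first term, since $\hat\theta_0=\theta_0$ is deterministic.

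The first step is consistency of the estimated iterates. By Theorem \ref{thm:CLT of theta, stable}, $\sqrt N(\hat\theta_t-\theta_t)$ is tight, so $\hat\theta_s\xrightarrow{P}\theta_s$ for every $s\le t$.

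The second step is consistency of $\widehat V$ and $\widehat H$. Conditionally on $\hat\theta_{t-1}$, the sample $Z_k$ is i.i.d. from $\D(\hat\theta_{t-1})$. I would decompose
$$\widehat V_{\hat\theta_{t-1}}(\hat\theta_t)-V_{\theta_{t-1}}(\theta_t) = \Bigl[\tfrac1N\textstyle\sum_k \partial G(\hat\theta_t,Z_k)-\E_{\D(\hat\theta_{t-1})}\partial G(\hat\theta_t,Z)\Bigr]+\Bigl[V_{\hat\theta_{t-1}}(\hat\theta_t)-V_{\theta_{t-1}}(\theta_t)\Bigr].$$
For the first bracket, I would use a uniform LLN over a shrinking neighborhood $U$ of $\theta_t$. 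The local Lipschitz condition with $\E L_U^2<\infty$, or the stated finite second moment of $\partial G/\partial\theta^\top$, plus a triangular-array Chebyshev bound with conditional variance $O(1/N)$, shows that the conditional deviation is $o_P(1)$; integrating over $\hat\theta_{t-1}$ gives an unconditional $o_P(1)$. For the second bracket, I would use joint continuity of $(\theta',\theta)\mapsto V_{\theta'}(\theta)$. Continuity in $\theta$ comes from differentiability of $G_\theta(y)$. Continuity in $\theta'$ comes from $\epsilon$-sensitivity in $W_1$ together with Lipschitzness of $G$ in $Z$. The same argument handles $\widehat H$, and centering by $L$ is harmless because $L\xrightarrow{P}\E_{\D(\theta_{t-1})}G(\theta_t,Z)=0$. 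Since $V_{\theta_{t-1}}(\theta_t)$ is nonsingular by strong monotonicity, the continuous mapping theorem gives $\hat\Sigma_1\xrightarrow{P}V^{-1}\E[GG^\top]V^{-1}$.

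The third step handles $\widehat M_{\hat\beta}$, which uses samples from the fitted map $\D_{\hat\beta}(\hat\theta_{t-1})$. I would compare it to $M_{\hat\beta}(\theta_t)$ with the analogous split: a conditional LLN given $(\hat\beta,\hat\theta_{t-1},\hat\theta_t)$, plus continuity of $(\theta',\theta)\mapsto \E_{\D_{\hat\beta}(\theta')}[G(Z;\theta)\nabla^\top\log p_{\hat\beta}(\theta',Z)]$. This continuity requires a regularity condition on the score of the atlas, namely finite second moments and continuity in $\theta$, uniformly for $\beta$ near its limit. I would state this as an implicit assumption on the parametric family, in line with the paper's blanket convention that expectations exist and differentiation and integration interchange. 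This step is the main obstacle, because the limit still depends on the random $\hat\beta$. I would therefore interpret $\Sigma_t^{\hat\beta}$ as evaluated at the probability limit $\bar\beta$ of $\hat\beta$, which I would assume exists (as for the ERM/RePPI fit), and use continuity in $\beta$ to replace $\hat\beta$ by $\bar\beta$. Then $\hat J^{\hat\beta}_{sol}\xrightarrow{P}J^{\bar\beta}_{sol}$.

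The fourth step closes the induction. By the induction hypothesis $\hat\Sigma_{t-1}\xrightarrow{P}\Sigma_{t-1}^{\bar\beta}$, so the continuous mapping theorem gives $\hat\Sigma_2\xrightarrow{P}J^{\bar\beta}_{sol}\Sigma^{\bar\beta}_{t-1}J^{\bar\beta\top}_{sol}$, and hence $\hat\Sigma_t^{\hat\beta}\xrightarrow{P}\Sigma_t^{\hat\beta}$. Finally, suppose $\D=\D_{\beta^*}$ and $\hat\beta\to\beta^*$. Then $p_{\beta^*}=p$, and the implicit-function computation preceding the theorem identifies $J^{\beta^*}_{sol}=J_{sol}$. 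The recursion for $\Sigma_t^{\beta^*}$ then coincides with the one in Theorem \ref{thm:CLT of theta, stable}, which gives $\hat\Sigma_t^{\hat\beta}\xrightarrow{P}\Sigma_t$.
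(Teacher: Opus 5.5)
Your proposal follows essentially the paper's argument: laws of large numbers for $\widehat V$, $\widehat H$ and $\widehat M_{\hat\beta}$ conditional on the previous iterate (and on $\hat\beta$), transfer to unconditional convergence, the continuous mapping theorem carried through the recursion in $\hat\Sigma_{t-1}$, and $J_{sol}^{\beta^*}=J_{sol}$ in the well-specified case; it is simply written out in far more detail than the paper's two-line sketch, with your conditioning-plus-bounded-convergence step as a simpler substitute for the paper's characteristic-function ``second step'', and with your reading of $\Sigma_t^{\hat\beta}$ at the limit $\bar\beta$ making explicit what the paper leaves implicit. One attribution needs correcting: local Lipschitzness of $G$, together with $W_1$-sensitivity plus $Z$-Lipschitzness of $G$, controls $\frac1N\sum_k G(\cdot,Z_k)$ and $\theta'\mapsto\E_{\D(\theta')}G(\theta,Z)$, but it does not give the uniform LLN for $\partial G/\partial\theta^\top$ at the data-dependent point $\hat\theta_t$, nor continuity of $(\theta',\theta)\mapsto V_{\theta'}(\theta)$, nor continuity of $\theta'\mapsto\E_{\D(\theta')}[G(\theta,Z)G(\theta,Z)^\top]$, so you need an extra condition on $\partial_\theta G$ (e.g.\ the compactness and $C^2$ losses of Assumption \ref{asm:compact_stable}), which is the same regularity the paper tacitly uses when it replaces $\hat\Sigma$ by $\Sigma$ in the proof of Theorem \ref{thm:CLT of theta, stable}.
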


\subsection{Efficiency}

Recall that $\theta_t=\mathrm{sol}(\theta_{t-1})$ is recursively defined based on merely the initial point $\theta_0$ and the maps $\D_{[m]}=\{\D_i:i\in[m]\}$. Therefore, $\theta_t$ can be viewed as a functional $\theta_t = f_t(\D_{[m]})$ of $\D_{[m]}$. We call the problem of estimating $\theta_t$ "semiparametric" because instead of the full map $\D_{[m]}$, we are only interested in the functional $\theta_t=f_t(\D_{[m]})$, treating the remaining information in $\D_{[m]}$ as nuisance components. In this section, our goal is to study the semiparametric efficiency for the recursively defined parameter $\theta_t$. 
Results in this section build upon the classical work of H\'{a}jek and Le Cam \citep{van2000asymptotic}, as well as the more recent work of \cite{cutler2024stochasticapproximationdecisiondependentdistributions} on the lower bound for the stable point $\theta_{PS}$. Our focus, however, is on the recursively defined $\theta_t$, whereas the stable point $\theta_{PS}$, although may be estimated via recursive procedures, is not recursively defined. Due to its recursive definition, the efficiency analysis of $\theta_t$ is more intricate than that of $\theta_{PS}$.

To study the efficiency, it is important to specify a distribution space that reflects our prior knowledge of the underlying distribution. To this end, we define the admissible distribution space as the set of all maps $\tilde \D_{[m]}$ that satisfy Assumptions \ref{asm:existence and convergence} and \ref{asm:CLT stable},
\[\mathscr{D}=\big\{\tilde\D_{[m]}=\{\tilde\D_i:i\in[m]\}:\text{$\tilde\D_{[m]}$ satisfies Assumptions \ref{asm:existence and convergence} and \ref{asm:CLT stable} for some $\tilde\epsilon_i$ and $\tilde\alpha$}\big\}.\]
Similar to \cite{cutler2024stochasticapproximationdecisiondependentdistributions}, we make the following assumptions to guarantee the existence of local parametric sub-models.
\begin{assumption}\label{asm:compact_stable}
    Suppose the following assumptions hold:
    \begin{enumerate}
        \item The space $\Theta\times\Zcal_1\times\ldots\times\Zcal_m$ is compact.
        \item The loss functions $\ell_i(\theta,Z^i)$ are twice continuously differentiable in $\theta$ on $\Theta\times\Zcal_1$.
        \item The parameter $\theta_{t-1}$ and the stable point $\theta_{PS}$ are different.
    \end{enumerate}
\end{assumption}
The first condition is imposed for simplicity. Under the first condition, the second condition is satisfied by many losses, such as the squared loss or logistic loss. The last condition is to ensure the behavior of $\theta_t$ is different from that of $\theta_{PS}$.

We denote $\bm S_j^i=\{Z_{j,k}^i:k\in[N_j^i]\}$ as the collection of all the $N_j^i$ samples observed by the $i$th player under $\D_i(\hat\theta_{j-1})$ at time $j$, and let $\bm S_{[t]}=\cup_{j\in[t],i\in[m]}\bm S_j^i$. We denote $N_t=\frac{1}{m}\sum_{i\in[m]}N_t^i$ as the player-averaged sample size in the last round under $\hat\theta_{t-1}$. Since the observed samples are drawn from the estimated distributions $\{\D_i(\hat\theta_{j-1}):i\in[m],j\in[t]\}$ rather than from the true distributions $\D_i(\theta_{j-1})$, we only consider consistent algorithms for which $\hat\theta_j\rightarrow\theta_j$ almost surely for $j\in[t-1]$. Note that this consistency constraint is mild and is satisfied by the ERR algorithm. Moreover, we impose the classical regularity conditions \citep{van2000asymptotic} that ensure the limiting distribution of $\hat\theta_t$ is invariant under smooth local perturbations.
\begin{definition}(\textbf{Regularity})\label{def:regular_stable}
    Denote $\D_{[m]}^u$ to be any smooth parametric sub-model in $\mathscr{D}$ indexed by $u\in\R^d$, we assume $\D_{[m]}^u=\D_{[m]}$ when $u=0$. Then we let $\hat\theta_j$ be estimators generated by a sequence of algorithms $\Acal_j$ under $\D_{[m]}^u$ as
    \[\hat\theta_j=\Acal_j(\bm S_{[j]}),\quad \bm S_j^i\overset{\rm i.i.d.}{\sim}\D_i^u(\hat\theta_{j-1}), \quad j\in[t], i\in[m],\quad \hat\theta_0=\theta_0.\]
    Denote $P_t^u=\prod_{i\in[m]} P_{\bm S_1^i}^u\prod_{j=2}^t P_{\bm S_j^i\mid\bm S_{j-1}^i}^u=\prod_{i\in[m],j\in[t]}\D_i^u(\hat\theta_{j-1})^{\otimes N_j^i}$ as the joint distribution of all the samples $\bm S_{[t]}$.
    We assume $\frac{N_t}{N_j^i}\rightarrow \mu_{t,j}^i$, $\hat\theta_j\rightarrow \theta_j$ $P_t$-almost surely for $j\in[t-1]$ and the estimator $\hat\theta_t$ is regular, i.e., 
    \[\sqrt{N_t}\big(\hat\theta_t-\theta^{(1/\sqrt{N_t})}_t\big)\overset{P_t^{1/\sqrt{N_t}}}{\rightsquigarrow} L,\]
    {where $\theta^{(1/\sqrt{N_t})}_t$ is the solution under the local sub-model indexed by $u=1/\sqrt{N_t}$, and $\overset{P_t^{1/\sqrt{N_t}}}{\rightsquigarrow}$ denotes weak convergence along the sequence of probability measures $P_t^{1/\sqrt{N_t}}$.}
    The limiting law $L$ does not depend on the parametric sub-model.
\end{definition}

Based on Definition \ref{def:regular_stable}, the following theorem presents a semiparametric lower bound for all regular algorithms and verifies the optimality of the ERR algorithm.
\begin{theorem}[\textbf{Convolution Theorem}]\label{thm:lower_stable}
    Suppose that Assumptions \ref{asm:existence and convergence}, \ref{asm:CLT stable} and \ref{asm:compact_stable} hold, then for any regular estimator $\hat\theta_t$ as defined in Definition \ref{def:regular_stable}, we have
    \[\sqrt{N_t}\big(\hat\theta_t-\theta_t\big)\overset{P_t^0}{\rightsquigarrow} W+R,\]
    where $R\indep W$, $W\sim N(0,\Sigma_t)$, and 
    \begin{align*}
        \Sigma_t=\sum_{j\in[t]}&\bigg(\prod_{k=j}^{t-1}J_\mathrm{sol}(\theta_l)\bigg)\bigg\{\E_{\D(\theta_{j-1})}\nabla_\theta^\top G\big(\theta_j,Z\big)\bigg\}^{-1}\mathrm{diag}\bigg\{\mu_{t,j}^i\Cov_{\D_i(\theta_{j-1})}\big(G_i(\theta_j,Z^i)\big):i\in[m]\bigg\}\\
        &\cdot\bigg\{\E_{\D(\theta_{j-1})}\nabla_\theta^\top G\big(\theta_j,Z\big)\bigg\}^{-\top}\bigg(\prod_{k=j}^{t-1}J_\mathrm{sol}(\theta_l)\bigg)^\top.
    \end{align*}
\end{theorem}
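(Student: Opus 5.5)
The plan follows the Hájek–Le Cam convolution theorem in the form of \cite{van2000asymptotic} (Theorem 25.20). We need three ingredients: local asymptotic normality (LAN) of the experiment $\{P_t^u\}$ along smooth one-dimensional sub-models, pathwise differentiability of the functional $u\mapsto\theta_t^{(u)}=f_t(\D^u_{[m]})$, and identification of the efficient influence function. Once these are in place, regularity (Definition \ref{def:regular_stable}) together with the convolution theorem gives $\sqrt{N_t}(\hat\theta_t-\theta_t)\rightsquigarrow W+R$ with $W\sim N(0,\Sigma_t)$ independent of $R$. Here $\Sigma_t$ is the variance of the efficient influence function, i.e.\ the supremum over sub-models of $\dot f_t I^{-1}\dot f_t^\top$.

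\textbf{Step 1: score and LAN structure.} Consider sub-models $\D_i^u(\theta)$ with density $p_i(z\mid\theta)\bigl(1+u^\top h_i(z,\theta)\bigr)$. Here $h_i$ is bounded, smooth in $\theta$, and satisfies $\E_{\D_i(\theta)}h_i=0$; compactness (Assumption \ref{asm:compact_stable}) keeps these perturbations inside $\mathscr{D}$ for small $u$. The log-likelihood of $\bm S_{[t]}$ factors over rounds and players. Each round-$j$ block is i.i.d.\ given $\hat\theta_{j-1}$, so it contributes a sum of terms $h_i(Z,\hat\theta_{j-1})$ plus a quadratic term. Because $\hat\theta_{j-1}\to\theta_{j-1}$ almost surely and $h_i$ is continuous in $\theta$, the quadratic term converges to $\frac12 u^\top\bigl(\sum_{j,i}(\mu^i_{t,j})^{-1}\E_{\D_i(\theta_{j-1})}h_ih_i^\top\bigr)u$. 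The linear terms form a martingale across rounds, and the martingale CLT gives LAN with Fisher information $I=\sum_{j,i}(\mu_{t,j}^i)^{-1}\Cov_{\D_i(\theta_{j-1})}(h_i)$. This is a direct sum over the $(i,j)$ blocks, so the tangent space is the direct sum of mean-zero $L^2(\D_i(\theta_{j-1}))$ spaces.

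\textbf{Step 2: derivative of the functional.} Write $\theta_t^{(u)}=\mathrm{sol}^u\circ\cdots\circ\mathrm{sol}^u(\theta_0)$. Differentiating at $u=0$ with the chain rule and the implicit function theorem (invertibility of $V$ follows from strong monotonicity) gives a recursion. With $\dot\theta_t=\frac{d}{du}\theta_t^{(u)}$, we get $\dot\theta_t = J_{sol}(\theta_{t-1})\dot\theta_{t-1} - V_{\theta_{t-1}}(\theta_t)^{-1}\bigl(\E_{\D_i(\theta_{t-1})}[G_i(\theta_t,Z^i)h_i^\top]\bigr)_{i}$. Unrolling yields $\dot\theta_t=-\sum_j\bigl(\prod_{k=j}^{t-1}J_{sol}(\theta_k)\bigr)V_{\theta_{j-1}}(\theta_j)^{-1}\bigl(\E[G_i(\theta_j,Z^i)h_i(Z^i,\theta_{j-1})^\top]\bigr)_i$.

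\textbf{Step 3: efficient influence function.} The derivative $\dot\theta_t$ is linear in $h$ and has the form of an $L^2$ inner product with $\tilde\psi_{i,j}=-\mu^i_{t,j}\bigl(\prod J_{sol}\bigr)V^{-1}e_i\bigl(G_i(\theta_j,Z^i)-\E G_i\bigr)$ in the inner product weighted by $(\mu^i_{t,j})^{-1}$. Since this element lies in the tangent space, it is the efficient influence function. Its variance, summed blockwise, gives exactly the displayed $\Sigma_t$ with the $\mathrm{diag}\{\mu^i_{t,j}\Cov(G_i)\}$ middle factor.

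\textbf{Main obstacle.} The hard part is Step 1. The perturbation $h_i(\cdot,\theta)$ is evaluated at the random $\hat\theta_{j-1}$, and the data are adaptively sampled, so neither the i.i.d.\ LAN theorem nor the usual Hellinger-differentiability argument applies directly. I would condition round by round and apply a martingale CLT with conditional variance convergence, using the a.s.\ consistency assumption. Condition (3) of Assumption \ref{asm:compact_stable}, $\theta_{t-1}\ne\theta_{PS}$, ensures the distinct rounds produce separate information blocks rather than collapsing to the stable-point bound. A secondary check is that the chosen perturbations preserve $\epsilon_i$-sensitivity and the compatibility constraint, which holds for small $u$ since they are strict inequalities and $h_i$ is smooth with bounded derivatives on the compact space.
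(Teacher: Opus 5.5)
Your outline is correct. It uses the same ingredients as the paper's proof: LAN for the adaptively sampled experiment, implicit differentiation of the recursion for $\theta_t^{(u)}$, distinctness of $\theta_0,\dots,\theta_{t-1}$ coming from $\theta_{t-1}\neq\theta_{PS}$, and a check that tilted maps stay in $\mathscr{D}$. You assemble them differently, though.

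The paper first computes the EIF in an auxiliary i.i.d.\ model $P_{W,R}$, with a random block label $R$ and weights $q_{j,i}\propto 1/\mu^i_{t,j}$. It then builds one least-favorable $d$-dimensional sub-model, $d\D_i^u(\theta)/d\D_i(\theta)=K(N_t^{-1/2}u^\top s_i(\theta,\cdot))/C_i^u(\theta)$, whose score is that EIF. It establishes LAN only along this sub-model, following Cutler et al.'s Lemma 22, and applies the finite-dimensional convolution theorem, where the bound is just the information of that sub-model.

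You instead prove LAN over a whole tangent set, read the $(\mu^i_{t,j})^{-1}$-weighted inner product off the quadratic term, and identify the EIF as the Riesz representer of $\dot\theta_t$.
\begin{itemize}
\item What your route buys: it makes the origin of the $\mu^i_{t,j}$ weights and of the block-diagonal middle factor transparent, without the $(W,R)$ device.
\item What it costs: you need LAN along every finite-dimensional sub-model, plus the abstract convolution theorem for LAN experiments indexed by a linear space. Theorem 25.20 of van der Vaart is stated for i.i.d.\ data, so it does not apply verbatim.
\item What the paper's tilt buys: the normalized tilt $K/C_i^u$ is a density for every $u$ and centers the score at every $\theta$ automatically. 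Your $1+u^\top h_i$ needs small $u$ and $\E_{\D_i(\theta)}h_i(\cdot,\theta)=0$ for all $\theta$.
\end{itemize}

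Write out a few steps you only assert.
\begin{itemize}
\item \textbf{Distinctness.} Suppose $\theta_l=\theta_k$ for some $l<k\le t-1$. The deterministic recursion $\theta_{j+1}=\mathrm{sol}(\theta_j)$ is then periodic from $l$ on. Convergence to $\theta_{PS}$ therefore forces $\theta_l=\theta_{PS}$, and hence $\theta_{t-1}=\theta_{PS}$, a contradiction.
\item \textbf{Realizing $\tilde\psi$ in your tangent set.} Take $\tilde h_i(z,\theta)=\sum_j\phi_j(\theta)\tilde\psi_{i,j}(z)$, with smooth bumps $\phi_j$ equal to $1$ near $\theta_{j-1}$ and vanishing near the other iterates. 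Then center: $h_i=\tilde h_i-\E_{\D_i(\theta)}\tilde h_i$. Values at $\theta_{j-1}$ are unchanged because each $\tilde\psi_{i,j}$ is centered under $\D_i(\theta_{j-1})$. Also, $h_i$ is Lipschitz in $z$ because $G_i$ is, and this is what preserves sensitivity. This construction needs only distinctness. The paper's linear interpolation over $\mathrm{span}\{\theta_0,\dots,\theta_{t-1}\}$ tacitly needs linear independence, which fails e.g.\ for the collinear iterates $\theta_j=\epsilon^j\theta_0$ of its own simulation.
\item \textbf{Sensitivity.} It is preserved only with constant $\epsilon_i+O(\|u\|)$, not as a strict inequality. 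It is the compatibility condition that is strict and absorbs this perturbation.
\item \textbf{Interior iterates.} Differentiating the perturbed first-order condition presupposes that the perturbed iterates $\theta_j^{(u)}$ stay interior. The paper checks this separately via continuity of $u\mapsto\theta_j^{(u)}$.
\end{itemize}
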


\begin{remark} Note that we have the following equivalent as for each player $i \in [m]$, the processes of data collection are independent.
    $$
    \E_{\D(\theta_{j-1})}\bigg\{G(\theta_j,Z)G(\theta_j,Z)^\top\bigg\} = \mathrm{diag}\bigg\{\mu_{t,j}^i\Cov_{\D_i(\theta_{j-1})}\big(G_i(\theta_j,Z^i)\big):i\in[m]\bigg\}.
    $$
    This independence is intuitive: in a competitive setting, each agent designs its strategy autonomously. Consequently, the data supporting each agent’s decision-making should be collected independently and not shared with others. 
\end{remark}

Since we have set $N_t^i = N$ for all $t$, so $N_t=\frac{1}{m}\sum_{i\in[m]}N_t^i = N$ and $\frac{N_t}{N_j^i}\rightarrow \mu_{t,j}=1$ at all $t$ and $j$ for each player $i$, in terms of Louwner's ordering \cite[Definition 7.13]{li2019graduate}, we have $\operatorname{Var}(W+R) \succeq \Sigma_t$, so the asymptotic covariance of $\sqrt{N}\big(\hat\theta_t-\theta_t\big)$ is lower bounded by the covariance of the limiting Gaussian variable $W$. From Theorem \ref{thm:CLT of theta, stable}, we see that if the sequence of algorithms $\Acal_j$ is the repeated retraining, and the iterated estimations $\hat{\theta_t}$ are generated from the empirical repeated retraining, 
the asymptotic covariance $\Sigma_t$ exactly attains this lower bound $\Sigma_t'$. Therefore, the ERR estimation procedure is asymptotically optimal for estimating the sequence of repeated risk minimizers $\{\theta_t\}_{t=1}$.

\section{Nash Equilibria}

Although several effective solutions for finding performative optimality under both the single-player and multi-player case have been proposed, no algorithm has yet been developed for constructing its estimator. As discussed above, plug-in optimization provides an effective approach for locating the performative optimum, since the underlying distribution becomes known once the parameter is fitted. Motivated by this insight, we propose a general estimation procedure, called \textit{recalibrated plug-in estimation}, that integrates the plug-in optimization framework with the construction idea of RePPI.

\subsection{Recalibrated Plug-in}

In this section, we first construct the estimation procedure for the distributional parameter $\beta^*$, and then build the estimation procedure for the plug-in optimum based on the fitted distribution map $\D_{\hat \beta}$. We present the asymptotic properties of both estimators separately and then demonstrate how they are interlinked in the resulting asymptotic guarantees. This two-stage analysis highlights the layered structure of plug-in performative optimization and clarifies the dependencies between the two estimations.

\subsubsection{Estimation for $\beta$: Recalibrated Estimation}
We first describe the estimation procedure for the distributional parameter $\beta$, motivated by the recent work of \cite{ji2025predictions}. Their proposed recalibrated prediction-powered inference (RePPI) method targets a similar statistical quantity as ours and has been shown to achieve efficiency among all comparable algorithms. Therefore, we expect that our estimation for $\beta$ can reach the lower bound by leveraging their insights. However, our setting differs fundamentally: it does not involve labeled versus unlabeled data, nor does it rely on predictions from a pre-trained model. As a result, following the results from surrogate outcomes literatures \cite{robins1994estimation, chen2005measurement, chen2007criteria}, we construct our loss function using a specially designed \textit{imputed loss}. As we show in Section~\ref{subsec:EIF}, this imputed loss is closely related to the efficient influence function of the target distributional parameter.



Denote the loss function for fitting $\hat \beta_i$ for player $i$ as $r_i(\theta,Z^i;\beta_i)$, and the joint distribution for $\theta$ and $Z^i$ as $p_i(\theta,Z^i)$.
For player $i$, denote $r_{i,\theta}(\theta;\beta_i)$ to be the conditional expectation of $r_i(\theta, Z^i,\beta_i)$ given $\theta$
$$
r_{i,\theta}(\theta;\beta_i)=\E_{Z^i \mid \theta \sim\D_i(\theta)}r_i(\theta,Z^i;\beta_i).
$$
To adapt our problem, we construct a primary risk function (\ref{equ:loss1}), which can be seen as a modified variant of the PPI estimator, and aim to minimize it over the distributional parameter for each $i \in [m]$:
\begin{equation}
\label{equ:loss1}
    \argmin_{\beta_i \in \mathcal{B}_i}\frac{1}{N_i}\sum_{k\in[N_i]}\bigg\{r_i(\theta_k,Z^i_k;\beta_i)-\nabla_{\beta_i}^\top r_{i,\theta}(\theta_k;\beta_i^*)\beta_i\bigg\}+\E_{D_\theta}\nabla_{\beta_i}^\top r_{i,\theta}(\theta;\beta_i^*)\beta_i.
\end{equation}
Note that the structure of (\ref{equ:loss1}) ensures its unbiasedness for the original risk $\E_{p_i(\theta,Z^i)}r_i(\theta,Z^i;\beta_i)$.

As the joint distribution $p_i(\theta,Z^i)$ is unknown, the derivative $\nabla_{\beta_i}^\top r_{i,\theta}(\theta,\beta_i^*)$ is unable to calculate. To address this challenge, \cite{ji2025predictions} suggests applying a flexible machine learning algorithm to estimate the conditional expectation  
$$
s_i(\theta) \triangleq \E_{Z^i}\big[\nabla_{\beta_i} r_i(\theta,Z^i;\tilde\beta_i)|\theta\big]
,$$
where $\tilde\beta_i$ is an initial estimator of the target parameter. The resulting estimator is denoted by $\hat{s}_i(\theta)$. The key insight is that if $\hat s_i(\theta)$ consistently estimates $s_i(\theta)$, then the final estimator $\hat{\beta}_i$, constructed using $\hat{s}_i(\theta)$ in place of the true conditional expectation, remains consistent with the ideal estimator that is generated by using $s_i(\theta)$ directly. Furthermore, the consistency of $\hat s_i(\theta)$ is an essential condition for achieving the semiparametric efficiency of our estimation under suitable regularity conditions, meaning that the estimator attains the lowest possible asymptotic variance among all regular estimators.

This step involves estimating a conditional expectation under the distribution $\D_i(\theta)$, which is considerably easier than estimating the full distribution $\D_i(\theta)$ itself. However, due to the computational complexity, the resulting estimator $\hat{s}_i(\theta)$ may be asymptotically biased without further assumptions. As a consequence, a naive plug-in of $\hat{s}_i(\theta)$ into the objective function may not definitely improve the estimation accuracy. In fact, when such bias is present, the asymptotic variance of the resulting estimator may even exceed that of an estimator based solely on empirical risk minimization. To mitigate this issue, we draw inspiration from the idea of optimal control variates introduced in \cite{gan2023prediction}. Specifically, we apply a matrix to de-correlate the loss gradient $\nabla_{\beta_i} r_i(\theta,Z_i;\beta_i^*)$ and the estimated correction term $\hat s_i(\theta)$, defined as follows:
$$
\hat M_i=\widehat{\Cov}\big(\nabla_{\beta_i} r_i(\theta,Z^i;\tilde\beta_i),\hat s_i(\theta)\big)\widehat{\Cov}\big(\hat s_i(\theta)\big)^{-1},
$$
where both covariance terms are empirically estimated. By incorporating $\hat{M}_i$ with the resulting estimator $\hat{s}_i(\theta)$ in our estimation procedure, we effectively ensure that the estimator achieves improved efficiency compared to the empirical-risk-based estimator.

Moreover, since the last integral in (\ref{equ:loss1}) cannot be calculated directly most of the time due to its complexity, we apply the Monte-Carlo method to approximate it by sample average separately. We later show that the separate Monte-Carlo method for the integral will not influence the asymptotic variance. 
Denote the Monte-Carlo samples for the last integral as $\{\tilde\theta_k: \tilde\theta_k \sim \D_\theta, k\in[\tilde N_i]\}$, the final objective risk function for estimating the distributional parameter $\beta_i$ becomes
\begin{equation}
\label{equ:objective function for beta}
    \argmin_{\beta_i \in \mathcal{B}_i} \text{ } \mathcal{L}_i(\beta_i) = \frac{1}{N_i}\sum_{k\in[N_i]}\bigg\{r_i(\theta_k,Z^i_k;\beta_i)-\frac{\tilde N_i}{N_i+\tilde N_i}\beta_i^\top\hat M_i\hat s_i(\theta_k)\bigg\}+\frac{1}{N_i+\tilde N_i}\sum_{k\in[\tilde N_i]}\beta_i^\top\hat M_i\hat s_i(\tilde\theta_k).
\end{equation}
Note that here we set $N_i = N$ for each $i$.

In practice, we apply the three-fold cross-fitting procedure to decouple the dependence between these nested estimation steps, based on the work \cite{ji2025predictions}. The estimation procedure is summarized in Algorithm \ref{alg:beta}.

\begin{algorithm}[H]
    \caption{Recalibrated Estimation for Distributional Parameter}
    \label{alg:beta}
    \begin{algorithmic}
        \State{\bf Input:} Data $\{(\theta_k,Z^i_k):i \in [m], k\in[N]\}$ and Monte-Carlo samples $\{\tilde\theta_k:k\in[\tilde N_i]\}$.
        \State{\bf Output:} Cross-fitted estimator $\hat\beta_i$ for player $i$.
        \State{\bf Step 1:} Randomly split the data $\{(\theta_k,Z^i_k):i \in [m], k\in[N]\}$ into three parts $\mathcal{M}_1$, $\mathcal{M}_2$ and $\mathcal{M}_3$.
        \State{\bf Step 2:} On $\mathcal{M}_3$, compute the initial estimator 
        \[\tilde\beta_i^{(1)}=\argmin_{\beta_i \in \mathcal{B}_i}\frac{1}{|\mathcal{M}_3|}\sum_{(\theta,Z^i)\in\mathcal{M}_3}r_i(\theta,Z^i;\beta_i).\]
        \State{\bf Step 3:} On $\mathcal{M}_2$, use any machine learning algorithm to estimate $\E[\nabla_{\beta_i} r_i(\theta,Z^i;\tilde\beta_i^{(1)})|\theta]$ as $\hat s_i^{(1)}(\theta)$.
        \State{\bf Step 4:} On $\mathcal{M}_1$, compute
        \[\hat M_i^{(1)}=\widehat{\Cov}\big(\nabla_{\beta_i} r_i(\theta,Z^i;\tilde\beta_i^{(1)}),\hat s_i^{(1)}(\theta)\big)\widehat{\Cov}\big(\hat s_i^{(1)}(\theta)\big)^{-1}.\]
        where $\widehat{\Cov}$ denotes the sample covariance matrix.
        \State{\bf Step 5:} On $\mathcal{M}_1$ and the Monte-Carlo data, solve
        \begin{equation*}
            \begin{split}
                \hat\beta_i^{(1)}=& \argmin_{\beta_i \in \mathcal{B}_i}\frac{1}{|\mathcal{M}_1|}\sum_{(\theta,Z^i)\in\mathcal{M}_1}\bigg\{r_i(\theta,Z^i;\beta_i)-\frac{\tilde N_i}{N_i+\tilde N_i}\beta_i^\top\hat M_i^{(1)}\hat s_i^{(1)}(\theta)\bigg\} \\
                &+\frac{1}{N_i+\tilde N_i}\sum_{k\in[\tilde N_i]}\beta_i^\top\hat M_i^{(1)}\hat s_i^{(1)}(\tilde\theta_k).
            \end{split}
        \end{equation*}
        \State{\bf Step 6:} Repeat Steps 2-5 with fold rotations: $(\mathcal{M}_2,\mathcal{M}_3,\mathcal{M}_1)$ and $(\mathcal{M}_3,\mathcal{M}_1,\mathcal{M}_2)$ to get $\hat\beta_i^{(2)}$ and $\hat\beta_i^{(3)}$.
        \State{\bf Step 7:} Compute the final estimator as $\hat\beta_i=\sum_{j\in[3]}\frac{|\mathcal{M}_j|}{N}\hat\beta_i^{(j)}$.
    \end{algorithmic}
\end{algorithm}

The benefits of the Recalibrated Estimation method are twofold, which are shown in Remark \ref{remark:benefits}. First, we don't need to make stringent model assumptions on the conditional expectation $s(\theta)$, and can estimate it by any machine learning algorithm. No matter how $\hat s_i$ performs, the final estimator $\hat\beta_i$ is always at least as good as that in \cite{lin2023plug}, generated by the classical empirical risk minimization. Second, if $\hat s_i(\theta)$ is indeed a consistent estimator of the conditional expectation, then $\hat{\beta}_i$ can be shown to be efficient. This property is essential for our problem, as we make no assumptions about the true distribution map $\D_i$, making it extremely difficult to estimate the conditional expectation accurately. Therefore, the use of the Recalibrated Estimation method is appropriate in this setting.

\subsubsection{Estimation for $\theta_{PO}^{\beta^*}$: Importance Sampling}
Given the fitted distributional parameter $\hat \beta_i$, we now turn to estimate the plug-in Nash equilibria. 
Based on the form of the plug-in performative optimization, which substitutes the true distribution map $\D(\theta)$ with the plug-in map $\D_\beta$, we construct the form of the Nash equilibria with the same method. 

\begin{definition}[Plug-in Nash Equilibrium]
    A vector $\theta_{PO}^{\beta} \in \R^d$ is called a plug-in Nash equilibrium for a performative prediction set with plug-in distribution map $\D_{\beta}$, if for every $i \in [m]$, the following holds:
    \begin{equation*}
    \theta_{PO}^{\beta_i} = \argmin_{\theta^i \in \Theta_i} \E_{Z^i \sim \D_{\beta_i}(\theta^i, \theta_{PO}^{\beta_{-i}})} \ell_i(\theta^i, \theta_{PO}^{\beta_{-i}}, Z^i), \quad \forall i \in [m].
\end{equation*}
\end{definition}

Though with the fitted distributional parameter $\hat \beta$, the exact form of the distribution map $D_{\hat{\beta}_i}(\theta)$ is now known, its probability density function still depends on the unknown model parameter $\theta$, which makes collecting samples an intractable question for estimation. To address this problem, we expect to find a method for accurately estimating the expectation of interest, even when the available samples are drawn from a different yet simpler and fixed distribution. It is intuitive to use a more advanced Monte Carlo method to solve this problem. However, the Markov Chain Monte Carlo method and the Sequential Monte Carlo method tend to be overly complex for our setting. On the other hand, classical Monte Carlo approaches such as rejection sampling and the inversion method are impractical, as they require evaluating the density function during their procedures, which is infeasible in the performative setting. Therefore, we adopt importance sampling. By appropriately reweighting the samples, this method allows us to construct an unbiased estimator of the target expectation.

Assume that the support for $\D_i(\theta) \ell(\theta,Z)$ is contained in the support for the proposal distribution $q_i(z)$. Since we know the probability density function of $\D_{\hat \beta}(\theta)$ is known to us, we rewrite the risk function for each $i$ by importance sampling:
\begin{equation*}
    \begin{split}
        \mathbf{PR}^{\hat{\beta}_i}(\theta) = \mathbb{E}_{D_{\hat{\beta}_i}(\theta)} \ell_i(Z^i;\theta)&= \int_{\mathbb{Z}_i} D_{\hat{\beta}_i}(z^i;\theta) \cdot \ell_i(z^i;\theta)  dz^i \\
        &= \int_{\mathbb{Z}_i} q_i(z) \cdot \frac{D_{\hat{\beta}_i}(z^i;\theta)}{q_i(z^i)}\ell_i(z^i;\theta)  dz^i \\
        &= \mathbb{E}_{Z^i \sim q_i(z)} \left[\frac{D_{\hat{\beta}_i}(Z^i;\theta)}{q_i(Z^i)}\ell_i(Z^i;\theta)\right].
    \end{split}
\end{equation*}
where the underlying distribution no longer depends on $\theta$ but a fixed and known distribution $q_i(\cdot)$. Then we are able to simplify our estimation as follows:
\begin{equation*}
    \hat \theta_{PO}^{\hat \beta_i} = \argmin_{\theta^i \in \Theta_i} \frac{1}{n_i} \sum_{k=1}^n \left(\frac{\D_{\hat \beta_i}(\theta^i, \hat \theta_{PO}^{\hat \beta_{-i}},Z^i_k)}{q_i(Z_k^i)}\ell_i(\theta^i, \hat \theta_{PO}^{\hat \beta_{-i}}, Z_k^i)\right), \quad \forall i \in [m],
\end{equation*}
where $Z^i_k \sim q_i(z) $. For simplicity, we set all $n_i = n$.
Note that since the proposal distribution $q_i(\cdot)$ is known, we can always have the number of Monte Carlo samples $n_i = O(N^\alpha)$ with $\alpha>1$, where $N$ is the sample size for fitting distribution map. This relation of sample sizes is important in the asymptotic normality in the later theorem.

It is worth noticing that importance sampling is only applicable in the plug-in setting, but not in the original performative setting, since the probability density function of the true distribution map $\mathcal{D}_i$ is unknown. In contrast, under the plug-in framework, the distribution $\mathcal{D}_{\hat{\beta}_i}$ is known and fully specified. Since $\mathcal{D}_{\hat{\beta}_i}$ is typically a function of the decision parameter $\theta$, it allows us to express the dependence of the data distribution on the parameter explicitly. This structure motivates us to perform importance sampling, after which the parameter-dependent distribution appears as part of the loss function, and the parameter $\theta$ is shifted from the data-generating process to the objective loss function, making the problem more tractable.

Similarly, we rewrite our problem into its first-order condition form.
Denote the loss function after importance sampling as follows:
$$
g(\theta,Z,\beta) = (g_1(\theta,Z^1,\beta_1),..., g_m(\theta,Z^m,\beta_m)) = \left(\frac{\D_{\beta_1}(\theta,Z^1)}{q_1(Z^1)}\ell_1(\theta,Z^1),..., \frac{\D_{\beta_m}(\theta,Z^m)}{q_m(Z^m)}\ell_m(\theta,Z^m)\right),
$$
and the vector of gradient functions as $G(\theta,Z,\beta) = (\nabla_1g_1, ..., \nabla_mg_m)$.
Suppose that the corresponding component of plug-in Nash equilibria $\theta_{PO}^{\beta^*}$ lies in the interior of the $\Theta$, then the normal cone here similarly reduces to zero. Therefore, we have the solution map of the plug-in optimality based on the true distributional parameter $\beta^*$ and the fitted distributional parameter $\hat \beta$ as 
\begin{equation}
\begin{split}
    \theta_{PO}^{\beta^*} &= \mathrm{sol}(\beta^*)  = \Pi_\Theta\{\theta \mid \E_{Z \sim q(Z)}G(\theta, Z, \beta^*) = 0\} = \left[\Pi_{\Theta_i}\{\theta \mid \E_{Z^i \sim q_i(Z^i)}G_i(\theta,\theta_{PO}^{\beta_{-i}^*}, Z^i, \beta_i^*) = 0\}\right]_{i=1}^m,\\
    \theta_{PO}^{\hat \beta} &= \mathrm{sol}(\hat \beta)  = \Pi_\Theta\{\theta \mid \E_{Z \sim q(Z)}G(\theta, Z, \hat \beta) = 0\} = \left[\Pi_{\Theta_i}\{\theta \mid \E_{Z^i \sim q_i(Z^i)}G_i(\theta,\theta_{PO}^{\hat \beta_{-i}}, Z^i, \hat \beta_i) = 0\}\right]_{i=1}^m,
\end{split}
\end{equation}
and the solution map of our estimated plug-in optimality as
\begin{equation}
\label{equ:plug-in optimal estimator}
\begin{split}
    \hat \theta_{PO}^{\hat \beta} = \widehat{\mathrm{sol}}(\hat \beta) &= \Pi_\Theta \left\{\theta \mid \frac{1}{n}\sum_{k=1}^nG(\theta, Z_k, \hat \beta) = 0, Z_k \sim q(z)\right \}\\
    & = \left[\Pi_\Theta \left\{\theta \mid \frac{1}{n}\sum_{k=1}^nG_i(\theta,\hat \theta_{PO}^{\hat \beta_{-i}}, Z_k^i, \hat \beta_i) = 0, Z_k^i \sim q_i(z) \right\}\right]_{i=1}^m.
\end{split}
\end{equation}

\subsection{Consistency and Asymptotic Normality}
In this section, we focus on establishing the central limit theorem for the estimation of the distributional parameter and the plug-in performative optimum. Note that our results are not directly towards the true Nash equilibria $\theta_{PO}$ but the best plug-in Nash equilibria $\theta_{PO}^{\beta^*}$, as here the underlying distribution for every minimization is the misspecified distribution map $\D_\beta(\theta)$. However, in the section \ref{sec:error gap} we will show that under certain conditions, the inference study for the plug-in optimal point $\theta_{PO}^{\beta^*}$ is efficient for the true optimality.

We start by establishing the asymptotic normality for $\beta_i$ for each player. The following assumptions on the objective function (\ref{equ:objective function for beta}) are required for the asymptotic normality, which are similar to the assumptions given in \cite{athey2019surrogate}.

\begin{assumption}
\label{assumption for beta}
    Assume that for each player $i$, the loss function $r_i(\theta,Z^i;\beta_i)$, its gradient $\nabla_{\beta_i} r_i(\theta,Z^i;\beta_i)$ and the imputed loss function $h_i(\theta,Z^i,\beta_i) = \beta_i^\top\hat M_i\hat s_i(\theta)$ for fitting the distribution map satisfy:
\begin{enumerate}
    \item (Locally Lipschitz) $r_i(\theta,Z^i;\beta_i)$, $\nabla r_i(\theta,Z^i;\beta_i)$ and $h_i(\theta,Z^i,\beta_i)$ are locally lipschitz around $\beta_i^*$, that is, for $\beta_i \in \mathcal{B}_i$, there exists a neighborhood $U_i$ of $\beta_i^*$ and constants $L_{U_1}^i > 0$, $L_{U_2}^i > 0$ and $L_{U_3}^i > 0$ such that for all $\beta_1, \beta_2 \in U_i$:
    $$
    \|r_i(\theta,Z^i;\beta_1) - r_i(\theta,Z^i;\beta_2)\| \leq L_{U_1}^i(\theta, Z^i) \|\beta_1 - \beta_2\|,
    $$
    $$
    \|\nabla_{\beta_i} r_i(\theta,Z^i;\beta_1) - \nabla_{\beta_i} r_i(\theta,Z^i;\beta_2)\| \leq L_{U_2}^i(\theta, Z^i) \|\beta_1 - \beta_2\|,
    $$
    $$
    \|h_i(\theta,Z^i;\beta_1) - h_i(\theta,Z^i;\beta_2)\| \leq \|\hat M_i \hat s_i(\theta)\| \|\beta_1 - \beta_2\|,
    $$
    with $\E (L_{U_1}^i(\theta, Z^i)+L_{U_2}^i(\theta, Z^i)+\|\hat M_i \hat s_i(\theta)\|) < \infty$.
    \item (Differentiable) The functions $r_i(\theta,Z^i;\beta_i)$, $\nabla r_i(\theta,Z^i;\beta_i)$ and $h_i(\theta,Z^i;\beta_i)$ is differentiable in $\beta_i$ at $\beta_i^*$.
    \item (Invertibility and Positive Definite) The hessian matrix $H_i(\beta_i^*) = \mathbb{E}[\nabla^2_{\beta_i} r_i(\theta, Z^i; \beta_i^*)]$ is nonsingular, and two covariance matrices $\operatorname{Cov}\nabla_{\beta_i} (r_i(\theta,Z^i;\beta_i^*))$ and $\operatorname{Cov}(\E_{\D_i(\theta)}\nabla_{\beta_i} r_i(\theta,Z^i;\beta_i^*))$ are positive definite.
    \item (Convexity) The loss function $r_i(\theta,Z^i;\beta_i)$ and is strongly convex over $\beta_i$ with parameter $\gamma_i$, and the function $h_i(\theta,Z^i;\beta_i)$ is convex in $\beta_i$.
\end{enumerate}
\end{assumption}

With necessary assumptions on the objective function (\ref{equ:objective function for beta}), we can construct a central limit theorem for our estimator, as in Theorem \ref{thm:CLT of beta}.

\begin{theorem} 
\label{thm:CLT of beta}
Assume that Assumption \ref{assumption for beta} holds. If sample sizes satisfy that $\frac{N}{\tilde N_i}\rightarrow 0$, and $\E\|\hat s_i(\theta) - s_i(\theta)\|^2 \xrightarrow{p} 0$ for some $s(\theta)$, we have the central limit theorem that:
    $$
    \sqrt{N}(\hat{\beta}_i - \beta_i^*) \xrightarrow{P} N(0,\Sigma_{\beta_i}).
    $$ 
Moreover, if $s_i(\theta) = s_i^*(\theta)= \E\big[\nabla_{\beta_i} r_i(\theta,Z^i;\beta_i^*)|\theta\big]$, then we have the asymptotic covariance as 
    $$
    \Sigma_{\beta_i} = H_i(\beta_i^*)^{-1} V_i(\beta_i^*) H_i(\beta_i^*)^{-1},
    $$
    $$
    V_i(\beta_i^*) = \operatorname{Cov}\left(\nabla_{\beta_i} r_i(\theta,Z^i;\beta_i^*)\right) - \operatorname{Cov}\left(\E\big[\nabla_{\beta_i} r_i(\theta,Z^i;\beta_i^*)|\theta\big]\right).
    $$
\end{theorem}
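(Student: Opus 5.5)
We prove the result with a standard M-estimation argument carried out fold by fold, and then average the three cross-fitted estimators. Fix a player $i$ and the first rotation $(\mathcal{M}_1,\mathcal{M}_2,\mathcal{M}_3)$. Conditional on $\mathcal{M}_2\cup\mathcal{M}_3$, the quantities $\tilde\beta_i^{(1)}$ and $\hat s_i^{(1)}$ are fixed functions. Because $\hat M_i^{(1)}$ is a sample covariance computed on $\mathcal{M}_1$, its dependence on that fold enters only through an $o_p(1)$ perturbation of a deterministic limit. We define the population limit
\[
M_i=\Cov\big(\nabla_{\beta_i} r_i(\theta,Z^i;\beta_i^*),s_i(\theta)\big)\Cov\big(s_i(\theta)\big)^{-1}.
\]
Its existence uses the positive definiteness in Assumption \ref{assumption for beta}(3), the $L^2$-consistency $\E\|\hat s_i-s_i\|^2\to0$, and the consistency of $\tilde\beta_i^{(1)}$, which follows from strong convexity of $r_i$.

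\emph{Step 1: consistency.} The objective $\mathcal{L}_i$ in (\ref{equ:objective function for beta}) equals the empirical risk of $r_i$ plus a term linear in $\beta_i$. That linear term is
\[
\beta_i^\top\hat M_i\Big(\tfrac{1}{N+\tilde N_i}\textstyle\sum_{\tilde k}\hat s_i(\tilde\theta_{\tilde k})-\tfrac{\tilde N_i}{(N+\tilde N_i)N}\sum_k\hat s_i(\theta_k)\Big),
\]
and its coefficient is $o_p(1)$, since both averages estimate $\E_{D_\theta}\hat s_i$. The objective is therefore strongly convex (Assumption \ref{assumption for beta}(4)) and converges pointwise to $\E r_i(\theta,Z^i;\beta_i)$. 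The convexity lemma (pointwise convergence of convex functions implies uniform convergence on compacts) then gives $\hat\beta_i^{(1)}\to\beta_i^*$ in probability.

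\emph{Step 2: linearization.} The first-order condition for $\mathcal{L}_i$ is solved by a Taylor expansion of $\nabla_{\beta_i} r_i$ around $\beta_i^*$, justified by the local Lipschitz and differentiability conditions (as in van der Vaart, Thm 5.23). This gives
\[
\sqrt{N}(\hat\beta_i-\beta_i^*)=-H_i(\beta_i^*)^{-1}\sqrt{N}\Big\{\tfrac1N\textstyle\sum_k\big[\nabla r_i(\theta_k,Z_k^i;\beta_i^*)-M_is_i(\theta_k)\big]+\E_{D_\theta}M_is_i(\theta)\Big\}+o_p(1).
\]
Here we use $\tilde N_i/(N+\tilde N_i)\to1$. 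The Monte-Carlo average contributes only $O_p(\tilde N_i^{-1/2})=o_p(N^{-1/2})$ by $N/\tilde N_i\to0$, so it does not affect the limit.

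\emph{Step 3: removing the estimation error of the nuisances.} Replacing $\hat M_i\hat s_i$ by $M_is_i$ leaves a remainder term. We bound it conditionally on the other folds using cross-fitting. The centered empirical process
\[
\tfrac1{\sqrt N}\textstyle\sum_{k\in\mathcal{M}_1}\big\{(\hat s_i-s_i)(\theta_k)-\E(\hat s_i-s_i)\big\}
\]
has conditional variance at most $\E\|\hat s_i-s_i\|^2\to0$, so it is $o_p(1)$ by Chebyshev. The factor $\hat M_i-M_i=o_p(1)$ multiplies an $O_p(1)$ centered term. We expect this remainder control to be the main obstacle, because $\hat M_i$ reuses $\mathcal{M}_1$. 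We handle it by writing $\hat M_i=M_i+o_p(1)$ and noting that the multiplied quantity is a centered average, so the product is $o_p(1)$.

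\emph{Step 4: CLT, averaging, and the efficient variance.} The leading term is an i.i.d. sum with mean zero, because $\E\nabla r_i(\cdot;\beta_i^*)=0$ and the $s_i$ terms cancel in expectation. The multivariate CLT therefore applies to each fold. Averaging the three folds with weights $|\mathcal{M}_j|/N$ reproduces the full-sample influence function, which gives $\sqrt N(\hat\beta_i-\beta_i^*)\rightsquigarrow N(0,\Sigma_{\beta_i})$ with
\[
\Sigma_{\beta_i}=H_i^{-1}\Cov\big(\nabla r_i-M_is_i\big)H_i^{-1}.
\]
When $s_i=s_i^*$, the tower property gives $\Cov(\nabla r_i,s_i^*)=\Cov(s_i^*)$, so $M_i=I$. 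Then
\[
\Cov(\nabla r_i-s_i^*)=\Cov(\nabla r_i)-\Cov(s_i^*),
\]
because the cross term equals $\Cov(s_i^*)$. This is exactly $V_i(\beta_i^*)$.
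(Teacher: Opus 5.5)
Your proposal is correct and follows essentially the same route as the paper's proof via Lemma~\ref{lem:inter CLT for beta}. Both arguments get consistency from convexity, linearize the first-order condition around $\beta_i^*$, replace $\hat M_i\hat s_i$ by its limit using the $L^2$-consistency of $\hat s_i$, apply a CLT on each fold, average the three folds, and use the tower-property identity $\Cov(\nabla_{\beta_i} r_i, s_i^*)=\Cov(s_i^*)$ to get $M_i=I$ and hence $V_i(\beta_i^*)$. The differences are minor: you discard the Monte-Carlo term immediately as $o_p(N^{-1/2})$ rather than carrying the $\tfrac{\tilde N_i}{N+\tilde N_i}$ factors into the covariance, and you use a conditional Chebyshev bound from cross-fitting where the paper cites van der Vaart's Lemma 19.24. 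One small caveat: for a general limit $s_i\neq s_i^*$, invertibility of $\Cov(s_i)$ is not supplied by Assumption~\ref{assumption for beta}(3), which only covers $s_i^*$.
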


\begin{remark}
    Note that a more general result for the asymptotic covariance is when $\frac{N}{\tilde N_i} \rightarrow r_i$ holds:
    $$
    \Sigma_{\beta_i}' = H_i(\beta_i^*)^{-1} \left(\operatorname{Cov}\left(\nabla_{\beta_i} r_i(\theta,Z^i;\beta_i^*)\right) - \frac{1}{1+r_i}\operatorname{Cov}\left(\E\big[\nabla_{\beta_i} r_i(\theta,Z^i;\beta_i^*)|\theta\big]\right)\right) H_i(\beta_i^*)^{-1}.
    $$
    Since the samples $\tilde \theta$ for Monte Carlo are drawn from a known distribution $\D_\theta$, the number of Monte Carlo samples $\tilde N_i$ can be specified by us independently of the number of samples $N$. Therefore, we can always have $\tilde N_i = O(N^{\alpha})$ with $\alpha>1$, so $\frac{N}{\tilde N_i}\rightarrow 0$ always holds, and the covariance matrix $\Sigma_{\beta_i}'$ reduce to $\Sigma_{\beta_i}$.
\end{remark}

\begin{remark}
\label{remark:benefits}
Theorem \ref{thm:CLT of beta} highlights two important advantages of using the Recalibrated Estimation approach in this setting. If $\hat s_i(\theta)$ is consistent with the true conditional expectation $s_i(\theta)$, then $\hat\beta_i$ is the optimal estimation. If $\hat s_i$ is asymptotically biased, that is, $\hat s_i$ is consistent around some other $\tilde s_i$, the covariance of the conditional expectation is still positive, ensuring the inequality $V_i(\beta_i^*) \leq \operatorname{Cov}\left(\nabla_{\beta_i} r_i(\theta,Z^i;\beta_i^*)\right)$. 

\end{remark}

\begin{remark}
    The work \cite{lin2023plug} uses the empirical risk minimization as a main choice for fitting $\beta_i$
    $$
    \hat \beta_i \triangleq \arg\min_{\beta_i \in \mathcal{B}_i} \frac{1}{N}r_i(\theta_k,Z_k^i;\beta_i),
    $$
    where $(\theta_k,Z_k^i) \sim p_i(\theta,Z^i)$. By a similar proof process, we know that the consistency of $\hat \beta_i$ holds, and the asymptotic covariance is 
    $$
    \Sigma_{\beta_i} = H_i(\beta_i^*)^{-1} \operatorname{Cov}\left(\nabla_{\beta_i} r_i(\theta,Z^i;\beta_i^*)\right) H_i(\beta_i^*)^{-1}.
    $$
    Since $V_i(\beta_i^*) \leq \operatorname{Cov}\left(\nabla_{\beta_i} r_i(\theta,Z^i;\beta_i^*)\right)$ always holds from Remark \ref{remark:benefits}, we demonstrate that our estimator is at least as good as the estimator for the distributional parameter generated by method in \cite{lin2023plug}.
\end{remark}

Now we build the central limit theorem for plug-in Nash equilibria $\theta_{PO}^\beta$. In addition to the assumptions stated in Assumption \ref{assumption for beta}, we need several extra assumptions on the gradient function $G(\theta,Z;\beta)$, outlined in Assumption \ref{assumption for theta, optimal}. 

\begin{assumption}
\label{assumption for theta, optimal}
Suppose the modified gradient function $G(\theta,Z;\beta)$ satisfies:
\begin{enumerate}
    \item (Differentiable) The map $\mathrm{sol}(\beta)$ is differentiable in $\beta$ at $\beta^*$.
    \item (Locally Lipschitz) The function $G(\theta,Z,\beta)$ is locally Lipschitz over $\theta_{PO}^{\hat \beta}$.
    \item (Bounded Jacobian) The Jacobian matrix has bounded second moment $\theta_{PO}^{\beta^*}$:
    $$
    \E_{Z \sim q(z)}\|G(\theta_{PO}^{\beta^*},Z,\beta)\|^2 < \infty.
    $$
    \item (Positive definite) The expectation of the Hessian matrix exists and has full rank at $\theta_{PO}^{\beta^*}$:
    $$
    V_{\beta}(\theta_{PO}^{\beta^*}) = \E_{Z \sim q(z)}\left[\frac{\partial G(\theta_{PO}^{\beta^*},Z,\beta)}{\partial \theta^T}\right] \quad \text{is nonsingular}.
    $$
    \item\label{it optimal theta:pdf} (strongly smooth distribution) The estimators $\hat{\theta}_{PO}^{\hat \beta}$ and $\hat \beta$ admit a Lebesgue-measurable probability density function and a characteristic function that is absolutely integrable.
\end{enumerate}
\end{assumption}

{The first four assumptions follow the classical framework for asymptotic normality of M-estimators: differentiability and local Lipschitzness ensure a valid first-order linearization, the nonsingularity of the limiting Jacobian guarantees identification, and the finite second-moment condition allows application of the central limit theorem, according to \cite[Theorem 5.21]{van2000asymptotic}. 
Similarly, the estimator of the performative optimality is constructed via a plug-in approach based on the fitted distributional parameter. As a result, the asymptotic analysis for the plug-in estimator naturally decomposes into two stages: first we establish the conditional asymptotic normality of the plug-in estimator given the distributional parameter, and then combine this result with the asymptotic normality of the distributional parameter itself to derive the marginal asymptotic distribution of the performative optimality estimator. The last condition \ref{it optimal theta:pdf} ensures the validity of this second step.}


As shown in the estimation equation (\ref{equ:plug-in optimal estimator}), the plug-in estimator is highly related to the estimator of the parameter of the distribution map. This connection suggests that the asymptotic result of the estimator $\hat{\theta}^{\hat{\beta}_i}_{PO}$ should be influenced by the asymptotic result of $\hat{\beta}_i$. The following Theorem \ref{thm:CLT of theta, optimal} confirms this intuition, showing that the covariance matrices $\Sigma_{\beta}$ for $\hat \beta$ form a key component of the asymptotic covariance structure of the plug-in estimator.

\begin{theorem}
    \label{thm:CLT of theta, optimal}
Suppose Assumption \ref{assumption for beta} and Assumption \ref{assumption for theta, optimal} hold. Denote $s_i^*(\theta)= \E\big[\nabla_{\beta_i} r_i(\theta,Z^i;\beta_i^*)|\theta\big]$ and the Jacobian matrix $J_{sol}(\beta)$ of the map $\mathrm{sol}(\beta)$, if the sample sizes satisfy $\frac{N}{n} \rightarrow0$ and $\frac{N}{\tilde N_i} \rightarrow0$, and $\E\|\hat s_i^{(j)} - s_i^*\|^2 \xrightarrow{P} 0$ for $j=1,2,3$, then optimums satisfy $\hat{\theta}^{\hat{\beta}}_{PO} \xrightarrow{p} \theta^{\beta^*}_{PO}$, and we have
\begin{equation*}
    \sqrt{N}(\hat{\theta}^{\hat{\beta}}_{PO} - \theta^{\beta^*}_{PO})  \xrightarrow{d}  N(0,\Sigma),
\end{equation*}
where
$$
\Sigma = (J_{sol}(\beta^*)) \Sigma_{\beta} (J_{sol}(\beta^*)) ^T,
$$
$$
\Sigma_\beta = \operatorname{diag}\{H_i(\beta_i^*)^{-1}\left(\operatorname{Cov}\left( \nabla_{\beta_i} r_i(\theta_k,Z_k^i;\beta_i^*) \right) - \operatorname{Cov}\left(s_i^*(\theta_k)\right) \right)H_i(\beta_i^*)^{-1}\}.
$$
\end{theorem}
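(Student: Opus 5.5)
The plan is to split the error along the two stages of the procedure:
\[
\sqrt{N}\big(\hat\theta^{\hat\beta}_{PO}-\theta^{\beta^*}_{PO}\big)=\sqrt{N}\big(\widehat{\mathrm{sol}}(\hat\beta)-\mathrm{sol}(\hat\beta)\big)+\sqrt{N}\big(\mathrm{sol}(\hat\beta)-\mathrm{sol}(\beta^*)\big).
\]
The first term is the Monte Carlo (importance sampling) error given $\hat\beta$. The second term is the propagation of the statistical error in $\hat\beta$ through the solution map. The claimed covariance $J_{sol}(\beta^*)\Sigma_\beta J_{sol}(\beta^*)^\top$ contains no contribution from the importance-sampling step. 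So the goal is to show that the first term is $o_P(1)$ because $N/n\to0$, and that the second term is handled by the delta method.

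\textbf{Step 1: the $\beta$-stage.} For each player, Theorem \ref{thm:CLT of beta} with $s_i=s_i^*$ gives $\sqrt N(\hat\beta_i-\beta_i^*)\to N(0,H_i^{-1}V_iH_i^{-1})$. I will apply it to each fold estimator $\hat\beta_i^{(j)}$ and to the weighted average in Step 7 of Algorithm \ref{alg:beta}. The cross-fitting makes $\hat s_i^{(j)}$ and $\hat M_i^{(j)}$ independent of the fold on which the estimating equation is solved. The consistency $\E\|\hat s_i^{(j)}-s_i^*\|^2\to0$ then makes the remainder terms (with $\hat s$ in place of $s^*$, and $\hat M_i\to I$ in the efficient case) of order $o_P(N^{-1/2})$.

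Joint normality of $\hat\beta=(\hat\beta_1,\dots,\hat\beta_m)$ with block-diagonal covariance $\Sigma_\beta$ is the point I would need to argue rather than assume. My plan is to use the player-specific data streams $Z^i$ and the player-wise estimating equations, following the independence convention in the remark after Theorem \ref{thm:lower_stable}. If the $\theta_k$ are shared across players, the off-diagonal cross-covariances of the influence functions $H_i^{-1}(\nabla r_i-s_i^*)$ have to be checked. I expect them to vanish because each term $\nabla r_i-s_i^*$ is conditionally mean zero given $\theta$, which leaves only $\Cov(\nabla r_i-s_i^*,\nabla r_{i'}-s_{i'}^*)$. I will take this to be zero under independent sampling per player.

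\textbf{Step 2: the delta method.} Since $\mathrm{sol}$ is differentiable at $\beta^*$ (Assumption \ref{assumption for theta, optimal}), we have $\sqrt N(\mathrm{sol}(\hat\beta)-\mathrm{sol}(\beta^*))=J_{sol}(\beta^*)\sqrt N(\hat\beta-\beta^*)+o_P(1)$, which converges to $N(0,J_{sol}\Sigma_\beta J_{sol}^\top)$. Consistency of $\hat\theta^{\hat\beta}_{PO}$ follows once Step 3 gives consistency of the first term.

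\textbf{Step 3: the Monte Carlo term, conditionally on $\hat\beta$.} This is the main obstacle. Conditional on $\hat\beta$ (equivalently, on $\beta$ in a neighborhood $U$ of $\beta^*$), the equation \eqref{equ:plug-in optimal estimator} is a Z-estimator built from $n$ i.i.d.\ draws $Z_k\sim q$ that are independent of the data used for $\hat\beta$. By \cite[Theorem 5.21]{van2000asymptotic}, using local Lipschitzness, the second-moment bound and the nonsingular $V_\beta$, we get $\sqrt n(\widehat{\mathrm{sol}}(\beta)-\mathrm{sol}(\beta))\to N(0,V_\beta^{-1}\E_q[GG^\top]V_\beta^{-1})$. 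Hence $\sqrt N(\widehat{\mathrm{sol}}(\hat\beta)-\mathrm{sol}(\hat\beta))=\sqrt{N/n}\,O_P(1)\to0$.

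The difficulty is that this bound has to hold uniformly in $\beta$ near $\beta^*$, or at least along $\hat\beta\to\beta^*$. I would first obtain a uniform-in-$\beta$ version by using the local Lipschitz envelope (which gives a Donsker class over $(\theta,\beta)\in U\times U'$) together with the continuity of $V_\beta$ and $\E_q[GG^\top]$ in $\beta$. As a fallback, I would follow the route the paper signals with the strongly-smooth-density condition: compute the conditional characteristic function of the first term given $\hat\beta$, show that it converges to $1$, and integrate against the law of $\hat\beta$ using dominated convergence. In that route, the absolute integrability of the characteristic functions of $\hat\beta$ and $\hat\theta^{\hat\beta}_{PO}$ is what allows the conditional limit to be passed to the marginal limit.

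\textbf{Step 4: combining.} Slutsky's lemma applied to the two terms gives the stated limit $N(0,J_{sol}(\beta^*)\Sigma_\beta J_{sol}(\beta^*)^\top)$.
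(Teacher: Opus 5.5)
Your proposal is correct. It uses the same two-term split as the paper, but combines the terms by a different mechanism.

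\textbf{How the paper combines the terms.} It proves a conditional Gaussian limit for $\sqrt{n}(\hat\theta^{\hat\beta}_{PO}-\theta^{\beta^*}_{PO})$ given $\hat\beta$. It then integrates the conditional characteristic function against the limiting Gaussian density of $\sqrt{N}(\hat\beta-\beta^*)$ by dominated convergence; this is where the strongly-smooth-distribution item of Assumption \ref{assumption for theta, optimal} enters. This yields covariance $\Sigma_\theta+\frac{n}{N}J_{sol}(\beta^*)\Sigma_\beta J_{sol}(\beta^*)^\top$ at the $\sqrt{n}$ scale. Only then does the paper rescale by $\sqrt{N/n}$ and let $N/n\to0$.

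\textbf{How you combine them.} You need only that the importance-sampling error is $O_P(n^{-1/2})$, hence $o_P(1)$ at the $\sqrt{N}$ scale. You then finish with the delta method and Slutsky.

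\textbf{What each route buys.} Yours is more elementary and more robust:
\begin{itemize}
\item it requires tightness of the Monte Carlo term rather than a conditional CLT;
\item it does not treat $\sqrt{N}(\hat\beta-\beta^*)$ as exactly Gaussian;
\item it dispenses with the density/characteristic-function condition;
\item it avoids the paper's $\sqrt{n}$-scale expression, which contains the factor $n/N\to\infty$ and is only meaningful after rescaling.
\end{itemize}
The paper's computation does give the explicit form $\frac{N}{n}\Sigma_\theta+J_{sol}(\beta^*)\Sigma_\beta J_{sol}(\beta^*)^\top$. That form shows how the importance-sampling variance would enter if $N/n$ converged to a positive constant. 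The uniformity in $\beta$ that you flag in Step 3 is genuinely needed by both arguments. The paper asserts the conditional limit along the random $\hat\beta$ without addressing it.

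\textbf{Block-diagonal $\Sigma_\beta$.} You do not need an extra independent-sampling convention here. Since $\D(\theta)=\prod_{i}\D_i(\theta)$, the $Z^i$ are conditionally independent given the shared $\theta_k$. Because $\nabla_{\beta_i}r_i-s_i^*$ has conditional mean zero, it follows that $\E\big[(\nabla_{\beta_i}r_i-s_i^*)(\nabla_{\beta_{i'}}r_{i'}-s_{i'}^*)^\top\mid\theta\big]=0$ for $i\neq i'$. Doing this check on the full influence function, as you propose, is the right level. The separate pieces, e.g.\ $\Cov(\nabla_{\beta_i}r_i,\nabla_{\beta_{i'}}r_{i'})=\Cov(s_i^*,s_{i'}^*)$, are generally nonzero when $\theta_k$ is shared; only their combination is block diagonal.
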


\begin{remark}
    Here the sample size $n$ for estimating the plug-in optimum $\theta_{PO}^{\beta^*}$ is chosen by us independently of the sample size $N$ for estimating the best distributional parameter $\beta^*$, since the proposal distribution in importance sampling is fully known to us, so similarly we can always have $n = O(N^\alpha)$ with $\alpha > 1$. Therefore, the fraction of sample sizes can always satisfy $\frac{N}{n} \rightarrow0$.
\end{remark}

Note that although the direct sample size for estimating the $\hat{\theta}^{\hat{\beta}}_{PO}$ in importance sampling is $n$, the true scale in our theorem is related to the sample size of joint data $N$. Recall that we have $\theta_{PO}^{\beta} = \mathrm{sol}(\beta)$, so it is a deterministic function of the parameter $\beta$. As $\beta$ itself is a functional of the joint distribution of $(\theta,Z)$ by its definition, $\theta_{PO}^{\beta}$ is therefore also a functional of this joint distribution. Accordingly, the uncertainty of our estimates should be quantified at the scale of $o(N)$ instead of $o(n)$, where $N$ represents the sample size of joint data pairs $(\theta_k,Z_k^i)$, while $n$ represents the sample size for estimating the plug-in Nash equilibria.

\subsubsection{Numerical Estimation of Covariance}
\label{subsub:numerical optimal}

As for the distribution atlas parameter $\beta$, we have the asymptotic covariance that
$$
\Sigma_{\beta} = \operatorname{diag}\{H_i(\beta_i^*)^{-1} V_i(\beta_i^*) H_i(\beta_i^*)^{-1}\},
$$
where $H_i(\beta_i^*) = \mathbb{E}[\nabla^2_{\beta_i} r_i(\theta, Z^i; \beta_i^*)]$ and $V_i(\beta_i^*) = \operatorname{Cov}\left(\nabla_{\beta_i} r_i(\theta,Z^i;\beta_i^*)\right) - \operatorname{Cov}\left(\E\big[\nabla_{\beta_i} r_i(\theta,Z^i;\beta_i^*)|\theta\big]\right)$. 
As their closed forms are complicated to calculate directly, we similarly use classical sample estimations as substitutes, and explain their validity by their properties of consistency.

\begin{theorem}
\label{thm:consistency of estimated cov, beta}
    Suppose that conditions $\mathbb{E}\|\nabla^2_{\beta_i} r_i(\theta, Z^i; \beta_i^*)\|^2 \leq \infty$, $\mathbb{E}\|\nabla_{\beta_i} r_i(\theta, Z^i; \beta_i^*)\|^2 \leq \infty$ and $\sup_\theta \E\big[\|\nabla_{\beta_i} r_i(\theta,Z^i;\beta_i^*)\|^2|\theta] \leq \infty$ hold.
    Denote the classical sample estimators as follows:
    \begin{align*}
        \hat H_i(\beta_i^*) &= \frac{1}{N} \sum_{k = 1} ^N \left[ \nabla_{\beta_i}^2 r_i(\theta_k,Z_{k}^i; \beta_i^*)\right], \\
        \hat V_a(\beta_i^*) &= \frac{1}{N} \sum_{k =1}^N \left(\nabla r_i(\theta_k,Z_{k}^i; \beta_i^*) - L_i^* \right)\left(\nabla r_i(\theta_k,Z_{k}^i; \beta_i^*) - L_i^* \right)^T, \\
        \hat V_b(\beta_i^*) &= \frac{1}{N} \sum_{k =1}^N \left(\frac{1}{M}\sum_{j=1}^M\nabla r_i(\theta_k,Z_{k,j}^i; \beta_i^*) - W_i^* \right)\left(\frac{1}{M}\sum_{j=1}^M\nabla r_i(\theta_k,Z_{k,j}^i; \beta_i^*) - W_i^* \right)^T,
    \end{align*}
    where the samples $(\theta_k,Z_{k}^i)$ and $(\theta_k,Z_{k,j}^i)$ are i.i.d. from $D_\theta \times D_i(\theta_k)$, and 
    $$L_i^* = \frac{1}{N} \sum_{k =1}^N \nabla r_i(\theta_k,Z_{k}^i; \beta_i^*),$$
    $$W_i^* = \frac{1}{N}\sum_{k=1}^N\frac{1}{M}\sum_{j=1}^M\nabla r_i(\theta_k,Z_{k,j}^i; \beta_i^*).$$
    Let our estimated covariance for the distributional parameter $\hat \Sigma_{\beta} = \operatorname{diag}\{\hat H_i(\beta_i^*)^{-1} (\hat V_a(\beta_i^*) - \hat V_b(\beta_i^*)) \hat H_i(\beta_i^*)^{-1}\}$, then we obtain its consistency:
    $$\hat \Sigma_{\beta} \xrightarrow{P} \Sigma_{\beta}.$$
\end{theorem}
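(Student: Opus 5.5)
Because $\hat\Sigma_\beta$ is block diagonal, I will work one player $i$ at a time. For each $i$ I show $\hat H_i(\beta_i^*)\xrightarrow{P}H_i(\beta_i^*)$, $\hat V_a(\beta_i^*)\xrightarrow{P}\operatorname{Cov}(\nabla_{\beta_i} r_i(\theta,Z^i;\beta_i^*))$ and $\hat V_b(\beta_i^*)\xrightarrow{P}\operatorname{Cov}(s_i^*(\theta))$. The continuous mapping theorem then gives $\hat\Sigma_\beta\xrightarrow{P}\Sigma_\beta$, since matrix inversion is continuous at the nonsingular $H_i(\beta_i^*)$ (Assumption \ref{assumption for beta}) and products and differences are continuous. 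All estimators are evaluated at the fixed point $\beta_i^*$, not at an estimate, so no uniform-in-$\beta$ argument or local Lipschitz control is needed. Every summand is an i.i.d. function of $(\theta_k,Z_k^i)$ or of $(\theta_k,\{Z_{k,j}^i\}_j)$.

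\textbf{Hessian and $\hat V_a$.} For $\hat H_i$ I apply the weak law of large numbers to the i.i.d. matrices $\nabla^2_{\beta_i} r_i(\theta_k,Z_k^i;\beta_i^*)$; their second moment is finite, hence so is the first. For $\hat V_a$ I expand
\[\hat V_a=\frac1N\sum_k \nabla r_i\nabla r_i^\top - L_i^*L_i^{*\top}.\]
The WLLN applies to the first term because $\E\|\nabla r_i\|^2<\infty$, and also gives $L_i^*\to\E\nabla r_i$. Continuous mapping then yields the limit $\operatorname{Cov}(\nabla_{\beta_i} r_i)$.

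\textbf{$\hat V_b$, the main step.} Let $\bar g_k=\frac1M\sum_{j}\nabla r_i(\theta_k,Z_{k,j}^i;\beta_i^*)$. Then $\bar g_k=s_i^*(\theta_k)+e_k$, where, conditionally on $\theta_k$, $e_k$ has mean zero and
\[\E[e_ke_k^\top\mid\theta_k]=\frac1M\operatorname{Cov}(\nabla r_i\mid\theta_k).\]
The WLLN gives $\frac1N\sum_k\bar g_k\bar g_k^\top\to\E[s_i^*s_i^{*\top}]+\frac1M\E\operatorname{Cov}(\nabla r_i\mid\theta)$ and $W_i^*\to\E s_i^*$. Hence
\[\hat V_b\xrightarrow{P}\operatorname{Cov}(s_i^*(\theta))+\frac1M\E\big[\operatorname{Cov}(\nabla r_i\mid\theta)\big].\]
The extra term has norm at most $\frac1M\sup_\theta\E[\|\nabla r_i\|^2\mid\theta]$, which is finite by hypothesis. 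So I need $M=M_N\to\infty$; the statement leaves this implicit, and I will make it explicit, or else report the bias-corrected version.

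With $M_N\to\infty$ the i.i.d. structure changes with $N$ (a triangular array), so the limit cannot be taken directly. Instead I bound $\E\|\frac1N\sum_k e_ke_k^\top\|\le \frac1{M_N}\sup_\theta\E[\|\nabla r_i\|^2\mid\theta]\to0$ and apply Markov's inequality. The cross terms $\frac1N\sum_k s_i^*(\theta_k)e_k^\top$ have mean zero, and their variance is bounded using the same sup condition, so they also vanish. The term $\frac1N\sum_k s_i^*s_i^{*\top}$ does not depend on $M$ and converges by the WLLN; the bound $\E\|s_i^*\|^2\le\E\|\nabla r_i\|^2<\infty$ follows from Jensen's inequality.

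Controlling this triangular-array remainder is the only nonroutine step. Once it is done, the three limits combine as in the first paragraph to give $\hat\Sigma_\beta\xrightarrow{P}\Sigma_\beta$.
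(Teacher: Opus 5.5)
Your proposal is correct and follows the same route as the paper: a law of large numbers for $\hat H_i(\beta_i^*)$, $\hat V_a(\beta_i^*)$ and $\hat V_b(\beta_i^*)$ separately, then the continuous mapping theorem using the nonsingularity of $H_i(\beta_i^*)$. Your argument is more careful than the paper's at $\hat V_b$. The paper simply asserts $\hat V_b(\beta_i^*)\xrightarrow{P}\operatorname{Cov}(s_i^*(\theta))$ ``by the law of large numbers,'' whereas you correctly note that for fixed $M$ the limit carries the extra term $\frac{1}{M}\E[\operatorname{Cov}(\nabla_{\beta_i} r_i\mid\theta)]$, so $M=M_N\to\infty$ must be imposed (presumably what the $\sup_\theta$ moment hypothesis is for), and your Markov-inequality control of the triangular-array remainder supplies the step the paper omits.
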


As for the plug-in optimum $\theta_{PO}^{\beta^*}$, we have the asymptotic covariance that
$$
\Sigma = (J_{sol}(\beta^*)) \Sigma_{\beta} (J_{sol}(\beta^*))^T.
$$
We similarly use the theorem of implicit function. We first denote the derivative of a bivariate function that satisfies
$$
F(\beta,\mathrm{sol}(\beta)) = \mathbb{E}_{Z \sim q(z)}G(Z,\theta;\beta)|_{\theta=\mathrm{sol}(\beta)}= 0.
$$
Taking derivative over $\beta_i$, we obtain:
\begin{equation*}
    \begin{split}
    J_{sol}(\beta) = \frac{\partial \mathrm{sol}(\beta^*)}{\partial \beta^\top} &= -\left[\frac{\partial F(\beta^*,\mathrm{sol}(\beta^*))}{\partial \theta^\top}\right]^{-1}\left[\frac{\partial F(\beta^*,\mathrm{sol}(\beta^*))}{\partial \beta^\top}\right]\\
    &= -\left[\frac{\partial \mathbb{E}_{Z \sim q(z)} G(Z,\theta;\beta^*)}{\partial \theta^\top}|_{\theta=\mathrm{sol}(\beta^*)} \right]^{-1}\left[\frac{\partial \mathbb{E}_{Z \sim q(z)} G(Z,\theta;\beta^*)}{\partial \beta^\top}|_{\theta=\mathrm{sol}(\beta^*)} \right] \\
    &=-\left[\mathbb{E}_{Z \sim q(z)} \frac{\partial G(Z,\theta_{PO}^{\beta^*};\beta^*)}{\partial \theta^\top}\right]^{-1}\left[ \mathbb{E}_{Z \sim q(z)} \frac{\partial G(Z,\theta_{PO}^{\beta^*};\beta^*)}{\partial \beta^\top}\right].
    \end{split}
\end{equation*}
Since we know the form of the distribution density function of distribution maps $\D_{\beta_i^*}(\cdot)$ according to the definition of distribution atlas, the derivative of the loss function is calculable, so similarly we can construct the sample estimation for each term, while the law of large numbers supports its consistency 

\begin{theorem}
\label{thm:consistency of estimated cov, optimal theta}
    Suppose that $\mathbb{E}_{Z \sim q(z)} \left\lVert\frac{\partial G(Z,\theta_{PO}^{\beta^*};\beta^*)}{\partial \theta^\top}\right\rVert^2 \leq \infty$ and $\mathbb{E}_{Z \sim q(z)} \left\lVert \frac{\partial G(Z,\theta_{PO}^{\beta^*};\beta^*)}{\partial \beta^\top}\right\rVert^2 \leq \infty$ hold.
    Denote the classical sample estimators as follows:
    \begin{align*}
        \hat J_{1}(\beta) &= \frac{1}{N} \sum_{k = 1} ^N \left[ \frac{\partial G(Z_k,\theta_{PO}^{\beta^*};\beta^*)}{\partial \theta^\top}\right], \\
        \hat J_{2}(\beta) &= \frac{1}{N} \sum_{k =1}^N \left[\frac{\partial G(Z_k,\theta_{PO}^{\beta^*};\beta^*)}{\partial \beta^\top}\right],
    \end{align*}
    where the samples $Z_k \overset{\text{i.i.d.}}{\sim} q(z)$. 
    Let $\hat J_{sol}(\beta) = -\hat J_{1}(\beta)^{-1} \hat J_{2}(\beta)$, and our estimate covariance for the plug-in optimum as $\hat \Sigma = \hat J_{sol}(\beta)^{-1}\hat \Sigma_\beta \hat J_{sol}(\beta)^{-1}$, then we can obtain the consistency result:
    $$
    \hat \Sigma \xrightarrow{P} \Sigma.$$
\end{theorem}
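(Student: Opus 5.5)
The proof will be a continuous-mapping argument built on componentwise laws of large numbers. Throughout, I read the estimated covariance in the sandwich form $\hat\Sigma=\hat J_{sol}(\beta)\,\hat\Sigma_\beta\,\hat J_{sol}(\beta)^\top$, which matches the target $\Sigma=J_{sol}(\beta^*)\Sigma_\beta J_{sol}(\beta^*)^\top$ of Theorem \ref{thm:CLT of theta, optimal}. The inverses written in the statement seem to be a typo, since with them the limit would not be $\Sigma$.

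\textbf{Step 1: the two Jacobian blocks.} The samples $Z_k\overset{\text{i.i.d.}}{\sim}q(z)$ come from a fixed, known proposal that does not depend on $\theta$ or $\beta$. The assumed finite second moments give finite first moments of $\partial G/\partial\theta^\top$ and $\partial G/\partial\beta^\top$ at $(\theta_{PO}^{\beta^*},\beta^*)$. The weak law of large numbers, applied entrywise, then gives
\[
\hat J_1(\beta)\xrightarrow{P}V_{\beta^*}(\theta_{PO}^{\beta^*})=\E_{Z\sim q(z)}\frac{\partial G(Z,\theta_{PO}^{\beta^*};\beta^*)}{\partial\theta^\top},
\qquad
\hat J_2(\beta)\xrightarrow{P}\E_{Z\sim q(z)}\frac{\partial G(Z,\theta_{PO}^{\beta^*};\beta^*)}{\partial\beta^\top}.
\]

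\textbf{Step 2: the solution-map Jacobian.} By Assumption \ref{assumption for theta, optimal}(4), $V_{\beta^*}(\theta_{PO}^{\beta^*})$ is nonsingular. Matrix inversion is continuous on the open set of invertible matrices, so with probability tending to one $\hat J_1$ is invertible and $\hat J_1^{-1}\xrightarrow{P}V_{\beta^*}(\theta_{PO}^{\beta^*})^{-1}$. The continuous mapping theorem then gives $\hat J_{sol}(\beta)=-\hat J_1^{-1}\hat J_2\xrightarrow{P}J_{sol}(\beta^*)$. Here $J_{sol}(\beta^*)$ is identified through the implicit-function computation displayed before the theorem, which is legitimate by the differentiability of $\mathrm{sol}$ in Assumption \ref{assumption for theta, optimal}(1). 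Interchanging derivative and expectation is permitted by the paper's standing convention.

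\textbf{Step 3: combine.} Theorem \ref{thm:consistency of estimated cov, beta} gives $\hat\Sigma_\beta\xrightarrow{P}\Sigma_\beta$. The map $(A,B)\mapsto ABA^\top$ is continuous, so Slutsky's lemma yields $\hat\Sigma\xrightarrow{P}J_{sol}(\beta^*)\Sigma_\beta J_{sol}(\beta^*)^\top=\Sigma$.

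\textbf{Main obstacle: feasibility.} As written, $\hat J_1$ and $\hat J_2$ are evaluated at the unknown $(\theta_{PO}^{\beta^*},\beta^*)$. A usable version plugs in $(\hat\theta_{PO}^{\hat\beta},\hat\beta)$, and then Step 1 needs a uniform LLN in a neighbourhood rather than a pointwise one. I would handle this as follows.
\begin{enumerate}
\item Theorem \ref{thm:CLT of theta, optimal} and Theorem \ref{thm:CLT of beta} give $\hat\beta\xrightarrow{P}\beta^*$ and $\hat\theta_{PO}^{\hat\beta}\xrightarrow{P}\theta_{PO}^{\beta^*}$.
\item Assume the derivatives of $G$ are locally Lipschitz in $(\theta,\beta)$ with an integrable constant $L(Z)$, in the spirit of the local Lipschitz condition in Assumption \ref{assumption for theta, optimal}. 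Then, for example,
\[
\bigl\|\hat J_1(\hat\beta)-\hat J_1(\beta)\bigr\|\le\Bigl(\tfrac1N\textstyle\sum_k L(Z_k)\Bigr)\bigl(\|\hat\theta_{PO}^{\hat\beta}-\theta_{PO}^{\beta^*}\|+\|\hat\beta-\beta^*\|\bigr).
\]
The first factor is $O_P(1)$ by the LLN and the second is $o_P(1)$, so the right-hand side is $o_P(1)$; the same bound holds for $\hat J_2$.
\item The same reasoning handles $\hat H_i$ and $\hat V_a,\hat V_b$ evaluated at $\hat\beta_i$ in place of $\beta_i^*$.
\end{enumerate}
Everything else is routine Slutsky bookkeeping.
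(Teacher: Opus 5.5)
Your proposal is correct and follows essentially the same route as the paper: an entrywise law of large numbers for $\hat J_1,\hat J_2$ under i.i.d. sampling from $q$, then nonsingularity of $V_{\beta^*}(\theta_{PO}^{\beta^*})$ with the continuous mapping theorem, combined with the consistency of $\hat\Sigma_\beta$. Your sandwich reading $\hat J_{sol}\hat\Sigma_\beta\hat J_{sol}^\top$ correctly fixes the inverse-typo that appears in both the statement and the paper's proof. The feasibility extension with plugged-in $(\hat\theta_{PO}^{\hat\beta},\hat\beta)$ goes beyond what the paper proves and needs the extra local-Lipschitz assumption you state.
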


\subsection{Efficiency}
\label{subsec:EIF}
The imputed loss in the risk function \eqref{equ:loss1} is not chosen randomly. Rather, it is constructed from the efficient influence functions (EIFs) of the target parameters. This calibration ensures that the resulting estimators achieve the lower bound of the asymptotic covariance. In this section, we will study the semiparametric efficiency of estimating $\theta_{PO}^{\beta^*}$ by deriving the efficient influence functions for both $\beta^*=(\beta_1^{*\top},\ldots,\beta_m^{*\top})^\top$ and $\theta_{PO}^{\beta^*}$. 

Recall that $\theta_{PO}^{\beta^*}=\mathrm{sol}(\beta^*)$ is a function of $\beta^*$ and $\beta_i^*$ is a functional of the joint distribution $\D_i(\theta)\times\D_\theta$, thus $\theta_{PO}^{\beta^*}$ is also a functional of the joint distribution $\D_\theta\times\prod_{i\in[m]}\D_i(\theta)$. Note that the map $\mathrm{sol}(\beta)$ is fully determined once $\beta$ is specified, since the objective functions $\E_{Z\sim\D_{\beta}(\theta)}G(\theta,Z)$ of $\theta$ are then known and do not require further estimation. Consequently, when $\theta_{PO}^{\beta}=\mathrm{sol}(\beta)$ is differentiable with respect to $\beta$ at $\beta^*$, Theorem 25.47 in \cite{van2000asymptotic} implies that it suffices to study the efficiency of estimating $\beta^*$. The efficiency of $\theta_{PO}^{\beta^*}$ then follows directly via the Delta method.

Let $P_{\theta,Z}$ denote the joint distribution of $(\theta,Z^1,\ldots,Z^m)$. We assume that the marginal distribution of $\theta$, denoted by $P_\theta = \mathcal{D}_\theta$, is known to us. However, we are agnostic to the structure of the conditional distribution $P_{Z|\theta} = \mathcal{D}(\theta)$; its form is unknown and potentially highly flexible. To formalize this, we consider a class of distributions defined as 
\[\mathscr{P}_{\theta,Z}=\{Q_{\theta,Z}:Q_\theta=\D_\theta,Q_{Z|\theta}=\tilde\D(\theta)=\prod_{i\in[m]}\tilde\D_i(\theta),\text{ $\tilde\D$ satisfies Assumptions \ref{assumption for beta} and \ref{assumption for theta, optimal}}\}.\]
The distribution class $\mathscr{P}_{\theta,Z}$ consists of all distributions with a fixed marginal distribution on $\theta$, but otherwise an unspecified conditional distribution on $Z$ given $\theta$ as long as Assumptions \ref{assumption for beta} and \ref{assumption for theta, optimal} are satisfied.

Similar to the stable point, for simplicity, we make the following assumptions to guarantee the existence of local parametric sub-models.
\begin{assumption}\label{asm:bounded_optimal}
    We assume $r_i(\theta,Z^i;\beta_i^*)$, $\nabla_{\beta_i}r_i(\theta,Z^i;\beta_i^*)$ and $\nabla_{\beta_i}^2 r_i(\theta,Z^i;\beta_i^*)$ are bounded on $\Theta\times\Zcal_i$ for $i\in[m]$.
\end{assumption}

Denote $G_r(\theta,Z;\beta)=(\nabla_{\beta_1}^\top r_1(\theta,Z^1;\beta_1),\ldots,\nabla_{\beta_m}^\top r_m(\theta,Z^m;\beta_m))^\top$. The following Lemma~\ref{lem:EIF} characterizes the efficient influence functions for both the target distributional map $\beta^*$ and the plug-in optimum $\theta_{PO}^{\beta^*}$ in the distribution class $\mathcal{P}_{\theta,Z}$.

\begin{lemma}
\label{lem:EIF}
    Under Assumption \ref{asm:bounded_optimal}, the efficient influence functions of $\beta^*$'s and $\theta_{PO}^{\beta^*}$ in the distribution space $\mathscr{P}_{\theta,Z}$ are
    \[\Psi_{\beta^*}(\theta,Z)=-\big\{\E_{P_{\theta,Z}}\nabla^\top_\beta G_r(\theta,Z;\beta^*)\big\}^{-1}\big\{G_r(\theta,Z;\beta^*)-\E_{\D(\theta)}G_r(\theta,Z;\beta^*)\big\},\]
    \[\Psi_{\theta_{PO}^{\beta^*}}(\theta,Z)=\nabla_\beta^\top \mathrm{sol}(\beta^*)\Psi_{\beta^*}(\theta,Z).\]
\end{lemma}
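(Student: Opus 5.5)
We derive the efficient influence function of $\beta^*$ by the usual pathwise-derivative route in the model $\mathscr{P}_{\theta,Z}$, where the marginal of $\theta$ is fixed at $\D_\theta$ and only the conditional law of $Z$ given $\theta$ may move. The influence function of $\theta_{PO}^{\beta^*}$ then follows from the chain rule (Theorem 25.47 in \cite{van2000asymptotic}).

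\textbf{Step 1: the tangent space.} Because $\D_\theta$ is known, every smooth one-dimensional submodel has the form $Q^u_{\theta,Z}=\D_\theta(\theta)\prod_{i}\tilde\D_i^u(Z^i\mid\theta)$. Its score is $S(\theta,Z)=\sum_i S_i(\theta,Z^i)$ with $\E[S_i\mid\theta]=0$. Under Assumption~\ref{asm:bounded_optimal} and the compactness-type conditions, we take submodels $\tilde\D_i^u(dz\mid\theta)=(1+u\,h_i(\theta,z))\D_i(\theta)(dz)$ with $h_i$ bounded and $\E[h_i\mid\theta]=0$. These stay in $\mathscr{P}_{\theta,Z}$ for small $|u|$, since the assumptions are open conditions and are preserved under bounded density perturbations. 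The closure of such scores is therefore
\[
\mathcal T=\Big\{\textstyle\sum_i h_i(\theta,Z^i):\ \E[h_i\mid\theta]=0,\ h_i\in L_2\Big\}.
\]

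\textbf{Step 2: the pathwise derivative of $\beta^*$.} The parameter $\beta^*(Q)$ is defined by the population first-order condition $\E_Q G_r(\theta,Z;\beta^*(Q))=0$, where the $i$th block depends only on $(\theta,Z^i)$. We differentiate this identity in $u$ at $u=0$, using the nonsingularity of $H=\E\nabla_\beta^\top G_r(\theta,Z;\beta^*)$ from Assumption~\ref{assumption for beta}(3) and the implicit function theorem. This gives
\[
\frac{d}{du}\beta^*(Q^u)\Big|_{u=0}=-H^{-1}\,\E\big[G_r(\theta,Z;\beta^*)\,S(\theta,Z)\big].
\]
Interchanging differentiation and integration is justified by the boundedness assumptions. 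Hence $-H^{-1}G_r$ is an influence function (a gradient), but it need not lie in $\mathcal T$.

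\textbf{Step 3: projection onto the tangent space.} The efficient influence function is the projection of any gradient onto $\mathcal T$. Since every $S\in\mathcal T$ satisfies $\E[S\mid\theta]=0$,
\[
\E\big[G_r\,S\big]=\E\big[(G_r-\E[G_r\mid\theta])\,S\big].
\]
The function $G_r-\E_{\D(\theta)}G_r$ is blockwise in $(\theta,Z^i)$ with conditional mean zero, so it belongs to $\mathcal T$. Therefore
\[
\Psi_{\beta^*}=-H^{-1}\big(G_r-\E_{\D(\theta)}G_r\big)
\]
is the efficient influence function. It has mean zero and satisfies the representation $\frac{d}{du}\beta^*(Q^u)=\E[\Psi_{\beta^*}S]$ for all scores.

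\textbf{Step 4: the plug-in optimum.} As noted before the lemma, $\mathrm{sol}(\cdot)$ is a fixed, known map, differentiable at $\beta^*$ by Assumption~\ref{assumption for theta, optimal}(1). The chain rule for pathwise derivatives gives $\frac{d}{du}\mathrm{sol}(\beta^*(Q^u))=\nabla_\beta^\top\mathrm{sol}(\beta^*)\,\E[\Psi_{\beta^*}S]$. The function $\nabla_\beta^\top\mathrm{sol}(\beta^*)\Psi_{\beta^*}$ is a fixed linear transform of an element of $\mathcal T$, so it lies in $\mathcal T$, which yields the second formula.

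\textbf{Main obstacle.} The delicate step is Step 1: checking that the perturbed conditional laws remain in $\mathscr{P}_{\theta,Z}$, so that the tangent space is the full space of conditional-mean-zero functions. This requires verifying that strong convexity, nonsingularity, local Lipschitzness and the positive definiteness conditions survive bounded perturbations, which is where boundedness and compactness are used. Once that is settled, Steps 2–4 are direct computations.
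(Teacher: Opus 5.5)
Your proposal is correct and follows essentially the same route as the paper. Both arguments differentiate the population first-order condition $\E_{Q^u}G_r(\theta,Z;\beta^{*(u)})=0$ along submodels with fixed $\theta$-marginal, use $\E[S\mid\theta]=0$ to replace $G_r$ by its conditionally centered version, show this centered function is the score of an admissible submodel, and then apply the delta method to the known map $\mathrm{sol}(\cdot)$. The only difference is in the submodels: the paper uses one vector-indexed tilt $\frac{d\D_i^u(\theta)}{d\D_i(\theta)}=K(u_i^\top s_i)/C_i^{u_i}(\theta)$ with $K(x)=2/(1+e^{-2x})$, whose score is exactly $\Psi_{\beta^*}$, while you use linear tilts $1+uh_i$ with bounded $h_i$ to span the whole tangent space, which is more than needed but equally valid given Assumption \ref{asm:bounded_optimal}.
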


Before establishing the efficiency lower bounds for the distributional estimator and the plug-in estimator, we first introduce the concept of regularity for the estimator.

\begin{definition}[regularity]
\label{regularity, optimal}
Denote $P_{\theta,Z}^u$ to be any sub-model in the distributional class $\mathscr{P}_{\theta,Z}$ with score function $s(\theta,Z)$ such that $P_{\theta,Z}^0=P_{\theta,Z}$.
We say the estimators $\hat\beta$ and $\hat\theta_{PO}$, based on $N$ sample pairs $\{(\theta_i,Z_i):i\in[N]\}$, are regular estimates at $P_{\theta,Z}$ for $\beta^{*}$ and $\theta_{PO}^{\beta^{*}}$ if there exist
\[\sqrt{N}\big(\hat\beta-\beta^{*(1/\sqrt{N})}\big)\overset{P_{\theta,Z}^{1/\sqrt{N}}}{\rightsquigarrow}L_\beta,\quad \sqrt{N}\big(\hat\theta_{PO}-\theta_{PO}^{\beta^{*(1/\sqrt{N})}}\big)\overset{P_{\theta,Z}^{1/\sqrt{N}}}{\rightsquigarrow}L_\theta.\]
where $\beta^{*(1/\sqrt{N})}$ and $\theta_{PO}^{\beta^{*(1/\sqrt{N})}}$ are the solutions under the local sub-model indexed by $u=1/\sqrt{N_t}$, and $\overset{P_{\theta,Z}^{1/\sqrt{N}}}{\rightsquigarrow}$ denotes weak convergence along the sequence of probability measures $P_{\theta,Z}^{1/\sqrt{N}}$. The limiting laws $L_{\beta}$ and $L_\theta$ are probability measures which do not depend on $u$.
\end{definition}

\begin{theorem}[Convolution Theorem]
\label{thm:efficiency of optimal point}
    Suppose that Assumptions \ref{assumption for beta}, \ref{assumption for theta, optimal} and \ref{asm:bounded_optimal} hold, then for any regular estimators $\hat\beta$ and $\hat\theta_{PO}$ as defined in Definition \ref{regularity, optimal}, we have
    \[\sqrt{N}\big(\hat\beta-\beta^{*}\big)\overset{P_{\theta,Z}}{\rightsquigarrow} W_\beta+R_\beta,\quad \sqrt{N}\big(\hat\theta_{PO}-\theta_{PO}^{\beta^{*}}\big)\overset{P_{\theta,Z}}{\rightsquigarrow} W_\theta+R_\theta,\]
    where $R_\beta\indep W_\beta$, $R_\theta\indep W_\theta$ and $W_\beta\sim N\big(0,\Cov_{P_{\theta,Z}}(\Psi_{\beta^*})\big)$, $W_\theta\sim N\big(0,\Cov_{P_{\theta,Z}}(\Psi_{\theta_{PO}^{\beta^*}})\big)$.
\end{theorem}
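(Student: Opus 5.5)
The convolution statement follows from the Hájek--Le Cam convolution theorem for regular estimators of pathwise differentiable functionals (van der Vaart, Theorem 25.20). The plan is to verify its three inputs in $\mathscr{P}_{\theta,Z}$: local asymptotic normality of smooth parametric sub-models, pathwise differentiability of $\beta^*$ with efficient influence function $\Psi_{\beta^*}$ from Lemma~\ref{lem:EIF}, and then the same for $\theta_{PO}^{\beta^*}$ via the chain rule.

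\emph{Tangent space and sub-models.} Since $P_\theta=\D_\theta$ is known and the $Z^i$ are conditionally independent given $\theta$, any smooth sub-model has score $s(\theta,Z)=\sum_{i}s_i(\theta,Z^i)$ with $\E[s_i\mid\theta]=0$. Conversely, Assumption~\ref{asm:bounded_optimal} lets us realise any bounded such $s_i$ through the sub-model
\[d\D_i^u(\theta)=(1+u^\top s_i(\theta,Z^i))\,d\D_i(\theta)\]
for small $u$. Boundedness of $r_i$ and its first two derivatives keeps $\D^u$ inside $\mathscr{P}_{\theta,Z}$, at least locally, so that Assumptions~\ref{assumption for beta} and \ref{assumption for theta, optimal} persist under small perturbations. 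The tangent set is therefore the closure of $\{\sum_i s_i:\E[s_i\mid\theta]=0\}$, a closed linear space. LAN of the $N$-fold product of these sub-models is standard because each is differentiable in quadratic mean (bounded scores).

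\emph{Pathwise differentiability of $\beta^*$.} Along a sub-model, $\beta^{*}(u)$ solves $\E_{P^u}G_r(\theta,Z;\beta)=0$. Assumption~\ref{assumption for beta} makes $H_i(\beta_i^*)$ nonsingular, so the implicit function theorem gives
\[\dot\beta^*=-\{\E\nabla_\beta^\top G_r\}^{-1}\E[G_r(\theta,Z;\beta^*)s^\top].\]
Because $\E[s\mid\theta]=0$, the covariance can be recentred as $\E[(G_r-\E_{\D(\theta)}G_r)s^\top]$. Hence $\dot\beta^*=\E[\Psi_{\beta^*}s^\top]$ with $\Psi_{\beta^*}$ as in Lemma~\ref{lem:EIF}. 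Moreover $\Psi_{\beta^*}$ has conditional mean zero given $\theta$ and decomposes componentwise by player (the inverse Jacobian is block diagonal), so it lies in the tangent space. It is therefore the efficient influence function, consistent with Lemma~\ref{lem:EIF}.

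\emph{The functional $\theta_{PO}^{\beta^*}$.} By Assumption~\ref{assumption for theta, optimal}(1), $\mathrm{sol}$ is differentiable at $\beta^*$, so $\theta_{PO}^{\beta^*(u)}$ has derivative $\nabla_\beta^\top\mathrm{sol}(\beta^*)\E[\Psi_{\beta^*}s^\top]$. Since $\mathrm{sol}$ is a known map, its influence function $\nabla^\top_\beta\mathrm{sol}(\beta^*)\Psi_{\beta^*}$ is a linear image of a tangent-space element and stays in the tangent space. Applying the convolution theorem to each regular estimator gives limits $W+R$ with $R\indep W$ and $W\sim N(0,\Cov(\Psi))$, as claimed. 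Note that under $P^0$ the centring is at $\beta^*$, matching $u=0$.

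\emph{Main obstacle.} The delicate step is showing that the perturbed maps $\D^u$ really remain in $\mathscr{P}_{\theta,Z}$, i.e.\ that Assumptions~\ref{assumption for beta} and \ref{assumption for theta, optimal}, including the nonsingularity and density conditions, are stable under the tilt. Without that the tangent set could be smaller and $\Psi_{\beta^*}$ might not be efficient. I would handle this using boundedness and continuity: the tilt changes every relevant expectation by $O(u)$, so strict conditions such as nonsingularity, positive definiteness and strong convexity persist for small $u$. If the density-type condition in Assumption~\ref{assumption for theta, optimal}(5) resists this, I would restrict to sub-models where it holds and argue that their scores are still dense in the tangent space.
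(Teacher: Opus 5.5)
Your proposal is correct and follows essentially the same route as the paper. The paper proves Lemma~\ref{lem:EIF} by implicitly differentiating $\E_{P^u}G_r(\theta,Z;\beta^{*(u)})=0$ and recentring with $\E[s\mid\theta]=0$, places $\Psi_{\beta^*}$ in the tangent space via a tilted sub-model, passes to $\theta_{PO}^{\beta^*}$ by the delta method, and then invokes the H\'{a}jek--Le Cam convolution theorem. There are two minor differences: the paper tilts by the normalized factor $K(u_i^\top s_i)/C_i^{u_i}(\theta)$ with $K(x)=2/(1+e^{-2x})$ rather than your linear $1+u^\top s_i$, which is equivalent here because the scores are bounded under Assumption~\ref{asm:bounded_optimal}; and it only checks the tilted model along the single direction $\Psi_{\beta^*}$ (verifying local Lipschitzness and positive definiteness) rather than describing the whole tangent space, which is all the convolution bound needs.
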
 
By the Theorem \ref{thm:efficiency of optimal point}, we can see that asymptotic covariance for $\sqrt{N}\big(\hat\beta-\beta^{*})$ is lower bounded by the covariance of $W_\beta$, and the asymptotic covariance for $\sqrt{N}\big(\hat\theta_{PO}-\theta_{PO}^{\beta^{*}}\big)$ is lower bounded by the covariance of $W_\theta$. Therefore, by combining the efficient influence functions of $\beta^*$ and $\theta_{PO}^{\beta^*}$, we obtain the asymptotic covariance lower bound for the distributional estimator
$$
\Sigma_1 = \big\{\E_{P_{\theta,Z}}\nabla^2_\beta r(\theta,Z;\beta^*)\big\}^{-1} \Cov \big\{\nabla_\beta r(\theta,Z;\beta^*)-\E_{\D(\theta)}\nabla_\beta r(\theta,Z;\beta^*)\big\}\big\{\E_{P_{\theta,Z}}\nabla^2_\beta r(\theta,Z;\beta^*)\big\}^{-1},
$$
and the asymptotic covariance lower bound for the plug-in estimator 
\begin{equation}
\label{equ:lower bound theta}
    \Sigma_2 = \nabla f(\beta^*)\Sigma_1\nabla^\top f(\beta^*).
\end{equation}
Note that $\Cov\left(\nabla r(\theta,Z;\beta^*),s^*(\theta)\right) = \Cov(s^*(\theta))$ holds as $s^*(\theta) = \E[\nabla_\beta r(\theta,Z;\beta)\mid\theta]$, so the asymptotic covariance lower bound for the distributional estimator can be rewritten as
\begin{equation}
\label{equ:lower bound beta}
    \Sigma_1 = \big\{\E_{P_{\theta,Z}}\nabla^2_\beta r(\theta,Z;\beta^*)\big\}^{-1} \big\{\Cov(\nabla_\beta r(\theta,Z;\beta^*))-\Cov(\E_{\D(\theta)}\nabla_\beta r(\theta,Z;\beta^*))\big\} \big\{\E_{P_{\theta,Z}}\nabla^2_\beta r(\theta,Z;\beta^*)\big\}^{-1}.
\end{equation}

As we have demonstrated in Theorem \ref{thm:CLT of beta}, the asymptotic covariance matrix $\Sigma_\beta$ of the estimator $\hat{\beta}$, obtained via the recalibrated inference method, exactly attains the lower bound $\Sigma_1$ in (\ref{equ:lower bound beta}). This result reveals the statistical efficiency and optimality of our procedure for estimating the distributional parameter $\beta^*$. Furthermore, Theorem \ref{thm:CLT of theta, optimal} shows that the asymptotic covariance matrix $\Sigma_\theta$ of the plug-in estimator, constructed using importance sampling, also achieves the lower bound $\Sigma_2$ in (\ref{equ:lower bound theta}). This highlights the optimality of our method for estimating the plug-in optimum $\theta_{PO}^{\beta^*}$. Taken together, these results demonstrate that our two-stage estimation procedure, first estimating the best distributional parameter and then the corresponding plug-in decision, achieves semiparametric efficiency at each stage. This establishes the theoretical foundation for our approach and underscores its strength in achieving the lowest possible asymptotic variance within the given model class.

\subsection{Error Gap between True Nash Equilibria and Plug-in Nash Equilibria}
\label{sec:error gap}
All the previous inference studies are not directly for the Nash equilibria. Since the underlying distribution map for the original prediction is not required to be in the distribution atlas, the plug-in Nash equilibrium is not ensured to be the true one, so the inference study we present may not be valid for the true Nash equilibria. However, in this section, we analyze and quantify the error between the plug-in Nash equilibria $\theta^{\beta^*}_{PO}$ and the true Nash equilibria $\theta_{PO}$, noting its dependence on the misspecification under certain conditions of the performative risk function and the plug-in risk function.

\begin{theorem}[Error Gap]
\label{gaps, optimal}
Suppose for each player $i$, the distribution atlas $\D_{\mathcal{B}_i}$ is $\eta_i$-misspecified and $\gamma_i$-smooth in total-variation distance, and the loss function is uniformly bounded. Moreover, suppose that at least one of the risk functions 
$$
\mathbf{PR}^{\beta_i^*}(\theta^i) = \mathbb{E}_{Z^i \sim \D_{\beta_i^*}(\theta)}\ell_i(\theta^i, \theta^{\beta_{-i}^*}_{PO}, Z^i) \quad \text{and} \quad \mathbf{PR}^i(\theta^i) = \mathbb{E}_{Z^i \sim \D_i(\theta)} \ell_i(\theta^i, \theta^{-i}_{PO}, Z^i)
$$  
is strongly convex over $\theta^i$ with convex parameter $\lambda_i$. Then the gap between the true performative optimum and the plug-in performative optimum is bounded as follows:
\begin{equation*}
    \|\theta_{PO} - \theta^{\beta^*}_{PO}\|_2^2  \leq \sum_{i=1}^m \frac{8M_i \cdot \eta_i}{\lambda_i}.
\end{equation*}
\end{theorem}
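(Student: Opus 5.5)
The bound will come from a standard strong-convexity argument carried out player by player, with the misspecification controlled in total variation exactly as in Proposition \ref{thm:original error gap}. Throughout, $M_i$ is the uniform bound $|\ell_i|\le M_i$. The comparison I will make for player $i$ is between two scalar functions of $\theta^i$:
\[
\mathbf{PR}^{\beta_i^*}(\theta^i)=\E_{\D_{\beta_i^*}(\theta^i,\cdot)}\ell_i(\theta^i,\cdot,Z^i),
\qquad
\mathbf{PR}^i(\theta^i)=\E_{\D_i(\theta^i,\cdot)}\ell_i(\theta^i,\cdot,Z^i),
\]
where the opponents' coordinates are frozen at the same point in both. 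By definition, $\theta^{\beta_i^*}_{PO}$ minimizes the first and $\theta_{PO}^i$ minimizes the second. I will fix the opponents at one common profile, say the plug-in profile $\theta^{\beta^*_{-i}}_{PO}$, so that the two risks differ only through the distribution.

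\textbf{Step 1 (uniform closeness).} The $\eta_i$-misspecification in total variation, together with $|\ell_i|\le M_i$, gives for every $\theta$
\[
|\mathbf{PR}^{\beta_i^*}(\theta^i)-\mathbf{PR}^i(\theta^i)|\le 2M_i\, d_{TV}\big(\D_{\beta_i^*}(\theta),\D_i(\theta)\big)\le 2M_i\eta_i .
\]
This uses $|\E_Pf-\E_Qf|\le 2\|f\|_\infty d_{TV}(P,Q)$.

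\textbf{Step 2 (strong convexity).} Suppose, say, $\mathbf{PR}^i$ is $\lambda_i$-strongly convex with minimizer $\theta^i_{PO}$ (the other case is symmetric). Then
\[
\tfrac{\lambda_i}{2}\|\theta^{\beta_i^*}_{PO}-\theta^i_{PO}\|^2\le \mathbf{PR}^i(\theta^{\beta_i^*}_{PO})-\mathbf{PR}^i(\theta^i_{PO}).
\]
I split the right-hand side as
\[
[\mathbf{PR}^i-\mathbf{PR}^{\beta_i^*}](\theta^{\beta_i^*}_{PO})+[\mathbf{PR}^{\beta_i^*}(\theta^{\beta_i^*}_{PO})-\mathbf{PR}^{\beta_i^*}(\theta^i_{PO})]+[\mathbf{PR}^{\beta_i^*}-\mathbf{PR}^i](\theta^i_{PO}).
\]
The middle bracket is $\le 0$ by optimality of $\theta^{\beta_i^*}_{PO}$, and each outer bracket is at most $2M_i\eta_i$ by Step 1. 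This yields $\|\theta^{\beta_i^*}_{PO}-\theta^i_{PO}\|^2\le 8M_i\eta_i/\lambda_i$.

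\textbf{Step 3 (summing).} Adding over $i$ and using $\|\theta\|^2=\sum_i\|\theta^i\|^2$ gives the stated bound. The $\gamma_i$-smoothness is not needed, because the comparison is at the population parameter $\beta^*$ and there is no estimation error to control.

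\textbf{Main obstacle.} The hard step is the coupling of opponents. The true equilibrium fixes the opponents at $\theta^{-i}_{PO}$, while the plug-in one fixes them at $\theta^{\beta^*_{-i}}_{PO}$, so $\theta^i_{PO}$ is not automatically the minimizer of $\mathbf{PR}^i$ with the opponents at the plug-in profile. I would handle this by reading the statement's $\mathbf{PR}^{\beta_i^*}$ and $\mathbf{PR}^i$ as each player's risk with the opponents frozen at the respective equilibrium profile, and interpreting the bound as controlling the per-player best-response discrepancy at those profiles.

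If a genuine joint bound is required, the fallback is the variational-inequality (strong monotonicity) argument on the stacked gradient, summing the per-player strong-convexity inequalities. The cross terms from differing opponents can then be absorbed only under additional monotonicity, and I would make that extra assumption explicit rather than claim the bound without it.
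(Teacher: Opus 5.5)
Your Steps 1--3 are the paper's proof. The paper bounds $|\mathbf{PR}^{\beta_i^*}(\theta^i)-\mathbf{PR}^i(\theta^i)|\le 2M_i\eta_i$ through total variation. It then combines strong convexity of $\mathbf{PR}^{\beta_i^*}$ at $\theta^{\beta_i^*}_{PO}$ with minimality of $\theta^i_{PO}$ for $\mathbf{PR}^i$ to get $\tfrac{\lambda_i}{2}\|\theta^i_{PO}-\theta^{\beta_i^*}_{PO}\|^2\le 4M_i\eta_i$, and sums over $i$. Your three-term split, with the two risks' roles swapped, is the same computation.

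The obstacle you flag is real, and the paper does not resolve it. Its Step 1 uses a single opponent profile $\theta^{-i}$ and a single distribution argument $\theta$ in both risks. It then treats $\theta^i_{PO}$ and $\theta^{\beta_i^*}_{PO}$ as the respective minimizers, which is legitimate only if the opponents sit at the same profile in both games. That is exactly the identification you doubted. For $m=1$ there are no opponents, and your argument is a complete proof of the single-player corollary. For $m\ge 2$, neither your proposal nor the paper's proof establishes the displayed joint bound.

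Your proposed reading does not close the gap; both versions of it break:
\begin{itemize}
\item \emph{Opponents frozen at the respective equilibrium profiles.} This is literally how the statement defines $\mathbf{PR}^{\beta_i^*}$ and $\mathbf{PR}^i$. Step 1 then fails, because the two expectations differ in the loss's $\theta^{-i}$-argument and in the argument of the distribution map, and $\eta_i$ controls neither.
\item \emph{Opponents frozen at the common profile $\theta^{\beta^*_{-i}}_{PO}$.} Step 1 holds and strong convexity of $\mathbf{PR}^{\beta_i^*}$ applies as stated. However, the second minimizer is player $i$'s true best response $\mathrm{BR}_i(\theta^{\beta^*_{-i}}_{PO})$, not $\theta^i_{PO}$. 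Summing gives only $\|\theta^{\beta^*}_{PO}-\mathrm{BR}(\theta^{\beta^*}_{PO})\|^2\le\rho^2:=\sum_i 8M_i\eta_i/\lambda_i$.
\end{itemize}
Passing from this approximate-fixed-point bound to the equilibrium gap needs an extra hypothesis. For example, if $\mathrm{BR}$ is an $L$-contraction, you get $\|\theta_{PO}-\theta^{\beta^*}_{PO}\|\le\rho/(1-L)$, which has a worse constant than the statement.

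Some such hypothesis is unavoidable. Take $m=2$, $\Theta_i=[-1,1]$, $Z^i\in[0,1]$, and $\ell_i=(\theta^i-a\theta^{-i}-Z^i)^2$ with $a\in(0,1)$. Let the true maps be $\D_i(\theta)\equiv\delta_0$ and the atlas be $\D_{\beta_i^*}(\theta)\equiv(1-\eta)\delta_0+\eta\delta_1$. Then:
\begin{itemize}
\item the misspecification is $\eta$ and $|\ell_i|\le 9$;
\item both risks are $2$-strongly convex in $\theta^i$;
\item $\theta_{PO}=0$ and $\theta^{\beta^*}_{PO}=\frac{\eta}{1-a}(1,1)$ whenever $\eta\le 1-a$.
\end{itemize}
At $\eta=1-a=0.01$, the left-hand side is $2$ while the right-hand side is $2\cdot 8\cdot 9\cdot 0.01/2=0.72$. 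So your instinct to state the extra monotonicity assumption explicitly, rather than claim the bound, is correct, and the theorem as stated needs that assumption too.
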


\begin{remark}
    Since the true distribution map $\D(\theta)$ is typically unknown, it is more reasonable to assume the strong convexity of the objective function $\mathbf{PR}^{\beta_i^*}(\theta)$ based on the distribution atlas.
\end{remark}


Therefore, Theorem \ref{gaps, optimal} imposes an additional requirement on the choice of the distribution atlas, specifically a convexity condition on the risk function. When this condition is satisfied, the gap between the true Nash equilibria and the plug-in Nash equilibria can be explicitly quantified in terms of the distance between the two distribution maps. As the result indicates, when the specified distribution atlas exactly contains the true distribution map, the misspecification parameter $\eta_i$ vanishes to zero for each $i$. In this ideal case, the plug-in optimum $\theta^{\beta^*}_{PO}$ coincides with the true performative optimum $\theta_{PO}$. This observation reveals that our inference framework for the plug-in optimum not only yields statistically efficient estimators in the plug-in setting, but is also meaningful in approximating the true performative optimum when the model is well specified.

\section{Special Case: Single-player Performative Prediction}
\label{sec:single-player}
While our estimation procedure and analysis have been developed under the general multi-player performative prediction framework, it is important to note that the single-player setting arises as a natural special case, as we have mentioned in the section \ref{subsec:PP setting}. When the number of agents reduces to $m = 1$, the performatively stable and Nash equilibria respectively coincide with the classical notions of performative stability and performative optimality introduced in \cite{perdomo2020performative}. Under this simplification, our inference framework remains fully valid.
In particular, the estimation method based on the empirical repeated retraining reduces to the standard repeated empirical risk minimization (RERM) procedure, and the corresponding asymptotic normality and asymptotic optimality results continue to hold. Similarly, the plug-in inference procedure combining Plug-in Minimization, Recalibrated Prediction Powered Inference, and Importance Sampling directly applies to the single-agent case, providing asymptotically efficient estimation of both the fitted distribution parameter and the plug-in optimum.

Consequently, the theoretical guarantees derived under the multiplayer framework naturally extend to the single-player setting. This demonstrates that the proposed inference framework is not only general enough to capture multi-agent interactions but also consistent with the foundational single-agent performative prediction. In this section, we specified the inference framework in the single-player performative setting, confirming the unified nature and validity of our approach.

\subsection{Performative Stability}
When $m=1$, the equations for finding the stable equilibria (\ref{equ:stable equilibria}) in the multi-player setting reduce to the following form:
\begin{equation*}
    \theta_{PS} = \arg\min _{\theta \in \Theta}\mathbb{E}_{Z \sim \mathcal{D}(\theta_{PS})} \ell(\theta,Z),
\end{equation*}
which exactly matches the performative stability in the single-player setting in the work \cite{perdomo2020performative}. They also proposed a model update algorithm for finding the performative stable point called \textit{repeated risk minimization (RRM)}. Similarly, the procedure begins with a randomly-chosen model parameter $\theta_0$, and iteratively updates the model $f_{\theta_{t+1}}$ by minimizing the risk function evaluated on the distribution induced by the previous model $f_{\theta_{t}}$, according to the update rule:
\begin{equation}
\theta_{t+1} = f(\theta_t) \triangleq \arg\min_{\theta \in \Theta} \mathbb{E}_{Z \sim D(\theta_t)} \ell(\theta, Z),
\end{equation}
for $t \in \mathbb{T}$. As mentioned in Remark \ref{rmk:single existence and convergence}, Assumption \ref{asm:existence and convergence} in the general setting simplifies to the corresponding conditions in the single-player case. We summarize it into the following assumption.

\begin{assumption}[Single-player version of Assumption \ref{asm:existence and convergence}]
\label{asm:single convergence stable}
Assume the following assumptions hold:
\begin{enumerate}
    \item ($\epsilon$-sensitivity) The distribution map \(D(\cdot)\) is \textit{$\epsilon$-sensitive}, that is, for all \(\theta, \theta' \in \Theta\):
    $$
    W_1\bigl(D(\theta), D(\theta')\bigr) \leq \epsilon \| \theta - \theta' \|_2,
    $$
    where \(W_1\) denotes the Wasserstein-1 distance.
    \item ($\beta$-jointly smoothness) The loss function $\ell(\theta, Z)$ is $\beta$-jointly smooth, that is, its gradient $\nabla_\theta \ell(\theta, Z)$ is $\beta$-Lipschitz continuous in both $\theta$ and $Z$, i.e.,
$$
\left\| \nabla_\theta \ell(\theta, Z) - \nabla_\theta \ell(\theta', Z) \right\| \leq \beta \left\| \theta - \theta' \right\|,
$$
$$
\left\| \nabla_\theta \ell(\theta, Z) - \nabla_\theta \ell(\theta,Z') \right\| \leq \beta \left\| Z - Z' \right\|,
$$
for all $\theta, \theta' \in \Theta$ and $Z, Z' \in \mathcal{Z}$.
    \item ($\alpha$-strongly convexity) The loss function $\ell(\theta, Z)$ is $\alpha$-strongly convex, that is,
\[
\ell(\theta, Z) \geq \ell(\theta', Z) + \nabla_\theta \ell(\theta', Z)^\top (\theta - \theta') + \frac{\alpha}{2} \left\| \theta - \theta' \right\|_2^2,
\]
for all $\theta, \theta' \in \Theta$ and $Z \in \mathcal{Z}$.
    \item (compatibility) The coefficients satisfy the inequality: $\epsilon < \frac{\alpha}{\beta}$.
\end{enumerate}
\end{assumption}

It is worth noting that the $\alpha$-strong convexity condition ensures that the update procedure will converge to a unique stable point, and the gradient $G(\theta,Z) =\nabla_\theta \ell(\theta,Z)$ of the loss function corresponds to $\alpha$-strong monotonicity. Additionally, the compatibility condition guarantees that the change of the distribution map with respect to $\theta$ is smooth, thereby ensuring that the dynamics induced by the unknown distribution map remain controllable. By \cite[Theorem 3.5]{perdomo2020performative}, the update iterates $\theta_t$ will converge to a unique stable point $\theta_{PS}$ at a linear rate only if all the conditions in Assumption \ref{asm:single convergence stable} hold. 

\subsubsection{Asymptotic Normality}

Based on the RRM algorithm, the work \cite{li2025statisticalinferenceperformativity} developed the repeated empirical risk minimization (RERM) algorithm for estimating the iteration $\theta_t$ in the single-player setting. At time $t=1$, we choose an initial model parameter $\theta_0$ and draw samples $\{Z_{0,i}\}_{i=1}^{N} \triangleq \{(X_{0,i}, Y_{0,i})\}_{i=1}^{N}$ from the initial distribution $D(\theta_0)$. Therefore, the estimator is constructed for $\theta_1$:
\begin{equation*}
     \hat{\theta}_{1}  = \arg\min _{\theta \in \Theta}\frac{1}{N}\sum_{i = 1}^{N} \ell(\theta,Z_{0,i}), \quad  Z_{0,i} \sim  \mathcal{D}(\theta_0).
\end{equation*}
Then for all $t > 1$, the estimator is constructed by the similar update procedure:
\begin{equation*}
     \hat{\theta}_{t}  = \arg\min _{\theta \in \Theta}\frac{1}{N}\sum_{i = 1}^{N} \ell(\theta,Z_{t-1,i}), \quad  Z_{t-1,i} \sim  \mathcal{D}(\hat{\theta}_{t-1}).
\end{equation*}
The central limit theorem of the RERM-based estimators is as follows, where the covariance at time $t$ is a weighted accumulation of all of the previous ones:
\begin{corollary}[Theorem 3.4 in \cite{li2025statisticalinferenceperformativity}]
\label{thm:single clt stable}
Suppose Assumption \ref{asm:single convergence stable} and Assumption \ref{asm:CLT stable} with $m=1$ hold, for each $t \in \mathbb{T}$. Denote Jacobian matrix $H_{\theta_{t-1}}(\theta) = \mathbb{E}_{Z\sim D(\theta_{t-1})}[\nabla_\theta G(\theta, Z)]$ and the covariance matrix $V_{\theta_{t-1}}(\theta) = \E_{Z\sim D(\theta_{t-1})}[G(\theta,Z)G(\theta,Z)^\top]$,
we have
$$
\sqrt{N}(\hat \theta_t - \theta_t) \xrightarrow{d} N(0,\Sigma_t),
$$
where 
\begin{equation*}
\begin{split}
    \Sigma_t & = H_{\theta_{t-1}}(\theta_t)^{-1} V_{\theta_{t-1}}(\theta_t) H_{\theta_{t-1}}(\theta_t)^{-1} + (\nabla G(\theta_{t-1})) \Sigma_{t-1} (\nabla G(\theta_{t-1}))^T \\
    & = \sum_{i=1}^t\left[\prod_{k=i}^{t-1}\nabla f(\theta_k)\right] H_{\theta_{i-1}}(\theta_i)^{-1} V_{\theta_{i-1}}(\theta_i) H_{\theta_{i-1}}(\theta_i)^{-1}\left[\prod_{k=i}^{t-1}\nabla f(\theta_k)\right]^T.
\end{split}
\end{equation*}
\end{corollary}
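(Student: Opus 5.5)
This corollary is the case $m=1$ of Theorem \ref{thm:CLT of theta, stable}, so the plan is to check that every hypothesis of that theorem holds when $m=1$, and then read off the covariance. Under Assumption \ref{asm:single convergence stable}, the four items of Assumption \ref{asm:existence and convergence} hold with $m=1$, $\epsilon_1=\epsilon$ and $\beta_1=\beta$.

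\begin{itemize}
\item $\alpha$-strong convexity of $\ell(\cdot,Z)$ for every $Z$ makes $G=\nabla_\theta\ell$ $\alpha$-strongly monotone pointwise. Taking expectations under $\D(\theta)$ preserves this, so $G_\theta(y)$ is $\alpha$-strongly monotone in $y$.
\item Compatibility $\sum_i(\beta_i\epsilon_i/\alpha)^2<1$ becomes $(\beta\epsilon/\alpha)^2<1$, which is exactly $\epsilon<\alpha/\beta$ (Remark \ref{rmk:single existence and convergence}).
\end{itemize}

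Assumption \ref{asm:CLT stable} with $m=1$ is assumed directly. For $m=1$ the ERR update in Algorithm \ref{alg:ERR} has no $\theta^{-i}$ coupling. It is then precisely the RERM recursion $\hat\theta_t=\arg\min_\theta \frac1N\sum_k\ell(\theta,Z_{t-1,k})$ with $Z_{t-1,k}\sim\D(\hat\theta_{t-1})$, and $\mathrm{sol}=f$ is the RRM map.

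Applying Theorem \ref{thm:CLT of theta, stable} then gives $\sqrt N(\hat\theta_t-\theta_t)\xrightarrow{d}N(0,\Sigma_t)$ with the recursion
\[
\Sigma_t = A_t + J_{sol}(\theta_{t-1})\,\Sigma_{t-1}\,J_{sol}(\theta_{t-1})^\top,
\]
where $A_t$ is the sandwich term built from $\E_{\D(\theta_{t-1})}\nabla_\theta G(\theta_t,Z)$ and $\E_{\D(\theta_{t-1})}GG^\top(\theta_t,Z)$. The corollary swaps the letters: its $H$ is the paper's earlier $V$ (the Hessian) and its $V$ is the second-moment matrix. With that relabeling the sandwich $A_t$ matches $H^{-1}VH^{-1}$.

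The Jacobian in the first line of the corollary, written $\nabla G(\theta_{t-1})$, should be read as $\nabla f(\theta_{t-1})=J_{sol}(\theta_{t-1})$, consistent with the second line. Unrolling the recursion by induction from $\Sigma_0=0$ (since $\hat\theta_0=\theta_0$ is deterministic) yields the sum over $i$ with products $\prod_{k=i}^{t-1}\nabla f(\theta_k)$ on each side, the right factor transposed.

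The only step needing care is justifying the conditional-to-marginal passage in the underlying theorem; here it is inherited from Assumption \ref{asm:CLT stable}(\ref{it CLT stable:pdf}). The main potential obstacle is notational consistency: matching the corollary's $H$, $V$ and $\nabla G$ with the theorem's $V$, $\E GG^\top$ and $J_{sol}$. The product order in the unrolled sum also needs checking: the left factor is $J(\theta_{t-1})\cdots J(\theta_i)$ and the right factor is its transpose.
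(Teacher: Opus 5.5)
Your proposal is correct and follows the paper's own route. The paper gives no separate proof: it presents this result as the $m=1$ specialization of Theorem~\ref{thm:CLT of theta, stable}, with Remark~\ref{rmk:single existence and convergence} translating Assumption~\ref{asm:existence and convergence} into Assumption~\ref{asm:single convergence stable}, together with the citation to \cite{li2025statisticalinferenceperformativity}. Your notational reconciliation is also right: the corollary's $H$ and $V$ swap roles relative to the theorem, $\nabla G(\theta_{t-1})$ should be read as $J_{sol}(\theta_{t-1})$, and unrolling from $\Sigma_0=0$ gives the left factor $J(\theta_{t-1})\cdots J(\theta_i)$ with its transpose on the right.
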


\subsubsection{Efficiency}
Besides the asymptotic normality, the local asymptotic optimality of the RERM-based estimators can be established at each time $t \in \mathbb{T}$, following arguments similar to those used for the ERR-based estimators. Since fixing any initial point $\theta_0$, the RRM-based $\theta_t$ is also merely a functional of the distribution map $\D$. To ensure the validity of the estimation procedure under a perturbed distribution, the sub-model $\D^u$ in our distribution spaces $\mathscr{D}$ should similarly hold the property of admissibility, that is, Assumption \ref{asm:single convergence stable} and the Assumption \ref{asm:CLT stable} with $m=1$ should hold for $\D^u$.

Denote $\bm S_j=\{Z_{j,i}:i\in[N_j]\}$, $\bm S_{[t]}=\cup_{j\in[t]}\bm S_j$, we also need the following constraints of regularity on the considered algorithms.

\begin{definition}[Regularity in Single-player Setting]
\label{def:estimator}
    Denote the estimators $\hat\theta_j$ generated by a sequence of algorithms $\Acal_j$ under $\D^u$ as
    \[\hat\theta_j=\Acal_j(\bm S_{[j]}),\quad \bm S_j\overset{\rm i.i.d.}{\sim}\D^u(\hat\theta_{j-1}),\quad j\in[t],\quad\hat\theta_{0}=\theta_0.\]
    Denote $P_t^u=\prod_{j\in[t]}\D^u(\hat\theta_{j-1})^{\otimes N_j}$ as the joint distribution of all the samples.
    We assume $\frac{N_t}{N_j}\rightarrow \mu_{t,j}$, $\hat\theta_j\overset{P_t^0}{\rightsquigarrow} \theta_j$ for $j\in[t-1]$ and the estimator $\hat\theta_t$ is regular, i.e., 
    \[\sqrt{N_t}\big(\hat\theta_t-\theta^{(1/\sqrt{N_t})}_t\big)\overset{P_t^{1/\sqrt{N_t}}}{\rightsquigarrow} L,\]
    where $\theta_t^{(1/\sqrt{N_t})}$ is the solution under the sub-model indexed by $u=\frac{1}{\sqrt{N_t}}$, $\overset{P_t^{1/\sqrt{N_t}}}{\rightsquigarrow}$ denotes weak convergence along the sequence of probability measures $P_t^{1/\sqrt{N_t}}$, and the limiting law $L$ doesn't depend on the parametric sub-model.
\end{definition}

\begin{corollary}[Efficiency in Single-player Setting]\label{thm:single convolution}
    Suppose that Assumption \ref{asm:single convergence stable} and the single-player version of Assumption \ref{asm:CLT stable} hold. Suppose $\theta_{t-1}\ne \theta_{PS}$, then for any regular estimator $\hat\theta_t$ as defined in Definition \ref{def:estimator}, we have
    \[\sqrt{N_t}\big(\hat\theta_t-\theta_t\big)\overset{P_t^0}{\rightsquigarrow} W+R,\]
    where $R\indep W$, $W\sim N(0,\Sigma_t)$, and
    \[\Sigma_t=\sum_{j=0}^{t}\mu_{t,j}\bigg(\prod_{k=j}^{t-1}\nabla_{\theta}^\top G(\theta_{k})\bigg)\tilde\Sigma_j\bigg(\prod_{k=j}^{t-1}\nabla_{\theta}^\top G(\theta_{k})\bigg)^\top,\]
    \[\tilde\Sigma_j=\big\{\E_{\D(\theta_{j-1})}\nabla^2\ell(\theta_j,Z)\big\}^{-1}\Cov_{\D(\theta_{j-1})}(\nabla \ell(\theta_j,Z))\{\E_{\D(\theta_{j-1})}\nabla^2\ell(\theta_j,Z)\big\}^{-1}.\]
\end{corollary}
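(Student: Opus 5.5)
The plan is to specialize the convolution theorem for the multi-player stable iterates, Theorem \ref{thm:lower_stable}, to $m=1$. Then check that every ingredient it uses remains valid in the single-player setting. With one player, Assumption \ref{asm:existence and convergence} reduces exactly to Assumption \ref{asm:single convergence stable}, as noted in Remark \ref{rmk:single existence and convergence}. The block-diagonal covariance $\mathrm{diag}\{\mu_{t,j}^i\Cov(\cdot)\}$ collapses to the scalar-weighted matrix $\mu_{t,j}\Cov_{\D(\theta_{j-1})}(\nabla\ell(\theta_j,Z))$. The Jacobian $\E\nabla^\top_\theta G$ becomes the Hessian $\E\nabla^2\ell$. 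Definition \ref{def:regular_stable} with $m=1$ is exactly Definition \ref{def:estimator}. So the formal reduction is essentially bookkeeping. I would still sketch the underlying argument, since the compactness part of Assumption \ref{asm:compact_stable} is not listed in the corollary. I would either import it implicitly as a standing assumption or note that it is only needed to build bounded score perturbations.

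The core argument follows the Hájek--Le Cam route applied to the recursive functional $\theta_t=f_t(\D)$. \textbf{First}, construct smooth one-dimensional submodels $\D^u(\theta)(dz)=(1+u^\top h_\theta(z))\D(\theta)(dz)$ with bounded, mean-zero scores $h_\theta$ that depend smoothly on $\theta$. Then verify that, for small $u$, $\D^u$ still satisfies Assumption \ref{asm:single convergence stable} and Assumption \ref{asm:CLT stable}: $\epsilon$-sensitivity survives with a slightly inflated constant, and convexity and smoothness are properties of the loss alone. \textbf{Second}, establish local asymptotic normality of the adaptively sampled product likelihood $P_t^u=\prod_j\D^u(\hat\theta_{j-1})^{\otimes N_j}$. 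Expand the log-likelihood ratio as $u^\top\sum_j\sum_i h_{\hat\theta_{j-1}}(Z_{j,i})/\sqrt{N_t}-\tfrac12 u^\top I u$, which is a martingale sum. Because $\hat\theta_{j-1}\to\theta_{j-1}$ and $h$ is continuous in $\theta$, the Fisher information converges to $I=\sum_j\mu_{t,j}^{-1}\E_{\D(\theta_{j-1})}hh^\top$. The martingale CLT then gives the LAN property.

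\textbf{Third}, differentiate the target functional along the submodel. By the recursion $\theta_j^u=\mathrm{sol}^u(\theta_{j-1}^u)$ and the implicit function theorem,
\[\partial_u\theta_t^u=\sum_{j=1}^t\Big(\prod_{k=j}^{t-1}J_{\mathrm{sol}}(\theta_k)\Big)\big\{\E_{\D(\theta_{j-1})}\nabla^2\ell(\theta_j,Z)\big\}^{-1}\big(-\E_{\D(\theta_{j-1})}[\nabla\ell(\theta_j,Z)h_{\theta_{j-1}}(Z)^\top]\big).\]
This identifies a gradient. Its canonical, efficient version lies in the tangent space spanned by the per-round scores, namely $-\{\E\nabla^2\ell\}^{-1}(\nabla\ell(\theta_j,Z)-\E\nabla\ell)$ in round $j$, propagated by the products of $J_{\mathrm{sol}}$. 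After the $\mu_{t,j}$ rescaling, its variance is exactly the stated $\Sigma_t$. \textbf{Finally}, invoke the convolution theorem (van der Vaart, Thm.~8.8 and 25.20) for regular estimators in LAN experiments to obtain $W+R$ with $W\sim N(0,\Sigma_t)$.

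I expect the main obstacle to be the third step, specifically showing that the tangent set is rich enough. Each round-$j$ score needs to perturb $\D(\theta_{j-1})$ independently of the other rounds. Yet all rounds come from the same map $\D(\cdot)$, evaluated at different points $\theta_{j-1}$. This is where the hypothesis $\theta_{t-1}\neq\theta_{PS}$ enters. Because the linear contraction of RRM makes the iterates $\theta_0,\dots,\theta_{t-1}$ distinct, I would use smooth bump functions in $\theta$, localized around each $\theta_{j-1}$, to decouple the perturbations, so that the span of scores attains the full product tangent space. The remaining checks are routine: these bumps preserve $\epsilon$-sensitivity for small $u$, and the LAN remainder is uniform under adaptive sampling, which follows from boundedness of $h$ and consistency of $\hat\theta_{j-1}$.
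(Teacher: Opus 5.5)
Your proposal is correct and follows the paper's route. The corollary is Theorem~\ref{thm:lower_stable} with $m=1$. That theorem's proof builds one hardest $d$-dimensional sub-model whose round-$j$ score is the ($\mu_{t,j}$-weighted) EIF component $-\big(\prod_{l=j}^{t-1}J_{\mathrm{sol}}(\theta_l)\big)\{\E_{\D(\theta_{j-1})}\nabla^2\ell(\theta_j,Z)\}^{-1}\{\nabla\ell(\theta_j,Z)-\E_{\D(\theta_{j-1})}\nabla\ell(\theta_j,Z)\}$. It then proves LAN under adaptive sampling (your martingale step, which the paper imports from Cutler et al.) and applies the finite-dimensional convolution theorem.

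You are right that the compactness and $C^2$ parts of Assumption~\ref{asm:compact_stable} are used but not listed in the corollary. Your computation also gives the intended $\Sigma_t$: $j$ runs from $1$ to $t$ and the products contain $J_{\mathrm{sol}}(\theta_k)=\nabla f(\theta_k)$. The $\sum_{j=0}^t$ and $\nabla_\theta^\top G(\theta_k)$ in the statement are typos.

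The one real difference is how the prescribed scores at $\theta_0,\dots,\theta_{t-1}$ are glued into a single $\theta$-indexed score $s(\theta,z)$.

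\begin{itemize}
\item The paper extends linearly over $\mathrm{span}\{\theta_0,\dots,\theta_{t-1}\}$ and projects onto that span. As written, this is well defined only if the iterates are linearly independent, and distinctness does not give that. It fails whenever $t>d$, and already at $t=2$ in the paper's own Gaussian example, where $\theta_j=\epsilon^j\theta_0$ are collinear.
\item Your disjointly supported smooth bumps around the points $\theta_{j-1}$ need only distinctness. That is exactly what $\theta_{t-1}\neq\theta_{PS}$ together with the contraction delivers.
\end{itemize}

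In return, the paper's tilt $K(u^\top s)/C^u(\theta)$ with $K(x)=2/(1+e^{-2x})$ buys automatic positivity and normalization. With your linear tilt $1+u^\top h_\theta$ you must do two things. First, center $h_\theta$ under $\D(\theta)$ for every $\theta$, not only at the iterates; for example, subtract $\E_{\D(\theta)}\nabla\ell(\theta_j,Z)$ inside each bump term. Second, keep $h_\theta$ bounded and Lipschitz in $(\theta,z)$, so that sensitivity survives with constant $\epsilon+O(\|u\|)$. This is exactly where compactness enters.

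Finally, you do not need the tangent set to be the full product space. A single sub-model whose score equals the EIF suffices for the bound, and that is what the paper constructs.
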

Since \cite{li2025statisticalinferenceperformativity} has set $N_t = N$ for all $t$, $\frac{N_t}{N_j}\rightarrow \mu_{t,j}=1$ for all $t$ and $j$, in terms of the Louwner's ordering \cite[Definition 7.13]{li2019graduate}, the asymptotic covariance of $\sqrt{N}\big(\hat\theta_t-\theta_t\big)$ is lower bounded by the covariance of the limiting Gaussian variable $W$. From Theorem \ref{thm:single clt stable}, we see that the asymptotic covariance of the iterated estimation $\hat{\theta_t}$, corresponding to the RRM-based iterates $\theta_t$, exactly attains this lower bound. Therefore, the RERM estimation procedure is asymptotically efficient for estimating the sequence of repeated risk minimizers $\{\theta_t\}_{t=1}$.

\subsection{Performative Optimality}

Performative optimality is defined more directly as it represents the point that minimizes the performative risk function. In the single-player setting (m = 1), the equation for the Nash equilibria in (\ref{equ:Nash equilibria}) naturally reduces to the corresponding equation in the performative prediction framework:
\begin{equation*}
    \theta_{PO} =  \arg\min_{\theta \in \Theta}\mathbf{PR}(\theta) =\arg\min _{\theta \in \Theta}\mathbb{E}_{Z \sim \mathcal{D}(\theta)} \ell(\theta,Z).
\end{equation*}
According to the Plug-in minimization method mentioned in section \ref{subsec:Nash equilibria bg}, we construct a distribution atlas $\D_{\mathcal{B}} = \{\D_\beta\}_{\beta \in \mathcal{B}}$, and draw a sample set $(\theta_i, Z_i)_{i=1}^N$ where $\theta_i \sim \D_\theta$, and $Z_i \sim \D(\theta_i)$ with specified $\D_\theta$. Therefore, we obtain the fitted distributional parameter $\hat \beta$ by certain mapping functions, and then the plug-in optimum $\theta_{PO}^{\hat \beta}$ with plug-in distribution map $\D_{\hat \beta}$.

In \cite{lin2023plug}, the mapping function for the distributional parameter $\beta$ is chosen as the empirical risk function, which is canonical yet suboptimal. The limitation arises because this approach utilizes $D_\theta$ only through $N$ sampled observations, thereby neglecting the full information available from the known distribution $D_\theta$. The inference framework we propose under the multiplayer formulation naturally resolves this issue and remains applicable to the single-player performative prediction setting.

\subsubsection{Asymptotic Normality}
Set the number of players as $m=1$, so the objective risk function in (\ref{equ:loss1}) specialize to the following risk minimization problem:
\begin{equation}
\label{equ:single RePPI}
    \argmin_\beta \text{ } \mathcal{L}(\beta) = \frac{1}{N}\sum_{i\in[N]}\bigg\{r(\theta_i,Z_i;\beta)-\frac{\tilde N}{N+\tilde N}\beta^\top\hat M\hat s(\theta_i)\bigg\}+\frac{1}{N+\tilde N}\sum_{i\in[\tilde N]}\beta^\top\hat M\hat s(\tilde\theta_i).
\end{equation}
where the sample set for the first term is$\{(\theta_i,Z_i): (\theta_i,Z_i) \sim \D_\theta \times \D(\theta_i), i \in [N]\}$, and for the second monte carlo term is $\{\tilde\theta_i: \tilde\theta_i \sim \D_\theta, i\in[\tilde N]\}$. Similarly, $\hat s(\theta)$ is the machine-learning estimation for the the conditional expectation $s(\theta) = \E\big[\nabla_\beta r(\theta,Z;\tilde\beta)|\theta\big]$, and the de-correlated matrix is
$$
\hat M=\widehat{\Cov}\big(\nabla_\beta r(\theta,Z;\tilde\beta),\hat s(\theta)\big)\widehat{\Cov}\big(\hat s(\theta)\big)^{-1}.
$$
To obtain the estimator $\hat \beta$ for the distributional parameter, we follow the algorithm \ref{alg:single beta}, which is the single-player version of the algorithm \ref{alg:beta}.

\begin{algorithm}
    \caption{Recalibrated Estimation for Distributional Parameter, Single-player}
    \label{alg:single beta}
    \begin{algorithmic}
        \State{\bf Input:} Data $\{(\theta_i,Z_i):i\in[N]\}$ and Monte-Carlo samples $\{\tilde\theta_i:i\in[\tilde N]\}$.
        \State{\bf Output:} Cross-fitted estimator $\hat\beta$.
        \State{\bf Step 1:} Randomly split the data $\{(\theta_i,Z_i):i\in[N]\}$ into three parts $\mathcal{M}_1$, $\mathcal{M}_2$ and $\mathcal{M}_3$.
        \State{\bf Step 2:} On $\mathcal{M}_3$, compute the inital estimator 
        \[\tilde\beta^{(1)}=\argmin_\beta\frac{1}{|\mathcal{M}_3|}\sum_{(\theta,Z)\in\mathcal{M}_3}r(\theta,Z;\beta).\]
        \State{\bf Step 3:} On $\mathcal{M}_2$, use any machine learning algorithm to estimate $\E[\nabla_\beta r(\theta,Z;\tilde\beta^{(1)})|\theta]$ as $\hat s^{(1)}(\theta)$.
        \State{\bf Step 4:} On $\mathcal{M}_1$, compute
        \[\hat M^{(1)}=\widehat{\Cov}\big(\nabla_\beta r(\theta,Z;\tilde\beta^{(1)}),\hat s^{(1)}(\theta)\big)\widehat{\Cov}\big(\hat s^{(1)}(\theta)\big)^{-1}.\]
        where $\widehat{\Cov}$ denotes the sample covariance matrix.
        \State{\bf Step 5:} On $\mathcal{M}_1$ and the Monte-Carlo data, solve
        \[\hat\beta^{(1)}=\argmin_\beta\frac{1}{|\mathcal{M}_1|}\sum_{(\theta,Z)\in\mathcal{M}_1}\bigg\{r(\theta,Z;\beta)-\frac{\tilde N}{N+\tilde N}\beta^\top\hat M^{(1)}\hat s^{(1)}(\theta)\bigg\}+\frac{1}{N+\tilde N}\sum_{i\in[\tilde N]}\beta^\top\hat M^{(1)}\hat s^{(1)}(\tilde\theta_i).\]
        \State{\bf Step 6:} Repeat Steps 2-5 with fold rotations: $(\mathcal{M}_2,\mathcal{M}_3,\mathcal{M}_1)$ and $(\mathcal{M}_3,\mathcal{M}_1,\mathcal{M}_2)$ to get $\hat\beta^{(2)}$ and $\hat\beta^{(3)}$.
        \State{\bf Step 7:} Compute the final estimator as $\hat\beta=\sum_{j\in[3]}\frac{|\mathcal{M}_j|}{N}\hat\beta^{(j)}$.
    \end{algorithmic}
\end{algorithm}

Then the plug-in performatively optimal point is as follows:
\begin{equation*}
\theta_{PO}^{\hat{\beta}} = \arg\min_{\theta \in \Theta}\mathbf{PR}^{\hat \beta}(\theta)  = \arg\min_\theta \mathbb{E}_{Z \sim D_{\hat{\beta}}(\theta)} \ell(\theta, Z).
\end{equation*}
Similarly, we can rewrite the risk minimization form by importance sampling and generate the estimated plug-in optimum:
$$
\hat{\theta}^{\hat{\beta}}_{PO}  = \arg\min_\theta\frac{1}{n} \sum_{i = 1}^n \left[\frac{D_{\hat{\beta}}(Z_i;\theta)}{q(Z_i)}\ell(Z_i;\theta)\right] \triangleq \arg\min_\theta \frac{1}{n} \sum_{i = 1}^n g(\theta,Z_i;\hat{\beta}),
$$
where $Z_i \sim q(Z)$, and $q(Z)$ is a known and fixed distribution.

Denote $f(\beta) = \arg\min_{\theta \in \Theta}\mathbf{PR}^{\beta}(\theta)$, and the hessian matrix for $\beta$ as $H(\beta) = \mathbb{E}[\nabla^2_\beta r(\theta, Z; \beta)]$, we can construct the central limit theorem for both distributional parameter estimator and plug-in estimator.

\begin{corollary}[Asymptotic Normality in Single-player Setting]
\label{thm:single CLT of theta, optimal}
Suppose the Assumption \ref{assumption for beta} and Assumption \ref{assumption for theta, optimal} hold when $m=1$. Denote $s^*(\theta)= \E\big[\nabla_\beta r(\theta,Z;\beta^*)|\theta\big]$. 
If the sample sizes satisfy $\frac{N}{n} \rightarrow0$ and $\frac{N}{\tilde N} \rightarrow0$, and $\E\|\hat s - s\|^2 \xrightarrow{P} 0$ for some $s$, then we have the asymptotic normality for both the distributional parameter and the plug-in optimum estimation:
\begin{align*}
    \sqrt{N}(\hat{\beta} - \beta^*) &\xrightarrow{d} N(0,\Sigma_\beta), \\
    \sqrt{N}(\hat{\theta}^{\hat{\beta}}_{PO} - \theta^{\beta^*}_{PO})  &\xrightarrow{d}  N(0,\Sigma_{\theta}).
\end{align*}
Moreover, if $s(\theta) = s^*(\theta)$, then we have the asymptotic covariance as 
$$
\Sigma_\beta = H(\beta^*)^{-1} (\operatorname{Cov}\left(\nabla_\beta r(\theta,Z;\beta^*)\right) - \operatorname{Cov}\left(\E\big[\nabla_\beta r(\theta,Z;\beta^*)|\theta\big]\right) ) H(\beta^*)^{-1},
$$
$$
\Sigma_\theta = (\nabla f(\beta^*)) \Sigma_{\beta} (\nabla f(\beta^*)) ^T.
$$
\end{corollary}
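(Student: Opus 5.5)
The plan is to obtain Corollary~\ref{thm:single CLT of theta, optimal} as the $m=1$ case of Theorems~\ref{thm:CLT of beta} and \ref{thm:CLT of theta, optimal}. Before specializing, I will re-trace the two-stage argument so that the single-player objects are seen to match: $r_1=r$, $\hat M_1=\hat M$, $\hat s_1=\hat s$, $\mathrm{sol}=f$, and $J_{sol}(\beta^*)=\nabla f(\beta^*)$. The block-diagonal structure of $\Sigma_\beta$ then collapses to a single block.

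\emph{Step 1: the distributional estimator.} For a single fold $\mathcal{M}_1$, with $\tilde\beta^{(1)},\hat s^{(1)},\hat M^{(1)}$ held fixed (they are independent of $\mathcal{M}_1$ by cross-fitting), the objective (\ref{equ:single RePPI}) is strongly convex by Assumption~\ref{assumption for beta}(4). Consistency therefore follows from the standard argmin argument together with the local Lipschitz conditions. The first-order condition reads
\[
0=\frac{1}{|\mathcal{M}_1|}\sum_{\mathcal{M}_1}\nabla_\beta r(\theta,Z;\hat\beta^{(1)})-\frac{\tilde N}{N+\tilde N}\hat M\,\overline{\hat s}_{\mathcal{M}_1}+\frac{1}{N+\tilde N}\sum_{[\tilde N]}\hat M\hat s(\tilde\theta_k).
\]
Expanding the first term around $\beta^*$, as in \cite[Thm.~5.21]{van2000asymptotic}, gives $\sqrt{N}(\hat\beta^{(1)}-\beta^*)=-H(\beta^*)^{-1}\sqrt{N}\,[\cdots]+o_P(1)$. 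Since $N/\tilde N\to0$, the Monte Carlo average replaces $\E_{\D_\theta}\hat s$ at an $o_P(N^{-1/2})$ cost, so the bracket is asymptotically the centered sum of $\nabla_\beta r(\theta,Z;\beta^*)-\hat M(\hat s(\theta)-\E\hat s)$. Using $\E\|\hat s-s\|^2\to0$ and the independence supplied by cross-fitting, the difference between $\hat s$ and $s$ in this empirical process is $o_P(1)$; likewise $\hat M\to M=\Cov(\nabla r,s)\Cov(s)^{-1}$. Averaging the three folds then yields a single i.i.d. influence function, and the CLT gives the result.

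When $s=s^*$, the tower property gives $\Cov(\nabla r,s^*)=\Cov(s^*)$, so $M=I$. The variance of $\nabla r-s^*$ is then $\Cov(\nabla r)-\Cov(s^*)$, which is exactly $\Sigma_\beta$.

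\emph{Step 2: the plug-in optimum.} I will decompose
\[
\hat\theta^{\hat\beta}_{PO}-\theta^{\beta^*}_{PO}=\bigl(\widehat{\mathrm{sol}}(\hat\beta)-\mathrm{sol}(\hat\beta)\bigr)+\bigl(f(\hat\beta)-f(\beta^*)\bigr).
\]
For the first term, conditionally on $\hat\beta$, the importance-sampling Z-estimator is $\sqrt n$-asymptotically normal; this uses Assumption~\ref{assumption for theta, optimal}(2)--(4) and local uniformity in $\beta$ near $\beta^*$. Hence the term is $O_P(n^{-1/2})=o_P(N^{-1/2})$ because $N/n\to0$. For the second term, the delta method applied with the differentiability of $f$ at $\beta^*$ gives $\sqrt N(f(\hat\beta)-f(\beta^*))\to N(0,\nabla f\,\Sigma_\beta\,\nabla f^\top)$. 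Slutsky's lemma then finishes the argument. Passing from the conditional statement to the marginal one uses the density and characteristic-function condition (5), following the paper's two-step scheme.

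\emph{Main obstacle.} The main difficulty is the nuisance-estimation step in Step~1. It requires showing that the cross-fitted remainder $\frac{1}{\sqrt{|\mathcal{M}_1|}}\sum\{(\hat M\hat s-Ms)(\theta)-\E(\hat M\hat s-Ms)\}$ is $o_P(1)$. I will handle it by a conditional Chebyshev bound: given the other folds, this remainder has mean zero and variance at most $\E\|\hat M\hat s-Ms\|^2\to0$. A second, milder obstacle is uniformity of the conditional CLT in Step~2 over shrinking neighbourhoods of $\beta^*$, which is needed because $\hat\beta$ is random.
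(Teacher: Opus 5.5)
Your Step 1 is essentially the paper's own argument: it is Lemma~\ref{lem:inter CLT for beta} and Theorem~\ref{thm:CLT of beta} specialized to $m=1$. Both use argmin consistency from strong convexity, linearization of the estimating equation, replacement of $\hat s,\hat M$ by $s,M$, and the identity $\Cov(\nabla_\beta r,s^*)=\Cov(s^*)$. Your conditional Chebyshev bound via cross-fitting is a sounder justification than the paper's appeal to \cite[Lemma 19.24]{van2000asymptotic}, which needs a Donsker condition that a generic machine-learning $\hat s$ need not satisfy.

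There is one slip. In Algorithm~\ref{alg:single beta}, $\hat M^{(1)}$ is computed on $\mathcal{M}_1$ itself. It is therefore not independent of that fold, and $\hat M\hat s-Ms$ is not conditionally mean zero. The fix is immediate because $\hat M$ is a finite-dimensional matrix entering linearly. Write $\hat M\,\mathbb{G}_N(\hat s)-M\,\mathbb{G}_N(s)=\hat M\,\mathbb{G}_N(\hat s-s)+(\hat M-M)\,\mathbb{G}_N(s)$. Then apply Chebyshev only to $\mathbb{G}_N(\hat s-s)$, where $\hat s$ is fit on $\mathcal{M}_2$.

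Step 2 is where you genuinely depart from the paper. The paper first derives conditional normality of $\sqrt n(\hat\theta^{\hat\beta}_{PO}-\theta^{\beta^*}_{PO})$ given $\hat\beta$. It then integrates the conditional characteristic function against the limiting Gaussian density of $\sqrt N(\hat\beta-\beta^*)$; this is where condition (5) enters. That yields variance $\Sigma_{\mathrm{IS}}+\frac nN\nabla f(\beta^*)\Sigma_\beta\nabla f(\beta^*)^\top$ on the $\sqrt n$ scale, where $\Sigma_{\mathrm{IS}}$ is the sandwich variance of the importance-sampling $Z$-estimator. The paper then rescales by $\sqrt{N/n}$.

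You instead split off $\widehat{\mathrm{sol}}(\hat\beta)-\mathrm{sol}(\hat\beta)=O_P(n^{-1/2})=o_P(N^{-1/2})$ and finish with the delta method and Slutsky. Your route is shorter and more rigorous. It avoids the paper's informal limit interchange, in which $n/N\to\infty$ sits inside the exponent. It also makes condition (5) unnecessary: the draws $Z_k\sim q$ are independent of $\hat\beta$, so all you need is conditional tightness of the importance-sampling $Z$-estimator uniformly over a neighbourhood of $\beta^*$. That is exactly the uniformity you flag, so your remark that the marginal step uses condition (5) is superfluous.

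What the paper's computation buys, at least formally, is bookkeeping of both error sources. It suggests the limit $\nabla f(\beta^*)\Sigma_\beta\nabla f(\beta^*)^\top+r\,\Sigma_{\mathrm{IS}}$ when $N/n\to r>0$. Your argument as written covers only $N/n\to0$. Extending it would need a conditional CLT, not just tightness, uniform in $\beta$, combined with the independence of the two samples.
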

Note that since the proposal distribution $q(\cdot)$ and the distribution $\D_\theta$ are known, we similarly can always have the number of Monte Carlo samples $n = O(N^{\alpha_1})$ and $\tilde N = O(N^{\alpha_2})$ with $\alpha_1>1$ and $\alpha_2>1$, where $N$ is the sample size for fitting the distribution map. Therefore, the sample sizes can always satisfy $\frac{N}{n} \rightarrow0$ and $\frac{N}{\tilde N} \rightarrow 0$.
Same as the multiplayer case, this quantification relationship of the sample size is important for obtaining efficiency.

\subsubsection{Efficiency}
The risk function in the minimization problem (\ref{equ:single RePPI}) remains intrinsically connected to the efficient influence function, even when the framework degenerates to the single-player setting. This structural linkage ensures that our estimation procedure retains its efficiency in the single-player performative prediction.

Set $m=1$, we reduce the joint distribution $P_{\theta, Z}$ to denote $(\theta,Z)$, and the gradient $G_r(\theta,Z;\beta) = \nabla_\beta r(\theta,Z,\beta)$. Similarly, we need Assumption \ref{asm:bounded_optimal} holds with $m=1$ to guarantee the existence of local parametric sub-models in the single-player setting:
\begin{assumption}\label{asm:single bounded_optimal}
    We assume $r(\theta,Z;\beta^*)$, $\nabla_{\beta}r(\theta,Z;\beta^*)$ and $\nabla_{\beta}^2 r(\theta,Z;\beta^*)$ are bounded on $\Theta\times\Zcal$.
\end{assumption}
Under Assumption \ref{asm:single bounded_optimal}, the efficient influence functions of $\beta^*$ and $\theta_{PO}^{\beta^*}$ in the distribution space $\mathscr{P}_{\theta,Z}$ remain in the same form as in the general case, and are given as follows:
$$\Psi_{\beta^*}(\theta,Z)=-\big\{\E_{P_{\theta,Z}}\nabla^\top_\beta G_r(\theta,Z;\beta^*)\big\}^{-1}\big\{G_r(\theta,Z;\beta^*)-\E_{\D(\theta)}G_r(\theta,Z;\beta^*)\big\},$$
$$\Psi_{\theta_{PO}^{\beta^*}}(\theta,Z)=\nabla_\beta^\top \mathrm{sol}(\beta^*)\Psi_{\beta^*}(\theta,Z).$$
By the Theorem \ref{thm:efficiency of optimal point}, we similarly obtain the convolution theorem for the single-player plug-in estimation.
\begin{corollary}[Efficiency in Single-player Setting]
\label{thm:efficiency of optimal point, single} 
    Suppose the \ref{assumption for beta}, \ref{assumption for theta, optimal}, and \ref{asm:bounded_optimal} hold when $m=1$.
    For any regular estimators $\hat\beta$ and $\hat\theta_{PO}$, we have
    \[\sqrt{N}\big(\hat\beta-\beta^{*}\big)\overset{P_{\theta,Z}}{\rightsquigarrow} W_\beta+R_\beta,\quad \sqrt{N}\big(\hat\theta_{PO}-\theta_{PO}^{\beta^*}\big)\overset{P_{\theta,Z}}{\rightsquigarrow}W_\theta+R_\theta,\]
    where $R_\beta\indep W_\beta$, $R_\theta\indep W_\theta$ and $W_\beta\sim N\big(0, \Sigma_\beta\big)$, $W_\theta\sim N\big(0,\Sigma_\theta\big)$.
    $$
    \Sigma_\beta = \big\{\E_{P_{\theta,Z}}\nabla^2_\beta r(\theta,Z;\beta^*)\big\}^{-1} \big\{\Cov(\nabla_\beta r(\theta,Z;\beta^*))-\Cov(\E_{\D(\theta)}\nabla_\beta r(\theta,Z;\beta^*))\big\} \big\{\E_{P_{\theta,Z}}\nabla^2_\beta r(\theta,Z;\beta^*)\big\}^{-1},
    $$
    $$
    \Sigma_\theta = \nabla f(\beta^*)\Sigma_1 \nabla f(\beta^*)^\top.
    $$
\end{corollary}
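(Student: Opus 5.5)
The plan is to obtain this corollary by specializing the multiplayer convolution result, Theorem~\ref{thm:efficiency of optimal point}, to $m=1$, and then computing the covariances of the two Gaussian components explicitly. The underlying route is the classical Hájek--Le Cam convolution theorem \cite[Theorem 25.20]{van2000asymptotic}. It needs three ingredients:
\begin{enumerate}
\item a rich enough tangent set of the model $\mathscr{P}_{\theta,Z}$ at $P_{\theta,Z}$;
\item pathwise differentiability of the functional $\beta^*$ (and hence of $\theta_{PO}^{\beta^*}$) along every smooth submodel;
\item regularity of the estimator, which Definition~\ref{regularity, optimal} supplies directly.
\end{enumerate}
Under these, $\sqrt{N}(\hat\beta-\beta^*)$ converges to $W_\beta+R_\beta$ with $R_\beta\indep W_\beta$, and $W_\beta$ is centered Gaussian with covariance $\Cov(\Psi_{\beta^*})$.

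\textbf{Tangent space.} Since $\D_\theta$ is known, the marginal of $\theta$ is fixed, and submodels perturb only the conditional law $Z\mid\theta$. I will use submodels of the form $dP^u = (1+u^\top h(\theta,Z))\,dP$, where $h$ is bounded and satisfies $\E[h\mid\theta]=0$. Under Assumption~\ref{asm:single bounded_optimal}, and for small $u$, these submodels stay inside $\mathscr{P}_{\theta,Z}$: boundedness keeps the required moment and Lipschitz conditions. The closure of the scores is $\{h:\E[h\mid\theta]=0\}$.

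\textbf{Pathwise derivative.} Along a submodel, $\beta^{*}(u)$ solves $\E_{P^u}\nabla_\beta r(\theta,Z;\beta)=0$. Differentiating implicitly at $u=0$, using the nonsingularity of $H(\beta^*)$ from Assumption~\ref{assumption for beta}, gives
\[
\partial_u\beta^*(u)|_{u=0}=-H(\beta^*)^{-1}\E\big[\nabla_\beta r(\theta,Z;\beta^*)\,h^\top\big].
\]
Because $\E[h\mid\theta]=0$, I can subtract $\E[\nabla_\beta r\mid\theta]$ inside the expectation without changing it. The resulting influence function, $-H^{-1}\{\nabla_\beta r-\E_{\D(\theta)}\nabla_\beta r\}$, already lies in the tangent space, so it is the efficient influence function. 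This is exactly Lemma~\ref{lem:EIF} with $m=1$.

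\textbf{Covariances.} Computing the covariance of this influence function, with $s^*(\theta)=\E[\nabla_\beta r\mid\theta]$, gives
\[
\Cov(\nabla_\beta r - s^*) = \Cov(\nabla_\beta r)-2\Cov(\nabla_\beta r,s^*)+\Cov(s^*) = \Cov(\nabla_\beta r)-\Cov(s^*),
\]
where the second equality uses $\Cov(\nabla_\beta r,s^*)=\Cov(s^*)$ by the tower property. This yields the stated $\Sigma_\beta$.

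\textbf{Passing to $\theta_{PO}$.} For $\theta_{PO}^{\beta^*}=f(\beta^*)$, differentiability of $\mathrm{sol}$ (Assumption~\ref{assumption for theta, optimal}) and the chain rule make $\theta_{PO}^{\beta^*(u)}$ pathwise differentiable, with influence function $\nabla f(\beta^*)\Psi_{\beta^*}$. The map $f$ itself is known once $\beta$ is fixed, so no additional nuisance enters. Applying the convolution theorem again (equivalently \cite[Theorem 25.47]{van2000asymptotic}) gives $W_\theta\sim N(0,\nabla f\,\Sigma_\beta\,\nabla f^\top)$. Note that this is the $\Sigma_1$ appearing in the statement, identified with $\Sigma_\beta$.

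\textbf{Main obstacle.} The delicate step is verifying that the perturbed submodels remain admissible, i.e.\ that the $\beta^*(u)$-based assumptions still hold. Here compactness and boundedness (Assumption~\ref{asm:single bounded_optimal}) are the key, together with a dominated-convergence justification for differentiating under the integral.
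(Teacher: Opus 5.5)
Your proposal is correct and follows essentially the paper's route. You specialize Theorem~\ref{thm:efficiency of optimal point} to $m=1$, obtain the efficient influence function as in Lemma~\ref{lem:EIF} by implicit differentiation along submodels that fix the $\theta$-marginal and then check tangent-space membership, simplify via $\Cov(\nabla_\beta r,s^*)=\Cov(s^*)$, and pass to $\theta_{PO}^{\beta^*}$ by the chain rule/delta method. The only difference is cosmetic: the paper perturbs with the normalized tilt $K(u^\top s(\theta,Z))/C^u(\theta)$, $K(x)=2/(1+e^{-2x})$, which is positive for every $u$, whereas your linear tilt $1+u^\top h$ with $\E[h\mid\theta]=0$ needs small $u$ for positivity but is equally valid under Assumption~\ref{asm:single bounded_optimal}.
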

Therefore, as shown in Corollary \ref{thm:single CLT of theta, optimal}, the asymptotic covariance of the plug-in estimator under the single-player setting attains the semiparametric efficiency bound. This result confirms that our inference framework achieves asymptotic efficiency for performative prediction, ensuring that no regular estimator can asymptotically outperform it in terms of variance within a local neighborhood of the true parameter.

\subsubsection{Error Gap}
Similarly, we can quantify the error between the plug-in optimum $\theta^{\beta^*}_{PO}$ and the true performative optimum $\theta_{PO}$, notifying its dependence on the misspecification under certain conditions of the performative risk function and the plug-in risk function.

\begin{corollary}[Error Gap in Single-player setting]
\label{gaps, optimal}
Suppose the distribution atlas $\D_{\mathcal{B}}$ is $\eta_{TV}$-misspecified and $\gamma$-smooth in total-variation distance, and the loss function is uniformly bounded. Moreover, suppose that at least one of the risk functions 
$$
\mathbf{PR}^{\beta^*}(\theta) = \mathbb{E}_{Z \sim \D_{\beta^*}(\theta)}\ell(Z;\theta) \quad \text{and} \quad \mathbf{PR}(\theta) = \mathbb{E}_{Z \sim \D(\theta)} \ell(Z;\theta)
$$  
is strongly convex over $\theta$ with convex parameter $\lambda$. Then the gap between the true performative optimum and the plug-in performative optimum is bounded as follows:
\begin{equation*}
    \|\theta_{PO} - \theta^{\beta^*}_{PO}\|_2^2  \leq \frac{8M \cdot \eta_{TV}}{\lambda}.
\end{equation*}
\end{corollary}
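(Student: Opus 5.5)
The plan is the standard ``uniform closeness of objectives plus strong convexity'' argument, with $M$ read as the uniform bound on the loss, $|\ell(Z;\theta)|\le M$. The first step is to show that the two risk functions are uniformly close. For any $\theta\in\Theta$, since $\ell(\cdot;\theta)$ is bounded by $M$, the variational characterization of total variation gives
\[
\bigl|\mathbf{PR}^{\beta^*}(\theta)-\mathbf{PR}(\theta)\bigr| = \Bigl|\int \ell(z;\theta)\,\bigl(d\D_{\beta^*}(\theta)-d\D(\theta)\bigr)(z)\Bigr| \le 2M\, d_{TV}\bigl(\D_{\beta^*}(\theta),\D(\theta)\bigr)\le 2M\eta_{TV},
\]
where the last inequality is the $\eta_{TV}$-misspecification in Assumption \ref{assumtion:origin optimal}. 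This holds because $\beta^*$ is a point of the atlas and the misspecification bound is uniform over $\theta$. The $\gamma$-smoothness is not needed here, since we compare at the population parameter $\beta^*$ and not at $\hat\beta$.

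The second step converts the near-optimality of each minimizer for the other objective into an excess-risk bound. Suppose, without loss of generality, that $\mathbf{PR}^{\beta^*}$ is $\lambda$-strongly convex; the other case is symmetric with the roles of the two functions swapped. Because $\theta_{PO}^{\beta^*}$ minimizes $\mathbf{PR}^{\beta^*}$ over the convex set $\Theta$, strong convexity together with the first-order optimality condition gives quadratic growth:
\[
\mathbf{PR}^{\beta^*}(\theta_{PO})-\mathbf{PR}^{\beta^*}(\theta^{\beta^*}_{PO})\ge \tfrac{\lambda}{2}\|\theta_{PO}-\theta^{\beta^*}_{PO}\|_2^2 .
\]
I then bound the left-hand side by chaining four comparisons. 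First, replace $\mathbf{PR}^{\beta^*}(\theta_{PO})$ by $\mathbf{PR}(\theta_{PO})$ at a cost of $2M\eta_{TV}$. Next, use $\mathbf{PR}(\theta_{PO})\le \mathbf{PR}(\theta^{\beta^*}_{PO})$, which is optimality of $\theta_{PO}$. Finally, replace $\mathbf{PR}(\theta^{\beta^*}_{PO})$ by $\mathbf{PR}^{\beta^*}(\theta^{\beta^*}_{PO})$ at another cost of $2M\eta_{TV}$. This yields an excess of at most $4M\eta_{TV}$, so $\|\theta_{PO}-\theta_{PO}^{\beta^*}\|_2^2\le 8M\eta_{TV}/\lambda$, exactly the claimed constant.

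In the symmetric case, where $\mathbf{PR}$ is the strongly convex one, I apply quadratic growth at $\theta_{PO}$ and bound $\mathbf{PR}(\theta_{PO}^{\beta^*})-\mathbf{PR}(\theta_{PO})$ by the same chain, now using optimality of $\theta^{\beta^*}_{PO}$ for $\mathbf{PR}^{\beta^*}$.

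The main obstacle is bookkeeping of conventions rather than substance. The constant $8$ comes out only if the total-variation inequality carries the factor $2M$, matching the paper's normalization $d_{TV}=\tfrac12\int|p-q|$, and if quadratic growth carries $\lambda/2$. I will also state explicitly that minimizers exist, that $\theta_{PO}$ minimizes $\mathbf{PR}$ over $\Theta$, and that the growth inequality is valid on a constrained convex $\Theta$ via the normal-cone form of the first-order condition. Everything else is a direct specialization of Theorem \ref{gaps, optimal} with $m=1$.
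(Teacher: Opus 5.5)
Your proposal is correct and matches the paper's proof (written for the $m$-player theorem and specialized to $m=1$) essentially step for step. Both use the uniform bound $|\mathbf{PR}^{\beta^*}(\theta)-\mathbf{PR}(\theta)|\le 2M\eta_{TV}$ from $\int|p-q|=2d_{TV}$, quadratic growth of the strongly convex risk at its minimizer, and optimality of the other minimizer to get $\frac{\lambda}{2}\|\theta_{PO}-\theta_{PO}^{\beta^*}\|_2^2\le 4M\eta_{TV}$. As in the paper, $\gamma$-smoothness is never used and the misspecification bound is applied at $\beta^*$; your explicit treatment of the symmetric case and the constrained first-order condition only fills in details the paper leaves implicit.
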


\begin{remark}
    Since the true distribution map $\D(\theta)$ is typically unknown, it is more reasonable to assume the strong convexity of the objective function $\mathbf{PR}^{\beta^*}(\theta)$ based on the distribution atlas.
\end{remark}

\begin{example}
    We show an example in location-family that the strong convexity for $\mathbf{PR}^{\beta^*}(\theta) = \mathbb{E}_{Z \sim \D_{\beta^*}(\theta)}\ell(Z;\theta)$ can hold.
    Assume that $\D_\theta = U(-1,1)$, the true distribution map is $Z \sim \mathcal{N}(b+\beta_1\theta+\epsilon\beta_2\theta^2,\sigma^2)$ and the distribution atlas is $Z \sim \mathcal{N}(b+\beta\theta,\sigma^2)$. The loss functions are $r(\theta,Z;\beta) = (Z-\beta\theta)^2$ and $\ell(\theta,Z,\beta) = (Z-\theta)^2$. By direct calculation, we obtain $\beta^* = \beta_1 \neq 1$. Therefore, we have
    $$
    \mathbf{PR}^{\beta^*}(\theta) = \mathbb{E}_{Z \sim \D_{\beta^*}(\theta)}\ell(Z;\theta) = \sigma^2 + (b-\beta_1\theta)^2 - 2\theta(b+\beta_1\theta) + \theta^2,
    $$
    which is strongly convex in $\theta$.
\end{example}

When the true distribution map is contained in the distribution atlas, the misspecification parameter $\eta_{TV}$ vanishes to zero, and the plug-in optimum $\theta^{\beta^*}_{PO}$ becomes the true performative optimum $\theta_{PO}$. Therefore, we can expect our plug-in estimation method to demonstrate statistical properties of the true performative optimum pretty well under certain conditions under single-player performativity.

\section{Numerical Simulations}
In this section, we complement the theoretical analysis by presenting numerical experiments within the single-player performative prediction framework. Specifically, we conduct simulation studies under Gaussian family models to empirically validate and illustrate our theoretical results.

\subsection{Performative Stability}

Given $\theta \in \R^d$, define the distribution map as
$$
\D(\theta) = N(\epsilon\theta, \Sigma), \quad\Sigma = diag(\sigma_1^2, ... ,\sigma_d^2),
$$ where $\epsilon, \sigma_1^2, ... ,\sigma_d^2 \in \R$. Thus, the distribution map $\D(\theta)$ is $\epsilon$-sensitive.
For each step, we want to process the update procedure based on the squared error loss function $\ell(\theta,Z) = \frac{1}{2}\|Z - \theta\|^2$, which is $1$-smooth and $1$-strongly convex. According to the requirement for convergence that $\epsilon < \frac{\gamma}{\beta}$, the sensitive parameters should satisfy $\epsilon < 1$. In the following simulations, We set $d = 2$, $\sigma_1^2 = \sigma_2^2 = 0.25$, and the initial point $\theta_0 = (1.0, 2.0)^T$. We set the number of samples $N = 10000$.

From Figure \ref{fig:stable qqplot}, we validate the asymptotic normality of our estimators under different values of $\epsilon = 0.01,\ 0.05,$ and $0.2$ by presenting multivariate Q-Q plots based on the Mahalanobis distance. 

\begin{figure}[htbp]
    \centering
    \subfigure[eps=0.01]{
        \includegraphics[width=0.25\textwidth]{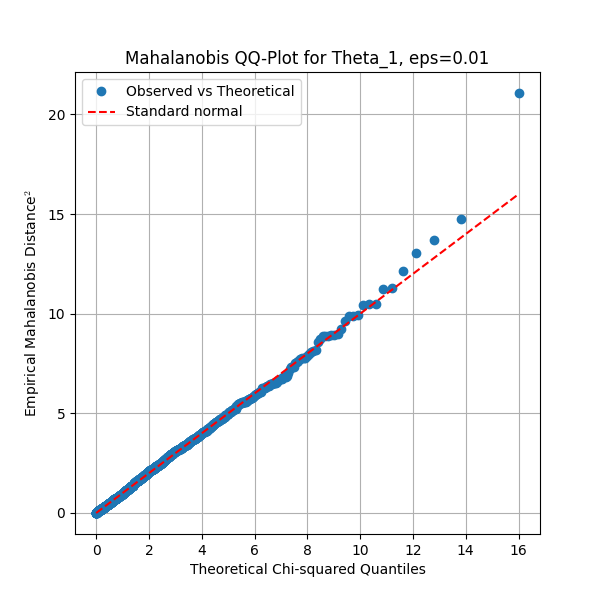}
        \label{fig:qqplot-0.01}
    }
    \subfigure[eps=0.05]{
        \includegraphics[width=0.25\textwidth]{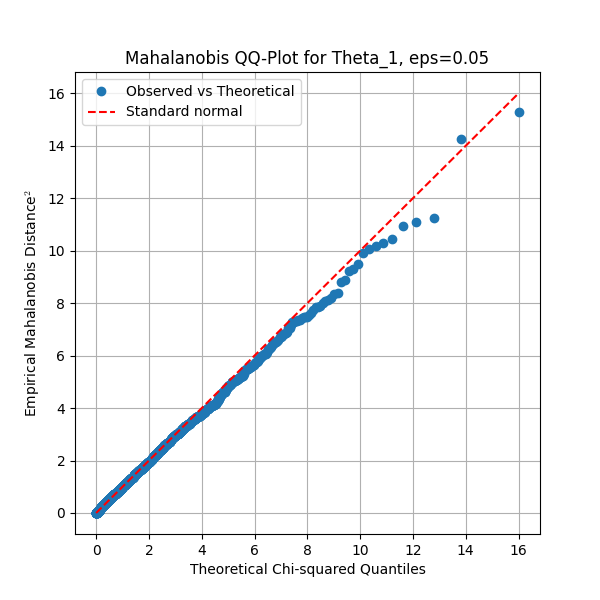}
        \label{fig:qqplot-0.05}
    }
    \subfigure[eps=0.2]{
        \includegraphics[width=0.25\textwidth]{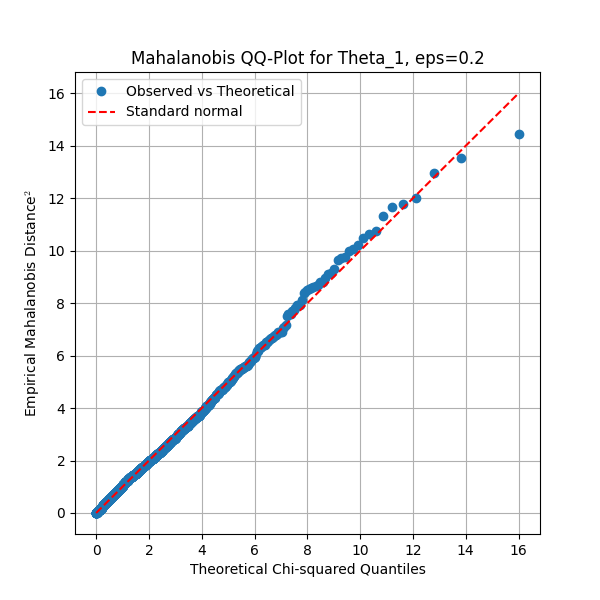}
        \label{fig:qqplot-0.2}
    }
    \caption{Mahalanobis qq-plot under different sensitivities}
    \label{fig:stable qqplot}
\end{figure}

In addition, in Figure \ref{fig:stable coverage rate}, we observe that for both entries and each value of $\epsilon = 0.01,\ 0.05,$ and $0.2$, the coverage rates for both the RRM-based $\theta_t$ and the stable point $\theta_{PS}$, which is computed using our theoretical covariance, are consistently close to the nominal level of $\alpha = 0.95$. This indicates that our theoretical construction achieves accurate coverage across a range of misspecification levels. Moreover, when $\epsilon$ is small, such as $0.01$ or $0.05$, the coverage rate curves for the RRM-based $\theta_t$ and the stable point $\theta_{PS}$ essentially overlap much earlier, suggesting that the two estimators behave nearly identically in low-sensitivity regimes. This overlap highlights the diminishing effect of sensitivity when the level of distributional shift is small. We provide a more detailed explanation of this phenomenon in Appendix \ref{appdix:subsub:add}.

\begin{figure}[htbp]
    \centering
    \includegraphics[width=0.7\textwidth]{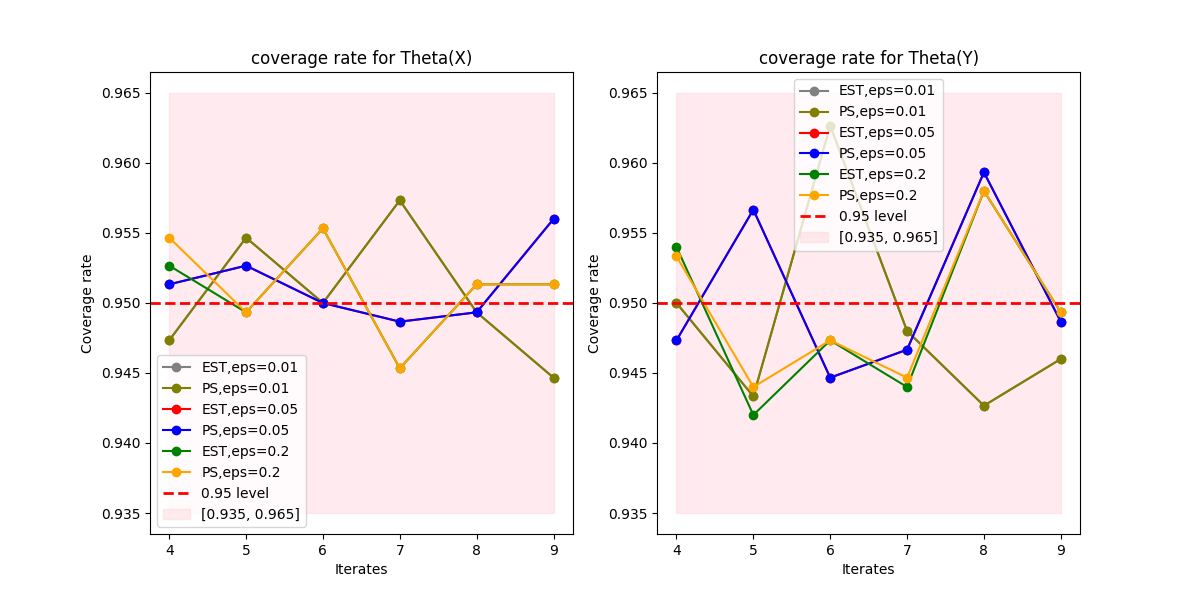}
    \caption{Coverage Rate for $\theta_t$ vs. sensitivity}
    \label{fig:stable coverage rate}
\end{figure}

\subsection{Performative Optimality}

We follow the location-family setting in \cite{lin2023plug} to construct problems for performative optimality here.
Assume that the true distribution map is a linear model with a quadratic term 
$$
Z = b + \beta_1 *\theta + \epsilon \beta_2 *\theta^2 + Z_0, \quad Z_0 \sim N(0,\sigma^2I_d),
$$
where $\epsilon \geq 0$ quantifies how severely the model is misspecified, $b, \theta \in \R^d$. To be more specific, we generate the true model parameters from $b \sim N(0,\sigma_b^2)$, $\beta_1 = \frac{\beta'_1}{\|\beta'_1\|_{OP}}$ and $\beta_2 = \frac{\beta'_2}{\|\beta'_2\|_{OP}}$, where $\beta_1, \beta_2 \sim N(0,\sigma_\beta^2)$ are i.i.d. entries-wise.
We construct the distribution atlas with simpler linear models
$$
\mathcal{D}_\beta(\theta) = b + \beta *\theta + Z_0, \quad Z_0 \sim N(0,\sigma^2I_d).
$$
Thus, $\epsilon$ is the misspecification level, and if $\epsilon = 0$, the true distribution map is contained in the distribution atlas. For fitting the distributional parameter $\beta$, we define the loss function $r(\theta,Z;\beta) = \|Z - \beta\theta\|_2^2$, where $\theta_i = \{\theta: \|\theta\| \leq 1\}$. In this setting, we can calculate that the target distributional parameter satisfies $\beta^* = \beta_1$, and the plug-in optimum has a closed form $\theta_{PO}^{\beta^*} = \frac{-b}{\beta^*-1}$. The estimation procedure is based on the squared error loss function $\ell(\theta,Z) = \|Z - \theta\|^2$. In the following simulations, we set $d = 1$, $\theta_i \sim U(-1,1)$, $\sigma_b = 1$, $ \sigma_\beta = 1$ and $\sigma = 0.5$.

\begin{figure}[htbp]
    \centering
    \subfigure[Coverage rate]{
        \includegraphics[width=0.55\textwidth]{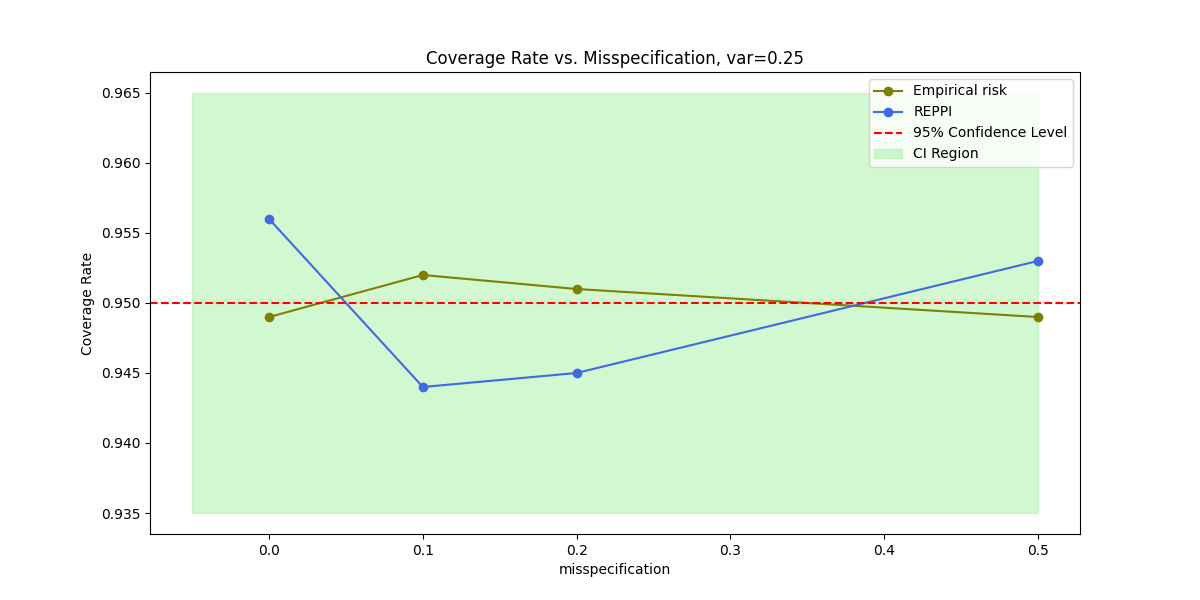}
        \label{fig:optimal coverage rate}
    }
    \subfigure[Interval width]{
        \includegraphics[width=0.55\textwidth]{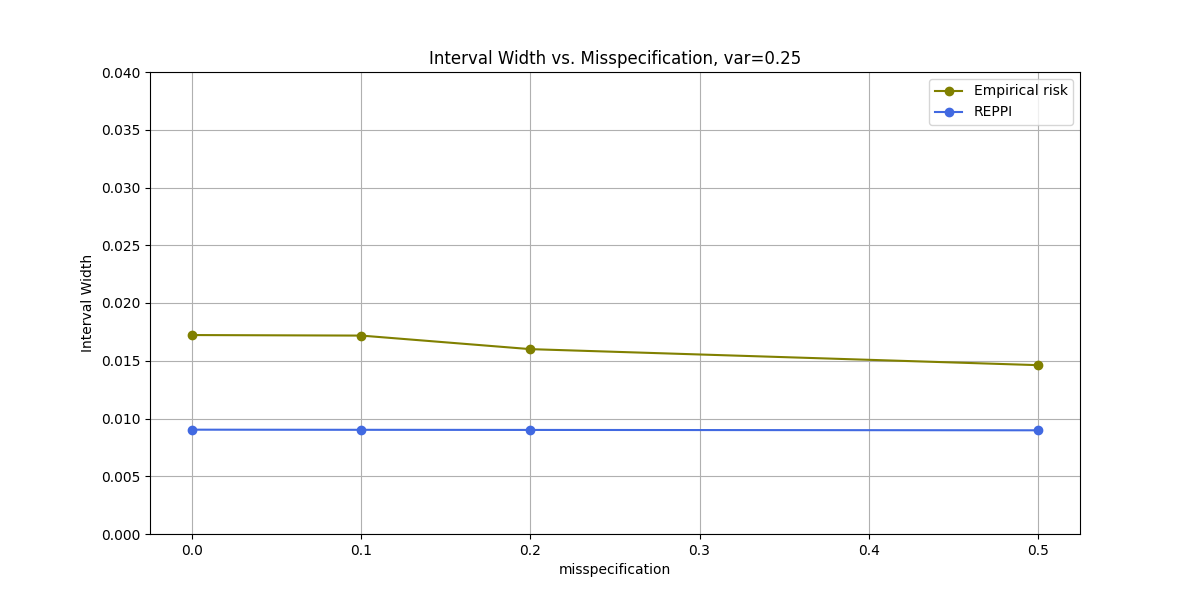}
        \label{fig:optimal interval width}
    }
    \caption{Inferential results for $\theta_{PO}^{\beta^*}$ under different misspecifications}
    \label{fig:optimal figures}
\end{figure}

Figure \ref{fig:optimal figures} presents the simulation results for the coverage rate and interval width of estimators derived from both the empirical risk function and the Recalibrated Inference method, across varying levels of model misspecification. As shown, both methods achieve the nominal coverage level of $\alpha = 0.95$ regardless of the degree of misspecification. However, the estimators obtained via Recalibrated Inference consistently exhibit narrower confidence intervals compared to those based on empirical risk, across all tested levels of misspecification, demonstrating the efficiency of the estimation procedure in our work.

\clearpage
\bibliographystyle{plain}  
\bibliography{paper}  

\clearpage  
\appendix

\section{Theoretical proofs} 

\subsection{Stable equilibria}

\subsubsection{Proof of Proposition \ref{prop:existence and convergence}}

\begin{proof}[Proof of Proposition \ref{prop:existence and convergence}]
We first prove that the solution map $\mathrm{sol}(\theta)$ is $C$-Lipschitz in $\theta$. Denote the function $f_i(Z) = \langle G_i(y,Z^i),v^i\rangle$ for the player $i$ where the vector $v^i$ satisfies $\|v^i\| \leq 1$, we can prove that it is $\beta_i$-Lipschitz:
\begin{equation*}
\begin{split}
    \|f_i(Z) - f_i(Z') \| &= \|\langle G_i(y,Z^i),v^i\rangle - \langle G_i(y,Z^{'i}),v^i\rangle \| \\
    & = \|\langle G_i(y,Z^i) - G_i(y,Z^{'i}),v^i\rangle \| \\
    & \leq \|G_i(y,Z^i) - G_i(y,Z^{'i})\| \|v^i\| \\
    & \leq \beta_i \|Z^i - Z^{'i}\|,
\end{split}
\end{equation*}
where the first inequality is by the Cauchy-Schwarz Inequality. For any $\theta, \theta', y \in \Theta$, by the dual of the norm that $\|u\| = \sup_{\|v\| \leq 1}\langle u,v \rangle$ and the duality theorem of the optimal transport in Lemma \ref{lem:duality theorem}, we can prove the $\beta_i\epsilon_i$-Lipschitzness of $G_{i,\theta}(y)$ in $\theta$:
\begin{equation*}
\begin{split}
    \|G_{i,\theta}(y) - G_{i,\theta'}(y)\| & = \|\E_{Z^i \sim \D_i(\theta)}G_i(y,Z^i) - \E_{Z^i \sim \D_i(\theta')}G_i(y,Z^i)\| \\
    & = \beta_i \cdot \sup_{\|v^i\| \leq 1}\left\{\E_{Z^i \sim \D_i(\theta)}\frac{1}{\beta_i}\langle G_i(y,Z^i),v^i\rangle - \E_{Z^i \sim \D_i(\theta')}\frac{1}{\beta_i}\langle G_i(y,Z^i),v^i\rangle \right\} \\
    & = \beta_i \cdot W_1(\D_i(\theta),\D_i(\theta')) \\ 
    & \leq \beta_i \epsilon_i \cdot \|\theta - \theta'\|.
\end{split}
\end{equation*}
Therefore, we deduce the result that
$$
\|G_{\theta}(y) - G_{\theta'}(y)\|^2 = \sum_{k=1}^m \|G_{i,\theta}(y) - G_{i,\theta'}(y)\|^2 \leq \sum_{k=1}^m (\beta_i\epsilon_i)^2 \|\theta-\theta'\|^2.
$$
By the definition of $\mathrm{sol(\theta)}$ and $\mathrm{sol}(\theta')$ and the strong monotonicity of map $G_\theta(\cdot)$, we have the inequality:
\begin{equation*}
\begin{split}
    \alpha \|\mathrm{sol}(\theta) - \mathrm{sol}(\theta')\|^2 & \leq \langle G_\theta(\mathrm{sol}(\theta)) - G_{\theta}(\mathrm{sol}(\theta')), \mathrm{sol}(\theta) - \mathrm{sol}(\theta') \rangle \\
    & = \langle G_{\theta'}(\mathrm{sol}(\theta')) - G_{\theta}(\mathrm{sol}(\theta')), \mathrm{sol}(\theta) - \mathrm{sol}(\theta') \rangle \\
    & \leq \|G_{\theta'}(\mathrm{sol}(\theta')) - G_{\theta}(\mathrm{sol}(\theta'))\| \|\mathrm{sol}(\theta) - \mathrm{sol}(\theta')\| \\
    & \leq \sqrt{\sum_{k=1}^m (\beta_i\epsilon_i)^2} \|\theta - \theta'\| \|\mathrm{sol}(\theta) - \mathrm{sol}(\theta')\|.
\end{split}
\end{equation*}
Therefore, we have the result that $\mathrm{sol}(\theta)$ is $C$-Lipschitz in $\theta$:
\begin{equation}
\label{equ:solution map lipschitz}
    \|\mathrm{sol}(\theta) - \mathrm{sol}(\theta')\| \leq \sqrt{\sum_{k=1}^m (\frac{\beta_i\epsilon_i}{\alpha})^2}\|\theta - \theta'\|.
\end{equation}
By the Banach fixed-point theorem, there exists a unique fixed point satisfying $\mathrm{sol}(\theta) = \theta$, which corresponds to the notion of stability as we have defined.

Then we prove the convergence of our iterates $\theta_t$ by our update algorithm. Note that $\theta_t = \mathrm{sol}(\theta_{t-1})$ and $\theta_{PS} = \mathrm{sol}(\theta_{PS})$ by the definition of our algorithm and stability, so with the result of (\ref{equ:solution map lipschitz}), we have
$$
\|\theta_t - \theta_{PS}\| = \|\mathrm{sol}(\theta_{t-1}) - \mathrm{sol}(\theta_{PS})\| \leq C\|\theta_{t-1} - \theta_{PS}\| \leq C^{t} \|\theta_0 - \theta_{PS}\|.
$$
Therefore, if $t \geq (1-C)^{-1} \log(\frac{\|\theta_0 - \theta_{PS}\|}{\delta})$, we have $\|\theta_t - \theta_{PS}\| \leq \delta$. The iterates $\theta_t$ converge to a unique equilibrium point at a linear rate.
\end{proof}

\begin{lemma}[Kantorovich-Rubinstein Duality Theorem]
\label{lem:duality theorem}
Let $P$ and $Q$ be probability measures on $\mathbb{R}^d$ with finite first moments. The 1-Wasserstein distance between them is given by the duality:
\[
W_1(P, Q) = \sup_{f \in \mathrm{Lip}_1} \left\{ \mathbb{E}_{X \sim P}[f(X)] - \mathbb{E}_{Y \sim Q}[f(Y)] \right\},
\]
where the supremum is taken over all 1-Lipschitz functions $f: \mathbb{R}^d \to \mathbb{R}$.
\end{lemma}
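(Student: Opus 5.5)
The plan is to prove the duality by identifying the supremum in the statement with the primal (coupling) form of the Wasserstein-1 distance, $\inf_{\pi\in\Pi(P,Q)}\int\|x-y\|\,d\pi(x,y)$, where $\Pi(P,Q)$ is the set of couplings of $P$ and $Q$. (In the paper $W_1$ is introduced through the dual formula itself, so the content of Lemma \ref{lem:duality theorem} is exactly this equality.) I would establish the two inequalities separately, using the finite first moments of $P$ and $Q$ throughout for integrability.

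\emph{Weak duality (easy direction).} Fix a $1$-Lipschitz $f$ and any coupling $\pi$. Then $\E_P f-\E_Q f=\int (f(x)-f(y))\,d\pi\le \int\|x-y\|\,d\pi$. Finite first moments make every term integrable, because $|f(x)|\le |f(0)|+\|x\|$. Taking the supremum over $f$ and the infimum over $\pi$ gives $\sup\le\inf$.

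\emph{Strong duality (main obstacle).} I would invoke the general Kantorovich duality for the continuous cost $c(x,y)=\|x-y\|$:
\[\inf_{\pi}\int c\,d\pi=\sup\Big\{\int\varphi\,dP+\int\psi\,dQ:\ \varphi(x)+\psi(y)\le c(x,y)\Big\}.\]
This is the step I expect to be hardest if it must be proved from scratch. My first route would be to prove it for finitely supported $P$ and $Q$, where it is plain linear-programming duality for the transport LP. Next I would pass to compactly supported measures by weak approximation, using tightness of couplings and lower semicontinuity of $\pi\mapsto\int c\,d\pi$. Finally I would reach general measures with finite first moments by truncation, with the moment condition controlling the tails. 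An alternative route is a Fenchel--Rockafellar or minimax argument on $C_b$.

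\emph{Reduction to Lipschitz potentials.} Given an admissible pair $(\varphi,\psi)$, I would replace $\psi$ by the $c$-transform $g(y)=\inf_x\{\|x-y\|-\varphi(x)\}$. By admissibility, $g\ge\psi$. Moreover, $g$ is $1$-Lipschitz, being an infimum of $1$-Lipschitz functions, and it is finite because it is bounded below by $\psi$. Next, $g^c(x)=\inf_y\{\|x-y\|-g(y)\}=-g(x)$: choosing $y=x$ gives $\le$, and the Lipschitz property gives $\ge$. Also $\varphi\le g^c$. Hence
\[\int\varphi\,dP+\int\psi\,dQ\le\int(-g)\,dP+\int g\,dQ=\E_P f-\E_Q f\quad\text{with } f=-g\in\mathrm{Lip}_1 .\]
Taking the supremum over admissible pairs shows that the Kantorovich dual value is at most $\sup_{f\in\mathrm{Lip}_1}\{\E_P f-\E_Q f\}$. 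Combined with strong duality and weak duality, this yields the claimed equality.
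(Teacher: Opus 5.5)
Your proof is correct, but there is no proof in the paper to compare it with. The paper states this lemma in the appendix as a classical fact and uses it as a black box in the proof of Proposition~\ref{prop:existence and convergence}. Your argument is the standard textbook derivation. You first invoke Kantorovich duality for the cost $c(x,y)=\|x-y\|$. You then observe that, for a metric cost, the $c$-transform $g$ of any admissible potential is $1$-Lipschitz and satisfies $g^c=-g$. Hence every admissible pair can be dominated by a pair $(-f,f)$ with $f\in\mathrm{Lip}_1$.

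Citing loses nothing for the paper's purposes, because every place the paper actually uses $W_1$ needs only your easy (weak-duality) direction:
\begin{itemize}
\item In Proposition~\ref{prop:existence and convergence}, it needs the bound $\E_{\D_i(\theta)}f-\E_{\D_i(\theta')}f\le W_1(\D_i(\theta),\D_i(\theta'))$ for the test functions $z\mapsto\beta_i^{-1}\langle G_i(y,z),v\rangle$.
\item In the experimental appendix, it needs to transfer the explicit-coupling bound used to verify $\epsilon$-sensitivity of the Gaussian family to the dual form of $W_1$, which the Preliminaries take as the definition.
\end{itemize}
What your proof adds is the full equality. It also shows where finite first moments enter, namely integrability of $f$ and of $g$ via $|f(x)|\le|f(0)|+\|x\|$. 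Finally, it makes explicit that the lemma is exactly what reconciles the paper's two definitions of $W_1$.

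One caution applies if you prove the general Kantorovich duality from scratch along your sketched route. Tightness of couplings and lower semicontinuity of $\pi\mapsto\int c\,d\pi$ control only the primal side of the limit. You also need the dual values of the approximating problems not to exceed the limiting dual value asymptotically, and your sketch does not supply that step. For this cost the simplest fix is to apply your $c$-transform reduction already at the finitely supported (LP) stage, and to discretize via quantization maps, $P_n=(T_n)_{\#}P$ (and similarly for $Q$). Then $|\E_{P_n} f-\E_P f|\le\E_P\|X-T_n(X)\|\to0$ uniformly over $f\in\mathrm{Lip}_1$, by finite first moment and dominated convergence. This gives the dual-side control in one step and makes both the compact-support stage and the general-cost theorem unnecessary.
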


\subsubsection{Proof of Theorem \ref{thm:CLT of theta, stable}}

\begin{lemma}
\label{lem:consistency of z-estimation} 
Suppose Assumption \ref{asm:existence and convergence} holds, then we have the consistency as follows:
$$
\hat{\theta}_{t+1} =\mathrm{\widehat{sol}}(\hat \theta_t)  \xrightarrow{p}  \tilde{\theta}_{t+1} \triangleq \mathrm{sol}(\hat \theta_t).
$$
\end{lemma}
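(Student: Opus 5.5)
The plan is to prove the lemma as a conditional consistency statement for a Z-estimator. We condition on $\hat\theta_t$ and then remove the conditioning. Given $\hat\theta_t$, the samples $Z_1,\dots,Z_N$ are i.i.d.\ from $\D(\hat\theta_t)$, and $\hat\theta_{t+1}$ is a zero of the empirical map
\[
\Psi_N(y)=\frac{1}{N}\sum_{k=1}^N G(y,Z_k).
\]
Its population counterpart is $\Psi(y)=G_{\hat\theta_t}(y)=\E_{Z\sim\D(\hat\theta_t)}G(y,Z)$, whose unique zero is $\tilde\theta_{t+1}=\mathrm{sol}(\hat\theta_t)$. Uniqueness comes from the $\alpha$-strong monotonicity in Assumption \ref{asm:existence and convergence}. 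We verify the two ingredients of the classical consistency theorem for Z-estimators (van der Vaart, Theorem 5.9): identifiability and (local) uniform convergence.

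\emph{Identifiability.} This follows from strong monotonicity. For any $y$,
\[
\|\Psi(y)\|\,\|y-\tilde\theta_{t+1}\|\ge\langle\Psi(y)-\Psi(\tilde\theta_{t+1}),\,y-\tilde\theta_{t+1}\rangle\ge\alpha\|y-\tilde\theta_{t+1}\|^2 .
\]
Hence $\|\Psi(y)\|\ge\alpha\|y-\tilde\theta_{t+1}\|$, so $\inf_{\|y-\tilde\theta_{t+1}\|\ge\delta}\|\Psi(y)\|\ge\alpha\delta>0$.

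\emph{Uniform convergence.} Rather than prove a global uniform law, we exploit monotonicity of the empirical map itself. For each fixed $Z$, $y\mapsto G(y,Z)$ is strongly monotone: by player-wise $\beta_i$-smoothness and strong convexity of each $\ell_i$ in $\theta^i$, each $G_i$ is monotone in its own block, and together with the joint structure this gives monotonicity of the sample average. If only the averaged monotonicity in Assumption \ref{asm:existence and convergence} is available, we replace this with a convexity-type argument on a compact ball around $\tilde\theta_{t+1}$. The argument itself is a sphere comparison. Fix $\delta>0$ and consider the sphere $S_\delta=\{y:\|y-\tilde\theta_{t+1}\|=\delta\}$. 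By the Lipschitz bound in $\theta^i$, $\Psi_N$ is equi-Lipschitz on the closed ball, and the pointwise LLN holds since $\E\|G\|^2<\infty$. A finite $\delta'$-net on the ball therefore gives $\sup_{\|y-\tilde\theta_{t+1}\|\le\delta}\|\Psi_N(y)-\Psi(y)\|\xrightarrow{p}0$. On $S_\delta$ we then have, with probability tending to one,
\[
\langle\Psi_N(y),\,y-\tilde\theta_{t+1}\rangle\ge\alpha\delta^2/2 .
\]
Monotonicity of $\Psi_N$ then forbids a zero outside the ball: if $\hat y$ lies outside, take $y$ on the segment meeting $S_\delta$ and compare inner products. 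Consequently $\|\hat\theta_{t+1}-\tilde\theta_{t+1}\|\le\delta$ with probability tending to one.

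\emph{Removing the conditioning and the main obstacle.} The main obstacle is that all of the above holds conditionally on $\hat\theta_t$, and the target $\tilde\theta_{t+1}$ is itself random. To handle this, observe that every constant used ($\alpha$, $\beta_i$, the moment bound $H$, the net size) is uniform in the conditioning value $\hat\theta_t\in\Theta$. The conditional probability $P(\|\hat\theta_{t+1}-\tilde\theta_{t+1}\|>\delta\mid\hat\theta_t)$ is therefore bounded by a deterministic sequence tending to zero. Taking expectations and applying dominated convergence gives unconditional convergence in probability. If uniformity of the second moment over $\theta$ fails, we instead restrict to a neighborhood of $\theta_t$: there we use $\hat\theta_t\to\theta_t$ (induction on $t$) and the $C$-Lipschitz continuity of $\mathrm{sol}$ from (\ref{equ:solution map lipschitz}), and argue on the event that $\hat\theta_t$ lies in that neighborhood.
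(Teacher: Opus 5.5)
Your route is the paper's route. You get identifiability of $\tilde\theta_{t+1}$ from the $\alpha$-strong monotonicity of $G_{\hat\theta_t}$, prove a uniform LLN only on a ball around $\tilde\theta_{t+1}$, and then use monotonicity of the \emph{empirical} map $\Psi_N$ (the paper's $\widehat M_t$) to exclude zeros outside the ball. Your inner-product sphere comparison is cleaner than the paper's norm-ratio argument, since it needs only plain monotonicity of $\Psi_N$. You also confront the randomness of $\tilde\theta_{t+1}=\mathrm{sol}(\hat\theta_t)$, which the paper leaves implicit.

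The gap is the step everything rests on. Assumption~\ref{asm:existence and convergence} makes only the averaged map $y\mapsto\E_{Z\sim\D(\theta)}G(y,Z)$ strongly monotone. It says nothing about $y\mapsto G(y,Z)$ for fixed $Z$, so per-sample strong convexity of $\ell_i$ in $\theta^i$ is not available. Even if it were, your inference would fail, because blockwise strong convexity does not make the joint pseudo-gradient monotone. Take $m=2$, $d_1=d_2=1$, $\ell_1=\tfrac12(\theta^1)^2+2\theta^1\theta^2$ and $\ell_2=\tfrac12(\theta^2)^2+2\theta^1\theta^2$. Each loss is $1$-strongly convex and $1$-smooth in its own coordinate, yet $G(\theta)=(\theta^1+2\theta^2,\,\theta^2+2\theta^1)$ has symmetric Jacobian with eigenvalue $-1$.

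Your fallback ``convexity-type argument on a ball'' does not help. For $m\ge 2$ it has no content, because $\Psi_N$ is in general not the gradient of any scalar objective; for $m=1$ it again needs per-sample convexity. Without some such property, local uniform convergence cannot rule out zeros of $\Psi_N$ far from $\tilde\theta_{t+1}$. The van der Vaart Theorem 5.9 you cite requires uniform convergence over the whole parameter set, not just a ball.

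The paper's proof makes exactly the same leap: it invokes ``strong monotonicity of $G_\theta(\cdot)$'' for $\widehat M_t$. Closing the gap therefore requires an extra hypothesis, either of the following:
\begin{itemize}
\item per-sample monotonicity of $y\mapsto G(y,Z)$, which is automatic for $m=1$ under the pointwise strong convexity in Assumption~\ref{asm:single convergence stable};
\item global control such as compact $\Theta$ with an integrable Lipschitz-in-$y$ envelope, giving $\sup_{\Theta}\|\Psi_N-\Psi\|\xrightarrow{p}0$ so that Theorem 5.9 applies directly.
\end{itemize}
Under the per-sample \emph{strong} monotonicity the paper actually uses, your uniform-convergence machinery is unnecessary. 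One gets $\alpha\|\hat\theta_{t+1}-\tilde\theta_{t+1}\|\le\|\Psi_N(\tilde\theta_{t+1})\|$, which vanishes by the LLN at the single point $\tilde\theta_{t+1}$.

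Two smaller points. First, Assumption~\ref{asm:existence and convergence}.3 controls $G_i$ only in its own block $\theta^i$. Lipschitzness of $\Psi_N$ in $y$ on the ball must therefore come from the local Lipschitz condition in Assumption~\ref{asm:CLT stable}, with random constant $\frac1N\sum_k L_U(Z_k)=O_P(1)$, as in the paper. Second, your de-conditioning step needs moment and envelope bounds that are uniform in $\hat\theta_t$, or at least continuous near $\theta_t$. Assumption~\ref{asm:CLT stable}.2 bounds second moments only at the deterministic iterates.
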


\begin{proof}[Proof of Lemma \ref{lem:consistency of z-estimation}]
Denote the maps
$\widehat{M}_{t}(\theta) = \frac{1}{N} \sum_{k = 1}^{N} G(\theta, Z_{k}),M_t(\theta) = \mathbb{E}_{ Z \sim  \mathcal{D}(\hat{\theta}_t)} G(\theta, Z)
$
where $Z_{k} \sim  \mathcal{D}(\hat{\theta}_t)$.
Since the intermediate points $\tilde \theta_{t+1}$ are interiors, by Kolmogorov's strong law of large numbers and the local Lipschitzness, for every $\epsilon >0$, we have a compact set $\Theta' = \{\theta: \|\theta - \tilde \theta_{t+1}\| \leq \epsilon\} \subseteq \Theta$. Therefore, we have the uniform convergence as follows:
\begin{equation*}
    \sup_{\theta \in \Theta'}\|\widehat{M}_{t}(\theta) - M_t(\theta)\| \xrightarrow{P} 0.
\end{equation*}
Since the map $G_\theta(y)$ is strongly monotone, the minimizer $\tilde \theta_{t+1}$ for $G_{\hat \theta_t}(y)$ is unique. Thus, let $\eta = \alpha\epsilon >0$, for every $\epsilon>0$ such that for every $\theta$ satisfies $\{\theta:\|\theta - \tilde \theta_{t+1}\| \geq \epsilon\}$, we have:
$$
\|M_t(\theta) - M_t(\tilde \theta_{t+1})\| \geq \alpha\|\theta-\tilde \theta_{t+1}\| \geq \eta.
$$
Denote the edge of the $\Theta'$ as $\partial\Theta' = \{\theta: \|\theta - \tilde \theta_{t+1}\| = \epsilon\}$.Therefore, the following inequality holds for $\theta \in \partial\Theta'$:
\begin{equation*}
\begin{split}
    & \qquad \inf_{\theta \in \partial\Theta'} \|\widehat{M}_{t}(\theta) - \widehat{M}_{t}(\tilde \theta_{t+1})\| \\
    & = \inf_{\theta \in \partial\Theta'} \|(\widehat{M}_{t}(\theta) - M_{t}(\theta)) + (M_{t}(\theta) - M_{t}(\tilde \theta_{t+1})) + (M_{t}(\tilde \theta_{t+1}) - \widehat{M}_{t}(\tilde \theta_{t+1}))\| \\
    & \geq \eta - 2\sup_{\theta \in \partial\Theta'} \|\widehat{M}_{t}(\theta) - M_{t}(\theta)\| \\
    & = \eta - o_p(1).
\end{split}
\end{equation*}
As for $\theta \in (\Theta')^c$, fix a point $\theta_1 = \tilde \theta_{t+1} + \frac{\theta - \tilde \theta_{t+1}}{\|\theta- \tilde \theta_{t+1}\|}\epsilon$ which is in the edge $\partial \Theta'$, so we have $\theta = \tilde \theta_{t+1} + \lambda(\theta_1 - \tilde \theta_{t+1})$ where $\lambda = \frac{\|\theta - \tilde \theta_{t+1}\|}{\epsilon} > 1$. By the strong monotonicity of $G_\theta(\cdot)$, we know that
\begin{equation*}
\begin{split}
    \langle \widehat{M}_{t}(\theta) - \widehat{M}_{t}(\tilde \theta_{t+1}), \theta -\tilde \theta_{t+1} \rangle &\geq \alpha \|\theta - \tilde \theta_{t+1}\|^2, \\
    \langle \widehat{M}_{t}(\theta_1) - \widehat{M}_{t}(\theta), \theta_1 - \theta \rangle &\geq \alpha \|\theta_1 - \theta\|^2.
\end{split}
\end{equation*}
We can simplify two inequalities as follows:
\begin{equation*}
\begin{split}
    \langle \widehat{M}_{t}(\theta) - \widehat{M}_{t}(\tilde \theta_{t+1}), \theta_1 -\tilde \theta_{t+1} \rangle &\geq \lambda \alpha \|\theta_1 - \tilde \theta_{t+1}\|^2, \\
    \langle \widehat{M}_{t}(\theta_1) - \widehat{M}_{t}(\theta), \theta_1 - \tilde \theta_{t+1} \rangle &\geq (1-\lambda) \alpha \|\theta_1 - \tilde \theta_{t+1}\|^2.
\end{split}
\end{equation*}
Add two inequalities together, and we get
\begin{equation*}
    \langle \widehat{M}_{t}(\theta_1) - \widehat{M}_{t}(\tilde \theta_{t+1}), \theta_1 -\tilde \theta_{t+1} \rangle \geq \alpha \|\theta_1 - \tilde \theta_{t+1}\|^2.
\end{equation*}
By the Cauchy-Schwarz inequality, we have inequalities based on the norm
\begin{equation}
\label{equ:consistency1}
\begin{split}
    \| \widehat{M}_{t}(\theta_1) - \widehat{M}_{t}(\tilde \theta_{t+1})\| &\geq \alpha \|\theta_1 - \tilde \theta_{t+1}\|,\\
    \| \widehat{M}_{t}(\theta) - \widehat{M}_{t}(\tilde \theta_{t+1})\| &\geq \lambda \alpha \|\theta_1 - \tilde \theta_{t+1}\|.
\end{split}
\end{equation}
Denote $\beta = \min_{1 \leq i \leq m} \beta_i$. By the Lipschitzness of the function $G_i$, we have
\begin{equation}
\label{equ:consistency2}
\begin{split}
    \|\widehat{M}_{t}(\theta_1) - \widehat{M}_{t}(\tilde \theta_{t+1})\| &= \|\frac{1}{N} \sum_{k = 1}^{N} G(\theta_1, Z_{k}) - \frac{1}{N} \sum_{k = 1}^{N} G(\tilde \theta_{t+1}, Z_{k})\| \\
    & \leq \frac{1}{N} \sum_{k = 1}^{N} \|G(\theta_1, Z_{k}) - G(\tilde \theta_{t+1}, Z_{k})\| \\
    & = \frac{1}{N} \sum_{k = 1}^{N} \sum_{i=1}^m \|G_i(\theta_1^i, \hat \theta_{t+1}^{-i}, Z_{k}^i) - G_i(\tilde \theta_{t+1}^i,\hat \theta_{t+1}^{-i}, Z_{k}^i)\|  \\
    & \leq \sum_{i=1}^m \beta_i \|\theta_1^i - \tilde \theta_{t+1}^i\|  \\
    & \leq \beta \sum_{i=1}^m \|\theta_1^i - \tilde \theta_{t+1}^i\|  \\
    & \leq \beta \|\theta_1 - \tilde \theta_{t+1}\|.
\end{split}
\end{equation}
Thus, we can derive the following inequality by (\ref{equ:consistency1}) and (\ref{equ:consistency2}):
$$
\|\widehat{M}_{t}(\theta) - \widehat{M}_{t}(\tilde \theta_{t+1})\| \geq \frac{\lambda \alpha}{\beta}\|\widehat{M}_{t}(\theta_1) - \widehat{M}_{t}(\tilde \theta_{t+1})\| \geq \frac{\lambda \alpha}{\beta}\eta - o_p(1).
$$
Therefore, there is no minimizer to $\widehat{M}_{t}(\theta)$ in the set $\{\theta:\|\theta - \tilde \theta_{t+1}\| \geq \epsilon\}$, so the consistency of the estimators holds for the iteration at time $t$ that
$$
\mathbb{P}(\|\hat \theta_{t+1} - \tilde \theta_{t+1}\| \geq \epsilon)= 0.
$$
\end{proof}

\begin{proof}[Proof of Theorem \ref{thm:CLT of theta, stable}]
We first prove the consistency of the estimator sequence $\{\hat \theta_t\}_{t=1}$ by induction.
For $t=0$, the initial point is fixed as $\hat{\theta}_0 = \theta_0$, so by Lemma \ref{lem:consistency of z-estimation}, we have
\begin{equation*}
        \hat{\theta}_{1} = \mathrm{\widehat{sol}}(\hat \theta_0) \xrightarrow{P}  \mathrm{sol}(\hat \theta_0) = \mathrm{sol}(\theta_0) = \theta_1.
\end{equation*}
Thus, the consistency for $\hat \theta_1$ holds.
Suppose that at iteration $t$, we have already proved that $\hat{\theta}_t \xrightarrow{P} \theta_t$, then for iteration $t+1$, by Lemma \ref{lem:consistency of z-estimation}, we have 
    \begin{equation*}
        \hat{\theta}_{t+1} = \mathrm{\widehat{sol}}(\hat \theta_t) \xrightarrow{P} \tilde \theta_{t+1} = \mathrm{sol}(\hat \theta_t).
    \end{equation*}
As we have proved in Proposition \ref{prop:existence and convergence}, we know that $\mathrm{sol}(\theta)$ is $C$-Lipschitz in $\theta$, which ensures its continuity with respect to $\theta$. By the Continuous Mapping Theorem, we have 
$$
\mathrm{sol}(\hat \theta_t) \xrightarrow{P} \mathrm{sol}(\theta_t).
$$
By the triangle inequality, we have the following inequality:
\begin{equation*}
\begin{split}
   \| \hat{\theta}_{t+1}  -  \theta_{t+1}\| &= \| \hat{\theta}_{t+1}  -  \mathrm{sol}(\theta_t)\| \\
   &= \| \hat{\theta}_{t+1}  -  \mathrm{sol}(\hat \theta_t) + \mathrm{sol}(\hat \theta_t) -\mathrm{sol}(\theta_t) \| \\
    & \leq \| \hat{\theta}_{t+1}  -  \mathrm{sol}(\hat \theta_t) \|+ \| \mathrm{sol}(\hat \theta_t) - \mathrm{sol}(\theta_t)\|.
\end{split}
\end{equation*}
Then take the probability on both sides:
\begin{equation*}
\begin{split}
     \mathbb{P}(\| \hat{\theta}_{t+1}  -  \theta_{t+1}\| \geq \epsilon) &= \mathbb{P}(\| \hat{\theta}_{t+1}  -  \mathrm{sol}(\theta_t)\| \geq \epsilon) \\
     &= \mathbb{P}(\|\hat{\theta}_{t+1}  -  \mathrm{sol}(\hat \theta_t) + \mathrm{sol}(\hat \theta_t) -\mathrm{sol}(\theta_t)\| \geq \epsilon) \\
    & \leq \mathbb{P}(\| \hat{\theta}_{t+1}  -  \mathrm{sol}(\hat \theta_t)\| \geq \frac{\epsilon}{2}) + \mathbb{P}(\| \mathrm{sol}(\hat \theta_t) -\mathrm{sol}(\theta_t)\| \geq \frac{\epsilon}{2}).
\end{split}
\end{equation*}
Taking the limit on both sides, we can obtain the consistency of the estimator sequence $\{\hat \theta_t\}_{t = 1}$:
\begin{equation*}
    \begin{split}
        & \quad\lim_{N \rightarrow \infty}\mathbb{P}(\| \hat{\theta}_{t+1}  -  \theta_{t+1}\| \geq \epsilon)  \\
         & \leq \lim_{N \rightarrow \infty}\mathbb{P}(\| \hat{\theta}_{t+1}  -  \mathrm{sol}(\hat \theta_t)\| \geq \frac{\epsilon}{2}) + \lim_{N \rightarrow \infty} \mathbb{P}(\| \mathrm{sol}(\hat \theta_t) -\mathrm{sol}(\theta_t)\| \geq \frac{\epsilon}{2})= 0.
    \end{split}
\end{equation*}

Now we turn to prove the asymptotic normality between $\hat{\theta}_t$ and $\theta_t$ by induction. As $\sqrt{N}(\hat{\theta}_{t} - \theta_{t}) = \sqrt{N}(\hat{\theta}_{t} - \tilde{\theta}_{t}) + \sqrt{N}(\tilde{\theta}_{t} - \theta_{t})$, we separate our proofs into two parts. 

For $t=1$, we choose an initial parameter $\theta_0 \triangleq \hat{\theta}_0$, so $\tilde \theta_1 = \theta_1$ holds, and $\sqrt{N}(\hat{\theta}_{1} - \tilde{\theta}_{1}) = \sqrt{N}(\hat{\theta}_{1} - \theta_{1})$.
By the Taylor expansion, we obtain the following equations at the first iteration :
$$
0 = \sum_{k=1}^N G(\hat \theta_1,Z_{k}) = \sum_{k=1}^N G(\theta_1, Z_{k}) + \sum_{k=1}^N \frac{\partial G(\theta_1, Z_k)}{\partial \theta^\top} (\hat \theta_1 - \theta_1) ,
$$
where $Z_{k} \sim \D(\theta_0)$. By the Law of Large Numbers, we have the following results that
$$
\frac{1}{N}\sum_{k=1}^N G(\theta_1, Z_{k}) \xrightarrow{P} 0,
$$
$$
\frac{1}{N} \sum_{k=1}^N \frac{\partial G(\theta_1, Z_k)}{\partial \theta^\top} \xrightarrow{P} V_{\theta_0}(\tilde \theta_1) = V_{\theta_0}(\theta_1)=\E_{Z \sim \D(\theta_0)}\left[\frac{\partial G(\theta_1, Z)}{\partial \theta^\top}\right].
$$
Therefore, by the central limit theorem, we have
\begin{equation*}
\begin{split}
    \sqrt{N}(\hat \theta_1 - \theta_1) &= -\left(\frac{1}{N} \sum_{k=1}^N \frac{\partial G(\theta_1, Z_k)}{\partial \theta^\top} \right)^{-1}\left(\sqrt{N} \cdot \frac{1}{N}\sum_{k=1}^N G(\theta_1, Z_{k})\right) \\
    &= -V_{\theta_0}(\theta_1)^{-1}\left(\sqrt{N} \cdot \frac{1}{N}\sum_{k=1}^N G(\theta_1, Z_{k})\right) + O_P\left(\frac{1}{\sqrt{N}}\right) \\
    &\xrightarrow{d} N(0, \Sigma_1),
\end{split}
\end{equation*}
where the covariance matrix is
$$
\Sigma_1 = V_{\theta_0}(\theta_1)^{-1} \E_{Z \sim \D(\theta_0)}\left(G(\theta_1, Z)G(\theta_1, Z)^\top \right) V_{\theta_0}(\theta_1)^{-1}.
$$

Suppose that for iteration $t-1$, we have already proved that $\sqrt{N}(\hat{\theta}_{t-1} - \theta_{t-1}) \xrightarrow{d} N(0, \Sigma_{t-1})$, then at the iteration $t$, denote 
$$
\mathbb{G}_{t}G(\theta,Z) = \sqrt{N}(\widehat{M}_{t}(\theta) - M_t(\theta)) = \sqrt{N}\left(\frac{1}{N} \sum_{k = 1}^{N} G(\theta, Z_{k}) - \mathbb{E}_{Z \sim  \mathcal{D}(\hat{\theta}_{t-1})} G(\theta, Z)\right),
$$
where $Z_{k} \sim  \mathcal{D}(\hat{\theta}_{t-1})$. By the \cite[Theorem 5.12]{van2000asymptotic}, we have the following convergence based on the consistency of $\hat \theta_t$ and the local Lipschitzness.
\begin{equation}
\label{equ:G consistency}
\mathbb{G}_tG(\hat \theta_t,Z) -\mathbb{G}_t G(\tilde \theta_t,Z) \xrightarrow{P} 0.
\end{equation}
Since $\hat \theta_{t}$ is the zero of $\widehat{M}_{t}(\theta)$ and $\tilde \theta_{t}$ is the zero of $M_t(\theta)$, we can rewrite the $\mathbb{G}_{t}G(\hat \theta_t,Z)$ as follows
$$
\mathbb{G}_{t}G(\hat \theta_t,Z) = \sqrt{N}(\widehat{M}_{t}(\hat \theta_t) - M_t(\hat \theta_t)) = \sqrt{N}(M_t(\tilde \theta_t) - M_t(\hat \theta_t)).
$$
By the first-term Taylor expansion, we have
$$
M_t(\hat \theta_t) = M_t(\tilde \theta_t) + V_t(\tilde \theta_t)(\hat \theta_t - \tilde \theta_t) + o(\|\hat \theta_t - \tilde \theta_t\|),
$$
where $V_{\hat{\theta}_{t-1}}(\tilde \theta_t) = \mathbb{E}_{Z \sim  \mathcal{D}(\hat{\theta}_{t-1})} \left[\frac{\partial G(\tilde \theta_t,Z)}{\partial \theta^\top}\right]$. Thus, by the equation (\ref{equ:G consistency}), we find that
$$
\mathbb{G}_{t}G(\tilde \theta_t,Z) + o_P(1) = \mathbb{G}_{t}G(\hat \theta_t,Z) = -\sqrt{N}V_{\hat{\theta}_{t-1}}(\tilde \theta_t)(\hat \theta_t - \tilde \theta_t) + \sqrt{N}o_P(\|\hat \theta_t - \tilde \theta_t\|).
$$
From the equality above, we have the inequality of its norm expression
\begin{equation*}
\begin{split}
    \sqrt{N}\|V_{\hat{\theta}_{t-1}}(\tilde \theta_t)(\hat \theta_t - \tilde \theta_t)\| &\leq \|\mathbb{G}_{t}G(\tilde \theta_t,Z)\| + o_P(1) + o_P(\sqrt{N}\|\hat \theta_t - \tilde \theta_t\|) \\
    & = O_P(1)+ o_P(\sqrt{N}\|\hat \theta_t - \tilde \theta_t\|).
\end{split}
\end{equation*}
Since $V_{\hat{\theta}_{t-1}}(\tilde \theta_t)$ is positive definite, it is invertible, so we have the following inequality
$$
\sqrt{N}\|\hat \theta_t - \tilde \theta_t\| \leq \|V_{\hat{\theta}_{t-1}}(\tilde \theta_t)^{-1}\| \sqrt{N}\|V_{\hat{\theta}_{t-1}}(\tilde \theta_t)(\hat \theta_t - \tilde \theta_t)\| = O_P(1) + o_P(\sqrt{N}\|\hat \theta_t - \tilde \theta_t\|).
$$
We can obtain the result
\begin{equation*}
\begin{split}
    \sqrt{N}(\hat \theta_t - \tilde \theta_t) &= - V_{\hat{\theta}_{t-1}}(\tilde \theta_t)^{-1}\mathbb{G}_{t}G(\tilde \theta_t,Z) + o_P(1) \\
    & = - V_{\hat{\theta}_{t-1}}(\tilde \theta_t)^{-1}\left(\frac{1}{\sqrt{N}} \sum_{k = 1}^{N} G(\tilde \theta_t, Z_{k})\right)+ o_P(1). 
\end{split}
\end{equation*}
By the central limit theorem, we obtain the conditional asymptotic normality based on the previous estimation:
$$
\sqrt{N}(\hat \theta_t - \tilde \theta_t) \mid \hat \theta_{t-1} \xrightarrow{d} N(0, \hat \Sigma),
$$
where the covariance matrix is
$$
\hat \Sigma = V_{\hat{\theta}_{t-1}}(\tilde \theta_t)^{-1} \E_{Z \sim \D(\hat \theta_{t-1})}\left(G(\tilde \theta_t, Z)G(\tilde \theta_t, Z)^\top\right) V_{\hat{\theta}_{t-1}}(\tilde \theta_t)^{-1}.
$$
Therefore, the conditional distribution for the estimator $\hat \theta_t - \theta_t$ at each $t$ is
$$
\sqrt{N}(\hat \theta_t - \theta_t)\mid \hat \theta_{t-1} \xrightarrow{d} N(\sqrt{N}(\tilde \theta_t -  \theta_t), \hat \Sigma) \triangleq N(\hat \mu, \hat \Sigma).
$$

Now we prove the distribution of $\sqrt{N}(\hat{\theta}_t - \theta_t)$ by characteristic function.
We denote $X_t = \sqrt{N}(\hat{\theta}_t - \theta_t)$, and the variance as
\begin{equation*}
    \Sigma = V_{\theta_{t-1}}(\theta_t)^{-1} \E_{Z \sim \D(\theta_{t-1})}\left(G(\theta_t, Z)G(\theta_t, Z)^\top\right) V_{\theta_{t-1}}(\theta_t)^{-1}.
\end{equation*}
The characteristic function of the condition distribution is:
$\phi_{X_{t} \mid X_{t-1}}(z) \xrightarrow{P} \exp\left\{iz^T\hat \mu - \frac{1}{2}z^T\hat \Sigma z\right\}$, and the distribution at $t-1$ can be described by characteristic function as:
\begin{equation*}
    \begin{split}
        \mathbb{P}(X_{t-1}) &= \frac{1}{(2\pi)^d} \int \phi_{X_{t-1}}(s)\cdot e^{-is^TX_{t-1}} ds \\
        &= \frac{1}{(2\pi)^d} \int \exp\left\{-\frac{1}{2}s^T\Sigma_{t-1}s-is^TX_{t-1}\right\} ds.
    \end{split}
\end{equation*}
Then we have the characteristic function
\begin{equation*}
    \begin{split}
        \phi_{X_t}(z) &= \mathbb{E}(e^{iz^TX_t}) = \mathbb{E}_{X_{t-1}}\left(\mathbb{E}(e^{iz^TX_t} \mid X_{t-1})\right) \\
        &= \frac{1}{(2\pi)^d}  \int \int \exp\left\{iz^T\hat \mu - \frac{1}{2}z^T\hat \Sigma z\right\} \exp\left\{-\frac{1}{2}s^T\Sigma_{t-1}s-is^TX_{t-1}\right\} ds dX_{t-1}. \\
    \end{split}
\end{equation*}
To simplify the formulation, we let $-\frac{1}{2}s^T\Sigma_{t-1}s-is^TX_{t-1} =  -\frac{1}{2}(s-A_1)^TM_1(s-A_1) + B_1$, and by comparing the terms, we have:
\begin{equation*}
    \begin{split}
        &M_1 = \Sigma_{t-1}, \\
        &A_1 = M_1^{-1}i X_{t-1} =  \Sigma_{t-1}^{-1}i X_{t-1},\\
        &B_1 = \frac{1}{2}A_1^TM_1A_1 = -\frac{1}{2}X_{t-1}^T\Sigma_{t-1}^{-1}X_{t-1}.
    \end{split}
\end{equation*}
Thus, the characteristic function can be rewritten as
\begin{equation*}
    \begin{split}
        \phi_{X_t}(z) &= \frac{1}{(2\pi)^d}  \int \int \exp\left\{iz^T\hat \mu - \frac{1}{2}z^T\hat \Sigma z\right\} \exp\left\{-\frac{1}{2}(s-A_1)^TM_1(s-A_1) + B_1\right\} ds dX_{t-1} \\
        &= \frac{\det|\Sigma_{t-1}|}{(2\pi)^{d/2}}  \int \exp\left\{iz^T\hat \mu - \frac{1}{2}z^T\hat \Sigma z  -\frac{1}{2}X_{t-1}^T\Sigma_{t-1}^{-1}X_{t-1}\right\}dX_{t-1}.
    \end{split}
\end{equation*}
Since we have
\begin{equation*}
        \left|\exp\left\{iz^T\hat \mu - \frac{1}{2}z^T\hat \Sigma z  -\frac{1}{2}X_{t-1}^T\Sigma_{t-1}^{-1}X_{t-1}\right\}\right| = \exp\left\{ - \frac{1}{2}z^T\hat \Sigma z  -\frac{1}{2}X_{t-1}^T\Sigma_{t-1}^{-1}X_{t-1}\right\}
\end{equation*}
and $z^T\hat \Sigma z  >0$ for all $z \in \mathcal{Z}$, and therefore $-\frac{1}{2}z^T\hat \Sigma z  <0$ for all $z \in \mathcal{Z}$, the exponential term is bounded:
\begin{equation*}
    \left|\exp\left\{iz^T\hat \mu - \frac{1}{2}z^T\hat \Sigma z  -\frac{1}{2}X_{t-1}^T\Sigma_{t-1}^{-1}X_{t-1}\right\}\right| \leq \left|\exp\left\{-\frac{1}{2}X_{t-1}^T\Sigma_{t-1}^{-1}X_{t-1}\right\}\right|.
\end{equation*}
Besides, by the (\ref{equ:solution map lipschitz}) in Proof of Proposition \ref{prop:existence and convergence}, $\mathrm{sol}(\theta)$ is $C$-Lipschitz in $\theta$. Denote $J_{sol}(\theta)$ as the Jacobian matrix of the map $\mathrm{sol}(\theta)$, so we have the following convergence by the Taylor expansion:
\begin{equation*}
        \sqrt{N}(\tilde{\theta}_{t} - \theta_{t}) = \sqrt{N}\left(\mathrm{sol}(\hat \theta_{t-1}) - \mathrm{sol}(\theta_{t-1})\right) \rightarrow J_{sol}(\theta_{t-1})X_{t-1},
\end{equation*}
as $N \rightarrow \infty$. Thus, by the control convergence theorem, we have:
\begin{equation*}
    \begin{split}
        \lim_{N\rightarrow \infty}\phi_{X_t}(z) &= \frac{\det|\Sigma_{t-1}|}{(2\pi)^{d/2}}  \int \lim_{N \rightarrow \infty}\exp\left\{iz^T\hat \mu - \frac{1}{2}z^T\hat \Sigma z  -\frac{1}{2}X_{t-1}^T\Sigma_{t-1}^{-1}X_{t-1}\right\}dX_{t-1}  \\
        &= \frac{\det|\Sigma_{t-1}|}{(2\pi)^{d/2}}  \int \exp\left\{iz^T J_{sol}(\theta_{t-1}) X_{t-1}- \frac{1}{2}z^T\Sigma z  -\frac{1}{2}X_{t-1}^T\Sigma_{t-1}^{-1}X_{t-1}\right\}dX_{t-1}.
    \end{split}
\end{equation*}
Similarly, we let $iz^T J_{sol}(\theta_{t-1}) X_{t-1} -\frac{1}{2}X_{t-1}^T\Sigma_{t-1}^{-1}X_{t-1} =  -\frac{1}{2}(X_{t-1}-A_2)^TM_2(X_{t-1}-A_2) + B_2$, and by comparing the terms, we have:
\begin{equation*}
    \begin{split}
        &M_2 = \Sigma_{t-1}^{-1}, \\
        &A_2 = i M_2^{-1}J_{sol}(\theta_{t-1})^T z =  i\Sigma_{t-1}J_{sol}(\theta_{t-1})^T z,\\
        &B_2 = \frac{1}{2}A_2^TM_2A_2 = -\frac{1}{2}z^TJ_{sol}(\theta_{t-1})\Sigma_{t-1}J_{sol}(\theta_{t-1})^Tz.
    \end{split}
\end{equation*}
Then the limit of the characteristic function is
\begin{equation*}
    \begin{split}
        \lim_{N \rightarrow \infty}\phi_{X_t}(z)  &= \frac{\det|\Sigma_{t-1}|}{(2\pi)^{d/2}}  \int \exp\left\{iz^T J_{sol}(\theta_{t-1})X_{t-1}- \frac{1}{2}z^T\Sigma z  -\frac{1}{2}X_{t-1}^T\Sigma_{t-1}^{-1}X_{t-1}\right\}dX_{t-1}  \\
        &= \frac{\det|\Sigma_{t-1}|}{(2\pi)^{d/2}}  \int \exp\left\{- \frac{1}{2}z^T\Sigma z -\frac{1}{2}(X_{t-1}-A_2)^TM_2(X_{t-1}-A_2) + B_2\right\}dX_{t-1}  \\
        &= \exp\left\{- \frac{1}{2}z^T(\Sigma +J_{sol}(\theta_{t-1})\Sigma_{t-1}J_{sol}(\theta_{t-1})^T)z \right\}, 
    \end{split}
\end{equation*}
which means the distribution function of $X_t = \sqrt{N}(\hat{\theta}_t - \theta_t)$ is a normal distribution with zero mean and variance $\Sigma_t = \Sigma +J_{sol}(\theta_{t-1})\Sigma_{t-1}J_{sol}(\theta_{t-1})^T$.
\end{proof}

\subsubsection{Proof of Theorem \ref{thm:consistenct of estimated cov, stable theta}}

\begin{proof}[Proof of Theorem \ref{thm:consistenct of estimated cov, stable theta}]
    The proof follows from the same process as in the Proof of Theorem \ref{thm:CLT of theta, stable}. Since $Z_k$ are i.i.d. from $\D(\hat \theta_{t-1})$ and $Z_j$ are i.i.d. from $\D_{\hat \beta}(\hat \theta_{t-1})$, the law of large numbers, together with the second step in establishing the marginal asymptotic distribution of $\hat \theta_t$, directly yields
    $$ 
    \widehat \Sigma_t^{\hat \beta} \xrightarrow{P} \Sigma_t^{\hat \beta}.
    $$
    Furthermore, if the distribution atlas $\mathcal{D}_{\mathcal{B}} = \{\mathcal{D}_{\beta} \}_{\beta \in \mathcal{B}}$ contains the true distribution map, which can be parameterized by some $\beta^*$, and the fitted $\hat \beta$ converge to $\beta^*$, then applying the same law of large numbers argument together with the second step of the marginal asymptotic analysis yields the consistency $\widehat \Sigma_t^{\hat \beta} \xrightarrow{P} \Sigma_t$.
\end{proof}

\subsubsection{Proof of Theorem \ref{thm:lower_stable}}

To derive the semiparametric lower bound for $\theta_t$, it is crucial to find the hardest parametric sub-model in $\mathscr{D}$. However, the observed data consists of samples from different sources, and different players may have distinct sample sizes in each round. To take the distinct sample sizes into account, motivated by the missing data literature, we introduce another independent index variable $R\in[t]\times[m]$ with $\Prob(R=(j,i))=q_{j,i}\approx\frac{N_j^i}{\sum_{j\in[t],i\in[m]}N_j^i}$ and $P_{W|R=(j,i)}=\D_i(\theta_{j-1})$. When the observed sample $Z_{j,k}^i$ is from $\D_i(\theta_j)$, we formulate the observed data $\{(Z_{j,k}^i,(j,i)):j\in[t],i\in[m],k\in[N_j^i]\}$ as $\sum_{j\in[t],i\in[m]}N_j^i$ i.i.d. samples from $P_{W,R}$. Then the parameter $\theta_t$ can be viewed as a functional $P_{W,R}$. Note that the observed data is not generated in an i.i.d. manner, since samples under $\theta_j$ are always generated after samples under $\theta_{j-1}$. However, the formulation $P_{W,R}$ helps identify the hardest parametric sub-model. It is worth mentioning that the introduction of $P_{W,R}$ is merely for motivating the hardest sub-model, and our proof of Theorem \ref{thm:lower_stable} doesn't rely on this data-generating process.

We consider the distribution space $\mathscr{P}$ as the set of distributions $\tilde P_{W,R}$ such that $\tilde P_{W|R=(j,i)}=\tilde\D_i(\tilde \theta_j)$ for some $\tilde\D_{[m]}$ that satisfies Assumptions \ref{asm:existence and convergence} and \ref{asm:CLT stable}, and $\tilde\theta_j$'s are defined based on $\tilde\D_{[m]}$,
\begin{equation}\label{eq:dist_space_stable}
\begin{aligned}
    \mathscr{P}=\big\{\tilde P_{W,R}:&P_R((j,i))=q_{j,i},\tilde P_{W|R=(j,i)}=\tilde\D_i(\tilde\theta_j),\text{ $\tilde\D_{[m]}$ satisfies Assumptions \ref{asm:existence and convergence} and \ref{asm:CLT stable}} \\  & \text{for some $\tilde\theta_i$ and $\tilde\alpha$, and $\tilde\theta_j$'s are defined recursively based on $\tilde\D_{[m]}$}\big\}.
\end{aligned}\end{equation}
The following lemma characterizes the efficient influence function (EIF) of $\theta_t$ for $P_{W,R}$ in the distribution space $\mathscr{P}$.
\begin{lemma}(\textbf{EIF})\label{lem:eif_stable}
    Under Assumption \ref{asm:compact_stable}, the EIF of $\theta_t$ at $P_{W,R}$ in the distribution space $\mathscr{P}$ is 
    \[\Psi_t(W,R)=-\sum_{j\in[t]}\bigg(\prod_{l=j}^{t-1}\nabla_\theta^\top\mathrm{sol}(\theta_l)\bigg)\bigg\{\E_{Z\sim\D(\theta_{j-1})}\nabla_\theta^\top G\big(\theta_j,Z\big)\bigg\}^{-1}\tilde G_j\big(\theta_j,W,R\big),\]
    where $\D(\theta_{j-1})=\prod_{i\in[m]}\D_i(\theta_{j-1})=\prod_{i\in[m]}P_{W|R=(j,i)}$ is the product distribution and 
    \[\tilde G_j(\theta,W,R)=\bigg(\frac{\bm{1}\{R=(j,1)\}}{q_{j,1}}G_1^\top(\theta,W),\ldots,\frac{\bm{1}\{R=(j,m)\}}{q_{j,m}}G_m^\top(\theta,W)\bigg)^\top
    \]
    is the concatenation of weighted gradients.
\end{lemma}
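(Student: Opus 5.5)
We will derive the EIF by computing the pathwise derivative of the functional $P_{W,R}\mapsto\theta_t$ along an arbitrary smooth parametric sub-model in $\mathscr{P}$, and then check that the resulting gradient lies in the tangent space.

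\textbf{Parametrization and tangent space.} Take a sub-model $P^u_{W,R}$ with $P_R$ fixed at $q_{j,i}$ and conditional laws $P^u_{W\mid R=(j,i)}$. Its score has the form $S(W,R)=\sum_{j,i}\bm 1\{R=(j,i)\}S_{j,i}(W)$, where $\E_{P_{W\mid R=(j,i)}}S_{j,i}=0$. Under Assumption \ref{asm:compact_stable}, for bounded $S_{j,i}$ the perturbations $dP^u_{W\mid R=(j,i)}=(1+uS_{j,i})\,dP_{W\mid R=(j,i)}$ stay in $\mathscr{P}$ for small $u$. This relies on compactness and the twice-differentiable losses, which keep Assumptions \ref{asm:existence and convergence} and \ref{asm:CLT stable} valid with slightly perturbed constants. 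The third condition $\theta_{t-1}\neq\theta_{PS}$ is used to treat the conditional laws attached to different rounds $j$ as separately perturbable. This is the subtle point: in $\mathscr{P}$ the law at round $j$ is $\tilde\D_i(\tilde\theta_{j-1})$, a single map evaluated at distinct points. Since the iterates $\theta_0,\ldots,\theta_{t-1}$ are distinct (they contract toward $\theta_{PS}$ without reaching it), one can perturb $\D_i$ locally near each $\theta_{j-1}$ independently, using smooth bump functions in $\theta$. Perturbing $\D_i$ at $\theta_{j-1}$ also shifts where later rounds evaluate the map, but that effect enters only through the known derivative of $\mathrm{sol}$. Hence the tangent space is the full set $\{\sum_{j,i}\bm 1\{R=(j,i)\}S_{j,i}(W):\E S_{j,i}=0\}$, i.e.\ the nonparametric one, and any gradient lying in it is the efficient one.

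\textbf{Pathwise derivative via recursion.} Write $\theta_j^u=\mathrm{sol}^u(\theta_{j-1}^u)$, defined by $\E_{\D^u(\theta_{j-1}^u)}G(\theta_j^u,Z)=0$, taken componentwise with $Z^i\sim\D^u_i$. Differentiate at $u=0$ using the implicit function theorem, which applies because the Hessian $V_{\theta_{j-1}}(\theta_j)$ is invertible by strong monotonicity and the iterates are interior so the normal cone vanishes. This gives
\[
\dot\theta_j=J_{\mathrm{sol}}(\theta_{j-1})\dot\theta_{j-1}-V_{\theta_{j-1}}(\theta_j)^{-1}\big(\E_{\D_i(\theta_{j-1})}[G_i(\theta_j,Z^i)S_{j,i}(Z^i)]\big)_{i\in[m]}.
\]
The first term comes from moving the evaluation point $\theta_{j-1}$. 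The second comes from the score of the round-$j$ conditional law. Since $\dot\theta_0=0$, unrolling gives $\dot\theta_t=-\sum_{j}\big(\prod_{l=j}^{t-1}J_{\mathrm{sol}}(\theta_l)\big)V_{\theta_{j-1}}(\theta_j)^{-1}(\cdots)$.

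\textbf{Identifying the gradient.} Next observe that
\[
\E_{P_{W,R}}\big[\tilde G_j(\theta_j,W,R)\,S(W,R)\big]_i=\E_{\D_i(\theta_{j-1})}[G_i(\theta_j,W)S_{j,i}(W)],
\]
because the $q_{j,i}^{-1}$ weight cancels $\Prob(R=(j,i))$. Hence $\dot\theta_t=\E[\Psi_t S]$ with $\Psi_t$ as stated.

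It remains to check that $\Psi_t$ has mean zero and lies in the tangent space. Conditional on $R=(j,i)$, its block is a fixed matrix times $G_i(\theta_j,W)/q_{j,i}$. This has conditional mean zero by the defining equation $\E_{\D_i(\theta_{j-1})}G_i(\theta_j,Z^i)=0$, and it is bounded by compactness, so it lies in the tangent space. Therefore $\Psi_t$ is the canonical gradient, i.e.\ the EIF.

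The main obstacle I expect is the first step: rigorously showing that the localized perturbations of the maps $\D_i$ around distinct $\theta_{j-1}$ remain in the admissible class, so that the sensitivity, monotonicity and compatibility constants survive, and that these perturbations generate the full tangent space. I would construct them as $(1+u\,\phi_j(\theta)S_{j,i}(z))$ density tilts with disjointly supported smooth bumps $\phi_j$, and bound the resulting $W_1$-Lipschitz constant using compactness of $\Zcal_i$.
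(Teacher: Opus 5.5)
Your outline follows the paper's proof: differentiate the estimating equations recursively along a sub-model, match the derivative to $\E[\Psi_t S]$ using the fact that $q_{j,i}^{-1}$ cancels $\Prob(R=(j,i))$, and realize $\Psi_t$ as a score by perturbing each $\D_i$ separately near the iterates, which are distinct because $\theta_{t-1}\neq\theta_{PS}$.

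The identification step, however, does not hold as you set it up. You take $S_{j,i}$ to be the score of the conditional law $P^u_{W\mid R=(j,i)}=\D_i^u(\theta^u_{j-1})$. That score already contains the motion of the evaluation point: $S_{j,i}=s_{j,i}+\nabla_\theta\log p_i(\theta_{j-1},\cdot)^\top\dot\theta_{j-1}$, where $s_{j,i}$ is the direct perturbation of the map at the fixed point $\theta_{j-1}$. Now $\theta_t$ is pinned down by $\E_{P_{W\mid R=(t,i)}}G_i(\theta_t,W)=0$ for $i\in[m]$, that is, by the round-$t$ conditional laws alone. Differentiating this gives
\[
\dot\theta_t=-V_{\theta_{t-1}}(\theta_t)^{-1}\big(\E_{\D_i(\theta_{t-1})}[G_i(\theta_t,Z^i)S_{t,i}(Z^i)]\big)_{i\in[m]},
\]
with no separate $J_{\mathrm{sol}}(\theta_{t-1})\dot\theta_{t-1}$ term. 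Your recursion therefore counts the motion of $\theta_{t-1}$ twice. Combine this with your claim that the tangent space is the full nonparametric one: uniqueness of the canonical gradient would then make the EIF only the $j=t$ summand $-V_{\theta_{t-1}}(\theta_t)^{-1}\tilde G_t(\theta_t,W,R)$, not $\Psi_t$. A quick algebra check: take $m=1$, $G(\theta,z)=\theta-z$, $\D(\theta)=N(\epsilon\theta,1)$, $t=2$, and the mean shift $\D^u(\theta)=N(\epsilon\theta+uh(\theta),1)$. Then $\E[\Psi_2S]=\dot\theta_2+\epsilon h(\theta_0)\neq\dot\theta_2$.

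The accumulated formula is right only when the score is the direct perturbation $s_{j,i}=\partial_u\log\{d\D_i^u(\theta_{j-1})/d\D_i(\theta_{j-1})\}\big|_{u=0}$ of the map at a fixed point. In that case your recursion is correct and $\dot\theta_t=\E[\Psi_t s]$ with $s=\sum_{j,i}\bm 1\{R=(j,i)\}s_{j,i}$. This is exactly what enters the LAN expansion in the proof of Theorem \ref{thm:lower_stable}. There the round-$j$ samples are drawn at the statistic $\hat\theta_{j-1}$, so the motion of $\theta^{(u)}_{j-1}$ never appears in the likelihood ratio.

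The paper's proof makes the same conflation: it calls $s_{t,i}$ the score of $\D_i^u(\theta_{t-1}^{(u)})$ and then adds $\nabla_\theta p\,\dot\theta_{t-1}$. So your route matches the paper's, gap included. To make it coherent, phrase the tangent set and the gradient identity in terms of direct perturbations of the maps $\D_i$, not scores of the $P_{W,R}$ conditionals.

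With that reading, your localized construction is preferable to the paper's. The paper extends $s_i(\cdot,Z^i)$ linearly over $\mathrm{span}\{\theta_0,\ldots,\theta_{t-1}\}$, which needs the iterates to be linearly independent, not merely distinct. It breaks, for instance, for the collinear iterates $\theta_j=\epsilon^j\theta_0$ of the simulation. Disjoint bumps need only distinctness. Do recentre or renormalize the tilt at each $\theta$ in a bump's support, since $S_{j,i}$ is centred only under $\D_i(\theta_{j-1})$; the paper's $C_i^u(\theta)$ plays that role.
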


\begin{proof}[Proof of Theorem \ref{thm:lower_stable}]
    Motivated by Lemma \ref{lem:eif_stable}, we choose the score functions $s_{j,i}(Z^i)$ as
    \[s_{j,i}(Z^i)=-\bigg(\prod_{l=j}^{t-1}\nabla_\theta^\top\mathrm{sol}(\theta_l)\bigg)\bigg\{\E_{Z\sim\D(\theta_{j-1})}\nabla_\theta^\top G(\theta_j,Z)\bigg\}^{-1}\tilde G_j\big(\theta_j,Z^i,(j,i)\big), \quad j\in[t],i\in[m],\]
    where $\tilde G$ is defined in Lemma \ref{lem:eif_stable} with $q_{j,i}=\frac{1/\mu_{t,j}^i}{\sum_{\tilde j\in[t],\tilde i\in[m]}1/\mu_{t,\tilde j}^{\tilde i}}$. It follows from the proof of Lemma \ref{lem:eif_stable} that there exists functions $s_i(\theta,Z^i)$ that satisfy $s_i(\theta_{j-1},Z^i)=s_{j,i}(Z^i)$.

    Define the sub-model $\{\D_i^u:u\in\R^d,\|u\|_2\le 1\}$ as
    \[\frac{d\D_i^u(\theta)}{d\D_i(\theta)}(Z^i)=\frac{K(\frac{1}{\sqrt{N_t}}u^\top s_i(\theta,Z^i))}{C_i^u(\theta)},\quad K(x)=\frac{2}{1+e^{-2x}},\quad C_i^u(\theta)=\E_{\D_i(\theta)}K\bigg(\frac{1}{\sqrt{N_t}}u^\top s_i(\theta,Z)\bigg).\]
    The proof of Lemma \ref{lem:eif_stable} ensures $\D^u_{[m]}$ are in the admissible space $\mathscr{D}$ for $N_t$ large enough. For any regular estimators $\{\hat\theta_{j-1}:j\in[t]\}$ defined in Definition \ref{def:regular_stable}, recall that $P_t^u$ is the joint distribution of all the data $\bm S_{[t]}$. Similar to the proof of \cite[Lemma 22]{cutler2024stochasticapproximationdecisiondependentdistributions}, we can show
    \[\log\frac{dP_t^u}{dP_t}(\bm S_{[t]})=\frac{1}{\sqrt{N_t}}\sum_{j\in[t],i\in[m],k\in[N_j^i]}u^\top s_i(\hat\theta_{j-1},Z_{j,k}^i)-\frac{1}{2}\sum_{j\in[t],i\in[m]}\frac{1}{\mu_{t,j}^i}u^\top\Cov_{\D_i(\theta_{j-1})}\big(s_i(\theta_{j-1},Z^i)\big)u+o_{P_t}(1),\]
    \[\frac{1}{\sqrt{N_t}}\sum_{j\in[t],i\in[m],k\in[N_j^i]}s_i(\hat\theta_{j-1},Z_{j,k}^i)\overset{P_t}{\rightsquigarrow}N(0,\Sigma_t),\]
    with the covariance matrix
    \begin{align*}
        \Sigma_t=&\sum_{j\in[t],i\in[m]}\frac{1}{\mu_{t,j}^i}\Cov_{\D_i(\theta_{j-1})}\big(s_i(\theta_{j-1},Z^i)\big)\\
        =&\sum_{j\in[t]}\bigg(\prod_{k=j}^{t-1}\nabla_\theta^\top\mathrm{sol}(\theta_l)\bigg)\bigg\{\E_{\D(\theta_{j-1})}\nabla_\theta^\top G\big(\theta_j,Z\big)\bigg\}^{-1}\mathrm{diag}\bigg\{\mu_{t,j}^i\Cov_{\D_i(\theta_{j-1})}\big(G_i(\theta_j,Z^i)\big):i\in[m]\bigg\}\\
        &\cdot\bigg\{\E_{\D(\theta_{j-1})}\nabla_\theta^\top G\big(\theta_j,Z\big)\bigg\}^{-\top}\bigg(\prod_{k=j}^{t-1}\nabla_\theta^\top\mathrm{sol}(\theta_l)\bigg)^\top.
    \end{align*}
    Under the regularity condition and the local asymptotically normality, we can apply the convolution theorem \citep[e.g.][Theorem 10.3]{li2019graduate} to conclude the results.
\end{proof}

\begin{lemma}\label{lem:interior_stable}
    Under assumption \ref{asm:compact_stable}, we assume $\{\theta_j^i:j\in[t]\}$ is in the interior of $\Theta_i$ for $i\in[m]$. For some parametric sub-model $P_{W,R}^u$ in $\mathscr{P}$ with $P_{W,R}^0=P_{W,R}$, we assume $P_{W|R=(j,i)}^u=\D_i^u(\theta_{j-1}^{(u)})$ and $P_{W,R}^u$ is differentiable in quadratic mean (DQM) for all $\|u\|_2\le\delta$ small enough. Then $\{\theta_j^{(u),i}:j\in[t]\}$ is also in the interior of $\Theta_i$ for $\|u\|_2$ small enough.
\end{lemma}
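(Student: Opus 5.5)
We argue by induction on $j\in[t]$, combining continuity of the perturbed iterates in $u$ with the openness of the interior. Because each $\theta_j^i$ lies in the interior of the compact set $\Theta_i$, there is a radius $r>0$ such that the closed ball of radius $r$ around $\theta_j^i$ is contained in $\Theta_i$ for every $j\in[t]$ and $i\in[m]$. It therefore suffices to show that $\theta_j^{(u)}\to\theta_j$ as $u\to0$ for every $j\in[t]$. The base case is immediate: $\theta_0^{(u)}=\theta_0$ by definition, since the initial point does not depend on the sub-model.

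\textbf{Inductive step.} Suppose $\theta_{j-1}^{(u)}\to\theta_{j-1}$. The perturbed iterate $\theta_j^{(u)}$ is the unique solution of the variational inequality
\[
0\in\E_{Z\sim\D^u(\theta_{j-1}^{(u)})}G(\theta,Z)+\mathcal{N}_\Theta(\theta).
\]
Uniqueness holds because $\D^u_{[m]}$ lies in the admissible class, so strong monotonicity with some $\tilde\alpha>0$ is available. We then use the stability estimate for strongly monotone VIs, exactly as in the proof of Proposition \ref{prop:existence and convergence}. For two strongly monotone operators $F$ and $\tilde F$ with solutions $x$ and $\tilde x$,
\[
\tilde\alpha\|x-\tilde x\|\le \|F(\tilde x)-\tilde F(\tilde x)\|.
\]
We take $F=G^u_{\theta_{j-1}^{(u)}}$ and $\tilde F=G_{\theta_{j-1}}$, so that $\tilde x=\theta_j$. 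This gives
\[
\tilde\alpha\|\theta_j^{(u)}-\theta_j\|\le \big\|\E_{\D^u(\theta_{j-1}^{(u)})}G(\theta_j,Z)-\E_{\D(\theta_{j-1}^{(u)})}G(\theta_j,Z)\big\|+\big\|G_{\theta_{j-1}^{(u)}}(\theta_j)-G_{\theta_{j-1}}(\theta_j)\big\|.
\]
The second term is at most $\sqrt{\sum_i\beta_i^2\epsilon_i^2}\,\|\theta_{j-1}^{(u)}-\theta_{j-1}\|$, by the Wasserstein argument in the proof of Proposition \ref{prop:existence and convergence}. It therefore vanishes by the induction hypothesis.

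\textbf{Main obstacle.} The first term requires the perturbed conditional law $\D_i^u(\theta)$ to converge to $\D_i(\theta)$, uniformly enough in $\theta$ near $\theta_{j-1}$, when tested against $G_i(\theta_j,\cdot)$. This is where DQM and compactness enter, and we plan to argue as follows.
\begin{enumerate}
\item DQM of $P^u_{W,R}$ implies Hellinger continuity: $H(P^u_{W,R},P_{W,R})=O(\|u\|)$.
\item Since $R$ has fixed marginal $q_{j,i}>0$, the same bound transfers to each conditional law $P^u_{W|R=(j,i)}=\D_i^u(\theta^{(u)}_{j-1})$, compared with $\D_i(\theta_{j-1})$.
\item By Assumption \ref{asm:compact_stable}, $G_i(\theta_j,\cdot)$ is continuous on a compact space and hence bounded. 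Bounding total variation by Hellinger distance then gives $\|\E_{\D_i^u(\theta_{j-1}^{(u)})}G_i-\E_{\D_i(\theta_{j-1})}G_i\|\to0$.
\end{enumerate}

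To exploit this, we will split the comparison differently from the display above. We compare $\D^u(\theta_{j-1}^{(u)})$ directly with $\D(\theta_{j-1})$ using these Hellinger/TV bounds, which handles both terms at once. This yields $\theta_j^{(u)}\to\theta_j$ and closes the induction.

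Finally, taking $\|u\|$ small enough that $\|\theta_j^{(u)}-\theta_j\|<r$ for all $j\in[t]$ places each $\theta_j^{(u),i}$ in the interior of $\Theta_i$. If a uniform $\tilde\alpha$ is not directly available, we will replace it by $\alpha/2$. This is justified because $\tilde\alpha$ can be taken close to $\alpha$ for small $u$, by the same TV continuity applied to the Jacobian $\nabla_\theta G$, which is continuous on a compact set.
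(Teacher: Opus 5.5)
Your proof is correct, but it takes a different route from the paper's.

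\textbf{The paper's route.} The paper works at the level of the players' objectives. By induction on the round, it shows that $(u,\theta)\mapsto\E_{\D_i^u(\theta_{j-1}^{(u)})}\ell_i(\theta,Z^i)$ is continuous. It splits the difference into a term controlled by the sensitivity of the perturbed maps (plus Lipschitzness of $\ell_i$ in $Z$) and a term controlled by a Cauchy--Schwarz/Hellinger bound from DQM. It then invokes an external continuity-of-equilibria result, \cite[Corollary 3.6]{feinstein2022continuity}, together with uniqueness, to get continuity of $u\mapsto\theta_j^{(u)}$ on a neighborhood of $0$.

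\textbf{Your route.} You use the quantitative strong-monotonicity estimate already contained in the proof of Proposition~\ref{prop:existence and convergence}. You also use the fact that, because $P_R$ is fixed with $q_{j,i}>0$, DQM of $P^u_{W,R}$ controls every conditional $P^u_{W\mid R=(j,i)}=\D_i^u(\theta_{j-1}^{(u)})$ against $\D_i(\theta_{j-1})$ in Hellinger distance, hence in total variation.

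\textbf{What each buys.} Your argument is self-contained and gives an explicit rate $\|\theta_j^{(u)}-\theta_j\|=O(\|u\|_2)$. It needs continuity only at $u=0$, which is all the conclusion requires. It uses neither the sensitivity of $\D^u_{[m]}$ nor, in the end, the induction hypothesis, since each $j$ is treated directly. It also avoids a delicate step in the paper's split. There, the DQM term compares $\D_i^{u_1}$ and $\D_i^{u_2}$ at a common point $\theta_{t-1}^{(u_2)}$, which DQM of $P^u_{W,R}$ does not directly control for $t\ge 2$. This is exactly the obstacle you flagged and sidestepped by comparing $\D^u(\theta_{j-1}^{(u)})$ with $\D(\theta_{j-1})$ directly. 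In exchange, the paper's approach yields continuity on a whole neighborhood of $0$ and needs no quantitative monotonicity bound.

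\textbf{A simplification.} The uniform-$\tilde\alpha$ issue can be removed rather than patched. Put the unperturbed operator $G_{\theta_{j-1}}$, with modulus $\alpha$, in the strongly monotone role. This gives $\alpha\|\theta_j^{(u)}-\theta_j\|\le\|G_{\theta_{j-1}}(\theta_j^{(u)})-G^u_{\theta_{j-1}^{(u)}}(\theta_j^{(u)})\|$. Then bound the right-hand side by a multiple of the total-variation distance times $\sup_{\Theta\times\Zcal}\|G\|$, which is finite under Assumption~\ref{asm:compact_stable}.
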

\begin{proof}[Proof of Lemma \ref{lem:interior_stable}]
    We prove by induction. 
    
    In the first round of the game, the objective function for the $i$th player satisfies that for $\|u_1\|_2\vee\|u_2\|_2\le\delta$,
    \begin{align*}
        &|\E_{\D_i^{u_1}(\theta_0)}\ell_i(\theta,Z^i)-\E_{\D_i^{u_2}(\theta_0)}\ell_i(\theta,Z^i)|\\
        =&\bigg|\int \bigg(p_i^{u_1}(\theta_0,Z^1)-p_i^{u_2}(\theta_0,Z^1)\bigg)\ell_i(\theta,Z^1)\bigg|\\
        \le&\bigg|\int\bigg(\sqrt{p_i^{u_1}(\theta_0,Z^1)}-\sqrt{p_i^{u_2}(\theta_0,Z^1)}\bigg)^2d\mu(Z^1)\int\bigg(\sqrt{p_i^{u_1}(\theta_0,Z^1)}+\sqrt{p_i^{u_2}(\theta_0,Z^1)}\bigg)^2\big(\ell_i(\theta,Z^1)\big)^2d\mu(Z^1)\bigg|^{\frac{1}{2}}\\
        \lesssim&\|u_1-u_2\|_2(1+o(1)),
    \end{align*}
    where the last inequality is due to the DQM of the sub-model and the boundedness of the continuous loss $\ell_i$ on the compact set $\Theta\times\Zcal_i$. Moreover, the boundedness and continuity of $\ell_i(\theta,Z^i)$ in $\theta$ together with the dominated convergence theorem imply that the objective function is also continuous in $\theta$. Therefore, the objective functions in the first round are continuous in $u$ and $\theta$. Then Corollary 3.6 in \cite{feinstein2022continuity} together with the uniqueness in Proposition \ref{prop:existence and convergence} imply that $\theta_1^{(u)}$ is continuous in $u$ for $\|u\|_2<\delta$. 

    For the $t$-th round, we assume $\theta_{t-1}^{(u)}$ is continuous in $u$ for $\|u\|_2<\delta$, then
    \begin{align*}
        &|\E_{\D_i^{u_1}(\theta_{t-1}^{(u_1)})}\ell_i(\theta,Z^i)-\E_{\D_i^{u_2}(\theta_{t-1}^{(u_2)})}\ell_i(\theta,Z^i)|\\
        \le&|\E_{\D_i^{u_1}(\theta_{t-1}^{(u_1)})}\ell_i(\theta,Z^i)-\E_{\D_i^{u_1}(\theta_{t-1}^{(u_2)})}\ell_i(\theta,Z^i)|+|\E_{\D_i^{u_1}(\theta_{t-1}^{(u_2)})}\ell_i(\theta,Z^i)-\E_{\D_i^{u_2}(\theta_{t-1}^{(u_2)})}\ell_i(\theta,Z^i)|\\
        \lesssim&\|\theta_{t-1}^{(u_1)}-\theta_{t-1}^{(u_2)}\|_2+\|u_1-u_2\|_2(1+o(1)),
    \end{align*}
    where the last inequality is due to 1) the sensitivity of $\D_i^u$ in Assumption \ref{asm:existence and convergence} due to the definition of $\mathscr{P}$ in \eqref{eq:dist_space_stable}, and 2) the Lipschitz property of the continuous loss $\ell_i$ in $Z$ on $\Theta\times\Zcal_i$. Therefore, the objective functions in the $t$th round are continuous. Similarly, we have $\theta_t^{(u)}$ is continuous in $u$.

    Since $\theta_t$ is in the interior of $\Theta$, the continuity of $\theta_t^{(u)}$ implies that $\theta_t^{(u)}$ is in the interior of $\Theta$ for $\|u\|$ small enough.
\end{proof}
\begin{proof}[Proof of Lemma \ref{lem:eif_stable}]
    The proof consists of two parts. Firstly, we show $\Psi_t$ is an influence function. Then, we prove $\Psi_t$ is in the tangent space of $\mathscr{P}$.

    We start by proving $\Psi_t$ is an influence function. For any parametric sub-models $P_{W,R}^u$ as described in Lemma \ref{lem:interior_stable}, since $P_R^u((j,i))=q_{j,i}$ is fixed, we know the score function at $u=0$ must have the form
    \begin{equation}\label{eq:score_stable}
        \frac{\partial}{\partial u}\log\frac{dP_{W,R}^u}{dP_{W,R}}(W,R)\bigg|_{u=0}=s(W,R)=\sum_{j\in[t],i\in[m]}\bm{1}(R=(j,i))s_{j,i}(W),
    \end{equation}
    with $s_{j,i}$ to be the score function of $P_{W|R=(j,i)}^u=\D_i^u(\theta_{j-1}^{(u)})$. Then it follows from the definition of $\theta_t^{(u)}$ and Lemma \ref{lem:interior_stable} that
    \[\E_{Z\sim\D^u(\theta_{t-1}^{(u)})}G(\theta_t^{(u)},Z)=0,\]
    where $\D^u(\theta_{t-1}^{(u)})=\prod_{i\in[m]}\D_i^u(\theta_{t-1}^{(u)})$ is the product measure of $Z=(Z^{1\top},\ldots,Z^{m\top})^\top$ for $Z^i\sim\D_i^u(\theta_{t-1}^{(u)})$. Taking the derivative on both sides implies that
    \begin{align*}
        \frac{\partial\theta_t^{(u)}}{\partial u^\top}\bigg|_{u=0}=&-\bigg\{\E_{\D(\theta_{t-1})}\nabla_\theta^\top G(\theta_t,Z)\bigg\}^{-1}\E_{\D(\theta_{t-1})}G(\theta_t,Z)\bigg\{\sum_{i\in[m]}s_{t,i}(Z^i)+\nabla_\theta^\top p(\theta_{t-1},Z)\frac{\partial\theta_{t-1}^{(u)}}{\partial u^\top}\bigg|_{u=0}\bigg\}\\
        =&-\sum_{j\in[m]}\bigg(\prod_{l=j}^{t-1}\nabla_\theta^\top\mathrm{sol}(\theta_l)\bigg)\bigg\{\E_{\D(\theta_{j-1})}\nabla_\theta^\top G(\theta_j,Z)\bigg\}^{-1}\E_{\D(\theta_{j-1})}G(\theta_j,Z)\sum_{i\in[m]}s_{j,i}^\top(Z^i)\\
        =&-\E_{P_{W,R}}\sum_{j\in[m]}\bigg(\prod_{l=j}^{t-1}\nabla_\theta^\top\mathrm{sol}(\theta_l)\bigg)\bigg\{\E_{\D(\theta_{j-1})}\nabla_\theta^\top G(\theta_j,Z)\bigg\}^{-1}\tilde G_j(\theta_j,W,R)s^\top(W,R).
    \end{align*}
    Therefore, $\Psi_t$ is an influence function. 
    
    Then we show elements of $\Psi_t$ are in the tangent space of $\mathscr{P}$ at $P_{W,R}$. Clearly $\Psi_t$ has the form of \eqref{eq:score_stable}. Define $\D_i^u$ as
    \[\frac{d\D_i^u(\theta)}{d\D_i(\theta)}(Z^i)=\frac{K(u^\top s_i(\theta,Z^i))}{C^u(\theta)},\quad K(x)=\frac{2}{1+e^{-2x}},\quad C_i^u(\theta)=\E_{\D_i(\theta)}K(u^\top s_i(\theta,Z^i)),\]
    for some $s_i(\theta,Z^i)$ satisfying
    \[s_i(\theta_{j-1},Z^i)=\Psi_t(Z^i,(j,i)).\]
    It suffices to show $\D_{[m]}^u$ satisfies Assumption \ref{asm:existence and convergence} and \ref{asm:CLT stable} for $\|u\|_2$ small enough. 

    Firstly, we show $\theta_0,\ldots,\theta_{t-1}$ are all different if $\theta_{t-1}\ne\theta_{PS}$. To see this, if $\theta_l=\theta_k$ for some $l<k<t$, then the value of $\theta_l$ appears infinitely often in the sequence $\{\theta_j:j\ge 0\}$. Since $\theta_j\rightarrow \theta_{PS}$ by Proposition \ref{prop:existence and convergence}, we know $\theta_l=\theta_{PS}$ and thus $\theta_k=\theta_{PS}$ for all $k\ge l$. This implies $\theta_{t-1}=\theta_{PS}$, contradicting the assumption $\theta_{t-1}\ne\theta_{PS}$. Therefore, $\theta_0,\ldots,\theta_{t-1}$ are all different.

    Then we construct the scores $s_i(\theta,Z^i)$. For $\theta=\sum_{j\in[t]}\lambda_j\theta_{j-1}$ in the linear span of $\{\theta_0,\ldots,\theta_{t-1}\}$, we set $s_i(\theta,Z^i)=\sum_{j\in[t]}\lambda_js_i(\theta_{j-1},Z^i)$, which is a linear function of $\theta$ inside the linear space. Then we have $s_i(\theta,Z^i)$ is differentiable in $\theta$ along any directions inside the linear space, and the gradients are spanned by $s_i(\theta_j,Z^i)$'s. Since $\ell_i$'s are twice continuous differentiable in $\theta$ on the compact set $\Theta\times\Zcal_i$, we know $s_i(\theta_{j-1},Z^i)$'s are bounded. Therefore, $s_i(\theta,Z^i)$ is Lipschitz. Finally, we can extend $s_i(\theta,Z^i)$ to $\Theta$ by defining $s_i(\theta,Z^i)=s_i(\tilde\theta,Z^i)$ with $\tilde\theta$ to be the linear projection of $\theta$ onto the linear space spanned by $\{\theta_0,\ldots,\theta_{t-1}\}$. Then $s_i(\theta,Z)$ is linear and Lipschitz differentiable in $\theta\in\Theta$.

    Then we verify Assumption \ref{asm:existence and convergence} and \ref{asm:CLT stable} respectively.

    \noindent \textbf{Assumption \ref{asm:existence and convergence}.1: $\tilde\epsilon_i$-sensitivity}
    
    Note that $\sup_{x\in\R}|\nabla K(x)|=1$, then
    \[|C_i^u(\theta)-1|=\big|\E_{\D_i(\theta)}K(u^\top s_i(\theta,Z^i))-1\big|\le\E_{\D_i(\theta)}|u^\top s_i(\theta,Z)|=O(\|u\|_2).\]
    Since $\nabla_i\ell(\theta,Z^i)$ is Lipschitz in $Z^i$ according to Assumption \ref{asm:existence and convergence}, we know $s_i(\theta,Z^i)$ is also Lipschitz in $Z^i$. Therefore $K(u^\top s_i(\theta,Z^i))$ is $O(\|u\|_2)$-Lipschitz in $Z$. Since $\E_{\D_i(\theta)}f(Z)$ is $\epsilon_i$-Lipschitz in $\theta$ for any 1-Lipschitz function $f$ by Assumption \ref{asm:existence and convergence}, we know
    \begin{equation}\label{eq:sensitive_gradient}
        \|\nabla_\theta\E_{\D_i(\theta)}f(Z^i)\|_2=\|\E_{\D_i(\theta)}f(Z^i)\nabla_\theta\log p_i(\theta,Z^i)\|_2\le\epsilon_i,
    \end{equation}
    where $p_i(\theta,Z^i)$ is the density of $\D_i(\theta)$. Then we have
    \begin{align*}
        \|\nabla_\theta C^u_i(\theta)\|_2\le&\|\E_{\D_i(\theta)}\nabla_\theta K(u^\top s_i(\theta,Z^i))\|_2+\|\E_{\D_i(\theta)}K(u^\top s_i(\theta,Z^i))\nabla_\theta\log p_i(\theta,Z)\|_2\\
        \le&\|u\|_2\E_{\D_i(\theta)}\|\nabla_\theta s_i(\theta,Z^i)\|_2+O(\|u\|_2)\\
        =&O(\|u\|_2).
    \end{align*}
    For any 1-Lipschitz function $f$, we know $f$ is bounded on $\Zcal_1$, and 
    \begin{align*}
        &\bigg\|\nabla_{Z^i}f(Z^i)\frac{K(u^\top s_i(\theta,Z^i))}{C_i^u(\theta)}\bigg\|_2\\
        \le&\bigg\|\frac{K(u^\top s_i(\theta,Z^i))}{C_i^u(\theta)}\nabla_{Z^i}f(Z^i)\bigg\|_2+\bigg\|f(Z^i)\frac{\nabla_{Z^i}K(u^\top s_i(\theta,Z^i))}{C_i^u(\theta)}\bigg\|_2\\
        \le&1+O(\|u\|_2).
    \end{align*}
    then for $\|u\|_2$ small enough, we have
    \begin{align*}
        &\|\nabla_\theta\E_{\D_i^u(\theta)}f(Z^i)\|_2\\
        =&\bigg\|\nabla_\theta\E_{\D_i(\theta)}f(Z^i)\frac{K(u^\top s_i(\theta,Z^i))}{C_i^u(\theta)}\bigg\|_2\\
        \le&\bigg\|\E_{\D_i(\theta)}f(Z^i)\frac{\nabla_\theta K(u^\top s_i(\theta,Z^i))}{C_i^u(\theta)}\bigg\|_2+\bigg\|\E_{\D_i(\theta)}f(Z^i)\frac{K(u^\top s_i(\theta,Z^i))\nabla_\theta C_i^u(\theta)}{(C_i^u(\theta))^2}\bigg\|_2\\
        &+\bigg\|\E_{\D_i(\theta)}f(Z^i)\frac{K(u^\top s_i(\theta,Z^i))}{C_i^u(\theta)}\nabla_\theta\log p_t(\theta,Z)\bigg\|_2\\
        \le&\sup_{f\in\mathrm{Lip}_1}\|\nabla_\theta\E_{\D_i(\theta)}f(Z^i)\|_2+O(\|u\|_2)\\
        \le&\epsilon_i+O(\|u\|_2).
    \end{align*}

    \noindent  \textbf{Assumption \ref{asm:existence and convergence}.2: $\tilde\alpha$-strong monotonicity}

    For every $\theta,\theta',\theta''\in\Theta$, we have
    \begin{align*}
        &\langle \E_{\D^u(\theta)} G(\theta',Z)-G(\theta'',Z),\theta'-\theta''\rangle\\
        =&\E_{\D(\theta)}\prod_{i\in[m]}\frac{K(u^\top s_i(\theta,Z^i))}{C_i^u(\theta)}\langle G(\theta',Z)-G(\theta'',Z),\theta'-\theta''\rangle\\
        \ge&\E_{\D(\theta)}\langle G(\theta',Z)-G(\theta'',Z),\theta'-\theta''\rangle-\E_{\D(\theta)}\bigg|\prod_{i\in[m]}\frac{K(u^\top s_i(\theta,Z^i))}{C_i^u(\theta)}-1\bigg|\|G(\theta',Z)-G(\theta'',Z)\|_2\|\theta'-\theta''\|_2.
    \end{align*}
    Since $\Theta\times\Zcal_1\times\ldots\times\Zcal_m$ is compact and $\ell_i$'s are twice continuously differentiable, we know $\|\nabla_\theta\ell_i(\theta,Z^i)\|_2$ and $\|\nabla_\theta^2\ell_i(\theta,Z^i)\|_{\rm sp}$ are bounded on $\Theta\times\Zcal_i$. Then $|K(u^\top s_i(\theta,Z^i))-1|\le|u^\top s_i(\theta,Z^i)|\lesssim\|u\|_2$, 
    \begin{equation}\label{eq:G_lipschitz}
        \|G(\theta',Z)-G(\theta'',Z)\|_2^2=\sum_{i\in[m]}\|\nabla_i\ell_i(\theta',Z^i)-\nabla_i\ell_i(\theta'',Z^i)\|_2^2\lesssim\|\theta'-\theta''\|_2^2,
    \end{equation}
    and therefore,
    \[\E_{\D(\theta)}\bigg|\prod_{i\in[m]}\frac{K(u^\top s_i(\theta,Z^i))}{C_i^u(\theta)}-1\bigg|\|G(\theta',Z)-G(\theta'',Z)\|_2\|\theta'-\theta''\|_2=O(\|u\|_2)\|\theta'-\theta''\|_2^2.\]
    Consequently, we get the $\alpha$-strong monotonicity for $\|u\|_2$ small enough
    \[\langle \E_{\D^u(\theta)} G(\theta',Z)-G(\theta'',Z),\theta'-\theta''\rangle>(\alpha-O(\|u\|_2))\|\theta'-\theta''\|_2^2.\]

    \noindent  \textbf{Assumption \ref{asm:existence and convergence}.4: Compatibility}
    
    Since $\sum_{i\in[m]}\big(\frac{\epsilon_i\beta_i}{\alpha}\big)^2<1$, we know $\sum_{i\in[m]}\big(\frac{(\epsilon_i+O(\|u\|_2))\beta_i}{\alpha-O(\|u\|_2)}\big)^2<1$ if $\|u\|_2$ is small enough.
    
    \noindent \textbf{Assumption \ref{asm:CLT stable}.1: Local Lipschitzness}

    This follows from \eqref{eq:G_lipschitz}.

    \noindent  \textbf{Assumption \ref{asm:CLT stable}.2: Bounded Jacobian}

    Since $\Theta\times\Zcal_1\times\ldots\times\Zcal_m$ is compact and $\ell_i$'s are twice continuously differentiable, we know $\|\nabla_\theta\ell_i(\theta,Z^i)\|_2$ is bounded on $\Theta\times\Zcal_i$. Consequently, we have $\|G(\theta,Z)\|_2$ is also bounded on $\Theta\times\Zcal_1\times\ldots\times\Zcal_m$.

    \noindent \textbf{Assumption \ref{asm:CLT stable}.3: Differentiable}

    Note that
    \[\E_{\D_i^u(\theta)} G_i(\theta',Z^i)=\E_{\D_i(\theta)}G_i(\theta',Z^i)\frac{K(u^\top s_i(\theta,Z^i))}{C_i^u(\theta)}.\]
    Since $G_i(\theta',Z^i)$ is differentiable in $\theta'$ and $\frac{K(u^\top s_i(\theta,Z^i))}{C_i^u(\theta)}\nabla_\theta G_i(\theta',Z^i)$ is bounded on $\Theta\times\Theta\times\Zcal_i$, it follows from dominated convergence theorem that $\E_{\D_i^u(\theta)} G_i(\theta',Z_i)$ is differentiable in $\theta'$.
\end{proof}

\subsection{Nash equilibria}

\subsubsection{Proof of Theorem \ref{thm:CLT of beta}}

\begin{proof}[Proof of Theorem \ref{thm:CLT of beta}]
    We first prove the consistency of the estimator $\hat{\beta}_i$. By Lemma \ref{lem:inter CLT for beta}, each  cross-fitted estimator $\hat \beta^{(j)}_i$ is consistent, so the final estimator is also consistent
    $$
    \hat\beta_i=\sum_{j\in[3]}\frac{|\mathcal{M}_j|}{N}\hat\beta_i^{(j)} \xrightarrow{P} \beta_i^*.
    $$
    Now we turn to the proof of asymptotic normality. According to Lemma \ref{lem:inter CLT for beta}, for each dataset $\mathcal{M}_j$ we have
    \begin{equation*}
        \begin{split}
            \sqrt{|\mathcal{M}_j|}(\hat\beta_i^{(j)} - \beta_i^*)  &= \sqrt{|\mathcal{M}_j|}H_i(\beta_i^*)^{-1}(\mathcal{G}_i(\hat\beta_i^{(j)}) - \mathcal{G}_i\left(\beta_i^*)\right)+ o_p(\sqrt{|\mathcal{M}_j|}\|\hat\beta_i^{(j)} - \beta_i^*\|) \\
            & = -H_i(\beta_i^*)^{-1}\left[\mathbb{G}_{N_j}(\nabla_{\beta_i} r_i(\theta,Z^i;\beta_i^*)) - \frac{\tilde N_i}{N+\tilde N_i} M_i \left(\mathbb{G}_{N_j}(s_i^*(\theta)) - \sqrt{\frac{|\mathcal{M}_j|}{\tilde N_i}}\mathbb{G}_{\tilde N}(s_i^*(\tilde\theta))\right)\right] \\
            & \qquad + o_p(1),
        \end{split}
    \end{equation*}
    where $\mathbb{G}_{N_j}$ is defined on the separated dataset $\mathcal{M}_j$. Therefore, we have the asymptotic normality for the final estimator that
    \begin{equation*}
        \begin{split}
            \sqrt{N}(\hat\beta_i - \beta_i^*) &= \sum_{j=[3]}\sqrt{\frac{|\mathcal{M}_j|}{N}}\sqrt{|\mathcal{M}_j|}(\hat\beta_i^{(j)} - \beta_i^*)  \\
            &= \sum_{j=[3]}\sqrt{\frac{|\mathcal{M}_j|}{N}}\sqrt{|\mathcal{M}_j|}H_i(\beta_i^*)^{-1}(\mathcal{G}_i(\hat\beta_i^{(j)}) - \mathcal{G}_i\left(\beta_i^*)\right)+ o_p(\sqrt{|\mathcal{M}_j|}\|\hat\beta_i^{(j)} - \beta_i^*\|) \\
            & = -H_i(\beta_i^*)^{-1}\left[\mathbb{G}_N(\nabla_{\beta_i} r_i(\theta,Z^i;\beta_i^*)) - \frac{\tilde N_i}{N+\tilde N_i} M_i \left(\mathbb{G}_N(s_i^*(\theta)) - \sqrt{\frac{N}{\tilde N_i}}\mathbb{G}_{\tilde N}(s_i^*(\tilde\theta))\right)\right] + o_p(1) \\
            & \xrightarrow{d} \mathcal{N}\left(0,H_i(\beta_i^*)^{-1} \left(\operatorname{Cov}\left(\nabla_{\beta_i} r_i(\theta,Z^i;\beta_i^*)\right) - \operatorname{Cov}\left(\E\big[\nabla_{\beta_i} r_i(\theta,Z^i;\beta_i^*)|\theta\big]\right)\right) H_i(\beta_i^*)^{-1}\right).
        \end{split}
    \end{equation*}
    
\end{proof}

\begin{lemma}
    \label{lem:inter CLT for beta}
    Suppose Assumption \ref{assumption for beta}, and $\E\|\hat s_i(\theta) - s_i^*(\theta)\|^2 \rightarrow 0$ hold. Denote $\hat{\beta}_{\hat{M}_i}$ as the result of the objective function (\ref{equ:objective function for beta}), then we have $\hat{\beta}_{\hat{M}_i} \xrightarrow{P} \beta_i^*$ and $\sqrt{N}(\hat{\beta}_{\hat{M}_i} - \beta_i^*) \xrightarrow{d} \mathcal{N}(0,\Sigma_{\beta_i})$, where $\Sigma_{\beta_i}$ is in Theorem \ref{thm:CLT of beta}.
\end{lemma}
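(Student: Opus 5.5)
We treat $\hat\beta_{\hat M_i}$ as a Z-estimator for the population criterion
\[
\mathcal{L}_i^{\circ}(\beta_i)=\E\, r_i(\theta,Z^i;\beta_i),
\]
which has a unique minimizer $\beta_i^*$ by the $\gamma_i$-strong convexity in Assumption \ref{assumption for beta}. The reason is that the imputed terms in (\ref{equ:objective function for beta}) are linear in $\beta_i$ and have population mean
\[
-\tfrac{\tilde N_i}{N+\tilde N_i}\beta_i^\top M_i\E s_i(\theta)+\tfrac{\tilde N_i}{N+\tilde N_i}\beta_i^\top M_i\E s_i(\theta)=0,
\]
where $M_i$ denotes the probability limit of $\hat M_i$.

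\textbf{Step 1: consistency.} We show that $\hat M_i\to M_i$ and that $\frac{1}{N}\sum_k \hat s_i(\theta_k)-\frac{1}{\tilde N_i}\sum_k \hat s_i(\tilde\theta_k)=o_p(1)$. Both follow from the law of large numbers together with $\E\|\hat s_i-s_i^*\|^2\to0$; under cross-fitting, $\hat s_i$ is independent of the fold on which it is evaluated. The empirical objective $\mathcal L_i(\beta_i)$ then converges pointwise to $\mathcal L_i^\circ(\beta_i)$. Since $\mathcal L_i$ is convex (Assumption \ref{assumption for beta}.4), pointwise convergence upgrades to uniform convergence on compacts (convexity lemma), and the argmin is consistent. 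This is the same argument as Lemma \ref{lem:consistency of z-estimation}, with strong monotonicity replaced by strong convexity.

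\textbf{Step 2: linearization.} The first-order condition is
\[
0=\frac1N\sum_k\nabla_{\beta_i}r_i(\theta_k,Z_k^i;\hat\beta)-\frac{\tilde N_i}{N+\tilde N_i}\hat M_i\Bigl(\frac1N\sum_k\hat s_i(\theta_k)-\frac1{\tilde N_i}\sum_k\hat s_i(\tilde\theta_k)\Bigr).
\]
We expand the first term around $\beta_i^*$ using local Lipschitzness and van der Vaart's Theorem 5.21 (as in the proof of Theorem \ref{thm:CLT of theta, stable}). This gives
\[
\sqrt N(\hat\beta-\beta_i^*)=-H_i(\beta_i^*)^{-1}\Bigl[\mathbb G_N\nabla r_i(\cdot;\beta_i^*)-\tfrac{\tilde N_i}{N+\tilde N_i}\hat M_i\bigl(\mathbb G_N\hat s_i-\sqrt{N/\tilde N_i}\,\mathbb G_{\tilde N_i}\hat s_i\bigr)\Bigr]+o_p(1),
\]
where we have used $\E_{D_\theta}\hat s_i(\theta)$ as the common centering so that the two sample averages combine. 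We then replace $\hat s_i$ by $s_i^*$: conditional on the training folds, $\mathbb G_N(\hat s_i-s_i^*)$ has conditional variance at most $\E\|\hat s_i-s_i^*\|^2\to0$, so it is $o_p(1)$ by Chebyshev. We also replace $\hat M_i$ by $M_i$, which costs $o_p(1)$ times an $O_p(1)$ term.

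\textbf{Step 3: limiting variance.} Since $N/\tilde N_i\to0$, the Monte Carlo term vanishes and $\tilde N_i/(N+\tilde N_i)\to1$. The leading term is therefore $-H_i^{-1}\mathbb G_N[\nabla r_i-M_i s_i^*]$. With $s_i^*(\theta)=\E[\nabla r_i\mid\theta]$, we have $\Cov(\nabla r_i,s_i^*)=\Cov(s_i^*)$, so $M_i=I$. The variance then equals
\[
\Cov(\nabla r_i)-2\Cov(s_i^*)+\Cov(s_i^*)=\Cov(\nabla r_i)-\Cov(s_i^*)=V_i(\beta_i^*),
\]
and the multivariate CLT gives $\Sigma_{\beta_i}$.

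\textbf{Main obstacle.} The delicate step is handling the estimated nuisances $\hat s_i$ and $\hat M_i$, which are built from $\tilde\beta_i$ rather than $\beta_i^*$, and showing that they contribute only $o_p(1)$. We will establish this by conditioning on the auxiliary folds $\mathcal M_2,\mathcal M_3$, exploiting their independence from $\mathcal M_1$. For the empirical-process remainder in Step 2, we will use the Lipschitz bound on $h_i$ from Assumption \ref{assumption for beta}; since $h_i$ is linear in $\beta_i$, it introduces no stochastic-equicontinuity issue beyond $\hat M_i\hat s_i$ being $O_p(1)$ in $L^2$.
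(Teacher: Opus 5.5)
Your proposal is correct and follows essentially the same route as the paper's proof. Both arguments run through the same steps: $\hat M_i\to M_i$ by the law of large numbers; argmin consistency from strong convexity plus convexity of the criterion; linearization of the first-order condition around $\beta_i^*$; replacement of the estimated nuisances by their limits; and the covariance simplification via $\Cov(\nabla_{\beta_i} r_i, s_i^*)=\Cov(s_i^*)$ (so $M_i=I$) together with $N/\tilde N_i\to0$.

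The only differences are technical. You get uniform convergence from the convexity lemma rather than the paper's explicit shell argument. You remove $\mathbb{G}_N(\hat s_i-s_i^*)$ with a conditional Chebyshev bound that uses the cross-fitting independence of $\hat s_i$ from $\mathcal{M}_1$. The paper instead invokes van der Vaart's Lemma 19.24, which tacitly requires a Donsker class; your argument does not need one.
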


\begin{proof}[Proof of Lemma \ref{lem:inter CLT for beta}]
\label{proof:inter CLT for beta}
    By the law of large number, the estimated matrix $\hat{M}_i$ for de-correlation is consistent to the population optimal matrix $M$:
    $$
    \hat M_i=\widehat{\Cov}\big(\nabla_{\beta_i} r_i(\theta,Z^i;\tilde\beta_i),\hat s_i(\theta)\big)\widehat{\Cov}\big(\hat s_i(\theta)\big)^{-1} \xrightarrow{P} \Cov\big(\nabla_{\beta_i} r_i(\theta,Z^i;\beta_i^*), s_i^*(\theta)\big)\Cov\big(s_i^*(\theta)\big)^{-1} = M_i.
    $$
    
    First, we prove the consistency of $\hat{\beta}_{\hat{M}_i}$ for each $i$. Since the objective function $\mathcal{L}_i(\beta_i)$ is unbiased for $R_i(\beta_i)$, by Kolmogorov's strong law of large numbers and local Lipschitzness for $\beta_i^*$, we know there exists $\epsilon_i>0$ such that $\{\beta_i:\|\beta_i - \beta_i^*\|\leq \epsilon_i\} \subseteq \mathcal{B}_i$ and
    $$
    \sup_{\beta_i:\|\beta_i - \beta_i^*\|\leq \epsilon_i}|\mathcal{L}_i(\beta_i) - R_i(\beta_i)| \xrightarrow{P}0.
    $$
    Since the loss function $r_i(\theta,Z^i;\beta_i)$ is $\gamma_i$-strongly convex in $\beta_i$, the minimizer $\beta_i^*$ is unique. Therefore, for every $\epsilon_i >0$, there exists a $\eta_i = \frac{\gamma_i}{2}\epsilon_i^2>0$ such that for every $\beta_i \in \mathcal{B}_i$ with $\|\beta_i - \beta_i^*\| \geq \epsilon_i$, we have $R_i(\beta_i) \geq  R_i(\beta_i^*) + \frac{\gamma_i}{2}\|\beta_i - \beta_i^*\|^2 \geq R_i(\beta_i^*) + \eta_i$. For the $\epsilon_i$-shell $\{\beta_i:\|\beta_i - \beta_i^*\| = \epsilon_i\}$, we have:
    \begin{equation*}
        \begin{split}
            & \qquad \inf_{\|\beta_i - \beta_i^*\| = \epsilon_i} \mathcal{L}_i(\beta_i) - \mathcal{L}_i(\beta_i^*) \\
            & = \inf_{\|\beta_i - \beta_i^*\| = \epsilon_i} \left((\mathcal{L}_i(\beta_i) - R_i(\beta_i)) + (R_i(\beta_i) - R_i(\beta_i^*)) + (R_i(\beta_i^*) - \mathcal{L}_i(\beta_i^*))\right) \\
            & \geq \eta_i - 2\sup_{\|\beta_i - \beta_i^*\| \leq \epsilon} |\mathcal{L}_i(\beta_i) - R_i(\beta_i)| \\
            & =\eta_i - o_p(1).
        \end{split}
    \end{equation*}
    Then we consider for any $\beta_i$ such that $\|\beta_i - \beta_i^*\| \geq \epsilon_i$, fix a point $\beta^1_i = \beta_i^* + \frac{\beta_i - \beta_i^*}{\|\beta_i - \beta_i^*\|}\epsilon_i$ which is on the $\epsilon_i$-shell, we have $\beta_i = \beta_i^* + \lambda_i(\beta^1_i - \beta_i^*)$ where $\lambda_i = \frac{\|\beta_i - \beta_i^*\|}{\epsilon_i} \geq 1$. Thus, by the convexity of $\mathcal{L}_i(\beta_i)$ as it consists of convex functions, the following inequality holds:
    $$
    \mathcal{L}_i(\beta_i) - \mathcal{L}_i(\beta_i^*) \geq \lambda_i \left(\mathcal{L}_i(\beta^1_i) - \mathcal{L}_i(\beta_i^*)\right) \geq \eta_i - o_p(1).
    $$
    This inequality implies that $\{\beta_i:\|\beta_i - \beta_i^*\| \geq \epsilon_i\}$ has no minimizer to $\mathcal{L}_i(\beta_i)$, so the consistency holds:
    $$
    \mathbb{P}(\|\hat{\beta}_{\hat{M}_i} - \beta_i^*\| \geq \epsilon_i)= 0.
    $$

    Now we prove the asymptotic normality. 
    As $\E\|\hat s_i(\theta) - s_i^*(\theta)\|^2 \rightarrow 0$ holds, we have the following convergence by the \citep[Lemma 19.24]{van2000asymptotic}:
    $$
    \mathbb{G}_N\left[\hat s_i(\theta) - s_i^*(\theta)\right] \xrightarrow{P}0 \quad\text{and} \quad \mathbb{G}_{\tilde N}\left[\hat s_i(\theta) - s_i^*(\theta)\right] \xrightarrow{P}0 .
    $$
    By the local Lipschitzness and differentiability, and \citep[Lemma 19.24]{van2000asymptotic}, we also have:
\begin{equation*}
     \mathbb{G}_N\left[\nabla_{\beta_i} r_i(\theta,Z^i;\hat{\beta}_{\hat{M}_i}) - \nabla_{\beta_i} r_i(\theta,Z^i;\beta_i^*)\right] \xrightarrow{P}0.
\end{equation*}
    Denote $\mathcal{G}_i(\beta_i) = \E [\nabla_i r_i(\theta,Z^i;\beta_i)]$. Since the loss function $r_i(\theta,Z^i;\beta_i)$ is strongly convex, $\hat{\beta}_{\hat{M}_i}$ is the unique solution to the equation that
    $$
    \mathcal{F}_i(\beta_i) = \frac{1}{N}\sum_{k\in[N_i]}\bigg\{\nabla_i r_i(\theta_k,Z_k^i;\beta_i)-\frac{\tilde N_i}{N+\tilde N_i}\hat M_i\hat s_i(\theta_k)\bigg\}+\frac{1}{N+\tilde N_i}\sum_{k\in[\tilde N_i]}\hat M_i\hat s_i(\tilde\theta_k) = 0
    $$
    and also $\mathcal{G}_i(\beta_i^*) = \mathcal{F}_i(\hat{\beta}_{\hat{M}_i}) = 0$. Thus, we have its Taylor expansion as
    $$
    \sqrt{N}(\mathcal{G}_i(\hat{\beta}_{\hat{M}_i}) - \mathcal{G}_i\left(\beta_i^*)\right) = \sqrt{N}H_i(\beta_i^*)(\hat{\beta}_{\hat{M}_i} - \beta_i^*) + o_p(\sqrt{N}\|\hat{\beta}_{\hat{M}_i} - \beta_i^*\|).
    $$
    Note that $\mathcal{G}_i(\hat{\beta}_{\hat{M}_i})$ can be rewritten as
    \begin{equation*}
        \begin{split}
             \mathcal{G}_i(\hat{\beta}_{\hat{M}_i}) &= \E [\nabla_i r_i(\theta,Z^i;\hat{\beta}_{\hat{M}_i})] \\
             &= \E [\nabla_i r_i(\theta,Z^i;\hat{\beta}_{\hat{M}_i})] - \frac{\tilde N_i}{N+\tilde N_i}\hat M_i \E[\hat{s}_i] +\frac{\tilde N_i}{N+\tilde N_i}\hat M_i \E[\hat{s}_i] \\ 
             &= \E \left[\nabla_i r_i(\theta,Z^i;\hat{\beta}_{\hat{M}_i})\right] - \frac{\tilde N_i}{N+\tilde N_i}\hat M_i \E\left[\frac{1}{N}\sum_{k\in[N_i]}\hat s_i(\theta_k)\right] +\frac{1}{N+\tilde N_i}\hat M_i \E\left[\sum_{k\in[\tilde N_i]}\hat s_i(\theta_k)\right].
        \end{split}
    \end{equation*}
    Therefore we have
    \begin{equation*}
        \begin{split}
            \sqrt{N}(\mathcal{G}_i(\hat{\beta}_{\hat{M}_i}) - \mathcal{G}_i(\beta_i^*)) & = \sqrt{N}(\mathcal{G}_i(\hat{\beta}_{\hat{M}_i}) - \mathcal{F}_i(\hat{\beta}_{\hat{M}_i})) \\
            & = - \left[\mathbb{G}_N(\nabla_i r_i(\theta,Z^i;\hat{\beta}_{\hat{M}_i})) - \frac{\tilde N_i}{N+\tilde N_i}\hat M_i \left(\mathbb{G}_N(\hat s_i(\theta)) - \sqrt{\frac{N}{\tilde N_i}}\mathbb{G}_{\tilde N}(\hat s_i(\tilde\theta))\right)\right] \\
            & = - \left[\mathbb{G}_N(\nabla_i r_i(\theta,Z^i;\beta_i^*)) - \frac{\tilde N_i}{N+\tilde N_i} M_i \left(\mathbb{G}_N(s_i^*(\theta)) - \sqrt{\frac{N}{\tilde N_i}}\mathbb{G}_{\tilde N}(s_i^*(\tilde\theta))\right)\right] + o_p(1),
        \end{split}
    \end{equation*}
    where the third equation is ensured by the convergence results above.
    We apply the central limit theorem to the RHS and obtain its asymptotic normality as follows:
    \begin{equation*}
        \begin{split}
            &\qquad \mathbb{G}_N(\nabla_i r_i(\theta,Z^i;\beta_i^*)) - \frac{\tilde N_i}{N+\tilde N_i} M_i \left(\mathbb{G}_N(s_i^*(\theta)) - \sqrt{\frac{N}{\tilde N_i}}\mathbb{G}_{\tilde N}(s_i^*(\tilde\theta))\right) \\
            & = \mathbb{G}_N\left(\nabla_i r_i(\theta,Z^i;\beta_i^*)-\frac{\tilde N_i}{N+\tilde N_i} M_i s_i^*(\theta)\right) + \sqrt{\frac{N}{\tilde N_i}} \mathbb{G}_{\tilde N}\left(\frac{\tilde N_i}{N+\tilde N_i}M_i s_i^*(\tilde\theta)\right) \xrightarrow{d} \mathcal{N}(0,\Sigma_i),
        \end{split}
    \end{equation*}
    where 
    $$
    \Sigma_i = \Cov\left(\nabla_i r_i(\theta,Z^i;\beta_i^*)-\frac{\tilde N_i}{N+\tilde N_i} M_i s_i^*(\theta)\right) + \frac{N}{\tilde N_i}\Cov\left(\frac{\tilde N_i}{N+\tilde N_i} M_i s_i^*(\tilde\theta)\right).
    $$
    Since $M_i=\Cov\big(\nabla_{\beta_i} r_i(\theta,Z^i;\beta_i^*), s_i^*(\theta)\big)\Cov\big(s_i^*(\theta)\big)^{-1}$, we can further simplify the covariance as follows and find that $\Sigma_i = \Sigma_{\beta_i}$:
\begin{equation*}
    \begin{split}
        &\qquad \Cov\left(\nabla_i r_i(\theta,Z^i;\beta_i^*)-\frac{\tilde N_i}{N+\tilde N_i} M_i s_i ^*(\theta)\right) + \frac{N}{\tilde N_i}\Cov\left(\frac{\tilde N_i}{N+\tilde N_i} Ms^*(\tilde\theta)\right) \\
        & = \Cov(\nabla_i r_i(\theta,Z^i;\beta_i^*)) + \left(1+\frac{N}{\tilde N_i}\right)\Cov\left(\frac{\tilde N_i}{N+\tilde N_i} M_i s_i^*(\tilde\theta)\right) - 2\Cov\left(\nabla_i r_i(\theta,Z^i;\beta_i^*),\frac{\tilde N_i}{N+\tilde N_i} M_i s_i^*(\theta)\right)\\
        & = \Cov(\nabla_i r_i(\theta,Z^i;\beta_i^*)) - \frac{\tilde N_i}{N+\tilde N_i}\Cov (M_i s_i^*(\theta)) \\
        & = \Cov(\nabla_i r_i(\theta,Z^i;\beta_i^*)) - \frac{\tilde N_i}{N+\tilde N_i} \Cov\big(\nabla_{\beta_i} r_i(\theta,Z^i;\beta_i^*), s_i^*(\theta)\big)\Cov\big(s_i^*(\theta)\big)^{-1}\Cov\big(\nabla_{\beta_i} r_i(\theta,Z^i;\beta_i^*), s_i^*(\theta)\big) \\
        & = \Cov(\nabla_i r_i(\theta,Z^i;\beta_i^*)) - \frac{\tilde N_i}{N+\tilde N_i} \Cov\big(s_i^*(\theta)\big),
    \end{split}
\end{equation*}
where the second equation holds as $\Cov\left(\nabla_i r_i(\theta,Z^i;\beta_i^*),s_i^*(\theta)\right) = \Cov(s_i^*(\theta))$ with $s_i^*(\theta) = \E[\nabla_{\beta_i} r_i(\theta,Z^i;\beta_i)\mid\theta]$. Based on the analysis above, we have the asymptotic normality results:
$$
\sqrt{N}(\hat{\beta}_{\hat{M}_i} - \beta_i^*)  = \sqrt{N}H_i(\beta_i^*)^{-1}(\mathcal{G}_i(\hat{\beta}_{\hat{M}_i}) - \mathcal{G}_i\left(\beta_i^*)\right)+ o_p(\sqrt{N}\|\hat{\beta}_{\hat{M}_i} - \beta_i^*\|) \xrightarrow{d} N(0,\Sigma_{\beta_i}),
$$
since $\Cov(\nabla_i r_i(\theta,Z^i;\beta_i^*)) - \frac{\tilde N_i}{N+\tilde N_i} \Cov\big(s_i^*(\theta)\big) \rightarrow \Cov(\nabla_i r_i(\theta,Z^i;\beta_i^*)) - \Cov\big(s_i^*(\theta)\big)$ as $\frac{N}{\tilde N_i} \rightarrow 0$.
\end{proof}

\subsubsection{Proof of Theorem \ref{thm:CLT of theta, optimal}}

\begin{proof}[Proof of Theorem \ref{thm:CLT of theta, optimal}]

Note that $\hat{\theta}^{\hat{\beta}}_{PO} - \theta^{\beta^*}_{PO} = (\hat{\theta}^{\hat{\beta}}_{PO} - \theta^{\hat{\beta}}_{PO}) + (\theta^{\hat{\beta}}_{PO} - \theta^{\beta^*}_{PO})$ holds, our proof of both consistency and asymptotic normality will be separated into two parts similarly.

First we prove the property of consistency of $\hat{\theta}^{\hat{\beta}}_{PO}$. 
As for the $\hat{\theta}^{\hat{\beta}}_{PO}$ to $\theta^{\hat{\beta}}_{PO}$ part, the proof follows from the same argument as in the Proof for Lemma \ref{lem:consistency of z-estimation}. As for the $\theta^{\hat{\beta}}_{PO}$ to $\theta^{\beta^*}_{PO}$ part, since the map $\mathrm{sol}(\beta)$ is differentiable at $\beta^*$, it is also a continuous function at $\beta^*$.
Thus, by the Continuous Mapping Theorem, we have:
$$
\theta^{\hat{\beta}}_{PO} = \mathrm{sol}(\hat \beta) \xrightarrow{P} \theta^{\beta^*}_{PO} = \mathrm{sol}(\beta^*).
$$
Combining the results above, we can prove the consistency of $\hat{\theta}^{\hat{\beta}}_{PO}$ toward $\theta^{\beta^*}_{PO}$ that
$$
\mathbb{P}(\|\hat{\theta}^{\hat{\beta}}_{PO} - \theta^{\beta^*}_{PO}\| \geq \epsilon) \leq \mathbb{P}(\|\hat{\theta}^{\hat{\beta}}_{PO} - \theta^{\hat{\beta}}_{PO}\| \geq \epsilon/2) + \mathbb{P}(\|\theta^{\hat{\beta}}_{PO} - \theta^{\beta^*}_{PO}\| \geq \epsilon/2) = 0.
$$

Now we prove the asymptotic normality of our estimator.
The proof of the asymptotic normality of the $\hat \theta_{PO}^{\hat \beta}$ towards $\theta_{PO}^{\hat \beta}$ is similar to the proof of Theorem \ref{thm:CLT of theta, stable}, and we can obtain the result by central limit theorem and the Slutsky lemma that
\begin{equation*}
    \sqrt{n}(\hat \theta_{PO}^{\hat \beta} - \theta_{PO}^{\beta^*}) \mid \hat \beta \xrightarrow{d} N(\sqrt{n}(\theta_{PO}^{\hat \beta} - \theta_{PO}^{\beta^*}), \hat \Sigma_{\hat \theta}) \triangleq N(\hat \mu_{\hat \theta}, \hat \Sigma_{\hat \theta}),
\end{equation*}
where 
$$
\hat \Sigma_{\hat \theta} = V_{\hat \beta}(\theta_{PO}^{\hat \beta})^{-1}\E_{Z \sim q(z)}\left(G(\theta_{PO}^{\hat \beta},Z, \hat \beta)G(\theta_{PO}^{\hat \beta},Z, \hat \beta)^T\right)V_{\hat \beta}(\theta_{PO}^{\hat \beta})^{-1}.
$$
We prove the distribution of $\sqrt{n}(\hat{\theta}^{\hat{\beta}}_{PO} - \theta^{\beta^*}_{PO} )$ by characteristic function. 
Similarly, we denote $X = \sqrt{n}(\hat{\theta}^{\hat{\beta}}_{PO} - \theta^{\beta^*}_{PO})$ and $Z = \sqrt{N}(\hat{\beta} - \beta^* )$, with variance as
$$
\Sigma_{\theta} = V_{\beta}(\theta_{PO}^{\beta})^{-1}\E_{Z \sim q(z)}\left(G(\theta_{PO}^{\beta},Z, \beta)G(\theta_{PO}^{\beta},Z, \beta)^T\right)V_{\beta}(\theta_{PO}^{\beta})^{-1}.
$$
The characteristic function of the condition distribution is:
$\phi_{X \mid Z}(t) \xrightarrow{P} \exp\left\{it^T\hat \mu_{\hat \theta} - \frac{1}{2}t^T\hat \Sigma_{\hat \theta} t\right\}$, and the distribution of $Z$ can be described by characteristic function as:
\begin{equation*}
    \begin{split}
        \mathbb{P}(Z) &= \frac{1}{(2\pi)^d} \int \phi_{Z}(s)\cdot e^{-is^TZ} ds \\
        &= \frac{1}{(2\pi)^d} \int \exp\left\{-\frac{1}{2}s^T\Sigma_{\beta}s-is^TZ\right\} ds,
    \end{split}
\end{equation*}
where the covariance $\Sigma_\beta$ is given in the Lemma \ref{lem:covariance of hat beta}.
Then we have the characteristic function that
\begin{equation*}
    \begin{split}
        \phi_{X}(t) &= \mathbb{E}(e^{it^TX}) = \mathbb{E}_{Z}\left(\mathbb{E}(e^{it^TX} \mid Z)\right) \\
        &= \frac{1}{(2\pi)^d}  \int \int \exp\left\{it^T\hat \mu_{\hat \theta} - \frac{1}{2}t^T\hat \Sigma_{\hat \theta} t\right\} \exp\left\{-\frac{1}{2}s^T\Sigma_{\beta}s-is^TZ\right\} ds dZ. \\
    \end{split}
\end{equation*}
To simplify the formulation, we let $-\frac{1}{2}s^T\Sigma_{\beta}s-is^TZ =  -\frac{1}{2}(s-A_1)^TM_1(s-A_1) + B_1$, and by comparing the terms, we have:
\begin{equation*}
    \begin{split}
        &M_1 = \Sigma_{\beta}, \\
        &A_1 = iM_1^{-1}Z =  i\Sigma_{\beta}^{-1}Z,\\
        &B_1 = \frac{1}{2}A_1^TM_1A_1 = -\frac{1}{2}Z^T\Sigma_{\beta}^{-1}Z.
    \end{split}
\end{equation*}
Then the characteristic function can be rewritten as
\begin{equation*}
    \begin{split}
        \phi_{X}(t) &= \frac{1}{(2\pi)^d}  \int \int \exp\left\{it^T\hat \mu_{\hat \theta} - \frac{1}{2}t^T\hat \Sigma_{\hat \theta} t\right\} \exp\left\{-\frac{1}{2}(s-A_1)^TM_1(s-A_1) + B_1\right\} ds dZ \\
        &= \frac{\det|\Sigma_{\beta}|}{(2\pi)^{d/2}}  \int \exp\left\{it^T\hat \mu_{\hat \theta} - \frac{1}{2}t^T\hat \Sigma_{\hat \theta} t  -\frac{1}{2}Z^T\Sigma_{\beta}^{-1}Z\right\}dZ.
    \end{split}
\end{equation*}
Since we have
\begin{equation*}
        \left|\exp\left\{it^T\hat \mu_{\hat \theta} - \frac{1}{2}t^T\hat \Sigma_{\hat \theta} t  -\frac{1}{2}Z^T\Sigma_{\beta}^{-1}Z\right\}\right| = \exp\left\{ - \frac{1}{2}t^T\hat \Sigma_{\hat \theta} t  -\frac{1}{2}Z^T\Sigma_{\beta}^{-1}Z\right\}
\end{equation*}
and $t^T\hat \Sigma_{\hat \theta} t  >0$ for all $t \in \mathcal{T}$, and therefore $-\frac{1}{2}t^T\hat \Sigma_{\hat \theta} t  <0$ for all $t \in \mathcal{T}$, the exponential term is bounded:
\begin{equation*}
    \left|\exp\left\{it^T\hat \mu_{\hat \theta} - \frac{1}{2}t^T\hat \Sigma_{\hat \theta} t -\frac{1}{2}Z^T\Sigma_{\beta}^{-1}Z\right\}\right| \leq \left|\exp\left\{-\frac{1}{2}Z^T\Sigma_{\beta}^{-1}Z\right\}\right|.
\end{equation*}
Denote $J_{sol}(\beta)$ as the Jacobian matrix of the map $\mathrm{sol}(\beta)$ for finding the optimality, so by the first-term Taylor expansion, we have:
\begin{equation*}
    \sqrt{n}(\theta^{\hat{\beta}}_{PO} - \theta^{\beta^*}_{PO}) = \sqrt{n}(\mathrm{sol}(\hat \beta) - \mathrm{sol}(\beta^*))  \rightarrow \sqrt{\frac{n}{N}}J_{sol}(\beta^*) Z,
\end{equation*}
as $N \rightarrow \infty$.
Thus, by the control convergence theorem, we have:
\begin{equation*}
    \begin{split}
        \lim_{n \rightarrow \infty}\phi_{X}(t) &= \frac{\det|\Sigma_{\beta}|}{(2\pi)^{d/2}}  \int \lim_{n \rightarrow \infty}\exp\left\{it^T\hat \mu_{\hat \theta} - \frac{1}{2}t^T\hat \Sigma_{\hat \theta} t  -\frac{1}{2}Z^T\Sigma_{\beta}^{-1}Z\right\}dZ  \\
        &= \frac{\det|\Sigma_{\beta}|}{(2\pi)^{d/2}}  \int \exp\left\{it^T \sqrt{\frac{n}{N}} J_{sol}(\beta^*)Z- \frac{1}{2}t^T\Sigma_{\theta} t  -\frac{1}{2}Z^T\Sigma_{\beta}^{-1}Z\right\}dZ.
    \end{split}
\end{equation*}
Similarly, we let $it^T \sqrt{\frac{n}{N}}J_{sol}(\beta^*)Z -\frac{1}{2}Z^T\Sigma_{\beta}^{-1}Z =  -\frac{1}{2}(Z-A_2)^TM_2(Z-A_2) + B_2$, and by comparing the terms, we have:
\begin{equation*}
    \begin{split}
        &M_2 = \Sigma_{\beta}^{-1}, \\
        &A_2 = \sqrt{\frac{n}{N}}i M_2^{-1}J_{sol}(\beta^*)^T t = \sqrt{\frac{n}{N}} i\Sigma_{\beta} J_{sol}(\beta^*)^T t,\\
        &B_2 = \frac{1}{2} A_2^TM_2A_2 = -\frac{1}{2}\cdot \frac{n}{N}t^TJ_{sol}(\beta^*)\Sigma_{\beta}J_{sol}(\beta^*)^Tt.
    \end{split}
\end{equation*}
Then the limit of the characteristic function is
\begin{equation*}
    \begin{split}
        \lim_{n \rightarrow \infty}\phi_{X}(t)  &= \frac{\det|\Sigma_{\beta}|}{(2\pi)^{d/2}}  \int \exp\left\{it^T \sqrt{\frac{n}{N}}J_{sol}(\beta^*)Z- \frac{1}{2}t^T\Sigma_\theta t  -\frac{1}{2}Z^T\Sigma_{\beta}^{-1}Z\right\}dZ  \\
        &= \frac{\det|\Sigma_{\beta}|}{(2\pi)^{d/2}}  \int \exp\left\{- \frac{1}{2}t^T\Sigma_\theta t -\frac{1}{2}(Z-A_2)^TM_2(Z-A_2) + B_2\right\}dZ  \\
        &= \exp\left\{- \frac{1}{2}t^T(\Sigma_\theta +\frac{n}{N}J_{sol}(\beta^*)\Sigma_{\beta}J_{sol}(\beta^*)^T)t \right\},  \\
    \end{split}
\end{equation*}
which means the distribution function of $X = \sqrt{n}(\hat{\theta}^{\hat{\beta}}_{PO} - \theta^{\beta^*}_{PO})$ is a normal distribution with zero mean and variance $\Sigma_\theta +\frac{n}{N}J_{sol}(\beta^*)\Sigma_{\beta}J_{sol}(\beta^*)^T$, where $n$ is the sample size for estimating $\theta_{PO}^{\beta^*}$ by importance sampling, and $N$ is the sample size for estimating the distributional parameter $\beta^*$ by recalibrated method.
Therefore, by Slutsky’s Lemma, we have the asymptotic normality:
$$
\sqrt{N}(\hat{\theta}^{\hat{\beta}}_{PO} - \theta^{\beta^*}_{PO}) \xrightarrow{d} N(0,\Sigma),
$$
where 
$$
\Sigma = J_{sol}(\beta^*)\Sigma_{\beta}J_{sol}(\beta^*)^T,
$$
since $\frac{N}{n}\Sigma_\theta + J_{sol}(\beta^*)\Sigma_{\beta}J_{sol}(\beta^*)^T \xrightarrow{} J_{sol}(\beta^*)\Sigma_{\beta}J_{sol}(\beta^*)^T$ as $\frac{N}{n} \rightarrow 0$.
\end{proof}

\begin{lemma}
\label{lem:covariance of hat beta}
    Assume that Assumption \ref{assumption for beta} hold. Suppose sample sizes satisfy that $\frac{N}{\tilde N_i} \rightarrow 0$, and $\E\|\hat s_i(\theta) - s_i(\theta)\|^2 \xrightarrow{P} 0$ for some $s_i(\theta)$ for each player $i$, based on the analysis of Theorem \ref{thm:CLT of beta}, we have the asymptotic normality for $\hat \beta$:
    $$
    \sqrt{N}(\hat \beta - \beta^*) \xrightarrow{P} N(0,\Sigma_\beta).
    $$
    Let $s_i(\theta) = s_i^*(\theta)$, then we have the asymptotic covariance as 
    $$
    \Sigma_\beta = \operatorname{diag}\{H_i(\beta_i^*)^{-1}\left(\operatorname{Cov}\left( \nabla_{\beta_i} r_i(\theta_k,Z_k^i;\beta_i^*) \right) - \operatorname{Cov}\left(s_i^*(\theta_k)\right) \right)H_i(\beta_i^*)^{-1}\}.
    $$
\end{lemma}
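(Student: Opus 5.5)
The plan is to assemble the joint limit of $\hat\beta=(\hat\beta_1^\top,\ldots,\hat\beta_m^\top)^\top$ from the per-player asymptotically linear expansions obtained in the proof of Theorem \ref{thm:CLT of beta} and Lemma \ref{lem:inter CLT for beta}. First, for each player $i$ and each fold $j\in[3]$, I will invoke Lemma \ref{lem:inter CLT for beta}. Conditionally on the nuisance folds, $\hat s_i^{(j)}$ and $\hat M_i^{(j)}$ are fixed, so the lemma gives
\[
\sqrt{N}(\hat\beta_i-\beta_i^*)=-H_i(\beta_i^*)^{-1}\Big[\mathbb{G}_N\big(\nabla_{\beta_i} r_i(\theta,Z^i;\beta_i^*)\big)-\tfrac{\tilde N_i}{N+\tilde N_i}M_i\big(\mathbb{G}_N(s_i^*(\theta))-\sqrt{N/\tilde N_i}\,\mathbb{G}_{\tilde N}(s_i^*(\tilde\theta))\big)\Big]+o_p(1).
\]
Here $M_i$ is the population decorrelation matrix. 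The remainder is $o_p(1)$ because $\E\|\hat s_i^{(j)}-s_i\|^2\to0$ and \citep[Lemma 19.24]{van2000asymptotic} applies. Averaging over the three folds with weights $|\mathcal M_j|/N$ reproduces the same linear form on the full sample.

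Second, I will stack these $m$ expansions into a single vector $\sqrt N(\hat\beta-\beta^*)=\frac{1}{\sqrt N}\sum_k \psi(\theta_k,Z_k)+\text{(Monte-Carlo term)}+o_p(1)$, where $\psi$ has block components $-H_i^{-1}\{\nabla_{\beta_i}r_i-M_i s_i^*\}$. Since $N/\tilde N_i\to0$, the Monte-Carlo term is $O_p(\sqrt{N/\tilde N_i})=o_p(1)$, and the factor $\tilde N_i/(N+\tilde N_i)$ tends to $1$. The multivariate CLT (via the Cramér--Wold device) then gives a Gaussian limit with covariance $\Cov(\psi)$. Each diagonal block, after substituting $M_i=\Cov(\nabla r_i,s_i^*)\Cov(s_i^*)^{-1}$ and using $\Cov(\nabla r_i,s_i^*)=\Cov(s_i^*)$ when $s_i=s_i^*$, reduces to $H_i^{-1}\{\Cov(\nabla r_i)-\Cov(s_i^*)\}H_i^{-1}$, exactly as computed in the proof of Lemma \ref{lem:inter CLT for beta}.

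The main obstacle is justifying the block-diagonal form, that is, showing that the cross-covariances between players vanish. The components $\psi_i$ and $\psi_{i'}$ both depend on the shared $\theta_k$, so independence cannot simply be assumed. I would first use the stated data structure: $Z^i$ and $Z^{i'}$ are conditionally independent given $\theta$, since $\D(\theta)=\prod_i\D_i(\theta)$, and the players' samples are collected independently, as in the remark following Theorem \ref{thm:lower_stable}. Under this structure, $\Cov(\psi_i,\psi_{i'})=\E[\Cov(\psi_i,\psi_{i'}\mid\theta)]+\Cov(\E[\psi_i\mid\theta],\E[\psi_{i'}\mid\theta])$. The first term vanishes by conditional independence. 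The second vanishes because, with $s_i=s_i^*$ (so $M_i=I$), we have $\E[\nabla r_i-s_i^*\mid\theta]=0$. Thus the recalibration exactly removes the shared $\theta$-driven component, and the covariance is block diagonal. If the players' samples are instead taken as independent draws, the result is immediate.

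Finally, I will note that the strongly-smooth-distribution assumption is not needed at this stage; it only matters later, when this lemma is combined with the conditional CLT in the proof of Theorem \ref{thm:CLT of theta, optimal}.
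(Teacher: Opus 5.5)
Your proposal is correct and uses the same skeleton as the paper: take the per-player linear expansion from the proof of Theorem~\ref{thm:CLT of beta}, then apply the CLT with the Cram\'er--Wold device. The genuine difference is how you justify the block-diagonal form.

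The paper splits the expansion into three pieces. These are $L_1$ (the $\nabla_{\beta_i} r_i$ term), $L_2$ (the $s_i^*(\theta_k)$ term) and $L_3$ (the Monte Carlo term). It then asserts that each piece separately has block-diagonal covariance ``as $Z^i$ are independent''. When the players share the draws $\theta_k$, as in the input of Algorithm~\ref{alg:beta}, this fails piece by piece. Under conditional independence, $\Cov(\nabla_{\beta_i} r_i,\nabla_{\beta_{i'}} r_{i'})=\Cov(s_i^*(\theta),s_{i'}^*(\theta))$, which is generally nonzero. The $s^*$ piece has the same cross-blocks, and the two pieces are correlated with each other. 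Only their difference has vanishing cross-player blocks.

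Your route stacks everything into one influence function $\psi$, drops the Monte Carlo term directly because $N/\tilde N_i\to 0$, and applies the law of total covariance with $M_i=I$ and $\E[\nabla_{\beta_i} r_i-s_i^*\mid\theta]=0$. This buys three things:
\begin{itemize}
\item It makes the cancellation explicit.
\item It pins down the structural fact actually used, namely conditional independence of the $Z^i$ given $\theta$, from $\D(\theta)=\prod_i\D_i(\theta)$.
\item It shows why the diagonal form is tied to $s_i=s_i^*$. For a limit $s_i\neq s_i^*$ one generally has $\E[\psi_i\mid\theta]\neq 0$, so off-diagonal blocks appear; this matches the lemma asserting the diagonal form only in that case.
\end{itemize}
The paper's term-by-term argument is literally valid only when each player's $\theta$'s are drawn independently, a case you also cover.

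One small correction. In Algorithm~\ref{alg:beta}, $\hat M_i^{(j)}$ is computed on the estimation fold itself (Step 4), so it is not fixed conditionally on the nuisance folds. This is harmless: it multiplies an $O_p(1)$ empirical-process term, so you only need $\hat M_i^{(j)}\xrightarrow{P}M_i$.
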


\begin{proof}
From the Proof of Theorem \ref{thm:CLT of beta}, we know that
\begin{equation*}
    \begin{split}
        \sqrt{N}(\hat\beta_i - \beta_i^*) = &-H_i(\beta_i^*)^{-1}\left[\mathbb{G}_N(\nabla_{\beta_i} r_i(\theta,Z^i;\beta_i^*)) - \frac{\tilde N_i}{N+\tilde N_i} M_i \left(\mathbb{G}_N(s_i^*(\theta)) - \sqrt{\frac{N}{\tilde N_i}}\mathbb{G}_{\tilde N}(s_i^*(\tilde\theta))\right)\right] \\
        &+ O_P\left(\frac{1}{\sqrt{N}}\right).  
    \end{split}
\end{equation*}
For our first term, we have $\mathbb{G}_N$ as:
\begin{align*}
    \mathbb{G}_N(\nabla_{\beta_i} r_i(\theta,Z^i;\beta_i^*)) &= \sqrt{N} \left(\frac{1}{N}\sum_{k=1}^N\nabla_{\beta_i} r_i(\theta_k,Z_k^i;\beta_i^*) - \E(\nabla_{\beta_i} r_i(\theta,Z^i;\beta_i^*))\right) \\
    & = \frac{1}{\sqrt{N}}\sum_{k=1}^N \left(\nabla_{\beta_i} r_i(\theta_k,Z_k^i;\beta_i^*) - \E(\nabla_{\beta_i} r_i(\theta,Z^i;\beta_i^*))\right).
\end{align*}
Therefore, for a non-zero vector $\mathbf{c} = (c_1, ... c_m)^T \in \R^d$, we have 
\begin{align*}
\label{equ:term1}
    L_1 &= \sum_{i=1}^m c_i \left(H_i(\beta_i^*)^{-1} \mathbb{G}_N(\nabla_{\beta_i} r_i(\theta,Z^i;\beta_i^*))\right) \\
    & = \frac{1}{\sqrt{N}}\sum_{i=1}^m c_i \left(H_i(\beta_i^*)^{-1} \sum_{k=1}^N(\nabla_{\beta_i} r_i(\theta_k,Z_k^i;\beta_i^*) - \E(\nabla_{\beta_i} r_i(\theta,Z^i;\beta_i^*))) \right) \\
    & = \frac{1}{\sqrt{N}} \sum_{k=1}^N \left(\sum_{i=1}^m c_i H_i(\beta_i^*)^{-1} (\nabla_{\beta_i} r_i(\theta_k,Z_k^i;\beta_i^*) - \E(\nabla_{\beta_i} r_i(\theta,Z^i;\beta_i^*)) \right) \\
    & \triangleq \frac{1}{\sqrt{N}} \sum_{k=1}^N \left(\sum_{i=1}^m c_i \phi_i^1(Z_k^i)\right) \\
    & \triangleq \frac{1}{\sqrt{N}} \sum_{k=1}^N \psi_1(Z_k).
\end{align*}
Note that here we have zeros mean $E[\psi_1(Z)] = 0$, and the variance as follows:
$$
\operatorname{Cov}(\psi(Z)) = \operatorname{Var}\left( \sum_{i=1}^m c_i \phi_i(Z^i) \right) = \sum_{i=1}^m \sum_{j=1}^m c_i c_j \cdot \operatorname{Cov}(\phi_i^1(Z^i), \phi_j^1(Z^j)) = \mathbf{c}^T \Sigma_1 \mathbf{c}.
$$
By the central limit theorem and the Slutsky lemma, we have the asymptotic normality:
$$
L_1 \xrightarrow{d} N(0, \mathbf{c}^T \Sigma_1 \mathbf{c}),
$$
where the covariance $\Sigma_1 = \operatorname{diag}\{\operatorname{Cov}(\phi_i^1(Z^i)), i\in [m]\}$ as $Z^i$ are independent.
For the second and the third terms, we do the similar process and obtain the similar items 
\begin{align*}
    L_2 & = \sum_{i=1}^m c_i \left(\frac{\tilde N_i}{N + \tilde N_i}M_i \cdot H_i(\beta_i^*)^{-1} \mathbb{G}_N(s_i^*(\theta))\right) \\
    & = \frac{1}{\sqrt{N}} \sum_{k=1}^N \left(\sum_{i=1}^m c_i \cdot \frac{\tilde N_i}{N + \tilde N_i}M_i \cdot H_i(\beta_i^*)^{-1} \cdot (s_i^*(\theta_k) - \E(s_i^*(\theta_k))) \right) \\
    & \triangleq \frac{1}{\sqrt{N}} \sum_{k=1}^N \left(\sum_{i=1}^m c_i \phi_i^2(\theta_k)\right), \\
    L_3 & = \sum_{i=1}^m c_i \left(\sqrt{\frac{N}{\tilde N_i}}\frac{\tilde N_i}{N + \tilde N_i}M_i \cdot H_i(\beta_i^*)^{-1} \mathbb{G}_{\tilde N}(s_i^*(\tilde \theta))\right) \\
    & = \frac{1}{\sqrt{\tilde N}}\sum_{k=1}^{\tilde N} \left(\sum_{i=1}^m c_i \sqrt{\frac{N}{\tilde N_i}}\frac{\tilde N_i}{N + \tilde N_i}M_i \cdot H_i(\beta_i^*)^{-1} \cdot (s_i^*(\theta_k) - \E(s_i^*(\theta_k))) \right) \\
    & \triangleq \frac{1}{\sqrt{\tilde N}} \sum_{k=1}^{\tilde N} \left(\sum_{i=1}^m c_i \phi_i^3(\theta_k)\right),
\end{align*}
and asymptotic covariances $\Sigma_2 = \operatorname{diag}\{\operatorname{Cov}(\phi_i^2(Z^i)), i\in [m]\}$ and $\Sigma_3 = \operatorname{diag}\{\operatorname{Cov}(\phi_i^3(Z^i)), i\in [m]\}$.
Therefore, we have the asymptotic normality of the linear combination of $\sqrt{N}(\hat \beta_i - \beta_i^*)$ as follows:
\begin{align*}
    L &= \sum_{i=1}^m c_i \sqrt{N}(\hat \beta_i - \beta_i^*) \\
    & = -(L_1 - L_2 + L_3) \\
    & \xrightarrow{d}  N(0,\Sigma_\beta),
\end{align*}
where the covariance matrix is
$$
\Sigma_\beta = \operatorname{diag}\{H_i(\beta_i^*)^{-1}\left(\operatorname{Cov}\left( \nabla_{\beta_i} r_i(\theta_k,Z_k^i;\beta_i^*) \right) - \operatorname{Cov}\left(s_i^*(\theta_k)\right) \right)H_i(\beta_i^*)^{-1}\},
$$
since $\frac{N}{\tilde N_i} \rightarrow 0$ and $\operatorname{Cov}(M_is_i^*(\theta)) = \operatorname{Cov}(s_i^*(\theta))$ as we have proved in Proof \ref{proof:inter CLT for beta}.
By the Cremer-Wold Theorem (Lemma \ref{cremer-wold theorem}),we deduce the asymptotic normality of the estimator $\hat \beta$:
$$
\sqrt{N}(\hat \beta -  \beta^*) \xrightarrow{d} N(0, \Sigma_\beta).
$$

\end{proof}

\begin{lemma}[Cramer–Wold Theorem]
\label{cremer-wold theorem}
Let $\{X_n\}$ be a sequence of random vectors in $\mathbb{R}^d$, and let $X$ be a random vector in $\mathbb{R}^d$. Then:
$$
X_n \xrightarrow{d} X \quad \Longleftrightarrow \quad
a^\top X_n \xrightarrow{d} a^\top X \;\;\; \text{for all } a \in \mathbb{R}^d.
$$
\end{lemma}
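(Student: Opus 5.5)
The statement immediately above is the Cramér--Wold theorem (Lemma \ref{cremer-wold theorem}). I will prove it through characteristic functions combined with Lévy's continuity theorem.

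\emph{Forward direction.} For a fixed $a\in\R^d$, the map $x\mapsto a^\top x$ is continuous. If $X_n \xrightarrow{d} X$, the continuous mapping theorem gives $a^\top X_n \xrightarrow{d} a^\top X$ immediately.

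\emph{Reverse direction.} Assume $a^\top X_n \xrightarrow{d} a^\top X$ for every $a$. Fix $a\in\R^d$.

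First, the characteristic function of the scalar variable $a^\top X_n$, evaluated at the point $s=1$, is
\[
\phi_{a^\top X_n}(1)=\E e^{i a^\top X_n}=\phi_{X_n}(a).
\]
In the same way, $\phi_{a^\top X}(1)=\phi_X(a)$.

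Second, the function $x\mapsto e^{ix}$ is bounded and continuous, with real and imaginary parts $\cos x$ and $\sin x$. Weak convergence of $a^\top X_n$ therefore yields $\E e^{ia^\top X_n}\to \E e^{ia^\top X}$. Combining the two steps, $\phi_{X_n}(a)\to\phi_X(a)$ pointwise for every $a\in\R^d$.

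Third, the limit $\phi_X$ is the characteristic function of the random vector $X$, so it is continuous at the origin. Lévy's continuity theorem in $\R^d$ then upgrades pointwise convergence of characteristic functions to $X_n \xrightarrow{d} X$.

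\emph{Main obstacle.} The only nontrivial ingredient is the multivariate Lévy continuity theorem. Its proof needs two pieces: tightness of $\{X_n\}$, obtained from a truncation inequality bounding $\Prob(\|X_n\|>K)$ by an integral of $1-\mathrm{Re}\,\phi_{X_n}$ near zero, and the uniqueness theorem for characteristic functions. I would cite it from \cite{van2000asymptotic} (Theorem 2.13 there) rather than reprove it.

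If a self-contained argument is wanted instead, tightness can be obtained directly. Applying the hypothesis with $a=e_j$ shows that each coordinate sequence converges in distribution, hence is tight, so the vector sequence $\{X_n\}$ is tight. By Prohorov's theorem, every subsequence has a further subsequence converging weakly to some limit $Y$. For that limit, $a^\top Y \overset{d}{=} a^\top X$ for all $a$, so $\phi_Y=\phi_X$ and uniqueness gives $Y\overset{d}{=}X$. Since every subsequential limit equals the law of $X$, the whole sequence satisfies $X_n\xrightarrow{d}X$.
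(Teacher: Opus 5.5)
Your proof is correct. The forward direction by continuous mapping and the reverse direction via $\phi_{X_n}(a)=\phi_{a^\top X_n}(1)$ plus L\'evy's continuity theorem are the standard textbook argument, and so is your alternative route through coordinatewise tightness, Prohorov's theorem and uniqueness of characteristic functions. The paper gives no proof of this lemma: it quotes the classical Cram\'er--Wold device and uses only the reverse direction, to pass from asymptotic normality of each linear combination $\sum_i c_i\sqrt{N}(\hat\beta_i-\beta_i^*)$ to joint asymptotic normality of $\hat\beta$.
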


\subsubsection{Proof of the consistency of estimated covariances}

\begin{proof}[Proof of Theorem \ref{thm:consistency of estimated cov, beta}]
Since the samples $(\theta_k,Z_{k}^i)$ are i.i.d. from the joint distribution $D_\theta \times D_i(\theta_k)$ with finite expectation conditions $\mathbb{E}\|\nabla^2_{\beta_i} r_i(\theta, Z^i; \beta_i^*)\|^2 \leq \infty$, $\mathbb{E}\|\nabla_{\beta_i} r_i(\theta, Z^i; \beta_i^*)\|^2 \leq \infty$, by the law of large number, we directly have the consistency:
\begin{align*}
    \hat H_i(\beta_i^*) = \frac{1}{N} \sum_{k = 1} ^N &\left[ \nabla_{\beta_i}^2 r_i(\theta_k,Z_{k}^i; \beta_i^*)\right] \xrightarrow{P} H_i(\beta_i^*), \\
    \hat V_a(\beta_i^*) = \frac{1}{N} \sum_{k =1}^N &\left(\nabla r_i(\theta_k,Z_{k}^i; \beta_i^*) - L_i^* \right)\left(\nabla r_i(\theta_k,Z_{k}^i; \beta_i^*) - L_i^* \right)^T  \xrightarrow{P} \operatorname{Cov}\left(\nabla_{\beta_i} r_i(\theta,Z^i;\beta_i^*)\right).
\end{align*}
Besides, since $\theta_k$ are i.i.d. from the distribution $D_\theta$, and $Z_{k,j}^i$ are also i.i.d. from $D_i(\theta_k)$ conditional on $\theta_k$, by the law of large number, we have the results:
\begin{align*}
    \hat V_b(\beta_i^*) = \frac{1}{N} \sum_{k =1}^N \left(\frac{1}{M}\sum_{j=1}^M\nabla r_i(\theta_k,Z_{k,j}^i; \beta_i^*) - W_i^* \right)&\left(\frac{1}{M}\sum_{j=1}^M\nabla r_i(\theta_k,Z_{k,j}^i; \beta_i^*) - W_i^* \right)^T \\
    & \xrightarrow{P} \operatorname{Cov}\left(\E\big[\nabla_{\beta_i} r_i(\theta,Z^i;\beta_i^*)|\theta\big]\right).
\end{align*}
Since $H_i(\beta_i^*)$ is nonsingular according to our previous analysis, and the three components follow their own consistency, the consistency of $\hat \Sigma_\beta$ follows directly from the continuous mapping theorem:
$$
\hat \Sigma_{\beta} = \hat H_i(\beta_i^*)^{-1} \hat V_i(\beta_i^*) \hat H_i(\beta_i^*)^{-1} \xrightarrow{P} \Sigma_{\beta}.
$$
\end{proof}

\begin{proof}[Proof of Theorem \ref{thm:consistency of estimated cov, optimal theta}]
The proof follows from the same argument as in the Proof of Theorem \ref{thm:consistency of estimated cov, beta}. Since $Z_k$ are i.i.d. from $q(z)$, by the law of large number, we obtain the consistency:
    \begin{align*}
        \hat J_{1}(\beta) &= \frac{1}{N} \sum_{k = 1} ^N \left[ \frac{\partial G(Z_k,\theta_{PO}^{\beta^*};\beta^*)}{\partial \theta^\top}\right] \xrightarrow{P} \mathbb{E}_{Z \sim q(z)} \left[\frac{\partial G(Z,\theta_{PO}^{\beta^*};\beta^*)}{\partial \theta^\top}\right],\\
        \hat J_{2}(\beta) &= \frac{1}{N} \sum_{k =1}^N \left[\frac{\partial G(Z_k,\theta_{PO}^{\beta^*};\beta^*)}{\partial \beta^\top}\right] \xrightarrow{P}  \mathbb{E}_{Z \sim q(z)} \left[\frac{\partial G(Z,\theta_{PO}^{\beta^*};\beta^*)}{\partial \beta^\top}\right] .
    \end{align*}
    Similarly, we obtain the the consistency result by the nonsingularity and the continuous mapping theorem:
    $$
    \hat \Sigma = \hat J_{sol}(\beta)^{-1}\hat \Sigma_\beta \hat J_{sol}(\beta)^{-1} \xrightarrow{P} \Sigma.
    $$
    
\end{proof}

\subsubsection{Proof of Lemma \ref{lem:EIF}}

\begin{proof}[Proof of Lemma \ref{lem:EIF}]
    The proof consists of two parts. Firstly, we show $\Psi_{\beta^*}$ is an influence function. Then, we prove $\Psi_{\beta^*}$ is in the tangent space of $\mathscr{P}_{\theta,Z}$. The results for $\Psi_{\theta_{PO}^{\beta^*}}$ then follows from the Delta method \citep{van2000asymptotic}.
    
    We start by proving $\Psi_{\beta^*}$ is an influence function. For any smooth one-dimensional parametric submodel $\{P_{\theta,Z}^u:u\in\R,|u|\le\delta\}\subset\mathscr{P}_{\theta,Z}$ with $P_{\theta,Z}^0=P_{\theta,Z}$ and score function $s$, since the marginal distribution $P_\theta^u$ of $\theta$ is fixed, we know
    \[s(\theta,Z)=\frac{d}{du}\log\frac{dP_{\theta,Z}^u}{dP_{\theta,Z}}\bigg|_{u=0}=\frac{d}{du}\log\frac{dP_{Z|\theta}^u}{dP_{Z|\theta}}\bigg|_{u=0}.\]
    Therefore, $s(\theta,Z)$ is the score of conditional sub-models and thus satisfies $\E_{P_{Z|\theta}}s(\theta,Z)=0$ $P_\theta$-almost surely. Denote
    \[\beta_i^{*(u)}=\argmin_{\beta_i}\E_{P^u_{\theta,Z}}r_i(\theta,Z^i;\beta_i),\quad i\in[m].\]
    Then it follows that
    \[\E_{P_{\theta,Z}^u}G_r(\theta,Z;\beta^{*u})=0,\quad\text{with}\quad G_r(\theta,Z;\beta)=(\nabla_{\beta_1}^\top r_1(\theta,Z^1;\beta_1),\ldots,\nabla_{\beta_m}^\top r_m(\theta,Z^m;\beta_m))^\top.\]
    Taking derivatives on both sides, we get
    \begin{align*}
        \frac{d\beta^{*(u)}}{du}\bigg|_{u=0}=&-\big\{\E_{P_{\theta,Z}}\nabla_\beta^\top G_r(\theta,Z;\beta^*)\big\}^{-1}\E_{P_{\theta,Z}}G_r(\theta,Z;\beta^*)s(\theta,Z).
    \end{align*}
    Note that
    \[\E_{P_{\theta,Z}}s(\theta,Z)\E_{P_{Z|\theta}}G_r(\theta,Z;\beta^*)=\E_{P_{Z}}\E_{P_{Z|\theta}}s(\theta,Z)\E_{P_{Z|\theta}}G_r(\theta,Z;\beta^*)=0,\]
    therefore $\Psi_{\beta^*}$ is an influence function, i.e.,
    \[\frac{d\beta^{*(u)}}{du}\bigg|_{u=0}=\E_{P_{\theta,Z}}\Psi_{\beta^*}(\theta,Z)s(\theta,Z).\]
    
    Then we show elements of $\Psi_{\beta^*}$ are in the tangent space of $\mathscr{P}_{\theta,Z}$. For $u=(u_1^\top,\ldots,u_m^\top)^\top$, we define $P_{\theta,Z}^u$ as
    \[\frac{dP_{\theta,Z}^u}{dP_{\theta,Z}}(\theta,Z)=\prod_{i\in[m]}\frac{K(u_i^\top s_i(\theta,Z^i))}{C_i^{u_i}(\theta)},\quad K(x)=\frac{2}{1+e^{-2x}},\quad C_i^{u_i}(\theta)=\E_{\D_i(\theta)}K(u_i^\top s_i(\theta,Z^i)),\]
    \[s_i(\theta,Z^i)=-\big\{\E_{P_{\theta,Z}}\nabla_{\beta_i}^2 r_i(\theta,Z^i;\beta^*_i)\big\}^{-1}\big\{\nabla_{\beta_i}r_i(\theta,Z^i;\beta_i^*)-\E_{\D_i(\theta)}\nabla_{\beta_i}r_i(\theta,Z^i;\beta_i^*)\big\},\]
    we know the score is $\Psi_{\beta^*}=(s_1^\top,\ldots,s_m^\top)^\top$ and 
    \[P_{\theta}^u(A)=\E_{P_{\theta,Z}^u}\bm{1}(\theta\in A)=\E_{P_{\theta,Z}}\prod_{i\in[m]}\frac{K(u_i^\top s_i(\theta,Z^i))}{C_i^{u_i}(\theta)}\bm{1}(\theta\in A)=\E_{P_{\theta,Z}}\bm{1}(\theta\in A)=P_{\theta}(A).\]
    Denote $\D_i^{u}(\theta)=P_{Z^i|\theta}^{u}$, we know
    \[\frac{d\D_i^{u}(\theta)}{d\D_i(\theta)}(Z^i)=\frac{K(u_i^\top s_i(\theta,Z^i))}{C_i^{u_i}(\theta)}.\]
    Then it suffices to show $\D^u_{[m]}$ satisfies Assumption \ref{assumption for beta} and \ref{assumption for theta, optimal}.

    \noindent \textbf{Assumption \ref{assumption for beta}.1: Locally Lipschitz}

    Note that $\sup_{x\in\R}|\nabla K(x)|=1$, $\sup_{x\in\R}|K(x)|=2$, and $\sup_{\theta\in\Theta,Z^i\in\Zcal_i}\|\nabla_{\beta_i}r_i(\theta,Z^i;\beta^*_i)\|_2<\infty$, then
    \[|C_i^{u_i}(\theta)-1|=\big|\E_{\D_i(\theta)}K(u_i^\top s_i(\theta,Z^i))-1\big|\le\E_{\D_i(\theta)}|u_i^\top s_i(\theta,Z)|=O(\|u\|_2),\]
    \[\E_{P_{\theta,Z}^u} \big(L_{U_i}^i(\theta,Z^i)\big)^2=\E_{P_{\theta,Z}^u} \frac{K(u_i^\top s_i(\theta,Z^i))}{C_i^{u_i}(\theta)}\big(L_{U_i}^i(\theta,Z^i)\big)^2\le(2+O(\|u\|_2)\E_{P_{\theta,Z}} \big(L_{U_i}^i(\theta,Z^i)\big)^2<\infty.\]

    \noindent \textbf{Assumption \ref{assumption for beta}.3: Positive definite}

    Since $\nabla_{\beta_i}r_i(\theta,Z^i;\beta_i^*)$ and $\nabla_{\beta_i}^2r_i(\theta,Z^i;\beta_i^*)$ are bounded on $\Theta\times\Zcal_i$ by Assumption \ref{asm:bounded_optimal}, then
    \[\bigg\|\E_{P_{\theta,Z}}\bigg\{\frac{K(u_i^\top s_i(\theta,Z^i))}{C_i^{u_i}(\theta)}-1\bigg\}\nabla_{\beta_i}^2r_i(\theta,Z^i;\beta_i^*)\bigg\|_{\rm sp}=O(\|u\|_2),\]
    \[\bigg\|\E_{P_{\theta,Z}}\bigg\{\frac{K(u_i^\top s_i(\theta,Z^i))}{C_i^{u_i}(\theta)}-1\bigg\}\nabla_{\beta_i}r_i(\theta,Z^i;\beta_i^*)\nabla_{\beta_i}^\top r_i(\theta,Z^i;\beta_i^*)\bigg\|_{\rm sp}=O(\|u\|_2),\]
    \[\bigg\|\E_{P_{\theta,Z}}\bigg\{\frac{K(u_i^\top s_i(\theta,Z^i))}{C_i^{u_i}(\theta)}-1\bigg\}\nabla_{\beta_i}r_i(\theta,Z^i;\beta_i^*)\bigg\|_2=O(\|u\|_2),\]
    \[\bigg\|\E_{\D_i(\theta)}\bigg\{\frac{K(u_i^\top s_i(\theta,Z^i))}{C_i^{u_i}(\theta)}-1\bigg\}\nabla_{\beta_i}r_i(\theta,Z^i;\beta_i^*)\bigg\|_2=O(\|u\|_2),\]
    then the assumption follows.
\end{proof}

\subsubsection{Proof of Theorem \ref{gaps, optimal}}

\begin{proof}[Proof of Theorem \ref{gaps, optimal}]
    Here we take the convexity of $\mathbf{PR}^{\beta_i^*}(\theta^i)$ as an example, and the proof based on the convexity of $\mathbf{PR}^i(\theta^i)$ is the same.
    
    We first prove that the distance between two risk functions is bounded for every $\theta^i \in \Theta_i$. Note that $\left|\ell_i(\theta^i, \theta^{-i}, Z^i)\right| \leq M_i$, so for every $\theta^i \in \Theta_i$, the distance between two risk functions is bounded as follows:
\begin{equation*}
        \begin{split}
            \left|\mathbf{PR}^{\beta_i^*}(\theta^i) - \mathbf{PR}^i(\theta^i)\right| &= \left| \int \ell_i(\theta^i, \theta^{-i}, Z^i) (p_{\beta_i^*}(z;\theta) - p_i(z;\theta)) dz\right| \\
            & \leq M_i \cdot \int \left|p_{\beta_i^*}(z;\theta) - p_i(z;\theta) \right|dz \\
            &= 2M_i \cdot \text{TV}(D_{\beta_i^*}(\theta), D_i(\theta)) \\
            &= 2M_i \cdot \sup_{\theta^i \in \Theta_i}\text{TV}(D_{\beta_i^*}(\theta), D_i(\theta)) \\
            &= 2M_i \cdot \eta_i.
        \end{split}
    \end{equation*}
Thus, we have
\begin{equation*}
\begin{split}
     -2M_i \cdot \eta_i &\leq \mathbf{PR}^{\beta_i^*}(\theta_{PO}^i) - \mathbf{PR}^i(\theta_{PO}^i) \leq 2M_i \cdot \eta_i, \\
     -2M_i \cdot \eta_i &\leq \mathbf{PR}^{\beta_i^*}(\theta^{\beta_i^*}_{PO}) - \mathbf{PR}^i(\theta^{\beta_i^*}_{PO}) \leq 2M_i \cdot \eta_i.
\end{split}
\end{equation*}
Since $\theta^{\beta_i^*}_{PO}$ is the minimizer for the risk function $\mathbf{PR}^{\beta_i^*}(\theta)$, the inequality of the strong convexity has the form
\begin{equation*}
    \mathbf{PR}^{\beta_i^*}(\theta^i) \geq \mathbf{PR}^{\beta_i^*}(\theta^{\beta_i^*}_{PO}) + \frac{\lambda_i}{2}\|\theta^i - \theta^{\beta_i^*}_{PO}\|^2.
\end{equation*}
Combining the strong convexity with inequalities above, we have
\begin{equation*}
    \begin{split}
        \mathbf{PR}^{\beta_i^*}(\theta^{\beta_i^*}_{PO}) &\leq \mathbf{PR}^{\beta_i^*}(\theta_{PO}^i) - \frac{\lambda_i}{2}\|\theta_{PO}^i - \theta^{\beta_i^*}_{PO}\|^2 \\
        & \leq \mathbf{PR}^i(\theta_{PO}^i) + 2M_i \cdot \eta_i - \frac{\lambda_i}{2}\|\theta_{PO}^i - \theta^{\beta_i^*}_{PO}\|^2, \\
    \end{split}
\end{equation*}
and
\begin{equation*}
    \begin{split}
        \mathbf{PR}^{\beta_i^*}(\theta^{\beta_i^*}_{PO}) &\geq \mathbf{PR}^i(\theta^{\beta_i^*}_{PO}) -2M_i \cdot \eta_i \\
        & \geq \mathbf{PR}^i(\theta_{PO}^i) -2M_i \cdot \eta_i,
    \end{split}
\end{equation*}
where $\theta_{PO}^i$ is the minimizer for the risk function $\mathbf{PR}^i(\theta^i)$. Thus, we have
\begin{equation*}
    \begin{split}
        \mathbf{PR}^i(\theta_{PO}^i) -2M_i \cdot \eta_i \leq \mathbf{PR}^i(\theta_{PO}^i) + 2M_i \cdot \eta_i - \frac{\lambda_i}{2}\|\theta_{PO}^i - \theta^{\beta_i^*}_{PO}\|^2. 
    \end{split}
\end{equation*}
This leads to the result for each player $i$ that
\begin{equation*}
    \|\theta_{PO}^i - \theta^{\beta_i^*}_{PO}\|_2^2  \leq \frac{8M_i \cdot \eta_i}{\lambda_i}.
\end{equation*}
Therefore, from the population level, we have
\begin{equation*}
\begin{split}
    \|\theta_{PO} - \theta^{\beta^*}_{PO}\|_2^2 = \sum_{i=1}^m \|\theta_{PO}^i - \theta^{\beta_i^*}_{PO}\|_2^2 \leq \sum_{i=1}^m \frac{8M_i \cdot \eta_i}{\lambda_i}.
\end{split}
\end{equation*}

\end{proof}

\section{Further experiments details and results} 

\subsection{Stable Points}

\subsubsection{Experiment details}

\paragraph{Verifying joint smoothness and strongly convex} We use the loss function $\ell(\theta,Z) = \frac{1}{2}\|Z-\theta\|_2^2$ here, where $Z,\theta \in \R^d$. The gradient of the loss function with respect to $\theta$ is $\nabla_\theta \ell(\theta,Z) = \theta - Z$, so we have
$$
\|\nabla\ell(\theta_1,Z) - \nabla\ell(\theta_2,Z)\|_2 = \|\theta_1 - \theta_2\|_2 \leq \beta \cdot \|\theta_1 - \theta_2\|_2,
$$
and the equality holds when $\beta=1$. Thus, the smoothness parameter is $\beta=1$.
Furthermore, the Hessian matrix of the loss function is $\nabla^2_\theta \ell(\theta, Z) = I_d$, of which the eigenvalues are all $1$, so the parameter for strong monotonicity is $\alpha = \lambda_{min} = 1$.

\paragraph{Verifying sensitivity} In the simulation, the distribution map is formed as
$$
\D(\theta) = N(\epsilon\theta, \Sigma), \quad\Sigma = diag(\sigma_1^2, ... ,\sigma_d^2),
$$ where $\epsilon, \sigma_1^2, ... ,\sigma_d^2 \in \R$. Now we verify that $\epsilon$ is the sensitive parameter for $\D(\theta)$. For any $\theta_1, \theta_2 \in \Theta$, set random variables as follows:
$$
X \sim \D(\theta_1) = N(\epsilon\theta_1, \Sigma),
$$
$$
Y = X + \epsilon(\theta_2 - \theta_1) \sim \D(\theta_2) = N(\epsilon\theta_2, \Sigma),
$$
which leads to the fact that $\E\|X - Y\|_1 = \|\epsilon\theta_1 - \epsilon\theta_2\|_1$. Since the Wasserstein-1 distance is defined as 
$$
W_1(\D(\theta_1),\D(\theta_2)) = \inf_{P \in \Gamma(\D(\theta_1),\D(\theta_1))}\E_{(X,Y) \sim P}\|X-Y\|_1,
$$ 
which is the infimum over all couplings, we have 
$$
W_1(\D(\theta_1),\D(\theta_2)) \leq \E\|X - Y\|_1 = \epsilon\|\theta_1 - \theta_2\|_1.
$$
Thus, the distribution map is $\epsilon$-sensitive.

\paragraph{Optimization details} As indicated in \cite{perdomo2020performative}, the definition of repeated risk minimization requires exact minimization of the objective at every iteration, and the authors used gradient descent with tolerance $10^{-8}$ and backtracking line search to decide step size at each iteration. Here, the definition of empirical repeated risk minimization also requires exact minimization for finding estimators at each iteration. However, we do not need to use optimization algorithms here as our problem can be simplified to the mean estimation as follows:
\begin{equation*}
        \theta_{t+1} = \arg\min _{\theta \in \Theta}\E_{Z \sim \mathcal{D}(\theta_{t})}\frac{1}{2} \|Z-\theta\|_2  = \E_{Z \sim \mathcal{D}(\theta_{t})} Z = \epsilon \cdot \theta_t.
\end{equation*}

\paragraph{Coverage rate} According to the update procedure, it is easy to see that the stable point for the performative problem is $\theta_{PS} = (0,0)^T$ in this problem. First, we compute the coverage rate of the confidence interval for each $\theta_t$, we do $1000$ independent experiments with $N=5000$ samples at each experiment. At each experiment, we construct the confidence interval for $\theta_t$ with estimators $\hat{\theta}_t$ and numerically estimated covariance $\hat{\Sigma}_t$ as
\begin{equation*}
    \left[\hat{\theta}_{t,(i)} - z_{1-\alpha/2} \cdot \sqrt{\frac{\hat{\Sigma}_{t,(ii)}}{N}}, \hat{\theta}_{t,(i)} + z_{1-\alpha/2} \cdot \sqrt{\frac{\hat{\Sigma}_{t,(ii)}}{N}} \right], \quad z_{1-\alpha/2} = \Phi^{-1}(0.975),
\end{equation*}
where $i = [d]$ denotes entries of $\theta$, $N$ is the number of samples and $\Phi$ is the quantile function of standard normal distribution. Besides, we can compute the coverage rate of the same confidence interval for stable point $\theta_{PS}$ with estimators $\hat{\theta}_t$ and estimated covariance $\hat{\Sigma}_t$. 

\subsubsection{Additional results}
\label{appdix:subsub:add}

According to the work \cite{li2025statisticalinferenceperformativity}, the magnitude of the distributional shift caused by the distribution map $D(\theta)$ can influence the iteration required for our estimations to reach a valid level, and here we further examine this by detecting the effect of sensitivity $\epsilon$ on the inferential performance of the true stable point $\theta_{PS}$ over time. In Figure \ref{fig:coverage-rate}, we compare the coverage rates of each coordinate of $\theta_{PS}$ under sensitivity levels $\epsilon = 0.01,\ 0.05$, and $0.2$ across time steps $t = 0, \ldots, 10$. Regardless of the value of $\epsilon$, the coverage rates of both coordinates converge to the target level $\alpha = 0.95$ as time progresses. However, as $\epsilon$ increases, more iterations are required for the coverage rate to reach the target level.

\begin{figure}[htbp]
    \centering
    \includegraphics[width=0.9\textwidth]{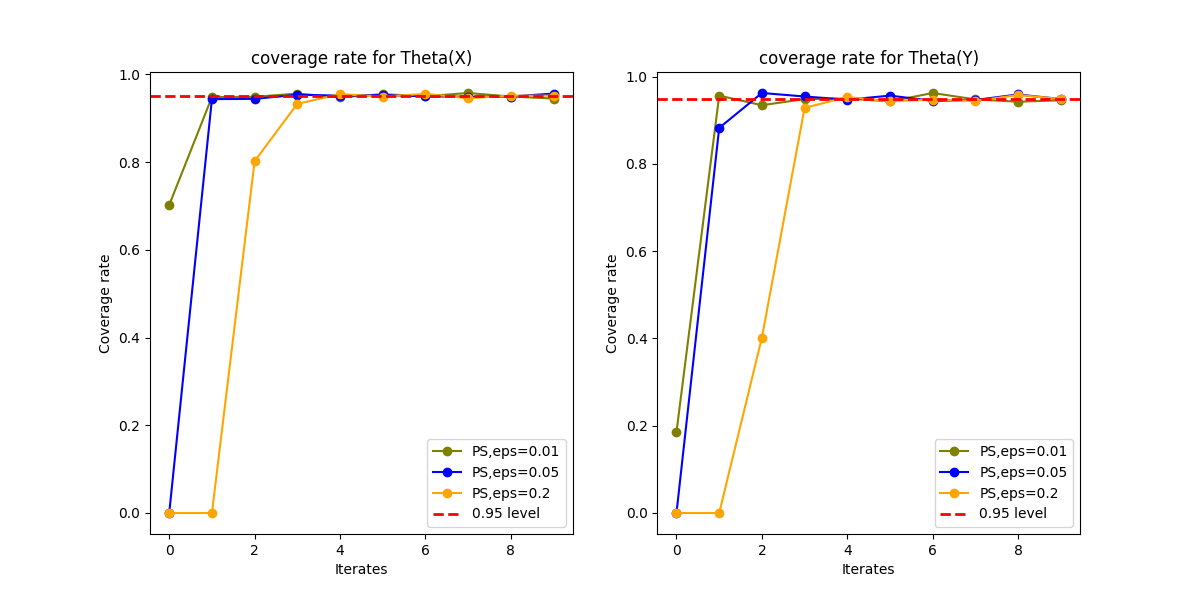}
    \caption{Coverage Rate for two entries of $\theta_{PS}$ vs. Misspecification}
    \label{fig:coverage-rate}
\end{figure}

\subsection{Optimal Points}

\subsubsection{Experiment Details}

\paragraph{Verifying misspecification and smoothness} The true distribution map is 
$$
\mathcal{D}(\theta): b + M_1 *\theta + \epsilon M_2 *\theta^2 + Z_0, \quad Z_0 \sim N(0,\sigma^2I_d),
$$
and the distribution atlas is 
$$
\mathcal{D}_M(\theta): b + M *\theta + Z_0, \quad Z_0 \sim N(0,\sigma^2I_d).
$$
We first prove its misspecification in total variation distance. We can calculate directly that $M^* = M_1$, and since $\mathcal{D}(\theta) \triangleq N(\mu_1,\sigma^2I_d)$ and $\mathcal{D}_{M^*}(\theta) \triangleq N(\mu_2,\sigma^2I_d)$ are Gaussian distributions with the same covariance, their total variation distance follows the inequality:
\begin{equation*}
    TV(\mathcal{D}_{M^*}(\theta),\mathcal{D}(\theta)) \leq \frac{1}{2}\|\Sigma^{-1/2}(\mu_1 - \mu_2)\|_2 = \frac{1}{2\sigma}\|\mu_1 - \mu_2\|_2.
\end{equation*}
Since $\mu_1 - \mu_2 = (M_1 - M^*)\theta + \epsilon M_2\theta^2 = \epsilon M_2\theta^2$, in this experiment, the distance has the upper bound
$$
TV(\mathcal{D}_{M^*}(\theta),\mathcal{D}(\theta)) \leq \frac{\epsilon M_2 \theta^2}{2\sigma}.
$$
Therefore, the distribution map is $\frac{\epsilon M_2 \theta^2}{2\sigma}$-misspecified. 
As for the smoothness in total variation distance, we have the following inequality:
\begin{equation*}
    TV(\mathcal{D}_{M}(\theta),\mathcal{D}_{M'}(\theta)) \leq \frac{1}{2\sigma}\|\mu - \mu'\|_2 = \frac{\theta}{2\sigma}\|M - M'\|_2.
\end{equation*}
Thus the distribution atlas is $\frac{\theta}{2\sigma}$-smoothness.

\paragraph{Optimization details} According to our simulation setting, we can calculate the closed form of the target distributional parameter and the target plug-in optimum that $\beta^* = \beta_1$ and $\theta_{PO}^{\beta^*} = \frac{-b}{\beta^*-1}$. We explain them in detail. 
The true distribution map $\D$ and the distribution atlas $\D_\beta$ for fitting the true map in our performative problem are 
$$
Z \sim \D(\theta) = N(b + \beta_1 \theta + \epsilon \beta_2 \theta^2,\sigma^2) \triangleq N(\mu(\theta),\sigma^2),
$$
$$
Z \sim \mathcal{D}_\beta(\theta) = N(b + \beta \theta,\sigma^2),
$$
and the distribution of $\theta$ for fitting the distribution map is uniform $\theta \sim U(-1,1)$. The true distributional parameter is $\beta^* = \arg\min_{\beta \in \mathcal{B}}\E_{\theta,Z}(Z-\beta\theta)^2$, and we can extend the expectation as follows:
\begin{equation*}
\begin{split}
    \E_{\theta,Z}(Z-\beta\theta)^2 &= \E_{\theta,Z}(Z^2- 2Z\beta\theta + \beta^2\theta^2) \\
    &= \E_{\theta}\left\{\E_{Z \mid \theta}(Z^2- 2Z\beta\theta + \beta^2\theta^2) \right\} \\
    &= \E_{\theta}\left\{\sigma^2 + \mu(\theta)^2 - 2 \beta\theta\mu(\theta) + \beta^2\theta^2\right\} \\
    &= \sigma^2 + b^2 + \frac{2}{3}b\epsilon\beta_2 + \frac{1}{3}(\beta_1-\beta)^2 + \frac{1}{5}\epsilon^2\beta_2^2.
\end{split}
\end{equation*}
Therefore, differentiating the expectation with respect to $\beta$ and equating it to zero yields the true distributional parameter $\beta^* = \beta_1$. Besides, the plug-in optimum is $\theta_{PO}^{\beta^*} = \arg\min_{\beta \in \mathcal{B}}\E_{Z \sim \D_{\beta^*}(\theta)}(Z-\theta)^2$, and similarly we can extend the expectation as follows:
\begin{equation*}
\begin{split}
    \E_{Z \sim \D_{\beta^*}(\theta)}(Z-\theta)^2 &= \E_{Z \sim \D_{\beta^*}(\theta)}(Z^2 - 2Z\theta + \theta^2) \\
    &= \sigma^2 + (b + \beta^* \theta)^2 - 2\theta(b + \beta^* \theta) + \theta^2.
\end{split}
\end{equation*}
Then we take its first derivatives with respect to $\theta$ and equate it to zero
\begin{equation*}
\begin{split}
    & \qquad 2\beta^*(b + \beta^*\theta) - 2b - 4\beta^* \theta + 2\theta\\
    &= 2[(\beta^*)^2 -2\beta^* + 1] \theta + 2(\beta^* -1)b\\
    &= 2(\beta^*-1)^2 \theta + 2(\beta^* -1)b \\
    &= 0.
\end{split}
\end{equation*}
Thus, the true plug-in optimum is $\theta_{PO}^{\beta^*} = \frac{-b}{\beta^*-1}$.

\paragraph{Coverage rate} We compute the coverage rate of the confidence interval for the plug-in optimum $\theta_{PO}^{\beta^*}$ using $1000$ independent experiments. In each experiment, we use $N = 15000$ samples to estimate $\hat{\beta}$, $\tilde{N} = 1000000$ Monte Carlo samples to estimate the integral, and $n = 1000000$ Monte Carlo samples to generate $\hat{\theta}_{PO}^{\hat{\beta}}$. The confidence interval for $\theta_{PO}^{\beta^*}$ is constructed using the estimator $\hat{\theta}_{PO}^{\hat{\beta}}$ and the numerically estimated covariance matrix $\hat{\Sigma}_\theta$, as follows:
\begin{equation*}
    \left[\hat \theta_{PO}^{\hat \beta} - z_{1-\alpha/2} \cdot \sqrt{\frac{\hat{\Sigma}_\theta}{N}}, \hat \theta_{PO}^{\hat \beta} + z_{1-\alpha/2} \cdot \sqrt{\frac{\hat{\Sigma}_\theta}{N}} \right], \quad z_{1-\alpha/2} = \Phi^{-1}(0.975),
\end{equation*}
where $N$ is the sample size used for simulating the plug-in estimator, and $\Phi(\cdot)$ denotes the quantile function of the standard normal distribution. Note that in each experiment, the $N = 15000$ samples are evenly divided across three steps, with $N/3 = 5000$ samples allocated per step, which is sufficiently large for reliable estimation. Furthermore, the ratios of sample sizes $\frac{N}{\tilde{N}} = \frac{N}{n} = 0.015$ are small enough to satisfy the requirements for theoretical covariance.

\end{document}